\newtheorem{Theorem}{Theorem}
\newtheorem{Lemma}{Lemma}
\newtheorem{theorem}{Theorem}
\begin{document}
	
%
\title{CMW-Net: Learning a Class-Aware Sample Weighting Mapping
	for Robust Deep Learning }
%
%
%
%

\author{Jun~Shu,
        Xiang Yuan,
       Deyu Meng,
        and Zongben Xu
\IEEEcompsocitemizethanks{\IEEEcompsocthanksitem Jun Shu, Xiang Yuan, Deyu Meng and Zongben Xu are with School of Mathematics and Statistics and Ministry of Education Key Lab of Intelligent Networks and Network Security, Xi'an Jiaotong University, Shaanxi, P.R.China.  
	Email: {xjtushujun,relojeffrey}@gmail.com, {dymeng,zbxu}@mail.xjtu.edu.cn.
	\IEEEcompsocthanksitem Deyu Meng and Zongben Xu are also with Peng Cheng Laboratory, Shenzhen, Guangdong, China. Deyu Meng is also with the Macau Institute of Systems Engineering, Macau University of Science and Technology, Taipa, Macau, China.
	\IEEEcompsocthanksitem  Corresponding author: Deyu Meng.
}}



%
%

\markboth{IEEE Transactions on Pattern Analysis and Machine Intelligence}
{IEEE Transactions on Pattern Analysis and Machine Intelligence}
%



\IEEEtitleabstractindextext{%
\begin{abstract}
Modern deep neural networks (DNNs) can easily overfit to biased training data containing corrupted labels or class imbalance. Sample re-weighting methods are popularly used to alleviate this data bias issue. Most current methods, however, require manually pre-specifying the weighting schemes as well as their additional hyper-parameters relying on the characteristics of the investigated problem and training data. This makes them fairly hard to be generally applied in practical scenarios, due to their significant complexities and inter-class variations of data bias situations. To address this issue, we propose a meta-model capable of adaptively learning an explicit weighting scheme directly from data. Specifically, by seeing each training class as a separate learning task, our method aims to extract an explicit weighting function with sample loss and task/class feature as input, and sample weight as output, expecting to impose adaptively varying weighting schemes to different sample classes based on their own intrinsic bias characteristics. Synthetic and real data experiments substantiate the capability of our method on achieving proper weighting schemes in various data bias cases, like the class imbalance, feature-independent and dependent label noise scenarios, and more complicated bias scenarios beyond conventional cases. Besides, the task-transferability of the learned weighting scheme is also substantiated, by readily deploying the weighting function learned on relatively smaller-scale CIFAR-10 dataset on much larger-scale full WebVision dataset. A performance gain can be readily achieved compared with previous state-of-the-art ones without additional hyper-parameter tuning and meta gradient descent step. The general availability of our method for multiple robust deep learning issues, including partial-label learning, semi-supervised learning and selective classification, has also been validated. Code for reproducing our experiments is available at \url{https://github.com/xjtushujun/CMW-Net}.
\end{abstract}

\begin{IEEEkeywords}
Meta Learning, sample re-weighting, noisy labels, class imbalance, semi-supervised learning, partial-label learning. \end{IEEEkeywords}}

\maketitle
\IEEEdisplaynontitleabstractindextext

%
\IEEEpeerreviewmaketitle

\IEEEraisesectionheading{\section{Introduction}\label{introduction}}

\IEEEPARstart{D}{eep} neural networks (DNNs), equipped with highly parameterized structures for modeling complex input patterns, have recently obtained impressive performance on various applications, e.g., computer vision \cite{he2016deep}, natural language processing \cite{devlin2019bert}, speech processing \cite{abdel2014convolutional}, etc. These successes largely attribute to many large-scale paired sample-label datasets expected to properly and sufficiently simulate the testing/evaluating environments. However, in most real applications, collecting such large-scale supervised datasets is notoriously costly, and always highly dependent on a rough crowdsourcing system or search engine. This often makes the training datasets error-prone, with unexpected data bias from the real testing distributions.

This distribution mismatch issue could have many different forms. For example, the collected training sets are often class imbalanced \cite{cui2019class,liu2019large}. Actually, real-world datasets are usually depicted as skewed distributions. Specifically, the frequency distribution of visual categories in our daily life is generally long-tailed, with a few common classes and many more rare ones. This often leads to a mismatch between collected datasets with long-tailed class distributions for training a machine learning model and our expectation on the model to perform well on all classes.
Another popular data bias is the noisy label case \cite{zhang2018generalized,arazo2019unsupervised,li2019dividemix,shu2019meta}.
Even the most celebrated datasets collected from a crowdsourcing system with expert knowledge \cite{bi2014learning}, like ImageNet, have been demonstrated to contain harmful examples with unreliable labels \cite{recht2019imagenet,northcutt2021pervasive,yun2021re}.
To mitigate the high labeling cost, it has received increasing attention to collect web images by search engines \cite{li2017webvision}. Though cheaper and easier to obtain training data, it often yields inevitable noisy labels due to the error-prone automatic tagging system \cite{shu2018small,guo2018curriculumnet}.

The overparameterized DNNs tend to suffer significantly from overfitting on these biased training data, then conducting their poor performance in generalization. This robust deep learning issue has been theoretically illustrated in multiple literatures \cite{arpit2017closer,zhang2016understanding} and gradually attracted more attention in the field.
Recently, various methods have been proposed to deal with such biased training data. Readers can refer to \cite{zhang2021deep,frenay2013classification,algan2021image,karimi2020deep,song2020learning,han2020survey} for an overall review. In this paper, we focus on the sample re-weighting approach, which is a commonly used strategy against such data bias issue and has been widely investigated started at 1950s \cite{kahn1953methods}.

\vspace{-4mm}

\subsection{Deficiencies of Sample Re-weighting Approach}

The sample re-weighting approach \cite{ren2018learning,shu2019meta} attempts to assign a weight to each example and minimize the corresponding weighted training loss to learn a classifier model. The example weights are typically calculated based on the training loss.
More specifically, the learning methodology of sample re-weighting is to design a weighting function mapping from training loss to sample weight, and then iterates between calculating weights from current sample loss values and minimizing weighted training loss for classifier updating (that's why the method is called ``re-weighting"). However, there exist two entirely contrary ideas for constructing such a loss-weight mapping. In class imbalanced problems, the function is generally set as monotonically increasing, aiming to enforce the learning to more emphasize samples with larger loss values since they are more like to be the minority class. Typical methods include Boosting and AdaBoost \cite{freund1997decision,friedman2000additive}, hard negative mining \cite{malisiewicz2011ensemble} and focal loss \cite{lin2018focal}.
But in noisy label problems, the function is more commonly set as monotonically decreasing, i.e., taking samples with smaller loss values as more important ones, since they are more likely to be high-confident ones with clean labels.
Typical methods include self-paced learning (SPL) \cite{kumar2010self}, iterative reweighting \cite{fernando2003reweight} and multiple variants~\cite{jiang2014easy,jiang2014self,wang2017robust}.

Although these pre-defined weighting schemes have substantiated to help improve the robustness of a learning algorithm on certain data bias scenarios, they still have evident deficiencies in practice. On the one hand, they need to manually preset a specific form of weighting function based on certain assumptions of training data. This, however, tends to be infeasible when we know insufficient knowledge underlying data or the label conditions are too complicated, like the case that the training set is both imbalanced and label-noisy. On the other hand, even when we properly specify certain weighting schemes, like focal loss~\cite{lin2018focal} or SPL \cite{kumar2010self}, they inevitably involve hyper-parameters, like focusing parameter in the former and age parameter in the latter, to be manually preset or tuned by cross-validation. This tends to further raise their application difficulty in real problems.

\vspace{-4mm}
\subsection{Limitations and Meta-Essence Insight for MW-Net} \label{section12}

To alleviate the above issues, our earlier work attempts to parameterize the weighting function as an MLP (multilayer perceptron) network with one hidden layer called MW-Net \cite{shu2019meta}, as depicted in Fig. \ref{fig33a}, which is theoretically capable of dealing with such weighting function approximation problem \cite{csaji2001approximation}. Instead of assuming a pre-defined weighting scheme, MW-Net can automatically learn a suitable weighting strategy from data for the training dataset at hand. Experiments on datasets with class imbalance or noisy labels show that the automatically learned weighting schemes are consistent with the properly defined ones as traditional.

\begin{figure}[t]\vspace{-2mm}
	\centering
	\vspace{-2mm}
	\subfigcapskip=-2mm
	\raisebox{0.5\height}{\subfigure[MW-Net]{\label{fig33a}
		\includegraphics[width=0.20\textwidth]{./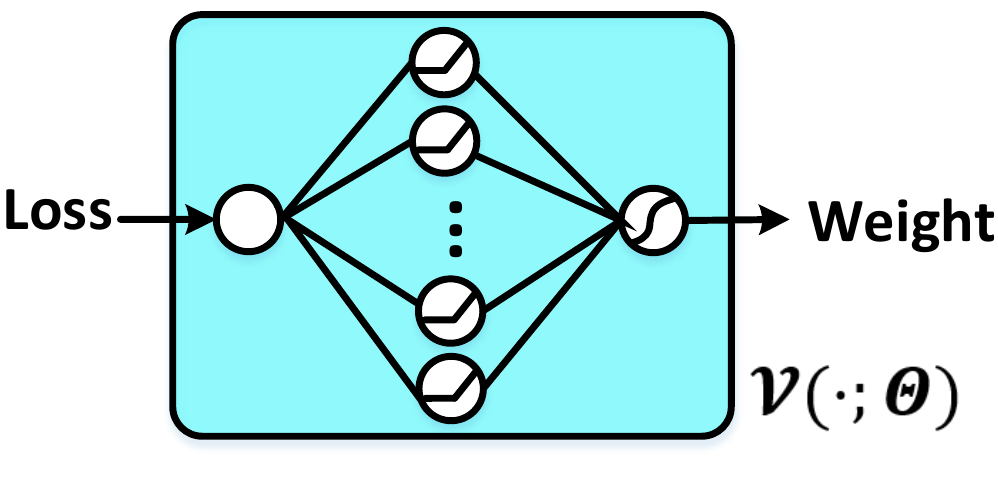}} } \hspace{-4mm}
	\subfigure[CMW-Net]{\label{fig33b}
		\includegraphics[width=0.28\textwidth]{./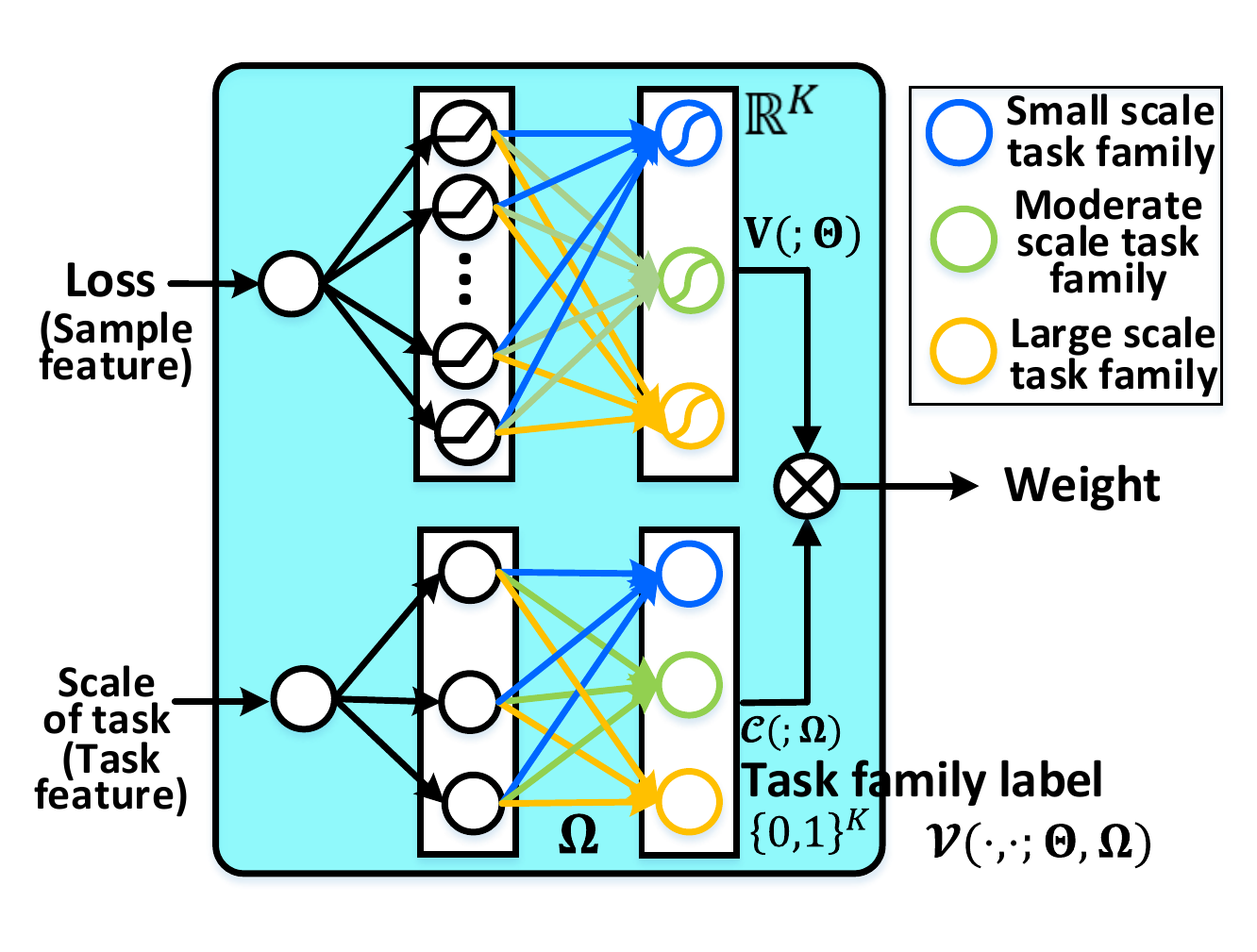}}\vspace{-3mm}
	\caption{The architectures of (a) MW-Net and (b) CMW-Net.}\label{fignet}  
\end{figure}

Nevertheless, MW-Net uses one unique weighting function shared by all classes of training dataset to deal with data bias,
implying that different classes should possess consistent bias. For example, we plot the empirical probability density function (pdf) of training loss\footnote{Since the loss values of all samples have been considered and validated to be important and beneficial information for exploring proper sample weight assignment principle, its distribution largely delivers the underlying bias configurations underlying data \cite{jiang2018mentornet,arazo2019unsupervised,li2019dividemix}.} under the symmetric noise assumption (i.e., biases for each class are with almostly equal possibility), as shown in Fig. \ref{fig11a}. It can be seen that each class shares to an approximately homoscedastic training loss distribution.
However, such homoscedastic bias assumption can not perfectly reflect the real complicated data bias scenarios. In fact, real-world biased datasets (like WebVision \cite{li2017webvision}) are often heterogeneous \cite{collier2021correlated}, i.e., biases are input-dependent, e.g., class-dependent or instance-dependent. Fig. \ref{fig11b} shows the training loss distribution under asymmetric (class-dependent) noise assumption.
We can observe that losses of clean and noisy samples nearly overlap, and thus it is difficult to differentiate noisy samples from clean samples based on loss information. For such class-dependent bias case, MW-Net learns a monotonically increasing weighting function as shown in Fig.\ref{figac}, which implies that MW-Net inclines to significantly lose efficacy. This naturally leads to significant performance degradation for MW-Net (see Table \ref{noisy}). Considering real-world biased datasets always possess even more inter-class heterogenous bias configurations than these simulated biased ones, it is thus fairly insufficient and improper to employ only single weighting function to deal with such complicated real-world biased datasets.

\begin{figure*}[htb]\vspace{-2mm}
	\centering
	\vspace{-2mm}
	\subfigcapskip=-4mm
	\subfigure[Symmetric noise case with noise rate 40\%]{\label{fig11a}
		\includegraphics[width=0.91\textwidth]{./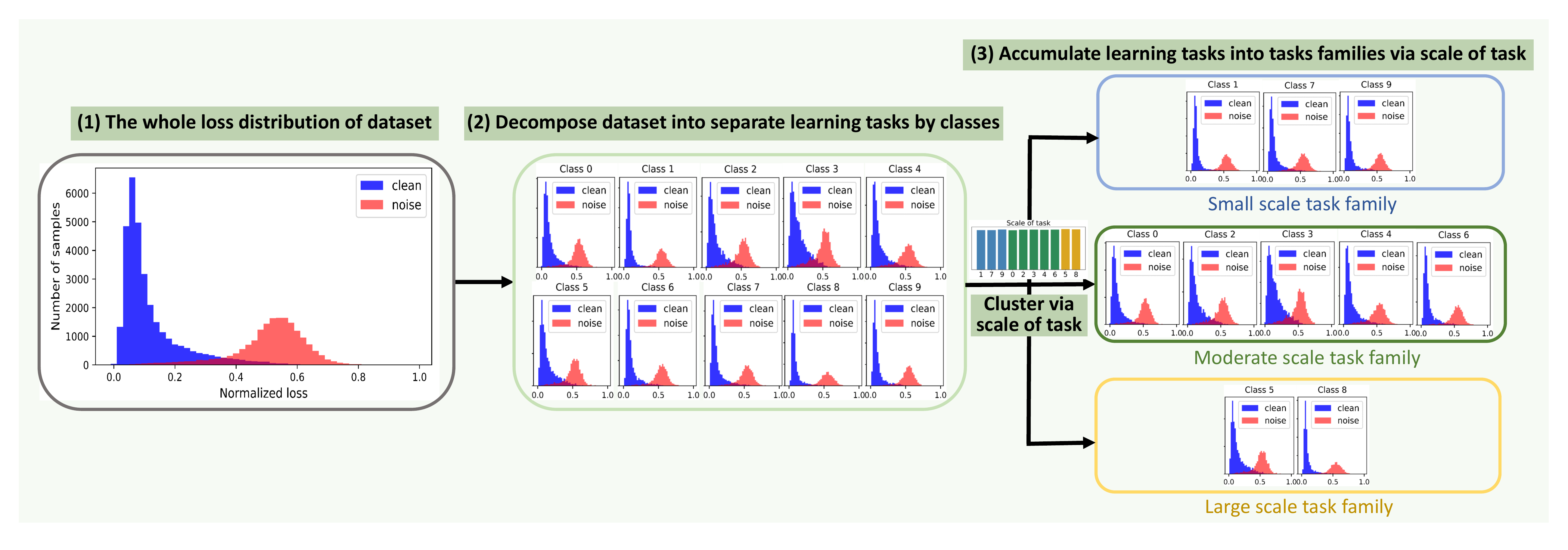}} \\\vspace{-3mm}
	\subfigure[Asymmetric noise case with noise rate 40\%]{\label{fig11b}
		\includegraphics[width=0.91\textwidth]{./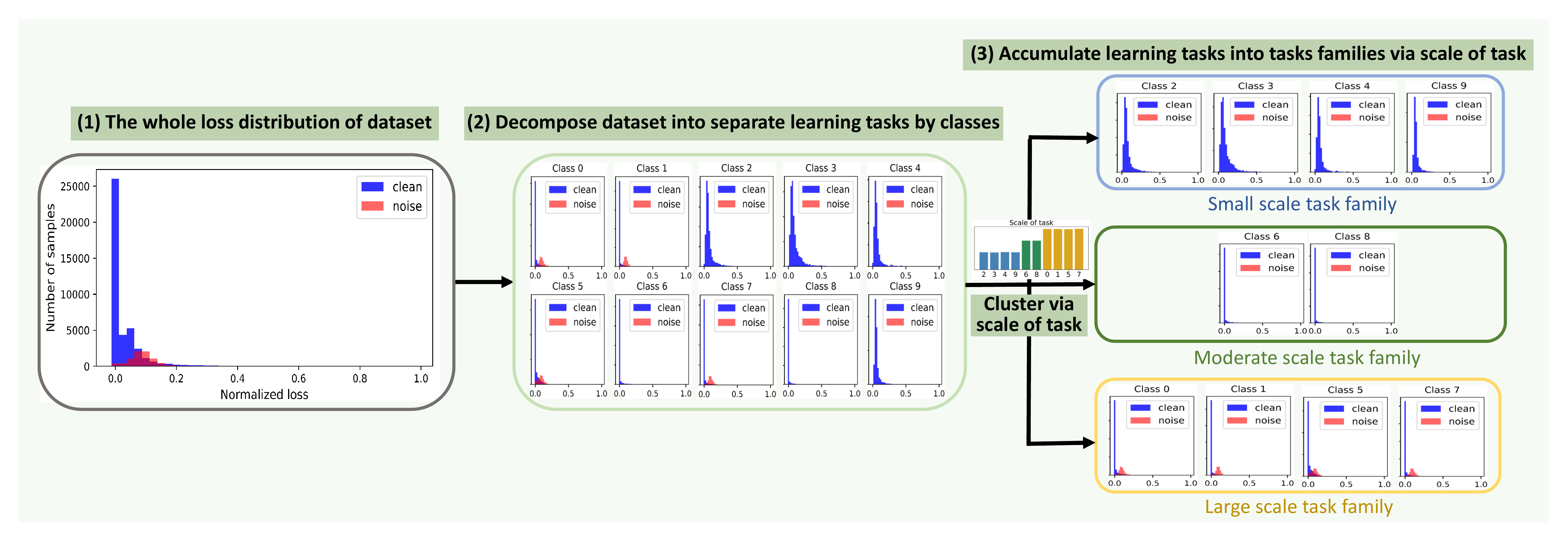}}\vspace{-3mm}
	\caption{Illustration of the limitation and meta-essence understanding for MW-Net. The success of MW-Net is built upon homoscedastic bias assumption (e.g., in (a.1)(a.2), each class has similar loss distributions of clean and noise samples). While MW-Net fails under the heterogeneous bias (e.g., in (b.1)(b.2), each class has their specific loss distributions). The rationality can be revealed from the perspective of meta-learning (see Sec. 1.2). The limitation of MW-Net demonstrates that only sample-level loss information can not sufficiently characterize the heterogeneous bias. This motivates us to introduce task-level information (i.e., scale of task) to reform MW-Net, making it able to distinguish individual bias properties of different tasks, and accumulate tasks with approximately homoscedastic data bias as a task family (see (a.3) and (b.3)). Please see more details in Sec. 1.3 \& 3.3.}   \label{fig11} \vspace{-4mm}
\end{figure*}

This issue can be more intrinsically analyzed under the framework of meta-learning. From the task-distribution view, a meta-learning approach attempts to learn a task-agnostic learning algorithm from a family of training tasks, that is hopeful to be generalizable across tasks and enable new tasks to be learned better and more easily \cite{hospedales2020meta}. By taking every class of training samples as a separate learning task, MW-Net can also be seen as a meta-learning strategy, aiming to learn how to properly impose an explicit weighting function from a set of training classes/tasks. The properness of using such meta-learning regime, however, is built on the premise that all training tasks approximately follow a similar task distribution \cite{baxter2000model,finn2017model,denevi2020advantage,shu2021learning}. In complicated data bias scenarios, however, such premise is evidently hampered by the heterogeneous bias situations across different classes, making MW-Net hardly fit a concise weighting rule generally suitable for all training classes/tasks.

The issue will be more prominent for practical large-scale datasets (e.g., WebVision \cite{li2017webvision}), especially for those containing a large number of training classes but also possessing many rare ones. Then the inter-class heterogeneity will be more significant and the data bias situation more complicated. An easy amelioration is to separately learn a weighting function for each training class to obtain a better flexibility. This easy learning manner, however, is not only impractical due to its required large computation burden, but also easily leads to overfitting and thus hardly extracts available weight schemes from highly insufficient training task information for each class. More importantly, such learning manner is deviated from the original motivation of meta-learning, i.e., learning a general weighting function imposing methodology generalizable and transferable to new biased datasets. The learned weighting scheme is even infeasible to be utilized in new problems with different class numbers and features due to their mismatched input information to the meta-model.

\begin{figure*}[t] \vspace{-0mm}
\centering 
\subfigcapskip=-2mm
\hspace{-0.25cm}
\subfigure[]{
\label{figaa} 
\includegraphics[width=0.24\textwidth]{./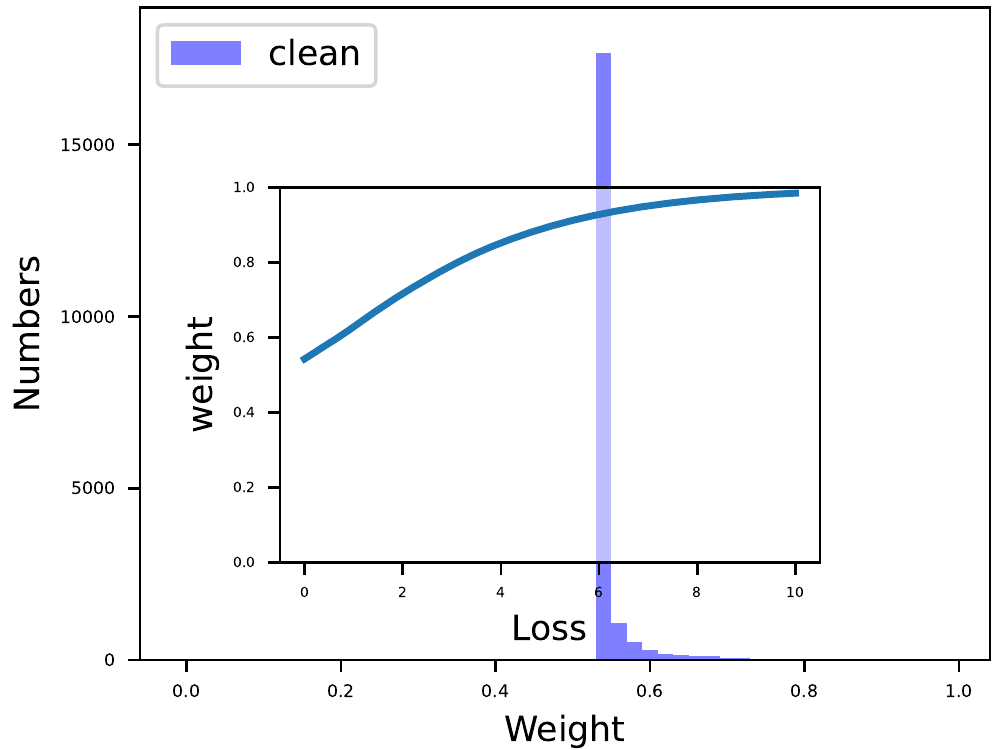}}
\subfigure[]{\label{figab}
\includegraphics[width=0.24\textwidth]{./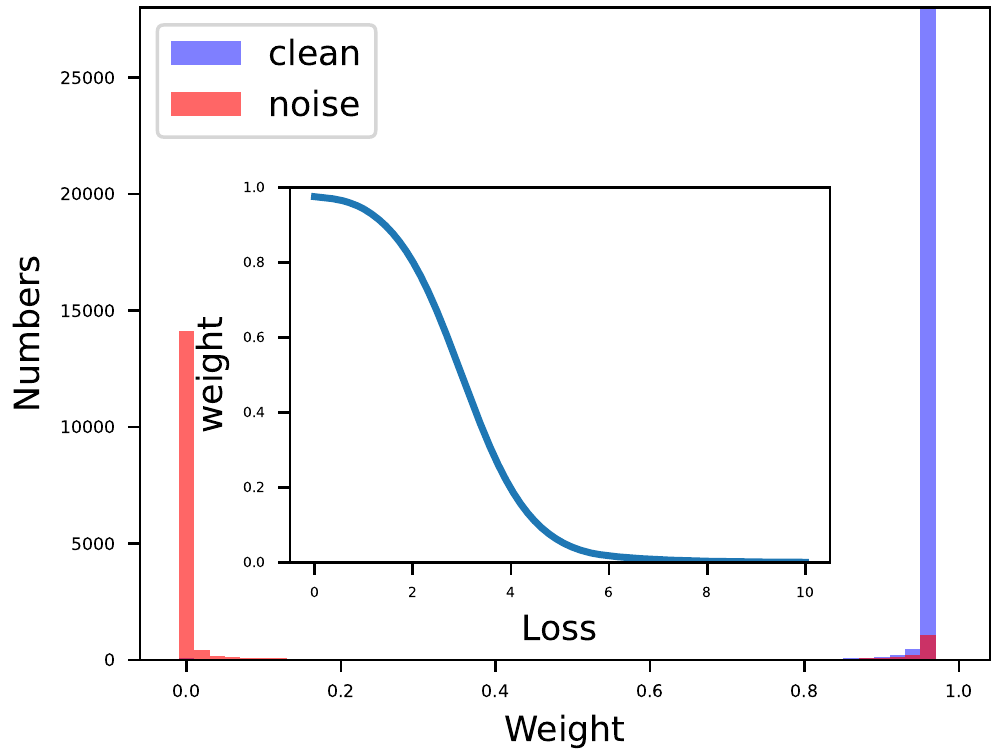}}
\subfigure[]{\label{figac}
\includegraphics[width=0.24\textwidth]{./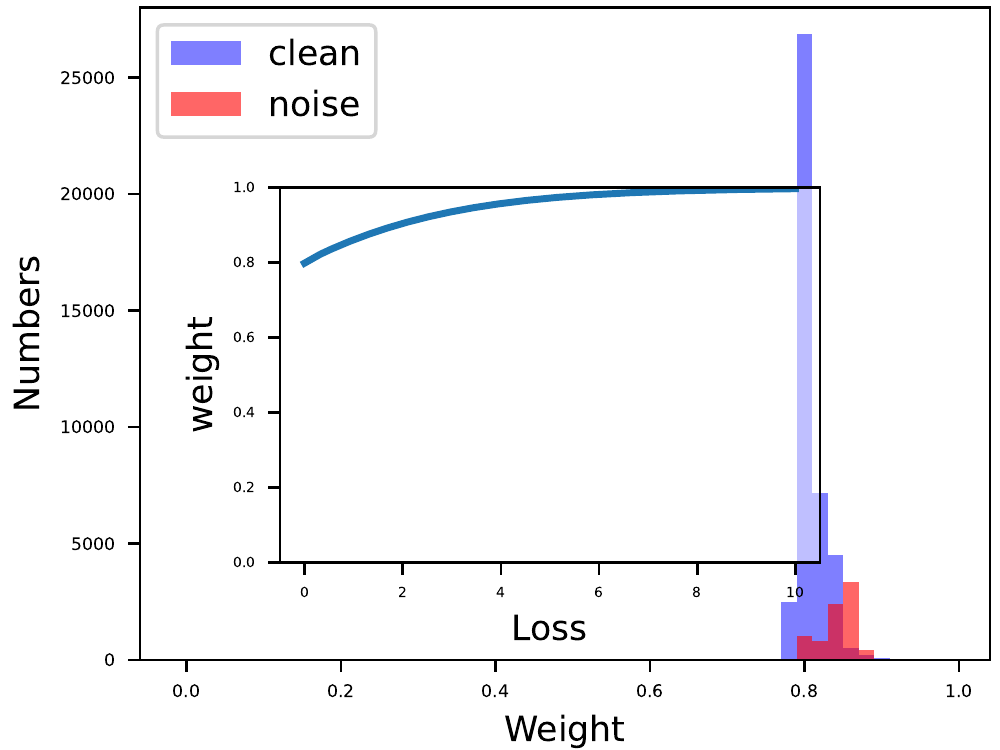}}
\subfigure[]{\label{figad}
\includegraphics[width=0.24\textwidth]{./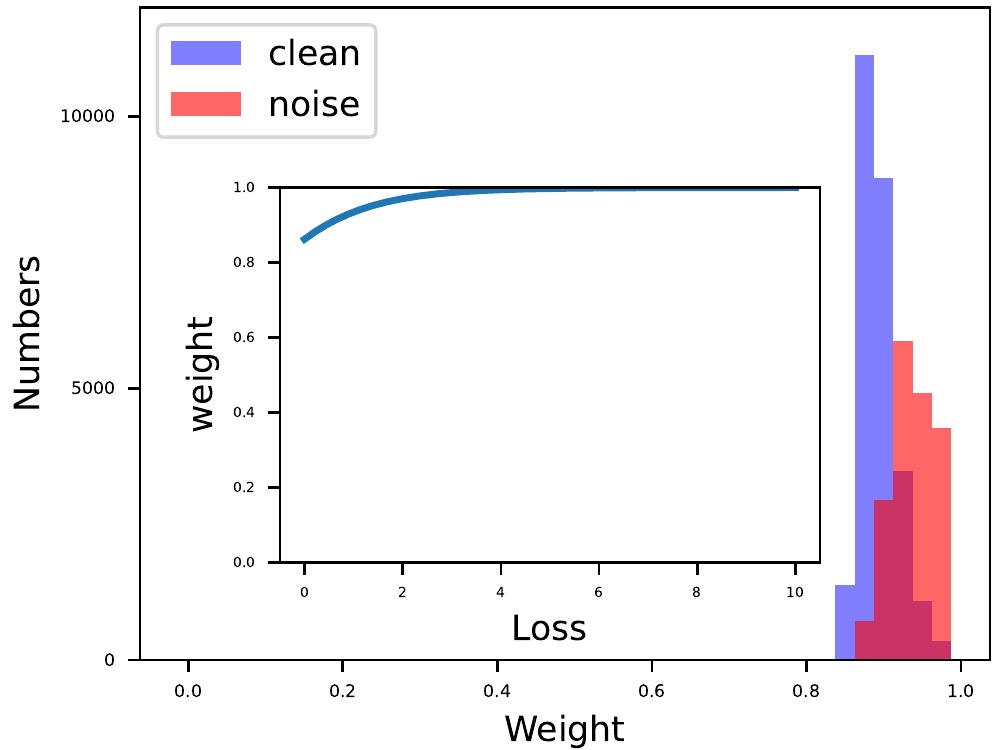}}
\\ \vspace{-0.25cm}
\subfigure[]{\label{figae}
\includegraphics[width=0.24\textwidth]{./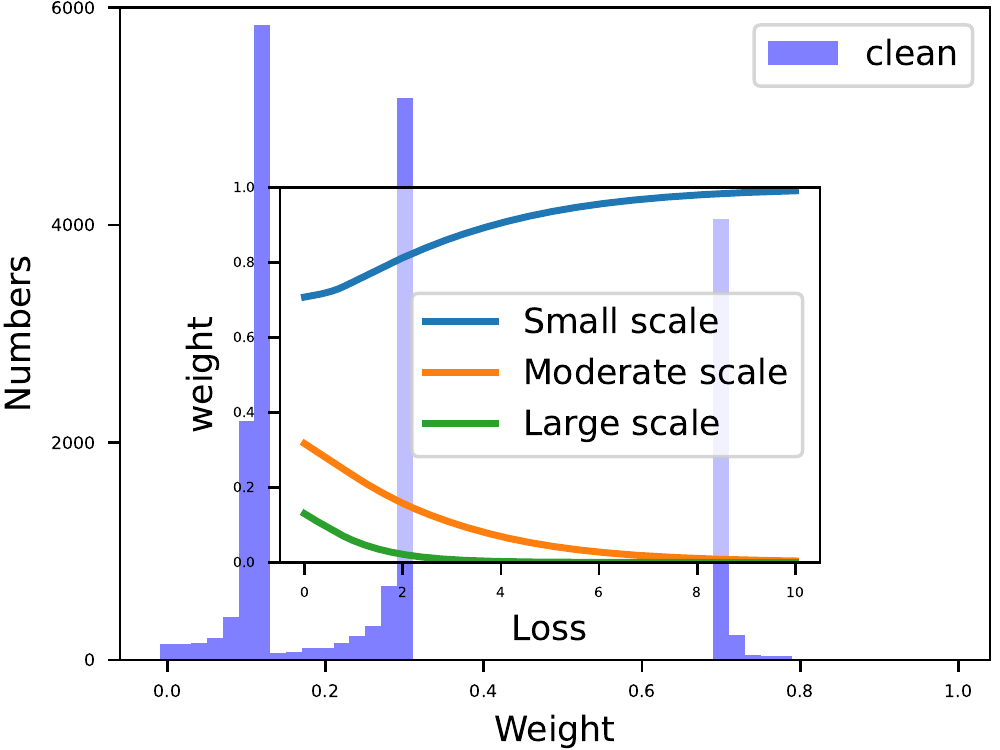}}
\subfigure[]{\label{figaf}
\includegraphics[width=0.24\textwidth]{./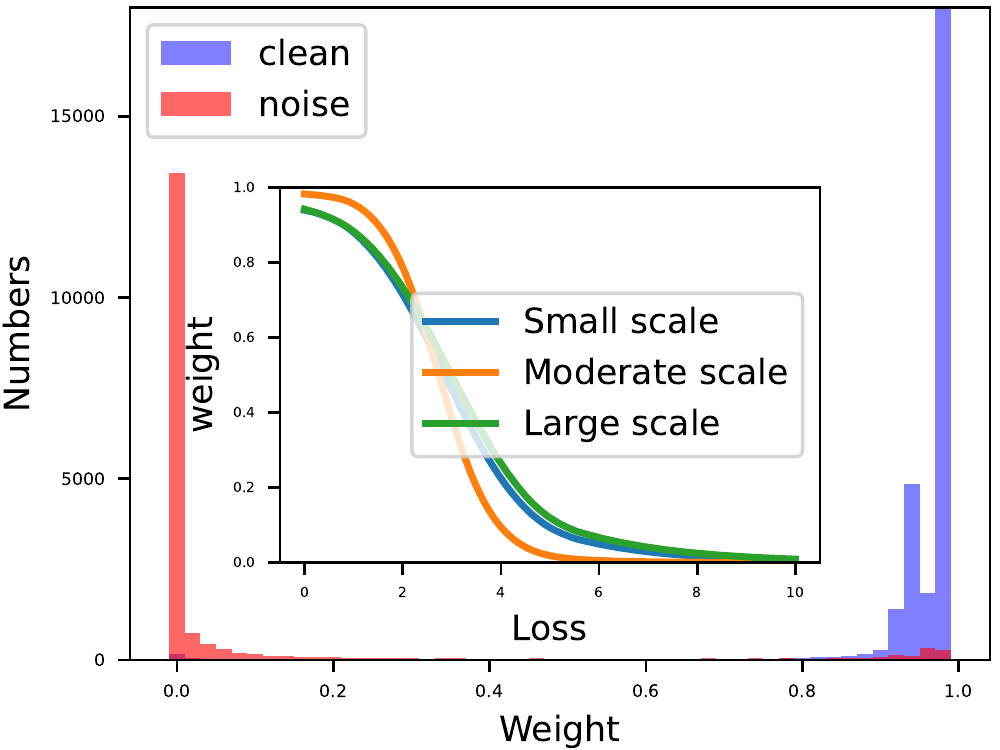}}
	\subfigure[]{\label{figag}
	\includegraphics[width=0.24\textwidth]{./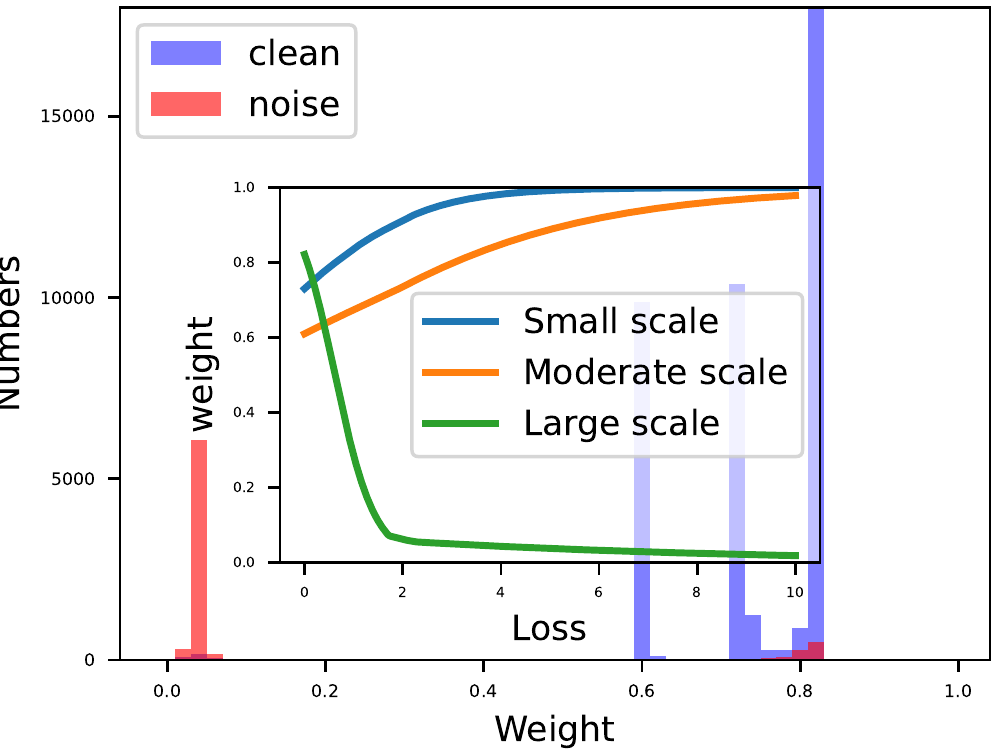}}
			\subfigure[]{\label{figah}
		\includegraphics[width=0.24\textwidth]{./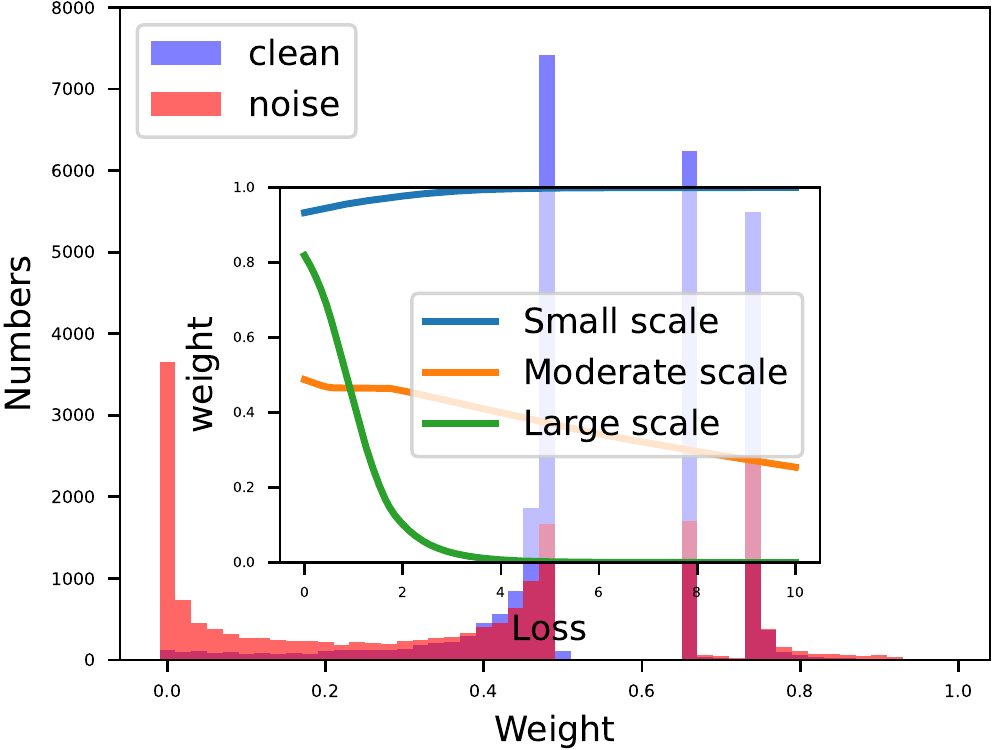}} \vspace{-0.4cm}
	\caption{ (a-d) The weighting function extracted by MW-Net \cite{shu2019meta}, and (e-h) three weighting functions extracted by CMW-Net (corresponding to three task families with small, moderate, large data scales), alongside the histogram of all sample weights calculated by them, for four types of simulated biased datasets. From left to right: Class imbalance (imbalanced factor 10), Symmetric noise (noise rate 40\%), Asymmetric noise (noise rate 40\%), Feature-dependent noise (Type-\uppercase\expandafter{\romannumeral1} + 30\% Asymmetric). The details of simulated biased datasets please refer to Section \ref{section4}. }\label{ss}
	\vspace{-4mm}
\end{figure*}

\vspace{-4mm}
\subsection{CMW-Net and Our Contributions }

Against the aforementioned issues, in this study, we substantially reform MW-Net to make it performable in practical scenarios with complicated data biases. Compared to that we use sample-level information (e.g., loss) to distinguish individual bias properties of different samples,
the core idea is to extract certain higher task-level feature representation from all training classes/tasks to deliver their specific heterogeneous bias characteristics for discriminating training classes/tasks with similar data bias. And then we can accumulate tasks with approximately homoscedastic data bias (e.g., using a clustering algorithm according to the task feature) as a task family. Thus the training dataset can be divided into several task families, where intra-task-families own similar data bias, while inter-task-families own different data biases. To this aim, we simply take the scale level of each task (i.e., the number of samples for each class/task in our implementation) as the task feature, which can be validated to be effective and capable of assembling training classes with approximately homoscedastic loss distributions, as shown in Fig.\ref{fig11}.
Then it is hopeful to deal with heterogeneous data bias by distinguishing individual bias properties of different classes/tasks, and adaptively ameliorating their imposed weighting function forms.
Therefore, we can reform MW-Net by taking such task feature as the supplementary input information besides the sample loss into the weighting function, as shown in Fig. \ref{fig33b}. We call this approach the Class-aware Meta-Weight-Net, or CMW-Net for brevity.

In a nutshell, the main contribution of this paper can be summarized as follows.

1) We propose a CMW-Net model, as shown in Fig.\ref{fig33b}, to automatically learn a proper weighting strategy for real-world heterogeneous data bias in a meta-learning manner.

2) The proposed CMW-Net is model-agnostic, and is substantiated to be performable in different complicated data bias cases, and obtain competitive results with state-of-the-art (SOTA) methods on real-world biased datasets, like ANIMAL-10N \cite{xiao2015learning}, Webvision \cite{li2017webvision} and WebFG-496 \cite{sun2021webly}

3) We further make soft-label amelioration for the CMW-Net model by integrating sample pseudo-label knowledge estimated by model prediction, aiming to correct and reuse the suspected noisy samples into the model training.

4) We study the transferability of CMW-Net. The learned weighting scheme can be used in a plug-and-play manner, and can be directly deployed on unseen datasets, without need to specifically extra tune hyperparameters of CMW-Net.

5) We also evaluate easy generality of CMW-Net to other robust learning tasks, including partial-label learning \cite{jin2002learning}, semi-supervised learning \cite{sohn2020fixmatch} and selective classification \cite{geifman2017selective}.

The paper is organized as follows. Sec. \ref{section3} discusses related work. Sec. \ref{section2} presents the proposed CMW-Net method as well as its learning algorithm and convergence analysis. Simulated and real-world experiments are demonstrated in Sec. \ref{section4} and Sec. \ref{real}, respectively. Sec. \ref{trans} evaluates the transferability of CMW-Net. Sec. \ref{application} introduces the evaluation of CMW-Net to several related applications. The conclusion is finally made.

\section{Related Work} \label{section3}
\textbf{Conventional Sample Weighting Methods.}
The idea of re-weighting examples can be dated back to importance sampling \cite{kahn1953methods}, aiming to assign weights to samples in order to match one distribution to another. Besides, the early attempts of dataset resampling \cite{chawla2002smote,dong2017class} or instance re-weight \cite{zadrozny2004learning} pre-evaluate the sample weights using certain prior knowledge on the task or data. To make sample weights fit data more flexibly, more recent researchers focus more on pre-designing an explicit weighting function mapping from training loss to sample weight, and dynamically ameliorate weights during training process. There are mainly two manners to design such weighting function. One is to make it monotonically increasing, which is specifically effective in class imbalance case. Typical methods along this line include the boosting algorithm \cite{freund1997decision, friedman2000additive,johnson2019survey}, hard example mining \cite{malisiewicz2011ensemble} and focal loss \cite{lin2018focal}, which impose larger weights to ones with larger loss values. On the contrary, another series of methods specify the weighting function as monotonically decreasing, more popularly used in noisy label cases. Typical examples include SPL \cite{kumar2010self} and its extensions \cite{jiang2014easy,shu2020meta}, iterative reweighting ~\cite{zhang2018generalized,wang2017robust}, paying more emphasis on easy samples with smaller losses. The evident limitation of these methods is that they all need to manually pre-specify the form of weighting function as well as its hyper-parameters based on users' prior expert knowledge on the investigated data and learning problem, raising their difficulty to be readily used in real applications. Meanwhile, presetting a certain form of weighting function suffers from the limited flexibility to make the model adaptable to the complicated training data biases, like those with inter-class bias-heterogenous distributions.

\textbf{Meta Learning Methods for Sample Weighting.} Inspired by meta-learning developments \cite{shu2018small,finn2017model,hospedales2020meta,shu2021learning}, recently some methods have been proposed to adaptively learn sample weights from data to make the learning more automatic and reliable.
Typical methods along this line include FWL \cite{dehghani2017fidelity}, learning to teach \cite{fan2018learning}, MentorNet \cite{jiang2018mentornet}, L2RW \cite{ren2018learning}, and MW-Net \cite{shu2019meta}. Especially, MW-Net \cite{shu2019meta} adopts an MLP net to learn an explicit weighting scheme instead of conventional pre-defined weighting scheme. It has been substantiated that weighting function automatically extracted from data comply with those proposed in the hand-designed studies for class-imbalance or noisy labels \cite{shu2019meta}. As analyzed in Sec. 1, the effectiveness of the method, however, is built on the premise assumption that all training classes are with approximately homogeneous biases. However, real-world biased datasets are always inter-class heteroscedastic, and thus it tends to lose efficacy in more practical applications.

\textbf{Other Methods for Class Imbalance.}
Except for sample re-weighting methods, there exist other learning paradigms for handling class imbalance. Typically, \cite{wang2017learning,cui2018large} try to transfer the knowledge learned from major classes to minor ones. \cite{wang2020meta} uses meta feature modulator to balance the contribution per class during the training phase.
The metric learning based methods, e.g., triple-header loss \cite{huang2016learning} and range loss \cite{zhang2017range}, have also been developed to effectively exploit the tailed data to improve the generalization. Furthermore, \cite{jamal2020rethinking} applies domain
adaptation on learning tail class representation.

\textbf{Other Methods for Corrupted Labels.} For handling noisy label issues, many methods have also been designed by making endeavors to correct noisy labels to their true ones to more sufficiently discover and reuse the beneficial knowledge underlying these corrupted data. The typical strategies include supplementing an extra label correction step \cite{arazo2019unsupervised,huang2020self,zheng2020error,wu2021learning}, designing a robust loss function \cite{ghosh2017robust,zhang2018generalized,wang2019symmetric,amid2019robust,ma2020normalized,shu2020learning}, revising the loss function via loss correction \cite{patrini2017making,hendrycks2018using,shu2020meta,lukasik2020does}, and so on. Please refer to references \cite{frenay2013classification,algan2021image,karimi2020deep,song2020learning,han2020survey} for a more overall review.

\section{Class-Aware Meta-Weight-Net}\label{section2}
\subsection{Sample Re-weighting Methodology}
Consider a classification problem with biased training set $\mathcal{D}^{tr} = \{x_i,y_i\}_{i=1}^N$,  where $x_i$ denotes the $i$-th training sample, $y_i \in \{0,1\}^C$ is the one-hot encoding label corresponding to $x_i$, and $N$ is the number of the entire training data. $f(x;\mathbf{w})$ denotes the classifier with $\mathbf{w}$ representing its model parameters. In current applications, $f(x,\mathbf{w})$ is always set with a DNN architecture. We thus also adopt DNN as our prediction model, and call it a classifier network for convenience in the following. Generally, the optimal model parameter $\mathbf{w^*}$ can be extracted by minimizing the following training loss calculated on the training set:
\begin{align}
\mathbf{w}^* = \mathop{\arg\min}_{\mathbf{w} } \frac{1}{N}\sum_{i=1}^N {\ell}(f(x_i;\mathbf{w}),y_i),
\end{align}
where $\ell(f(x;\mathbf{w}),y)$ denotes the training loss on training sample $(x,y)$. In this study, we adopt the commonly adopted cross-entropy (CE) loss $\ell(f(x;\mathbf{w}),y)=-y^T \log(f(x;\mathbf{w})) $, where $f(x;\mathbf{w})$ denotes the network output (especially, $f(x;\mathbf{w})\in \Delta^c$ is a simplex when using Softmax function in the end layer of the network). For notation convenience, we denote $L_i^{tr}(\mathbf{w})= {\ell}(f(x_i;\mathbf{w}),y_i)$ in the following.

In the presence of biased training data, sample re-weighting methods aim to enhance the robustness of network training by imposing a weight $v_i \in [0,1]$ on the $i$-th training sample loss. Then the optimal parameter $\mathbf{w^*}$ is
calculated by minimizing the following weighted loss function:
\begin{align}
	\mathbf{w}^* = \mathop{\arg\min}_{{\mathbf{w} }} \frac{1}{N}\sum_{i=1}^N v_i {\ell}(f(x_i;\mathbf{w}),y_i).
\end{align}
To make sample weights fit data more flexibly, researchers mostly focused on pre-defining a weighting function mapping from training loss to sample weight, and dynamically ameliorate weights during training process \cite{friedman2000additive,lin2018focal,kumar2010self}. More details can refer to literatures provided in related work.

\begin{figure*}[t]
	\centering
	\subfigcapskip=-5mm
	\vspace{-2mm} \includegraphics[width=0.92\textwidth]{./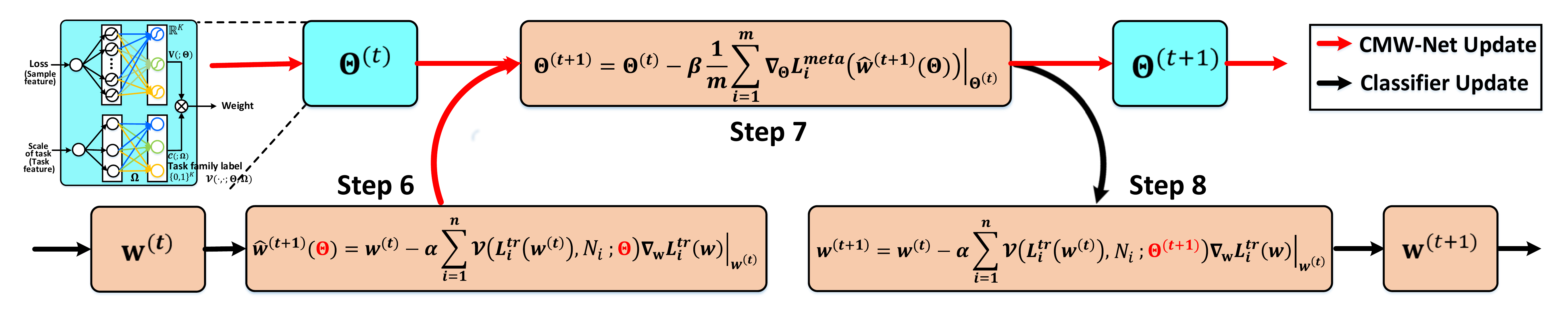}\vspace{-6mm}
	\caption{Main flowchart of the proposed CMW-Net meta-training  algorithm (steps 6-8 in Algorithm \ref{alg:example}).}\label{fig1}  \vspace{-2mm}
\end{figure*}

\subsection{Meta-Weight-Net}\label{section32}
As aforementioned, most conventional sample re-weighting studies need to manually pre-specify the form of weighting function as well as their hyper-parameters based on certain expert knowledge for the investigated problem. This naturally raises their difficulty in readily using them in real applications.
Meanwhile, such weighting function pre-setting manner suffers from the limited flexibility to adapt complicated data bias cases, like applications simultaneously containing class imbalance and noisy label abnormalities in their certain classes.
To address above issues, MW-Net \cite{shu2019meta} is proposed to use an MLP to deliver a suitable weighting function from data.
The architecture of the MW-Net (see Fig. \ref{fig33a}), denoted as $V(\ell;\theta)$, naturally succeededs from the previous sample re-weighting approaches, by setting its input as training loss and output as sample weight, with $\theta$ as its network parameter. Just following standard MLP net, each hidden node is with ReLU activation function, and the output is with the Sigmoid activation function, to guarantee the output located in the interval of $[0,1]$. This weight net is known with a strong fitting capability to represent a wide range of weighting function forms, like those monotonically increasing or decreasing ones as conventional manually specified ones \cite{csaji2001approximation}. The MW-Net thus ideally includes many conventional sample weighting schemes as its special cases.

The parameters contained in MW-Net can be optimized in a meta learning manner \cite{finn2017model,hospedales2020meta,shu2021learning}.
Specifically, with a small amount of unbiased meta-data set $\mathcal{D}^{meta} = \{x_i^{(meta)},y_i^{(meta)}\}_{i=1}^M$ (i.e., with clean labels and balanced data class distribution), representing the meta-knowledge of ground-truth sample-label distribution, where $M$ is the number of meta-samples, the optimal parameter $\theta^*$ of MW-Net can be obtained by minimizing the following bi-level optimization problem:
\begin{align}\label{eq4}
	\begin{split}
	{\theta}^* &= \mathop{\arg\min}_{{\theta }}  \frac{1}{M} \sum_{i=1}^M L_i^{meta}(\mathbf{w}^*(\theta)),\\
	s.t. &\ \mathbf{w}^*(\theta) =\mathop{\arg\min}_{{\mathbf{w} }} \sum_{i=1}^N V(L_i^{tr}(\mathbf{w});\theta) L_i^{tr}(\mathbf{w}),
	\end{split}
\end{align}
where $L_i^{meta}(\mathbf{w}^*(\theta)) \!=\! \ell\!\left(\!f(x_i^{(meta)};\mathbf{w}^*(\theta)),y_i^{(meta)}\!\right)$.
Experimental results on datasets with inter-class homogeneous bias situations, like all classes with similar imbalance rate for class imbalance or similar noise rate for noisy labels case, have shown that the learned weighting schemes are consistent with empirical pre-defined ones as conventional methods \cite{shu2019meta}.

\vspace{-4mm}
\subsection{Class-aware Meta-Weight-Net} \label{section33}
In this section, we first show our motivation of constructing CMW-Net beyond MW-Net, and then we introduce the fundamental consideration and principle of constructing the two branches contained in the CMW-Net architecture. Next we introduce how the two branches are practically formulated in our algorithm, and finally we summarize the overall formulation of CMW-Net and the bi-level optimization objective for calculating its final result.

According to the analysis in Section \ref{section12}, the main limitation of MW-Net is built on the premise that all training classes/tasks approximately follow a similar task distribution. Then MW-Net can use only one unique weighting scheme to handle homogeneous data biases over all training classes/tasks. However, real-world biased datasets are often heterogeneous with obvious inter-class variations of bias, especially for those with a large number of training classes. Since the premise is evidently hampered by heteroscedastic bias situations across different classes/tasks, MW-Net tends to largely lose its efficacy when encountering complicated biased datasets.
This motivates us to reform MW-Net to make it possess adaptability of specifying proper weighting schemes to different classes/tasks based on their own internal bias characteristics.

To this aim, we propose a new weighting model as shown in Fig.\ref{fig33b}, called Class-aware Meta-Weight-Net (CMW-Net for brevity). The architecture of CMW-Net is composed of two branches. The below branch integrates task-level feature knowledge into the input of the original MW-Net as a beneficial compensation besides the original sample-level loss input. The function of this branch is to distinguish individual bias properties of different classes/tasks, and accumulate training classes/tasks with approximately homogeneous bias types (e.g., using a clustering algorithm according to the task feature) as a task family. Based on the identified task families, we can extract a possible faithful sample-weighting scheme shared by a task family, while suppressing the unexpected interference by other task families with heterogeneous ones. Such amelioration is expected to enable the output weight of a sample to correlate with its included training classes/tasks, so as to make it possibly adaptable to class bias variations.

In this study, we attempt to take the scale level (i.e., the number of samples) of each training class/task to represent its task feature. Albeit simple, such feature does be able to deliver helpful class/task pattern underlying its bias types. For instance, from Fig.\ref{fig11}, we can validate the effect for symmetric and asymmetric noise cases (please see more cases in supplementary material).
Denote $N_i$ ($i=1,\!\cdots\!,N$) as the number of samples contained in the training class to which the $i$-th sample $x_i$ belongs. Then below branch of CMW-Net can be expressed as $\mathcal{C}(N_i;\Omega)\in \{0,1\}^K$, by taking $N_i$ as its input to represent the task feature, and including a hidden layer containing $K$ nodes, attached with $K$-levels of scales $\Omega=\{\mu_k\}_{k=1}^{K}$ sorted in ascending order (i.e., $\mu_1<\mu_2<\cdots<\mu_K$). The output of this branch is a $K$-dimensional one-hot vector (i.e., task family label),
whose $1$-element is located at its $k$-th dimension corresponding to the nearest $\mu_k$ to the input $N_i$.

The other branch can be represented as ${\mathbf{V}} (L_i^{tr}(\mathbf{w});\Theta)\in [0,1]^K$, built as an MLP architecture with the loss value of the $i$-th sample as its input, containing one hidden layer and a $K$-dimensional output\footnote{In all our experiments, we just simply set the hidden layer containing 100 nodes with ReLU activation function, and specify the output node with Sigmoid activation function, to guarantee the output of each task family located in the interval of $[0,1]$.}.
Different from $1$-dimensional weight output of MW-Net, this network contains $K$ output weights, corresponding to its $K$ different weighting schemes imposed on samples located in different task families. The sharing hidden layer among these task families extracts the correlation among weighting principles of different task families, which helps reduce the risk of overfitting.

Then the CMW-Net weighting function is formulated as:
\begin{align}
	\mathcal{V}(L_i^{tr}(\mathbf{w}),N_i;\Theta,\Omega) = {\mathbf{V}}(L_i^{tr}(\mathbf{w});\Theta)\otimes \mathcal{C}(N_i;\Omega),
\end{align}
where $\!\otimes\!$ denotes the dot product between two vectors. Through the modulation of the higher-level task feature information, CMW-Net is expected to learn a class-aware weighting function by accumulating training classes/tasks with homogeneous bias situations, and allow different training classes/tasks possessing different weighting schemes complying with their own internal bias characteristics.

\begin{algorithm}[t]
	\vspace{0mm}
	\setlength{\abovecaptionskip}{0.cm}
	\setlength{\belowcaptionskip}{-2cm}
	\renewcommand{\algorithmicrequire}{\textbf{Input:}}
	\renewcommand{\algorithmicensure}{\textbf{Output:}}
	\caption{The CMW-Net Meta-training Algorithm}
	\label{alg:example}
	\begin{algorithmic}[1]  \small
		\REQUIRE  Training dataset $\mathcal{D}^{tr}$, meta-data set $\mathcal{D}^{meta}$, batch size $n,m$, max iterations $T$.
		\ENSURE  Classifier parameter $\mathbf{w}^{(*)}$, CMW-Net parameter $\Theta^{(*)}$
		\STATE
		Apply $K$-means on the sample numbers of all training classes to obtain $\Omega=\{\mu_k\}_{k=1}^{K}$ sorted in ascending order.
		\STATE Initialize classifier network parameter $\mathbf{w}^{(0)}$ and CMW-Net parameter $\Theta^{(0)}$.
		\FOR{$t=0$ {\bfseries to} $T-1$}
		\STATE $\{x,y\} \leftarrow$ SampleMiniBatch($\tilde{\mathcal{D}}^{tr},n$).
		\STATE $\{x^{meta},y^{meta}\} \leftarrow$ SampleMiniBatch($\mathcal{D}^{meta},m$).
		\STATE Formulate the learning manner of classifier network $\hat{\mathbf{w}}^{(t+1)}(\Theta)$ by Eq. (\ref{eq6}).
		\STATE Update parameter $\Theta^{(t+1)}$ of CMW-Net by Eq. (\ref{eqtheta}).
		\STATE Update parameter $\mathbf{w}^{(t+1)}$ of classifier by Eq. (\ref{eq8}).
		\ENDFOR
	\end{algorithmic}
\end{algorithm}

Now, the objective function of CWM-Net can be written as the following bi-level optimization problem:
\begin{align}
	\{{\Theta}^*,\Omega^*\} &= \mathop{\arg\min}_{{\Theta,\Omega }}  \frac{1}{M} \sum_{i=1}^M L_i^{meta}(\mathbf{w}^*(\Theta,\Omega)),          \label{eqout}\\
	\mathbf{w}^*(\Theta,\Omega) &= \mathop{\arg\min}_{{\mathbf{w} }} \sum_{i=1}^N \mathcal{V}(L_i^{tr}(\mathbf{w}),N_i;\Theta,\Omega) L_i^{tr}(\mathbf{w}). \label{eqin}
\end{align}
Note that CMW-Net is degenerated to MW-Net if we take $K=1$, i.e., all training classes own one weighting scheme.

\subsection{Learning Algorithm of CMW-Net}
\subsubsection{Meta-training: learning CMW-Net from training data}
We firstly discuss how to train the CMW-Net from the given training data. There are two groups of parameters, including $\Theta$ and $\Omega$, required to be optimized to attain the CMW-Net model. Therein, the optimization of the scale parameters $\Omega$ corresponds to an integer programming problem and thus hard to design an efficient algorithm for getting its global optimum. We thus adopt a two-stage process to first pre-determine a rational specification of $\Omega^*$, and then focus the computation on optimizing other parameters in the problem. In specific, the standard $K$-means algorithm \cite{bishop2006pattern} is employed on the sample numbers within all training classes (including $C$ positive integers) to obtain cluster centers $\Omega=\{\mu_k\}_{k=1}^{K}$ sorted in ascending order. Throughout all our experiments, we simply set $K\!=\!3$. The small, moderate, and large-scale task families for different datasets can then be distinguished based on the ascending $\{\mu_k\}_{k=1}^{K}$.
All our experiments show consistently and stably fine performance under such simple setting. This also implies that there remains a large room for further performance enhancement of our model by utilizing more elegant optimization techniques and designing more comprehensive task-level feature representations, which will be further investigated in our future research.

Then our aim is to solve the bi-level optimization of Eqs. (\ref{eqout}) and (\ref{eqin}) to obtain optimal $\Theta^*$ and $\mathbf{w}^*$. To make notation concise, we directly neglect $\Omega$ in Eqs. (\ref{eqout}) and (\ref{eqin}) in the following. Note that exact solutions to Eqs. (\ref{eqout}) and (\ref{eqin}) require solving the optimal $\mathbf{w}^*$ whenever $\Theta$ gets updated. This is both analytically infeasible and computationally expensive.
Following previous works \cite{shu2019meta,ren2018learning}, we adopt one step of stochastic gradient descent (SGD) update for $\mathbf{w}$ to online approximate the optimal classifier for a given $\Theta$, which guarantees the efficiency of the algorithm.

\textbf{Formulating learning manner of classifier network.}
To optimize Eq. (\ref{eqin}), in each iteration a mini-batch of training samples $\{(x_i,y_i)\}_{i=1}^n$ is sampled, where $n$ is the mini-batch size. Then the classifier parameter can be updated by moving the current $\mathbf{w}^{(t)}$ along the descent direction of Eq. (\ref{eqin}) on the mini-batch training data as the following expression:
\begin{align}\label{eq6}
	\begin{split}
		\hat{\mathbf{w}}^{(t+1)}(\Theta)=  &
		\mathbf{w}^{(t)} - \\
		&\alpha
		\sum_{i=1}^n    \mathcal{V}(L_i^{tr}(\mathbf{w}^{(t)}),N_i;\Theta)\nabla_{\mathbf{w}} L_i^{tr}(\mathbf{w})\Big|_{\mathbf{w}^{(t)}},
	\end{split}
\end{align} 
where $\alpha$ is the learning rate for the classifier network $f$.

\textbf{Updating parameters of CMW-Net:} Based on the classifier updating formulation $\hat{\mathbf{w}}^{(t+1)}(\Theta)$ from Eq.(\ref{eq6}), the parameter $\Theta$ of the CMW-Net can then be readily updated guided by Eq.(\ref{eqout}), i.e., moving the current parameter $\Theta^{(t)}$ along the objective gradient of Eq.(\ref{eqout}). Similar to the updating step for $\mathbf{w}$, the stochastic gradient descent (SGD) is also adopted. That is, the update is calculated on a sampled mini-batch of meta-data $\{(x^{meta}_i,y^{meta}_i)\}_{i=1}^m$, expressed as
\begin{align}\label{eqtheta}
	\Theta^{(t+1)} =  \Theta^{(t)} -\beta \frac{1}{m}\sum_{i=1}^{m} \nabla_{ \Theta} L_i^{meta}(\hat{\mathbf{w}} ^{(t+1)}(\Theta))\Big|_{\Theta^{(t)}},
\end{align}
where $\beta$ is the learning rate for CMW-Net. Notice that $\Theta$ in $\hat{\mathbf{w}}^{t+1}(\Theta)$ here is a variable instead of a quantity, which makes the gradient in Eq. (\ref{eqtheta}) able to be computed.

\textbf{Updating parameters of classifier network:} Then, the updated $\Theta^{(t+1)}$ is employed to ameliorate the parameter $\mathbf{w}$ of the classifier network, i.e.,
\begin{align}\label{eq8}
	\begin{split}
		\mathbf{w}^{(t+1)} = &
		\mathbf{w}^{(t)} - \\
		&\alpha
		\!\sum_{i=1}^n \!  \mathcal{V}(L_i^{tr}(\mathbf{w}^{(t)}),N_i;\Theta^{(t+1)}) \nabla_{\mathbf{w}} L_i^{tr}(\mathbf{w})\Big|_{\mathbf{w}^{(t)}}.
	\end{split}
\end{align}

Note that we derive with plain SGD here. This, however, also holds for most variants of SGD, like Adam \cite{kingma2015adam}. The CMW-Net learning algorithm can then be summarized in Algorithm \ref{alg:example}, and Fig.\ref{fig1} illustrates its main implementation process (steps 6-8).
All computations of gradients can be efficiently implemented by automatic differentiation techniques and generalized to any deep learning architectures of the classifier. The algorithm can be easily implemented using popular deep learning frameworks like PyTorch \cite{paszke2019pytorch}. It is easy to see that both the classifier and CMW-Net gradually ameliorate their parameters during the learning process based on their values calculated in the last step, and the weights thus tend to be updated in a stable manner.

\vspace{-2mm}
\subsubsection{Analysis on intrinsic learning mechanism of CMW-Net}\label{analysis}
We then present some insightful analysis for revealing some intrinsic learning mechanisms underlying CMW-Net. The updating step of Eq. (\ref{eqtheta}) can be equivalently rewritten as (derivations are presented in supplementary material):
\begin{align}\label{eq10}
	\begin{split}
		& \Theta^{(t+1)}	= \Theta^{(t)} +\\
		&  {\alpha\beta}\sum_{j=1}^n \left(\frac{1}{m} \sum_{i=1}^{m}G_{ij} \right)  \frac{\partial \mathcal{V}(L_j^{tr}(\mathbf{w}^{(t)}),N_j;\Theta)}{\partial \Theta}\Big|_{ \Theta^{(t)}},
	\end{split}
\end{align}
where $G_{ij}=\frac{\partial L_i^{meta} (\hat{\mathbf{w}})}{\partial \hat{\mathbf{w}}}\Big|_{\hat{\mathbf{w}} ^{(t+1)}(\Theta)}^T \frac{\partial L_j^{tr} (\mathbf{w})}{\partial \mathbf{w}} \Big|_{\mathbf{w}^{(t)}}$.
Neglecting the coefficient $\frac{1}{m} \sum_{i=1}^{m} G_{ij} $, it is easy to see that each term in the above sum orients to the ascending gradient of the weight function $\mathcal{V}(L_j^{tr}(\mathbf{w}^{(t)}),N_j;\Theta)$. The coefficient imposed on the $j$-th gradient term, $\frac{1}{m} \sum_{i=1}^{m} G_{ij}$, represents the similarity between the gradient of the $j$-th training sample computed on the training loss and the average gradient of the mini-batch meta data calculated on meta loss. This means that if the learning gradient of a training sample is similar to that of the meta samples, then it inclines to be considered as in-distribution and CMW-Net tends to produce a higher
sample weight for it. Conversely, samples with gradient different from that of the meta set incline to be suppressed. This understanding is consistent with the intrinsic working mechanism underlying the well-known MAML \cite{finn2017model,nichol2018reptile}.

\begin{algorithm}[t]
	\renewcommand{\algorithmicrequire}{\textbf{Input:}}
	\renewcommand{\algorithmicensure}{\textbf{Output:}}
	\caption{The CMW-Net Meta-test Algorithm}
	\label{alg:example1}
	\begin{algorithmic}[1]  \small
		\REQUIRE  Training dataset $\mathcal{D}^{q}$, batch size $n'$, max iterations $T'$ and meta-learned CMW-Net with parameter $\Theta^{*}$.
		\ENSURE  Classifier parameter $\mathbf{u}^{*}$.
		\STATE Apply $K$-means on sample numbers of all training classes to obtain $\Omega^q=\{\mu_k^q\}_{k=1}^{K}$ sorted in ascending order.
		\STATE Initialize classifier network parameter $\mathbf{u}^{(0)}$.
		\FOR{$t=0$ {\bfseries to} $T'-1$}
		\STATE Update classifier $\mathbf{u}^{(t+1)}$ by solving Eq. (\ref{eqinw}).
		\ENDFOR 
	\end{algorithmic}
\end{algorithm}

\vspace{-2mm}
\subsubsection{Meta-test: transferring CMW-Net to unseen tasks}
After the meta-training stage, the learned CMW-Net with parameter $\Theta^{(*)}$ can then be transferred to readily assign proper sample weights on unseen biased datasets. Specifically, for a query dataset $\mathcal{D}^{q} = \{x_i^q,y^q_i\}_{i=1}^{N^q}$, we first need to implement $K$-means on sample numbers of all classes to obtain its cluster centers $\Omega^q=\{\mu_k^q\}_{k=1}^{K}$ as new scale parameters of CMW-Net. Then the learned CMW-Net can be directly used to impose sample weights to the classifier learning of the problem by solving the following objective of query task:
\begin{align}
	\mathbf{u}^* &= \mathop{\arg\min}_{{\mathbf{u} }} \sum_{i=1}^{N^q} \mathcal{V}(L_i^{q}(\mathbf{u}),N^q_i;\Theta^*,\Omega^q) L_i^{q}(\mathbf{u}), \label{eqinw}
\end{align}
where
$L_i^{q}(\mathbf{u}) = \ell\left(f(x_i^{q};\mathbf{u}),y_i^{q}\right)$, and $N_i^q$ denotes the number of samples contained in the class to which $x_i^q$ belongs. Then we can solve Eq.(\ref{eqinw}) with the learned $\Theta^{*}$ to obtain classifier $\mathbf{u}^*$. The overall algorithm is summarized in Algorithm \ref{alg:example1}.

\vspace{-2mm}
\subsection{Convergence of the CMW-Net Learning Algorithm}
Next we attempt to establish a convergence result of our method for calculating Eqs. (\ref{eqout}) and (\ref{eqin}) in a bi-level optimization manner. In particular, we theoretically show that our method converges to critical points of both the meta loss (Eq.(\ref{eqout})) and training loss (Eq.(\ref{eqin})) under some mild conditions in Theorem \ref{th11} and \ref{th22}, respectively. The proofs are presented in the supplementary material (SM for brevity).
\begin{theorem} \label{th11}
	Suppose the loss function $\ell$ is Lipschitz smooth with constant $L$, and CMW-Net $\mathcal{V}(\cdot,\cdot;\Theta)$ is differential with a $\delta$-bounded gradient and twice differential with its Hessian bounded by $\mathcal{B}$, and the loss function $\ell$ have $\rho$-bounded gradients with respect to training/meta data. Let the learning rate $\alpha_t, \beta_t, 1\leq t\leq T$ be monotonically decreasing sequences, and satisfy $\alpha_t=\min\{\frac{1}{L},\frac{c_1}{\sqrt{T}}\}, \beta_t=\min\{\frac{1}{L},\frac{c_2}{\sqrt{T}}\}$, for some $c_1,c_2>0$, such that $\frac{\sqrt{T}}{c_1}\geq L, \frac{\sqrt{T}}{c_2}\geq L$. Meanwhile, they satisfy $\sum_{t=1}^\infty \alpha_t = \infty,\sum_{t=1}^\infty \alpha_t^2 < \infty ,\sum_{t=1}^\infty \beta_t = \infty,\sum_{t=1}^\infty \beta_t^2 < \infty $. Then CMW-Net can then achieve $\mathbb{E}[ \|\nabla \mathcal{L}^{meta}(\hat{\mathbf{w}}^{(t)}(\Theta^{(t)}))\|_2^2] \leq \epsilon$ in $\mathcal{O}(1/\epsilon^2)$ steps. More specifically,
	\begin{align*}
		\min_{0\leq t \leq T} \mathbb{E}\left[ \left\|\nabla \mathcal{L}^{meta}(\hat{\mathbf{w}}^{(t)}(\Theta^{(t)}))\right\|_2^2\right] \leq \mathcal{O}(\frac{C}{\sqrt{T}}),
	\end{align*}
	where $C$ is some constant independent of the convergence process.
\end{theorem}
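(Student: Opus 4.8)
The plan is to read Algorithm~\ref{alg:example} as a single-loop stochastic-approximation scheme for the bilevel problem (\ref{eqout})--(\ref{eqin}) and to run a nonconvex-SGD descent argument on the \emph{composite meta-objective} $h_t(\Theta):=\mathcal{L}^{meta}\!\big(\hat{\mathbf{w}}^{(t+1)}(\Theta)\big)$, i.e.\ the true meta loss composed with the one-step classifier map (\ref{eq6}). A preliminary remark makes the reduction to an MW-Net--type analysis transparent: once the initial $K$-means step has fixed $\Omega$, each selector $\mathcal{C}(N_i;\Omega)\in\{0,1\}^K$ is a \emph{constant} one-hot vector, so $\mathcal{V}(\cdot,N_i;\Theta,\Omega)$ merely reads off one coordinate of ${\mathbf V}(\cdot;\Theta)$; hence the $\delta$-bounded gradient, the $\mathcal{B}$-bounded Hessian and the $[0,1]$ range of $\mathcal{V}$ are all inherited from ${\mathbf V}$, and $\mathcal{V}$ may be treated as an abstract weighting map obeying the stated bounds, with CMW-Net's extra branch contributing nothing new to the regularity.

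\textbf{Step 1: composite smoothness (the crux).} I would first prove that $\Theta\mapsto h_t(\Theta)$ is Lipschitz smooth with a constant $L_h$ independent of $t$. By the chain rule $\nabla_\Theta h_t(\Theta)=\big(\partial_\Theta\hat{\mathbf{w}}^{(t+1)}(\Theta)\big)^{\!\top}\nabla_{\mathbf{w}}\mathcal{L}^{meta}\big(\hat{\mathbf{w}}^{(t+1)}(\Theta)\big)$, and from (\ref{eq6}) the Jacobian $\partial_\Theta\hat{\mathbf{w}}^{(t+1)}(\Theta)=-\alpha_t\sum_{i}\nabla_{\mathbf{w}}L_i^{tr}(\mathbf{w}^{(t)})\,\big(\partial_\Theta\mathcal{V}(L_i^{tr}(\mathbf{w}^{(t)}),N_i;\Theta)\big)^{\!\top}$ has norm at most $\alpha_t n\rho\delta$, using the $\rho$-bound on $\nabla_{\mathbf w}\ell$ and the $\delta$-bound on $\partial_\Theta\mathcal{V}$. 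Differentiating a second time produces only two families of terms: one carrying $\partial^2_\Theta\mathcal{V}$ (controlled by $\mathcal{B}$, with a $\rho^2$ factor from two loss-gradient factors), and one in which the $L$-Lipschitz-smoothness of $\nabla_{\mathbf{w}}\mathcal{L}^{meta}$ is composed with the above Jacobian on both sides (a factor $(\alpha_t n\rho\delta)^2 L$). Collecting these gives $\|\nabla_\Theta h_t(\Theta)-\nabla_\Theta h_t(\Theta')\|\le L_h\|\Theta-\Theta'\|$ with $L_h$ an explicit polynomial in $L,\rho,\delta,\mathcal{B},\alpha,n$; since $\alpha_t\le 1/L$ throughout, $L_h$ is uniform in $t$. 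This nested differentiation, and the bookkeeping that every cross term is dominated by the assumed bounds, is where I expect the bulk of the work to lie.

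\textbf{Step 2: stochastic gradient and coupling between the two iterates.} Write $\xi_t$ for the training mini-batch entering (\ref{eq6}) and $\zeta_t$ for the meta mini-batch in (\ref{eqtheta}); the realized update direction $G_t$ in (\ref{eqtheta}) then satisfies $\mathbb{E}[G_t\,|\,\mathcal{F}_t]=\nabla_\Theta h_t(\Theta^{(t)})$ and $\mathbb{E}[\|G_t-\nabla_\Theta h_t(\Theta^{(t)})\|^2\,|\,\mathcal{F}_t]\le\sigma^2$, the bounded variance $\sigma^2$ following from the $\rho,\delta$ bounds (bounded gradients $\Rightarrow$ bounded variance). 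One must also relate the tracked quantity $h_t(\Theta^{(t)})=\mathcal{L}^{meta}(\hat{\mathbf{w}}^{(t+1)}(\Theta^{(t)}))$ to $\mathcal{L}^{meta}(\mathbf{w}^{(t)})$ and to $\mathcal{L}^{meta}(\mathbf{w}^{(t+1)})$: since the classifier steps (\ref{eq6}) and (\ref{eq8}) move $\mathbf{w}$ by only $\mathcal{O}(\alpha_t)$ (weights in $[0,1]$, gradients $\rho$-bounded), a Taylor expansion with $L$-smooth remainder controls these discrepancies up to an $\mathcal{O}(\alpha_t^2)$ quadratic term plus a first-order term; handling the latter carefully, using the structure of the $\Theta$-update revealed in (\ref{eq10}), is the second technical point and, together with Step~1, the place where the argument is most delicate.

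\textbf{Step 3: descent inequality, telescoping, step-size choice.} Applying Step~1's smoothness to $\Theta^{(t+1)}=\Theta^{(t)}-\beta_tG_t$ and taking conditional expectation gives
\begin{align*}
\mathbb{E}\big[h_t(\Theta^{(t+1)})\big]\le \mathbb{E}\big[h_t(\Theta^{(t)})\big]-\Big(\beta_t-\tfrac{L_h\beta_t^{2}}{2}\Big)\mathbb{E}\big[\|\nabla_\Theta h_t(\Theta^{(t)})\|^2\big]+\tfrac{L_h\beta_t^{2}}{2}\,\sigma^{2}.
\end{align*}
Chaining this with the $\mathcal{O}(\alpha_t^2)$ bridging estimates of Step~2, summing over $t=0,\dots,T-1$, telescoping the (bounded-below) meta-loss values, and using $\beta_t\le 1/L$ so that the coefficient of the gradient term is at least $\tfrac12\beta_t$ (absorbing $L_h$ into constants), yields
\begin{align*}
\sum_{t=0}^{T-1}\beta_t\,\mathbb{E}\big[\|\nabla_\Theta h_t(\Theta^{(t)})\|^2\big]\le 2\big(\mathcal{L}^{meta}(\mathbf{w}^{(0)})-\inf \mathcal{L}^{meta}\big)+L_h\sigma^{2}\sum_t\beta_t^{2}+\mathcal{O}\!\Big(\sum_t\alpha_t^{2}\Big).
\end{align*}
Substituting $\alpha_t=\min\{1/L,c_1/\sqrt{T}\}$ and $\beta_t=\min\{1/L,c_2/\sqrt{T}\}$ makes $\sum_t\alpha_t^2$ and $\sum_t\beta_t^2$ bounded by absolute constants while $\sum_t\beta_t$ is of order $\sqrt{T}$, so dividing by $\sum_t\beta_t$ and using $\min_t(\cdot)\le(\sum_t\beta_t)^{-1}\sum_t\beta_t(\cdot)$ gives $\min_{0\le t\le T}\mathbb{E}[\|\nabla\mathcal{L}^{meta}(\hat{\mathbf{w}}^{(t)}(\Theta^{(t)}))\|_2^2]\le \mathcal{O}(C/\sqrt{T})$ with $C$ collecting $L,\rho,\delta,\mathcal{B},c_1,c_2$ and the initial optimality gap; setting this bound to $\epsilon$ forces $T=\mathcal{O}(1/\epsilon^2)$. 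The assumptions $\sum_t\alpha_t=\sum_t\beta_t=\infty$ are used only to ensure the bound is not vacuous as $T\to\infty$, exactly as in the companion Theorem~\ref{th22} for the training loss.
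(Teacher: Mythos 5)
Your overall skeleton is the same as the paper's: your Step 1 is exactly the paper's preparatory lemma establishing Lipschitz continuity of $\Theta\mapsto\nabla_{\Theta}\mathcal{L}^{meta}(\hat{\mathbf{w}}^{(t+1)}(\Theta))$ (same chain-rule bookkeeping with $\rho,\delta,\mathcal{B},L$ and the $\alpha_t\le 1/L$ uniformization), and your Steps 2--3 reproduce its decomposition of $\mathcal{L}^{meta}(\hat{\mathbf{w}}^{(t+1)}(\Theta^{(t+1)}))-\mathcal{L}^{meta}(\hat{\mathbf{w}}^{(t)}(\Theta^{(t)}))$ into a $\Theta$-step part (standard smooth-nonconvex SGD descent with zero-mean, $\sigma^2$-variance mini-batch noise) plus a $\mathbf{w}$-step bridging part, followed by telescoping and the $c_1/\sqrt{T},c_2/\sqrt{T}$ step-size arithmetic. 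So this is not a different route.

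The gap sits exactly at the point you label ``most delicate'' and then silently discard. The bridging term is bounded, as in the paper, by $\tfrac{L\alpha_t^2(\rho^2+\sigma^2)}{2}+\alpha_t\rho\,\|\nabla\mathcal{L}^{tr}(\mathbf{w}^{(t)};\Theta^{(t)})\|_2$ plus zero-mean noise terms; the first-order piece is $\mathcal{O}(\alpha_t)$, not $\mathcal{O}(\alpha_t^2)$, so it cannot be absorbed into the ``$\mathcal{O}(\sum_t\alpha_t^2)$'' appearing in your final summed inequality. If you control it crudely via bounded gradients (the normalized weights give $\|\nabla\mathcal{L}^{tr}\|_2\le\rho$), its sum over $t$ is of order $\rho^2 c_1\sqrt{T}$, and after dividing by $\sum_t\beta_t\asymp c_2\sqrt{T}$ you are left with a non-vanishing constant of order $\rho^2 c_1/c_2$, which destroys the claimed $\mathcal{O}(C/\sqrt{T})$ rate. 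The paper closes this hole by retaining the term as $\rho\sum_t\alpha_t\|\nabla\mathcal{L}^{tr}(\mathbf{w}^{(t)};\Theta^{(t)})\|_2^2$ and invoking the finiteness of that weighted sum, which is supplied by the companion descent analysis of the weighted training objective (the argument behind Theorem~\ref{th22}); in other words, the meta-loss analysis has to be coupled to the training-loss descent, and it is this coupling --- not the composite smoothness of Step 1 --- that is the genuinely missing ingredient in your proposal. Any complete write-up must either import that training-loss estimate or otherwise show the cross term $\sum_t\alpha_t\langle\nabla_{\mathbf{w}}\mathcal{L}^{meta},\nabla\mathcal{L}^{tr}\rangle$ stays $\mathcal{O}(1)$ (or $o(\sqrt{T})$) under the stated step sizes.
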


\begin{theorem}\label{th22}
	Under the conditions of Theorem \ref{th11}, CMW-Net can achieve $\mathbb{E}[ \|\nabla \mathcal{L}^{tr}(\mathbf{w}^{(t)};\Theta^{(t)})\|_2^2] \leq \epsilon$ in $\mathcal{O}(1/\epsilon^2)$ steps, where $\mathcal{L}^{tr}(\mathbf{w};\Theta) \!=\! \sum_{i=1}^N \mathcal{V}(L_i^{tr}(\mathbf{w}),N_i;\Theta) L_i^{tr}(\mathbf{w})$. More specifically,
	\begin{align*}
		\min_{0\leq t \leq T} \mathbb{E}\left[ \left\|\nabla \mathcal{L}^{tr}(\mathbf{w}^{(t)};\Theta^{(t)})\right\|_2^2 \right] \leq \mathcal{O}(\frac{C}{\sqrt{T}}),
	\end{align*}
	where $C$ is some constant independent of the convergence process.
\end{theorem}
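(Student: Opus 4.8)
\textbf{Proof sketch for Theorem~\ref{th22}.} The plan is to regard the classifier update of Eq.~(\ref{eq8}) as a single step of mini‑batch stochastic gradient descent on the objective $\mathbf{w}\mapsto\mathcal{L}^{tr}(\mathbf{w};\Theta^{(t+1)})$ and then run the standard nonconvex‑SGD ``descent lemma plus telescoping'' argument, while carefully tracking two perturbations: the mini‑batch sampling noise, and the fact that the objective itself drifts because $\Theta$ is being updated simultaneously. Most ingredients follow directly from the blanket smoothness and boundedness assumptions; the one essential extra observation is a bound on the size of the $\Theta$‑step, which I would extract from the explicit meta‑update formula Eq.~(\ref{eq10}).

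First I would set up the analytic prerequisites. Since $\ell$ is $L$‑Lipschitz smooth with $\rho$‑bounded gradient and $\mathcal{V}(\cdot,\cdot;\Theta)$ has $\delta$‑bounded gradient and $\mathcal{B}$‑bounded Hessian, a chain‑rule computation shows that $\mathcal{L}^{tr}(\cdot;\Theta)=\sum_{i=1}^N\mathcal{V}(L_i^{tr}(\mathbf{w}),N_i;\Theta)L_i^{tr}(\mathbf{w})$ has a Lipschitz‑continuous gradient in $\mathbf{w}$, with a constant $\hat L$ that is a fixed polynomial in $N,L,\rho,\delta,\mathcal{B}$ and is uniform over $\Theta$; the same ingredients give that $\Theta\mapsto\nabla_{\mathbf{w}}\mathcal{L}^{tr}(\mathbf{w};\Theta)$ is Lipschitz, and that the mini‑batch direction in Eq.~(\ref{eq8}) is, up to the usual mini‑batch rescaling, an unbiased estimator of $\nabla_{\mathbf{w}}\mathcal{L}^{tr}(\mathbf{w}^{(t)};\Theta^{(t+1)})$ with a $T$‑independent bound $\sigma^2$ on its second moment. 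The crucial quantitative point is the size of the $\Theta$‑step: from Eq.~(\ref{eq10}), together with $|G_{ij}|\le\rho^2$ and $\|\partial_\Theta\mathcal{V}\|\le\delta$, one gets $\|\Theta^{(t+1)}-\Theta^{(t)}\|=O(\alpha_t\beta_t)$, hence $|\mathcal{L}^{tr}(\mathbf{w};\Theta^{(t+1)})-\mathcal{L}^{tr}(\mathbf{w};\Theta^{(t)})|=O(\alpha_t\beta_t)$ and an analogous bound for the corresponding gradients — the extra $\alpha_t$ factor, inherited from the single inner SGD step, is precisely what will make the drift terms summable under the prescribed schedule.

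Applying the descent lemma to $\mathcal{L}^{tr}(\cdot;\Theta^{(t+1)})$ along the step of Eq.~(\ref{eq8}), taking the conditional expectation over the mini‑batch, using $\alpha_t\le 1/\hat L$, and then replacing $\Theta^{(t+1)}$ by $\Theta^{(t)}$ via the drift bounds above (together with $\|a\|^2\le 2\|b\|^2+2\|a-b\|^2$), I expect an inequality of the form
\begin{align*}
\mathbb{E}\big[\mathcal{L}^{tr}(\mathbf{w}^{(t+1)};\Theta^{(t+1)})\big]\le{}&\mathbb{E}\big[\mathcal{L}^{tr}(\mathbf{w}^{(t)};\Theta^{(t)})\big]-\frac{\alpha_t}{4}\,\mathbb{E}\big[\|\nabla_{\mathbf{w}}\mathcal{L}^{tr}(\mathbf{w}^{(t)};\Theta^{(t)})\|_2^2\big]\\
&+\frac{\hat L\sigma^2}{2}\alpha_t^2+\Gamma\,\alpha_t\beta_t,
\end{align*}
with $\Gamma$ a $T$‑independent constant. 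Summing over $t=0,\dots,T-1$ telescopes the first two terms, so using that $\mathcal{L}^{tr}$ is bounded below and that $\alpha_t=\min\{1/L,c_1/\sqrt T\}$, $\beta_t=\min\{1/L,c_2/\sqrt T\}$ satisfy $\sum_t\alpha_t^2=O(1)$, $\sum_t\alpha_t\beta_t=O(1)$ and $\sum_t\alpha_t=\Theta(\sqrt T)$, dividing through by $\sum_t\alpha_t$ yields $\min_{0\le t\le T}\mathbb{E}[\|\nabla\mathcal{L}^{tr}(\mathbf{w}^{(t)};\Theta^{(t)})\|_2^2]\le O(C/\sqrt T)$; setting the right‑hand side equal to $\epsilon$ gives the $O(1/\epsilon^2)$ iteration count.

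The genuinely delicate step is the drift control. Because $\mathbf{w}$ is being optimized against a target that moves with $\Theta$, a naive estimate gives only $O(\beta_t)$ drift per iteration, which is \emph{not} summable under the stated step‑size schedule and would leave a non‑vanishing residual in the bound; the fix is to exploit the bi‑level structure, namely that the meta‑gradient with respect to $\Theta$ carries an $\alpha_t$ factor from the one inner SGD step (made explicit in Eq.~(\ref{eq10})), so that the per‑iteration drift is in fact $O(\alpha_t\beta_t)$ and sums to $O(1)$. The rest is bookkeeping: checking that every constant ($\hat L$, $\sigma^2$, $\Gamma$, the bounds on $|G_{ij}|$, etc.) depends only on $N,M,n,m,L,\rho,\delta,\mathcal{B}$ and not on $T$ or on the iterates, so that the final $C$ is indeed ``independent of the convergence process''; and interpreting the $\mathbf{w}$‑update of Eq.~(\ref{eq8}) consistently with the definition of $\mathcal{L}^{tr}(\mathbf{w};\Theta)$, i.e. with the per‑sample outputs of CMW‑Net entering as the weighting coefficients, so that the step is aligned with $\nabla_{\mathbf{w}}\mathcal{L}^{tr}$ as required by the descent lemma.
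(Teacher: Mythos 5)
Your proposal follows essentially the same route as the paper's proof: split the per-iteration change of $\mathcal{L}^{tr}$ into a $\mathbf{w}$-descent part (handled by the descent lemma with mini-batch noise $\psi^{(t)}$, bounded second moment, and $\alpha_t\le 1/L$) and a $\Theta$-drift part (controlled via the $\delta$-bounded gradient and bounded Hessian of $\mathcal{V}$ and the size of the meta-step, using bounded per-sample losses), then telescope and divide by $\sum_t\alpha_t$ under the $1/\sqrt{T}$ schedule. Your explicit observation that the meta-update inherits an $\alpha_t$ factor, making the drift $O(\alpha_t\beta_t)$ and hence summable, is precisely the justification underlying the paper's asserted bound $\sum_t\beta_t\bigl\|\nabla\mathcal{L}^{meta}(\hat{\mathbf{w}}^{(t+1)}(\Theta^{(t)}))\bigr\|<\infty$, so the two arguments coincide in substance.
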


\vspace{-6mm}
\subsection{Enhancing CMW-Net with Soft Label Supervision} \label{SL}
In the typical bias case that some training samples are with corrupted labels, the sample weighting strategy tends to largely neglect the function of these samples by imposing small or even zero weights on them. This manner, however, inclines to regrettably waste the beneficial information essentially contained in these samples. Some recent researches have thus been presented to possibly correct the noisy labels and reuse them in training. One popular option is to extract a pseudo soft label $z$ on a sample $x$ through the clue of the classifier's estimation during the training iterations, and then set the training loss as a convex weighting combination of loss terms computed with the suspected noisy label $y$ and the pseudo-label $z$ \cite{reed2014training,song2019selfie,arazo2019unsupervised,li2019dividemix}, i.e.,
\begin{align}\label{eqsoftv2}
	\ell_{S}(f(x;\mathbf{w}),y) \!=\!v {\ell}(f(x;\mathbf{w}),y) \!+\! (1-v){\ell}(f(x;\mathbf{w}),z) ,
\end{align}
where $v \in [0,1]$ denotes the sample weight. By setting the loss as the cross-entropy, the loss (\ref{eqsoftv2}) can be rewritten as:
\begin{align}
	\ell_{S}(f(x;\mathbf{w}),y) = -\left(vy +(1-v)z\right)^T \log(f(x;\mathbf{w})). \label{vcombin}
\end{align}
It can then be understood as setting a corrected soft label $vy +(1-v)z$ to ameliorate the original label $y$ to make it more reliably reused and avoid roughly suppressing or throwing off the sample from training as conventional.

We then shortly introduce the current research on how to set the sample weight $v$ in the above (\ref{eqsoftv2}) or (\ref{vcombin}). The early attempts often adopted a manual manner for setting this hyper-parameter, e.g., the $v$ is empirically set as $v=0.8$ for all samples in \cite{reed2014training}. Evidently, such a fixed and constant weight specification could not sufficiently convey the variant knowledge of training samples with different contents of corruption and reliability. Afterwards, some methods try to dynamically assign individual weights for different samples. Typically, SELFIE \cite{song2019selfie} iteratively selects clean samples by assigning weights $v=1$ on them, and neglects doubtful noisy samples by setting their weights as $v=0$ in (\ref{vcombin}). M-correction \cite{arazo2019unsupervised} ameliorates this hard weighting manner as soft, by fitting a two-component Beta mixture model per epoch to estimate the probability of a sample being clean or noisy, and then use this probability to assign a soft weight for the corresponding sample. Recently, DivideMix \cite{li2019dividemix} improves \cite{arazo2019unsupervised} by adopting a two-component Gaussian mixture model to assign a soft weight $v$ for the corresponding sample.

However, all above methods require exploiting a separate early-learning stage \cite{liu2020early} to heuristically pre-determine the sample weights $v$, while certainly ignore the beneficial feedback from the classifier during the learning process. We thus can naturally introduce our CMW-Net method to automatically explore a weighting scheme by making it trained together with the classifier in a meta-learning manner. Specifically, we just need to easily  revise the training objective of CMW-Net in Eq.(\ref{eqin}) as (called CMW-Net-SL):
\begin{align}
	\mathbf{w}^*(\Theta)=& \mathop{\arg\min}_{{\mathbf{w} }} \ \sum_{i=1}^N [\mathcal{V}(L_i^{tr}(\mathbf{w}),N_i;\Theta) L_i^{tr}(\mathbf{w}) \nonumber \\
&\ \ \ \ \ \ \ \ \ \ \ \ +(1-\mathcal{V}(L_i^{tr}(\mathbf{w}),N_i;\Theta))L_i^{Pse}(\mathbf{w})], \label{eqin1}
\end{align}
where
$L_i^{Pse}(\mathbf{w})={\ell}(f(x_i;\mathbf{w}),z_i)$.
Taking a similar process as Sec. \ref{analysis}, we have
\begin{align}  \label{eq15}
	\begin{split}
		&\Theta^{(t+1)}	= \Theta^{(t)} +{\alpha\beta}\times \\
		& \sum_{j=1}^n \!\left[\frac{1}{m} \sum_{i=1}^{m}(G_{ij}-G'_{ij}) \right]\!  \frac{\partial \mathcal{V}(L_j^{tr}(\mathbf{w}^{(t)}),N_j;\Theta)}{\partial \Theta}\Big|_{ \Theta^{(t)}},
	\end{split}
\end{align}
where $G'_{ij}=\frac{\partial L_i^{meta} (\hat{\mathbf{w}})}{\partial \hat{\mathbf{w}}}\Big|_{\hat{\mathbf{w}} ^{(t+1)}(\Theta)}^T \frac{\partial L_j^{Pse} (\mathbf{w})}{\partial \mathbf{w}} \Big|_{\mathbf{w}^{(t)}}$. Compared with CMW-Net, it is seen that CMW-Net-SL produces another term $G'_{ij}$ to control the learning of the meta-learner. Specifically, if $\frac{1}{m} \sum_{i=1}^{m} (G_{ij}-G'_{ij})>0$, it means that the similarity between learning gradient of a training sample with original label and the meta samples is larger than that of a training sample with pseudo-label, and then it will be considered as a relatively clean label and CMW-Net tends to produce a higher
sample weight to it. Otherwise, it inclines to be considered as a relatively noisy label and CMW-Net will suppress the influence of original labeled sample while produce more confidence on pseudo-labeled one.

In our experiments, we apply EMA \cite{tarvainen2017mean} and temporal ensembling \cite{laine2016temporal} techniques to produce pseudo-labels in our CMW-Net-SL algorithm, which has been verified to be effective in tasks like semi-supervised learning \cite{laine2016temporal,laine2016temporal} and robust learning \cite{arazo2019unsupervised,liu2020early}. Note that the meta-train and meta-test algorithms of CMW-Net-SL are similar to Algorithms \ref{alg:example} and \ref{alg:example1} except that the training loss is revised from (\ref{eqout}) to (\ref{eqin1}). More detailed algorithm description is provided in the SM.

\begin{table*}
	\setlength{\abovecaptionskip}{0.cm}
	\setlength{\belowcaptionskip}{-2cm}
	\caption{Test top-1 error (\%) comparison of different competing methods with ResNet-32 classifier on CIFAR-10-LT and CIFAR-100-LT under different imbalance settings. * indicates results reported in \cite{jamal2020rethinking}.}\label{classimc} \vspace{-1mm}
	\centering
	\begin{tabular}{l|c|c|c|c|c|c|c|c|c|c|c|c}
		\toprule
		Dataset Name & \multicolumn{6}{c|}{CIFAR-10-LT} & \multicolumn{6}{c}{CIFAR-100-LT}\\
		\hline
		Imbalance factor & 200 & 100 & 50 & 20 & 10 &1& 200 & 100 & 50 & 20 & 10 &1 \\
		\hline
		ERM & 34.32 & 29.64 & 25.19 & 17.77 & 13.61 &7.53& 65.16 & 61.68 &56.15 & 48.86 & 44.29 &29.50 \\
		Focal loss \cite{lin2018focal} & 34.71& 29.62 &23.29 &17.24 &13.34& 6.97& 64.38 &61.59& 55.68 &48.05& 44.22& 28.85 \\
		CB loss \cite{cui2019class} & 31.11 &27.63& 21.95& 15.64 &13.23 &7.53&64.44 &61.23& 55.21 &48.06& 42.43&29.37\\
		LDAM loss \cite{cao2019learning}* & - &26.65& -& - &13.04 &-&60.40 &-& - &-& 43.09&-\\
		L2RW \cite{ren2018learning} &  33.49& 25.84& 21.07& 16.90&14.81 &10.75& 66.62 &59.77& 55.56& 48.36&46.27&35.89\\
		MW-Net \cite{shu2019meta} &32.80& 26.43& 20.90& 15.55& 12.45& 7.19&63.38 &58.39& 54.34 &46.96& 41.09& 29.90\\ \hline
		MCW \cite{jamal2020rethinking} with CE loss* & 29.34 &23.59& 19.49& 13.54& 11.15& \textbf{7.21}&\textbf{60.69}& 56.65 &51.47& 44.38 &40.42& -\\
		CMW-Net with CE loss& \textbf{27.80} & \textbf{21.15} & \textbf{17.26 }&\textbf{ 12.45} & \textbf{10.97} & 8.30 & 60.85 & \textbf{55.25} & \textbf{49.73} &\textbf{43.06}& \textbf{39.41} & 30.81 \\ \hline
		MCW \cite{jamal2020rethinking} with LDAM loss* & \textbf{25.10}& 20.00 &17.77& 15.63& 12.60& 10.29
		&60.47& 55.92& \textbf{50.84}& 47.62 &42.00&-\\
		CMW-Net with LDAM loss& 25.57 &\textbf{19.95} & \textbf{17.66} & \textbf{13.08} & \textbf{11.42} &\textbf{7.04} & \textbf{59.81} &\textbf{55.87} & 51.14 & \textbf{45.26} & \textbf{40.32} & \textbf{29.19}\\
		\hline
		SADE \cite{zhang2022self} & 19.37 & 16.78 & 14.81 & 11.78 & 9.88 & 7.72 & 54.78 & 50.20 & 46.12 & 40.06 &  36.40 & 28.08  \\
		CMW-Net with SADE & \textbf{19.11} & \textbf{16.04} &\textbf{ 13.54} & \textbf{10.25} &\textbf{9.39} & \textbf{5.39 }& \textbf{54.59} & \textbf{49.50} & \textbf{46.01} & \textbf{39.42} &  \textbf{34.78} & \textbf{27.50} \\
		\bottomrule
	\end{tabular} \vspace{-3mm}
\end{table*}

\begin{table*}
	\setlength{\abovecaptionskip}{0.cm}
	\setlength{\belowcaptionskip}{-2cm}
	\caption{Performance comparison of different competing methods in test accuracy (\%) on CIFAR-10 and CIFAR-100 with symmetric and asymmetric noise. The average accuracy and standard deviation over 3 trials are reported.}\label{noisy} \vspace{-1mm}
	\centering
	\resizebox{\textwidth}{28mm}{
		\begin{tabular}{c|l|c|c|c|c|c|c|c|c}
			\toprule
			\multirow{2}{*}{Datasets} &\multirow{2}{*}{Noise}  & \multicolumn{4}{c|}{Symmetric Noise} & \multicolumn{4}{c}{Asymmetric Noise}\\\cline{3-10}
			& & 0.2 & 0.4 & 0.6 & 0.8  & 0.2 & 0.4 & 0.6 & 0.8   \\ \cline{3-10}
			\hline
			\multirow{12}{*}{CIFAR-10}&ERM & 86.98 $\pm$ 0.12 & 77.52 $\pm$ 0.41 & 73.63 $\pm$ 0.85 & 53.82 $\pm$ 1.04   & 83.60 $\pm$ 0.24 & 77.85 $\pm$ 0.98 & 69.69 $\pm$ 0.72 & 55.20 $\pm$ 0.28      \\
			&Forward \cite{patrini2017making} & 87.99 $\pm$ 0.36 &  83.25 $\pm$ 0.38 & 74.96 $\pm$ 0.65 & 54.64 $\pm$ 0.44   &  91.34 $\pm$ 0.28 & 89.87 $\pm$ 0.61 & 87.24 $\pm$ 0.96  & 81.07 $\pm$ 1.92        \\
			&GCE \cite{zhang2018generalized} & 89.99 $\pm$ 0.16 & 87.31 $\pm$ 0.53 & 82.15 $\pm$ 0.47 & 57.36 $\pm$ 2.08 &   89.75 $\pm$ 1.53 & 87.75 $\pm$ 0.36 & 67.21 $\pm$ 3.64 & 57.46 $\pm$ 0.31     \\
			&M-correction \cite{arazo2019unsupervised} & 93.80 $\pm$ 0.23 & 92.53 $\pm$ 0.11  & 90.30 $\pm$ 0.34 & 86.80 $\pm$ 0.11 & 92.15 $\pm$ 0.18 & 91.76 $\pm$ 0.57 & 87.59 $\pm$ 0.33 & 67.78 $\pm$ 1.22 \\
			&DivideMix \cite{li2019dividemix} & 95.70 $\pm$ 0.31 &  95.00 $\pm$ 0.17  &   94.23 $\pm$ 0.23 & \textbf{92.90 $\pm$ 0.31} & 93.96 $\pm$ 0.21 & 91.80 $\pm$ 0.78 & 80.14 $\pm$ 0.45 & 59.23 $\pm$ 0.38  \\
			&L2RW \cite{ren2018learning}& 89.45 $\pm$ 0.62 & 87.18 $\pm$ 0.84 & 81.57 $\pm$ 0.66 & 58.59 $\pm$ 1.84 &   90.46 $\pm$ 0.56 & 89.76 $\pm$ 0.53 & 88.22 $\pm$ 0.71 & 85.17 $\pm$ 0.31    \\
			&MW-Net \cite{shu2019meta} & 90.46 $\pm$ 0.52 & 86.53 $\pm$ 0.57 & 82.98 $\pm$ 0.34 & 64.41 $\pm$ 0.92 &  92.69 $\pm$ 0.24 & 90.17 $\pm$ 0.11 & 68.55 $\pm$ 0.76 & 58.29 $\pm$ 1.33     \\
			&CMW-Net & 			91.09 $\pm$ 0.54 & 86.91 $\pm$ 0.37 & 83.33 $\pm$ 0.55 & 64.80 $\pm$ 0.72 &   93.02 $\pm$ 0.25 & 92.70 $\pm$ 0.32 & 91.28 $\pm$ 0.40 & 87.50 $\pm$ 0.26    \\
			&CMW-Net-SL & \textbf{96.20 $\pm$ 0.33} & \textbf{95.29 $\pm$ 0.14} & \textbf{94.51 $\pm$ 0.32} & {92.10 $\pm$ 0.76}
			&  \textbf{95.48 $\pm$ 0.29} & \textbf{94.51 $\pm$ 0.52} & \textbf{94.18 $\pm$ 0.21} & \textbf{93.07 $\pm$ 0.24}     \\
			\hline
			\hline
			\multirow{12}{*}{CIFAR-100}&ERM & 60.38 $\pm$ 0.75 & 46.92 $\pm$ 0.51 & 31.82 $\pm$ 1.16 & 8.29 $\pm$ 3.24   & 61.05 $\pm$ 0.11 & 50.30 $\pm$ 1.11 & 37.34 $\pm$ 1.80 & 12.46 $\pm$ 0.43     \\
			& Forward \cite{patrini2017making} &  63.71 $\pm$ 0.49 & 49.34 $\pm$ 0.60 & 37.90 $\pm$ 0.76   & 9.57 $\pm$ 1.01 & 64.97 $\pm$ 0.47  & 52.37 $\pm$ 0.71 &  44.58 $\pm$ 0.60 &   15.84 $\pm$ 0.62      \\
			&GCE \cite{zhang2018generalized} & 68.02 $\pm$ 1.05 & 64.18 $\pm$ 0.30 & 54.46 $\pm$ 0.31 & 15.61 $\pm$ 0.97   & 66.15 $\pm$ 0.44 & 56.85 $\pm$ 0.72 & 40.58 $\pm$ 0.47 & 15.82 $\pm$ 0.63   \\
			&M-correction \cite{arazo2019unsupervised} & 73.90 $\pm$ 0.14 & 70.10 $\pm$ 0.14 & 59.50 $\pm$ 0.35 & 48.20 $\pm$ 0.23 & 71.85 $\pm$ 0.19  & 70.83 $\pm$ 0.48 & 60.51 $\pm$ 0.52 & 16.06 $\pm$ 0.33        \\
			&DivideMix \cite{li2019dividemix} & 76.90 $\pm$ 0.21 & 75.20 $\pm$ 0.12   & 72.00 $\pm$ 0.33  & \textbf{59.60 $\pm$ 0.21} &  76.12 $\pm$ 0.44 & 73.47 $\pm$ 0.63 & 45.83 $\pm$ 0.83 & 16.98 $\pm$ 0.40   \\
			&L2RW \cite{ren2018learning} & 65.32 $\pm$ 0.42 & 55.75 $\pm$ 0.81 & 41.16 $\pm$ 0.85 & 16.80 $\pm$ 0.22 &   65.93 $\pm$ 0.17 & 62.48 $\pm$ 0.56 & 51.66 $\pm$ 0.49 & 12.40 $\pm$ 0.61   \\
			&MW-Net \cite{shu2019meta} & 69.93 $\pm$ 0.40 & 65.29 $\pm$ 0.43 & 55.59 $\pm$ 1.07 & 27.63 $\pm$ 0.56 &  69.80 $\pm$ 0.34 & 64.88 $\pm$ 0.63 & 56.89 $\pm$ 0.95 & 17.05 $\pm$ 0.52  \\
			&CMW-Net & 70.11 $\pm$ 0.19 & 65.84 $\pm$ 0.50 & 56.93 $\pm$ 0.38 & 28.36 $\pm$ 0.67 &   71.07 $\pm$ 0.56 & 66.15 $\pm$ 0.51 & 58.21 $\pm$ 0.78 & 17.41 $\pm$ 0.16   \\
			&CMW-Net-SL & \textbf{77.84 $\pm$ 0.12} & \textbf{76.25 $\pm$ 0.67} & \textbf{72.61 $\pm$ 0.92} & 55.21 $\pm$ 0.31 &   \textbf{77.73 $\pm$ 0.37} & \textbf{75.69 $\pm$ 0.68} & \textbf{61.54 $\pm$ 0.72} & \textbf{18.34 $\pm$ 0.21}       \\
			\bottomrule
	\end{tabular}}\vspace{-3mm}
\end{table*}

\vspace{-4mm}
\section{Learning with  Synthetic Biased Data}  \label{section4}\vspace{-1mm}
\subsection{Class Imbalance Experiments}\label{imbalance}\vspace{-1mm}

\textbf{Datasets.}\ We use long-tailed versions of CIFAR-10 and CIFAR-100 datasets (CIFAR-10-LT and CIFAR-100-LT) as in \cite{cui2019class}. They contain the same categories as the original CIFAR dataset \cite{krizhevsky2009learning}, but are created by reducing the number of training samples per class according to an exponential function $n = n_i \mu^i$, where $i$ denotes the class index, $n_i$ is the original number of training images and $\mu \in (0,1)$. The imbalance factor of a dataset is defined as the number of training samples in the largest class divided by the smallest.

\textbf{Baselines.}\ The comparison methods include: 1) Empirical risk minimization (ERM): all examples have the same weights. By default, we use standard cross-entropy loss; 2) {Focal loss} \cite{lin2018focal} and 3) {CB loss} \cite{cui2019class}: represent SOTA pre-defined sample re-weighting techniques; 4) LDAM loss \cite{cao2019learning}: dynamically tune the margins between classes according to their degrees of dominance in the training set;
5) L2RW \cite{ren2018learning}: adaptively assign sample weights by meta-learning; 6) MW-Net \cite{shu2019meta}: learn an explicit weighting function by meta-learning;
7) MCW \cite{jamal2020rethinking}: also use a meta-learning framework, while consider an elegantly designed class-wise weighting scheme, validated to be specifically effective for class imbalance bias.
8) SADE \cite{zhang2022self}: leverage self-supervision to aggregate the learned multiple experts for achieving SOTA performance.
More implementation details are specified in SM.

\begin{figure}[t]\vspace{-2mm}
	\centering
	\subfigcapskip=-2mm
	\includegraphics[width=0.23\textwidth]{./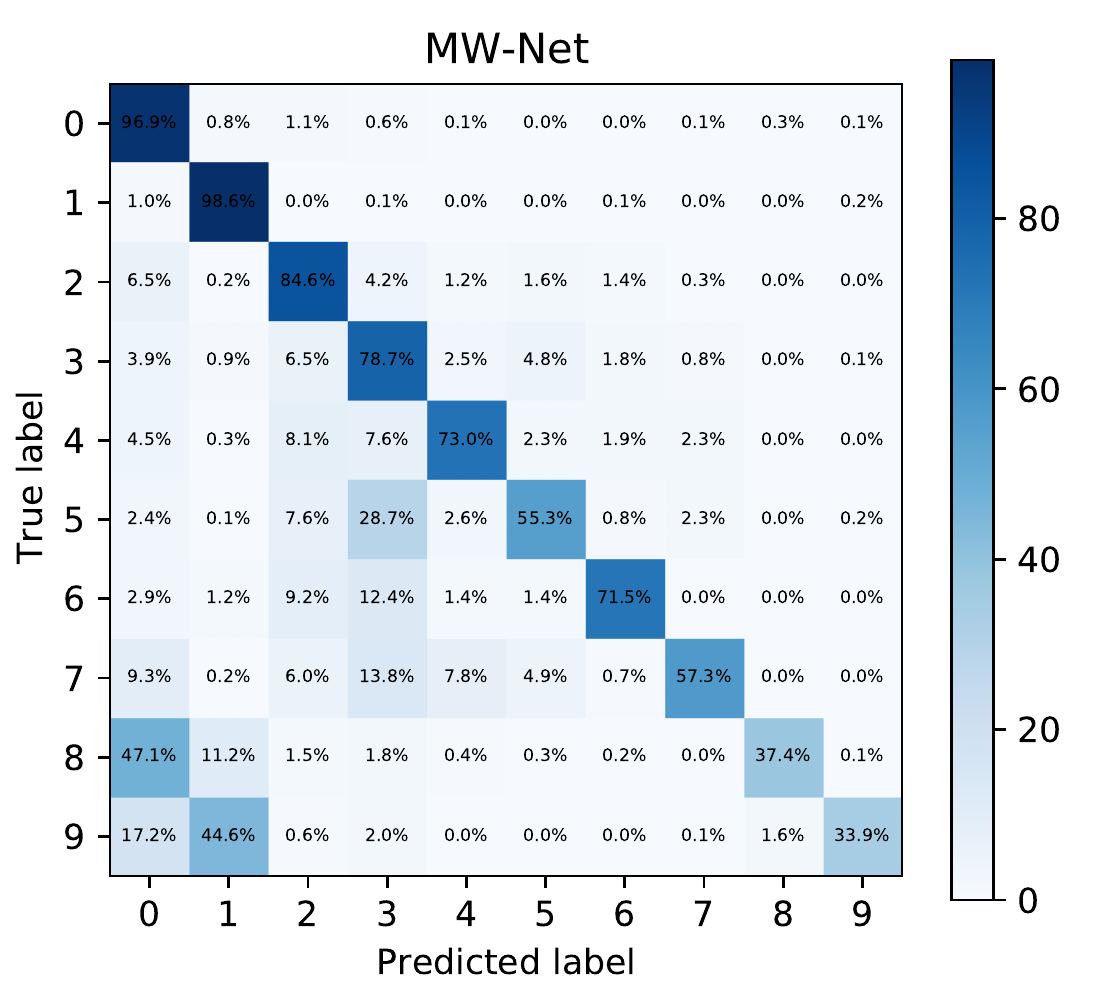}
	\includegraphics[width=0.23\textwidth]{./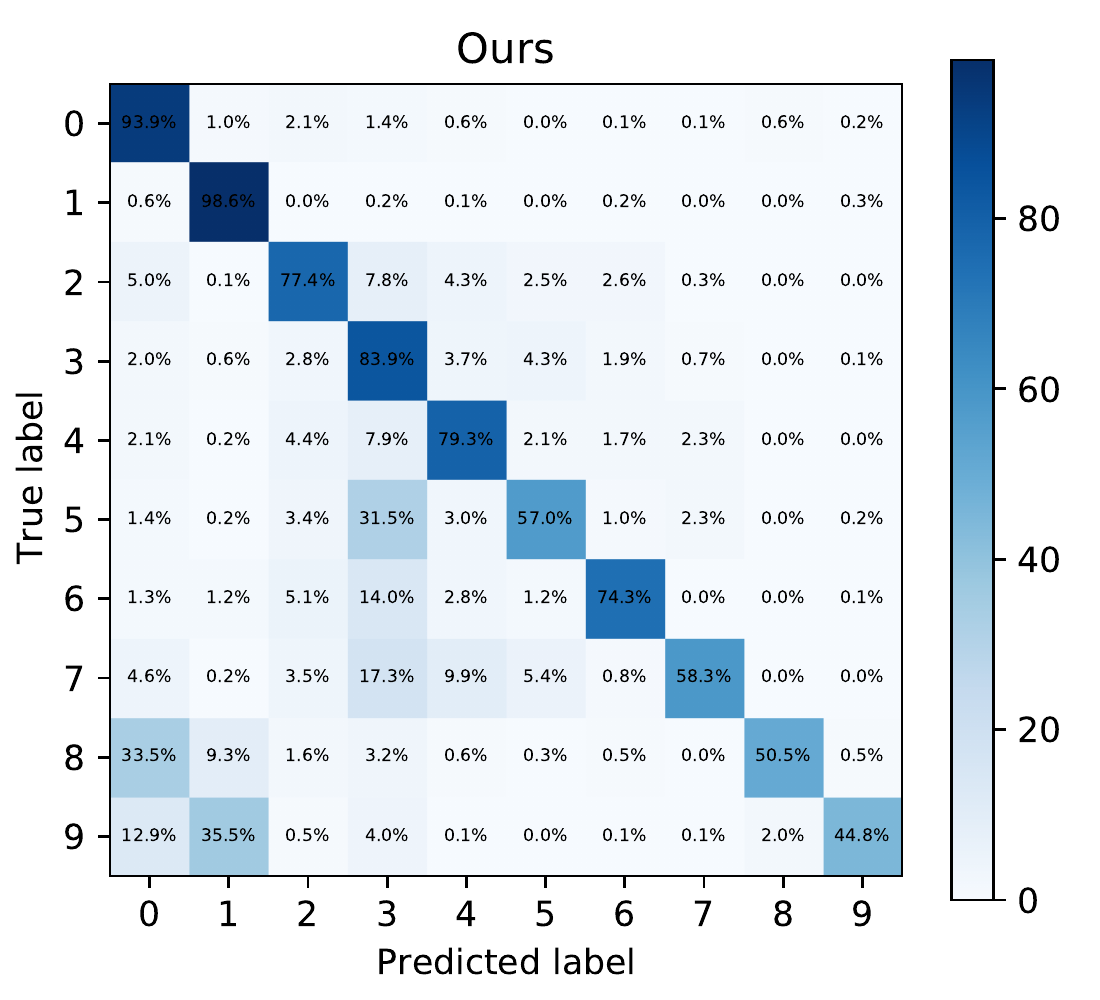}\vspace{-4mm}
	\caption{Confusion matrices obtained by (left) MW-Net and (right) CMW-Net on CIFAR-10-LT (with imbalance factor 200).}\label{figimbalance}  \vspace{-4mm}
\end{figure}

\textbf{Results.}\
Table \ref{classimc} shows the test errors of all competing methods by taking ResNet-32 as the classifier model on CIFAR-10-LT and CIFAR-100-LT with different imbalance factors. It can be observed that: 1) Our algorithm outperforms other competing methods on the datasets, showing its robustness in such biased data; 2) CMW-Net evidently outperforms MW-Net in each experiment. Especially, the performance gain tends to be more evident under larger imbalanced factors. Fig. \ref{figimbalance} shows confusion matrices produced by the results of MW-Net and CMW-Net on CIFAR-10-LT with imbalance factor 200\footnote{The confusion matrix is calculated by applying the trained classifier to the corresponding testing set included with the CIFAR-10 dataset.}. Compared with MW-Net, it is seen that CMW-Net improves the accuracies on tail classes and meanwhile maintains good performance on head classes.
3) Although LDAM loss already has the capacity of mitigating the long-tailed issue by penalizing hard examples, our method can further boost its performances. 4) Owing to its class-wise weighting scheme, MCW also attains good performance in these experiments. Yet CMW-Net still performs better in most cases. Considering its adaptive weighting-scheme-setting capability and general usability in a wider range of biased issues, it should be rational to say that CMW-Net is effective. 5) SADE uses multiple expertise-guided losses to produce competitive results, and the performance can be further boosted via introducing CMW-Net, demonstrating the effectiveness of our general weighting strategy.

To understand the weighing scheme learned by CMW-Net, we also depict the weighting functions learned by the CMW-Net in Fig.\ref{figae}. It is seen that compared with MW-Net shown in Fig.\ref{figaa}, CMW-Net produces three weighting functions corresponding to small, moderate and large-scale task families. The overall tendency complies with conventional empirical setting for such  class-wise weight functions, like CB loss \cite{cui2019class} and MCW \cite{jamal2020rethinking}, i.e., assigning weights inversely related to the class sizes. Specifically, the learned weights of the tail classes are more prominent than those of the head ones, implying that samples in tail classes should be more emphasized in training to alleviate the class imbalanced bias issue. This also explains the consistently better performance of CMW-Net as compared with MW-Net.

\begin{table}
	\setlength{\abovecaptionskip}{0.cm}
	\setlength{\belowcaptionskip}{-2cm}
	\caption{Comparison with SOTA methods on CIFAR-10 and CIFAR-100 with symmetric and asymmetric noise. The compared results are directly taken from original literatures. We report test accuracy at the last epoch.}\label{sota} 
	\centering
	\resizebox{0.48\textwidth}{22mm}{
		\begin{tabular}{c|l|c|c|c|c|c}
			\toprule
			\multirow{2}{*}{Datasets} &\multirow{2}{*}{Noise}  & \multicolumn{4}{c|}{Symmetric Noise} & \multicolumn{1}{c}{Asy. Noise}\\\cline{3-7}
			& & 0.2 & 0.5 & 0.8 & 0.9  & 0.4 \\ \cline{3-7}
			\hline
			\multirow{8}{*}{CIFAR-10}&DivideMix \cite{li2019dividemix} & 95.7 & 94.4 & 92.9 & 75.4 & 92.1 \\
			&ELR+ \cite{liu2020early} & 94.6 & 93.8 & 93.1 & 75.2 & 92.7 \\
			& REED\cite{zhang2020decoupling} &  95.7 &  95.4 & 94.1 & 93.5 & - \\
			&AugDesc \cite{nishi2021augmentation} & 96.2 & 95.1 & 93.6 & 91.8 & 94.3 \\
			& C2D \cite{zheltonozhskii2022contrast}& 96.2 & 95.1 & 94.3 & 93.4 & 90.8 \\
&Two-step \cite{He2022twostep} & 96.2 & 95.3 & 93.7 & 92.7 & 92.4\\ \cline{2-7}
			&CMW-Net-SL &  96.2 & 95.1 & 92.1 & 48.0 & 94.5 \\
			&CMW-Net-SL+ &  \textbf{96.6} & \textbf{96.2} & \textbf{95.4} & \textbf{93.7} & \textbf{96.0} \\
			\hline
			\hline
			\multirow{8}{*}{CIFAR-100}&DivideMix \cite{li2019dividemix} & 77.3 & 74.6 & 60.2& 31.5 & 72.1 \\
			&ELR+ \cite{liu2020early} & 77.5 & 72.4 & 58.2 & 30.8 & 76.5 \\
			& REED\cite{zhang2020decoupling} &  76.5& 72.2& 66.5& 59.4 &  -\\
			&AugDesc \cite{nishi2021augmentation} & 79.2 & 77.0 & 66.1 & 40.9 & 76.8 \\
			& C2D \cite{zheltonozhskii2022contrast}& 78.3 & 76.1 & 67.4 & 58.5 & 75.1 \\
&Two-step \cite{He2022twostep} & 79.1 & 78.2 & 70.1 & 53.2 & 65.5\\ \cline{2-7}
				&CMW-Net-SL &  77.84 & 76.2 & 55.2 & 21.2 & 75.7 \\
			&CMW-Net-SL+ &  \textbf{80.2} & \textbf{78.2} & \textbf{71.1} & \textbf{64.6} & \textbf{77.2} \\
			\bottomrule
		\end{tabular}
	}  \vspace{-4mm}
\end{table}

\begin{table*}[t]
	\setlength{\abovecaptionskip}{0.cm}
	\setlength{\belowcaptionskip}{-2cm}
	\caption{Test accuracy (\%) of all competing methods on CIFAR-10 and CIFAR-100 under different feature-dependent noise types and levels. The average accuracy and standard deviation over 3 trials are reported.}\label{classfea1} \vspace{-1mm}
	\centering
	\resizebox{\textwidth}{20mm}{
		\begin{tabular}{l|c|c|c|c|c|c|c|c}
			\toprule
			Datasets  & Noise &  ERM & LRT \cite{zheng2020error} &GCE \cite{zhang2018generalized}& MW-Net \cite{shu2019meta}& PLC \cite{zhang2020learning} & CMW-Net  & CMW-Net-SL    \\  \hline
			\multirow{6}{*}{CIFAR-10} &  Type-\uppercase\expandafter{\romannumeral1} (35\%) & 78.11 $\pm$ 0.74 & 80.98 $\pm$ 0.80 & 80.65 $\pm$ 0.39 &  82.20 $\pm$ 0.40 & 82.80 $\pm$ 0.27 & 82.27 $\pm$ 0.33  & \textbf{84.23 $\pm$ 0.17} \\
			&  Type-\uppercase\expandafter{\romannumeral1} (70\%) & 41.98 $\pm$ 1.96 & 41.52 $\pm$ 4.53 & 36.52 $\pm$ 1.62 &  38.85 $\pm$ 0.67 & 42.74 $\pm$ 2.14 &  42.23 $\pm$ 0.69 &\textbf{44.19 $\pm$ 0.69} \\
			&  Type-\uppercase\expandafter{\romannumeral2} (35\%) & 76.65 $\pm$ 0.57 &80.74 $\pm$ 0.25  & 77.60 $\pm$ 0.88 &  81.28 $\pm$ 0.56 & 81.54 $\pm$ 0.47 & 81.69 $\pm$ 0.57 &  \textbf{83.12 $\pm$ 0.40} \\
			&  Type-\uppercase\expandafter{\romannumeral2} (70\%) & 45.57 $\pm$ 1.12 & 81.08 $\pm$ 0.35 & 40.30 $\pm$ 1.46 & 42.15 $\pm$ 1.07 & 46.04 $\pm$ 2.20 & 46.30 $\pm$ 0.77 & \textbf{48.26 $\pm$ 0.88} \\
			&  Type-\uppercase\expandafter{\romannumeral3} (35\%) & 76.89 $\pm$ 0.79 & 76.89 $\pm$ 0.79  & 79.18 $\pm$ 0.61 & 81.57 $\pm$ 0.73 & 81.50 $\pm$ 0.50 & 81.52 $\pm$ 0.38 &  \textbf{83.10 $\pm$ 0.34} \\
			&  Type-\uppercase\expandafter{\romannumeral3} (70\%) & 43.32 $\pm$ 1.00 & 44.47 $\pm$ 1.23 & 37.10 $\pm$ 0.59 & 42.43 $\pm$ 1.27 & 45.05 $\pm$ 1.13 & 43.76 $\pm$ 0.96 &  \textbf{45.15 $\pm$ 0.91} \\
			\hline
			\hline
			\multirow{6}{*}{CIFAR-100} &  Type-\uppercase\expandafter{\romannumeral1} (35\%) & 57.68 $\pm$ 0.29 & 56.74 $\pm$ 0.34 & 58.37 $\pm$ 0.18 & 62.10 $\pm$ 0.50 & 60.01 $\pm$ 0.43 & 62.43 $\pm$ 0.38 & \textbf{64.01 $\pm$ 0.11} \\
			&  Type-\uppercase\expandafter{\romannumeral1} (70\%) & 39.32 $\pm$ 0.43 & 45.29 $\pm$ 0.43 & 40.01 $\pm$ 0.71 &  44.71 $\pm$ 0.49 & 45.92 $\pm$ 0.61 & 46.68 $\pm$ 0.64 &  \textbf{47.62 $\pm$ 0.44} \\
			&  Type-\uppercase\expandafter{\romannumeral2} (35\%) & 57.83 $\pm$ 0.25 & 57.25 $\pm$ 0.68 & 58.11 $\pm$ 1.05 &  63.78 $\pm$ 0.24 & 63.68 $\pm$ 0.29 & 64.08 $\pm$ 0.26 &  \textbf{64.13 $\pm$ 0.19} \\
			&  Type-\uppercase\expandafter{\romannumeral2} (70\%) & 39.30 $\pm$ 0.32 &43.71 $\pm$ 0.51  & 37.75 $\pm$ 0.46 &  44.61 $\pm$ 0.41 & 45.03 $\pm$ 0.50 & 50.01 $\pm$ 0.51 &  \textbf{51.99 $\pm$ 0.35} \\
			&  Type-\uppercase\expandafter{\romannumeral3} (35\%) & 56.07 $\pm$ 0.79 & 56.57 $\pm$ 0.30  & 57.51 $\pm$ 1.16 & 62.53 $\pm$ 0.33 & 63.68 $\pm$ 0.29 & 63.21 $\pm$ 0.23 &  \textbf{64.47 $\pm$ 0.15} \\
			&  Type-\uppercase\expandafter{\romannumeral3} (70\%) & 40.01 $\pm$ 0.18 & 44.41 $\pm$ 0.19 & 40.53 $\pm$ 0.60 &  45.17 $\pm$ 0.77 & 44.45 $\pm$ 0.62 & 47.38 $\pm$ 0.65 &  \textbf{48.78 $\pm$ 0.62} \\
			\bottomrule
	\end{tabular}}  \vspace{-2mm}
\end{table*}

\vspace{-2mm}
\subsection{Feature-independent Label Noise Experiment}\label{noise}
\textbf{Datasets.}\ We study two types of label noise following previous works \cite{patrini2017making,hendrycks2018using}: 1) Symmetric noise: randomly replace sample labels for a percentage of the training data with all possible labels.
2) Asymmetric noise: try to mimic the structure of real-life label noise, where labels are only replaced by similar classes. Two benchmark datasets are employed: CIFAR-10 and CIFAR-100 \cite{krizhevsky2009learning}.

\textbf{Baselines.}\ The comparison methods include: 1) ERM; 2) Forward \cite{patrini2017making}: corrects the prediction by the label transition matrix;
3) GCE \cite{zhang2018generalized}: behaves as a robust loss to handle the noisy labels; 4) M-correction \cite{arazo2019unsupervised}; 5) DivideMix \cite{li2019dividemix}: represents the SOTA method for handling noisy label bias, by dividing training data into clean and noisy ones through a loss threshold and designing different label amelioration strategies on them through two diverged networks to co-train the classifier; 6) L2RW \cite{ren2018learning} and 7) MW-Net \cite{shu2019meta}: represents the sample re-weighting methods by meta-learning.
More experimental details are listed in SM.

\textbf{Establishing Meta dataset.}\ Motivated by curriculum learning \cite{bengio2009curriculum,kumar2010self,jun2020meta}, we select the most confident training samples as meta data. Specifically, we explore to create the meta dataset based on the high-quality clean samples as well as its high-quality pseudo labels from the training set (with lowest losses) as an unbiased estimator of the clean data-label distribution. To make the meta dataset balanced, we selected 10 images per class in each epoch iteration. In this case, the performance of meta dataset can be served as an indicator to measure how much extent CMW-Net is trained to filter noisy samples and generalized to clean test distribution.

Such meta dataset may lack of diversity pattern to characterize the latent clean data-label distribution. To alleviate this issue, we further explore to utilize mixup technique \cite{zhang2018mixup} to enrich the variety of the meta data distribution while possibly maintain its unbiasedness. The hyperparameter of convex combination in the technique is randomly sampled from a Beta distribution $Beta(1,1)$. Our extensive experiments have verified the effectiveness of using such generated meta dataset from training data. Such property makes such meta-learning strategy applicable to real-world biased dataset, since it is always not easy to collect an additional clean meta dataset in practice. We also use such meta-data-generation strategy in the following noisy labels experiments as well as the real-world biased dataset, where an expected clean meta-dataset is always unavailable.

\begin{figure}[t] \vspace{-2mm}
	\centering
		\includegraphics[width=0.23\textwidth]{./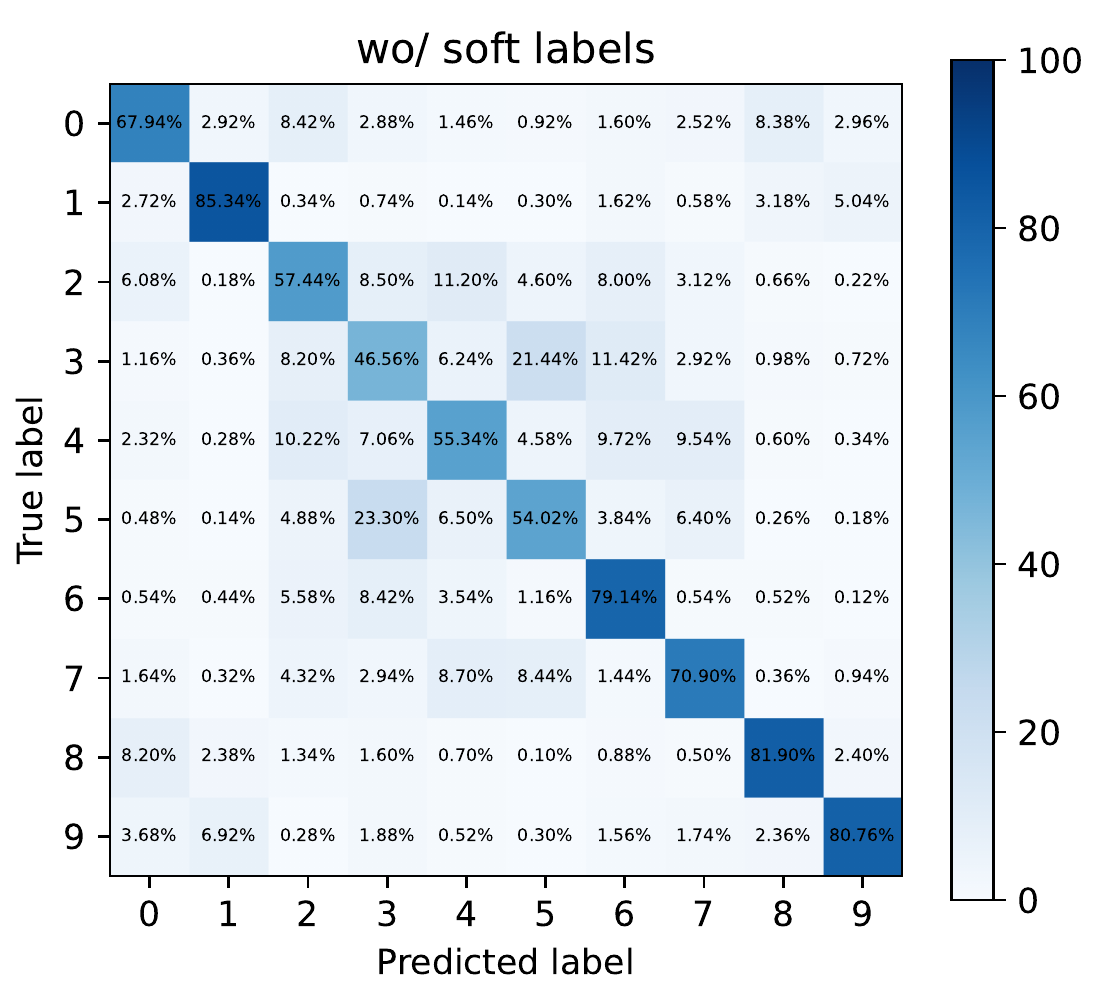}
		\includegraphics[width=0.23\textwidth]{./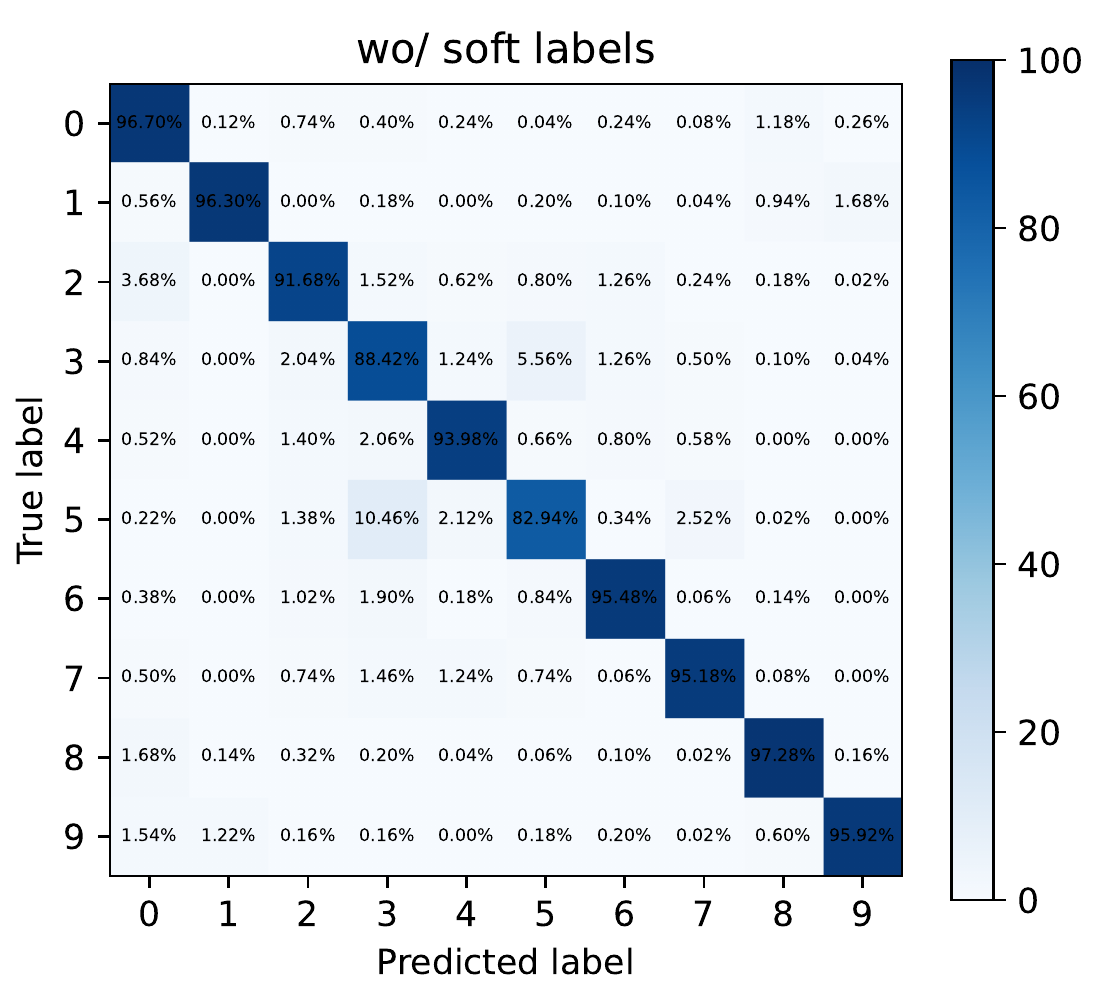}\vspace{-4mm}
		\caption{Confusion matrices obtained by CMW-Net (Left) and CMW-Net-SL (Right) on CIFAR-10 with Symmetry Noise 80\%.}\label{figoft}  \vspace{-6mm}
	\end{figure}

\begin{table*}
	\setlength{\abovecaptionskip}{0.cm}
	\setlength{\belowcaptionskip}{-2cm}
	\caption{Test accuracy (\%) of all competing methods on CIFAR-10 and CIFAR-100 under different feature dependent (35\%) and independent (30\%) noise types and levels. The average accuracy and standard deviation over 3 trials are reported.}\label{classfea2} \vspace{-1mm}
	\centering
	\resizebox{\textwidth}{20mm}{
		\begin{tabular}{l|l|c|c|c|c|c|c|c}
			\toprule
			Datasets  & Noise &  ERM & LRT \cite{zheng2020error} &GCE \cite{zhang2018generalized}& MW-Net \cite{shu2019meta}& PLC \cite{zhang2020learning} & CMW-Net & CMW-Net-SL    \\  \hline
			\multirow{6}{*}{CIFAR-10}
			&  Type-\uppercase\expandafter{\romannumeral1} + Symmetric & 75.26 $\pm$ 0.32 & 75.97 $\pm$ 0.27 & 78.08 $\pm$ 0.66 &  76.39 $\pm$ 0.42 & 79.04 $\pm$ 0.50 & 78.42 $\pm$ 0.47 & \textbf{82.00 $\pm$ 0.36} \\
			&  Type-\uppercase\expandafter{\romannumeral1} + Asymmetric & 75.21 $\pm$ 0.64 &76.96 $\pm$ 0.45  &  76.91 $\pm$ 0.56 & 76.54 $\pm$ 0.56 & 78.31 $\pm$ 0.41 & 77.14 $\pm$ 0.38 & \textbf{80.69 $\pm$ 0.47} \\
			\cline{2-9}
			&  Type-\uppercase\expandafter{\romannumeral2} + Symmetric & 74.92 $\pm$ 0.63 & 75.94 $\pm$ 0.58 & 75.69 $\pm$ 0.21 & 76.57 $\pm$ 0.81 & 80.08 $\pm$ 0.37 & 76.77 $\pm$ 0.63 & \textbf{80.96 $\pm$ 0.23} \\
			&  Type-\uppercase\expandafter{\romannumeral2} + Asymmetric & 74.28 $\pm$ 0.39 & 77.03 $\pm$ 0.62  & 75.30 $\pm$ 0.81 & 75.35 $\pm$ 0.40 & 77.63 $\pm$ 0.30 & 77.08 $\pm$ 0.52 & \textbf{80.94 $\pm$ 0.14} \\
			\cline{2-9}
			&  Type-\uppercase\expandafter{\romannumeral3} + Symmetric & 74.00 $\pm$ 0.38 & 75.66 $\pm$ 0.57 & 77.00 $\pm$ 0.12 &  76.28 $\pm$ 0.82 & 80.06 $\pm$ 0.47 & 77.16 $\pm$ 0.30 & \textbf{81.58 $\pm$ 0.55} \\
			&  Type-\uppercase\expandafter{\romannumeral3} + Asymmetric & 75.31 $\pm$ 0.34 &  77.19 $\pm$ 0.74 & 75.70 $\pm$ 0.91 &  75.82 $\pm$ 0.77 & 77.54 $\pm$ 0.70 & 76.49 $\pm$ 0.88 & \textbf{80.48 $\pm$ 0.48} \\
			\hline
			\hline
			\multirow{6}{*}{CIFAR-100}
			&  Type-\uppercase\expandafter{\romannumeral1} + Symmetric & 48.86 $\pm$ 0.56 & 45.66 $\pm$ 1.60 & 52.90 $\pm$ 0.53 &  57.70 $\pm$ 0.32 & 60.09 $\pm$ 0.15 & 59.17 $\pm$ 0.42 & \textbf{60.87 $\pm$ 0.56} \\
			&  Type-\uppercase\expandafter{\romannumeral1} + Asymmetric & 45.85 $\pm$ 0.93 &52.04 $\pm$ 0.15  & 52.69 $\pm$ 1.14 & 56.61 $\pm$ 0.71 & 56.40 $\pm$ 0.34 & 57.42 $\pm$ 0.81 & \textbf{61.35 $\pm$ 0.52} \\
			\cline{2-9}
			&  Type-\uppercase\expandafter{\romannumeral2} + Symmetric & 49.32 $\pm$ 0.36 & 43.86 $\pm$ 1.31 & 53.61 $\pm$ 0.46 &  54.08 $\pm$ 0.18 & 60.01 $\pm$ 0.63 & 59.16 $\pm$ 0.18 & \textbf{61.00 $\pm$ 0.41} \\
			&  Type-\uppercase\expandafter{\romannumeral2} + Asymmetric & 46.50 $\pm$ 0.95 &52.11 $\pm$ 0.46  & 51.98 $\pm$ 0.37 &  58.53 $\pm$ 0.45 & 61.43 $\pm$ 0.33 & 58.99 $\pm$ 0.91 & \textbf{61.35 $\pm$ 0.57} \\
			\cline{2-9}
			&  Type-\uppercase\expandafter{\romannumeral3} + Symmetric & 48.94 $\pm$ 0.61 &  42.79 $\pm$ 1.78 & 52.07 $\pm$ 0.35 &  55.29 $\pm$ 0.57 & 60.14 $\pm$ 0.97 & 58.48 $\pm$ 0.79 & \textbf{60.21 $\pm$ 0.48} \\
			&  Type-\uppercase\expandafter{\romannumeral3} + Asymmetric & 45.70 $\pm$ 0.12 &  50.31 $\pm$ 0.39& 50.87 $\pm$ 1.12 &  58.43 $\pm$ 0.60 & 54.56 $\pm$ 1.11 & 58.83 $\pm$ 0.57 & \textbf{60.52 $\pm$ 0.53} \\
			\bottomrule
	\end{tabular}}\vspace{-2mm}
\end{table*}

\textbf{Results.}\
Table \ref{noisy} evaluates the performance of our method on CIFAR-10 and CIFAR-100 with different levels of symmetric and asymmetric label noise. We report the averaged test accuracy over the last 10 epochs. It is seen that CMW-Net evidently outperforms MW-Net in all cases. For symmetric noise, the training loss distribution is usually homoscedastic, as depicted in Fig.\ref{fig11a} and thus CMW-Net learns three similar weighting schemes as that extracted by MW-Net, as shown in Fig.\ref{figaf}. For asymmetric noise, while MW-Net hardly adapts to the heterogeneous loss distribution across different classes, CMW-Net finely produces weighting schemes conditioned on different task families. Specifically, as shown in Fig.\ref{figag}, weighting functions of small and moderate-scale task families tend to more emphasize those informative marginal samples since they barely contain replaced labels from other classes. While for large-scale task family, with many corrupted labels, CMW-Net tends to impose smaller weights on samples with relatively large losses to suppress the effect of these noisy labels. This shows that learned weighting schemes by CMW-Net can adapt to the internal bias patterns of different classes, and thus naturally leads to its superiority over MW-Net.

By introducing soft label amelioration, CMW-Net-SL can further enhance the performance of CMW-Net, as clearly shown in Table \ref{noisy}. This is natural since CMW-Net-SL is able to adaptively refurbish noisy labeled samples rather than roughly trash them from training, and thus a more sufficient exploration on beneficial knowledge from training data could be obtained. Fig. \ref{figoft} shows the confusion matrices obtained by CMW-Net and CMW-Net-SL in 60\% label noise rate case. It is seen that CMW-Net-SL evidently improves the prediction accuracy, especially for classes with heavily corrupted labels.

Note that by involving label amelioration through using pseudo-prediction information as our method, the DivideMix method also performs well in most symmetric noise experiments, especially, slightly better than CMW-Net-SL in 80\% noise rate. However, the superiority of our method is still significant in all asymmetric label noise cases. Different from prior works only reported results with a ratio of 40\% for asymmetric noise, we consider more noise ratio settings to evidently show this phenomenon. It can be seen that DivideMix has a substantial degradation for higher noise ratio.
This can be rationally explained by that DivideMix uses a consistent loss threshold for distinguishing clean and noisy samples, which, however, is certainly deviated from the insight of inter-class heteroscedastic loss distributions underlying this type of data bias. As can be observed in Fig.\ref{fig11b}, since clean training classes are simultaneously tail classes, the loss values of some training samples in these classes are possibly larger than those of head classes, especially for their contained noisy samples. Thus DivideMix tends to mistakenly recognize a certain amount of clean/noisy samples, which then results in its performance degradation.
Comparatively, the class-aware capability possessed by CMW-Net-SL enables the method more properly treat heteroscedastic loss distributions across different classes, and thus obtain more accurate weighting functions specifically suitable for them, which then naturally leads to its relatively superior performance.

\textbf{Compared with SOTA methods.}\
As shown in Table \ref{sota}, our method underperforms the SOTA DivideMix method in the extreme large label noise cases (80\% and 90\% noise ratios), which possibly attributes to that DivideMix treats most noisy label samples as unlabeled samples, and uses strong semi-supervised MixMatch algorithms against noisy labels. Even though, benefiting from current SOTA methods like REED \cite{zhang2020decoupling}, C2D \cite{zheltonozhskii2022contrast}, AugDesc \cite{nishi2021augmentation}, Two-step \cite{He2022twostep}, which additionally use two general tricks of self-supervised learning for performance improvement, i.e., adding a warm-up self-supervised pre-training step and imposing a data augmentation based consistency regularization as commonly used in semi-supervised algorithms, we also easily borrow these common tricks to boost our method (denoted by CMW-Net-SL+). It can be easily seen that the ameliorated CMW-Net-SL+ consistently outperforms the compared SOTA methods, and beats them on CIFAR-100 with 80\% and 90\% symmetric noise by an evident margin.

\begin{table}[t]
	\setlength{\abovecaptionskip}{0.cm}
	\setlength{\belowcaptionskip}{-2cm}
	\caption{Comparison of different competing methods on Animal-10N dataset. Results for baseline methods are copied from \cite{zhang2020learning}} \label{animal}
	\vspace{-1mm}
	\centering
	\begin{tabular}{c|c|c|c}
		\toprule
		Method &  Test Accuracy & Method &  Test Accuracy \\ \hline
		ERM  & 79.4 $\pm$ 0.14 & ActiveBias \cite{chang2017active} &  80.5 $\pm$ 0.26 \\
		Co-teaching \cite{han2018co} & 80.2 $\pm$ 0.13   &  SELFIE \cite{song2019selfie} &   81.8 $\pm$ 0.09    \\
		PLC \cite{zhang2020learning} &   83.4 $\pm$ 0.43  & MW-Net \cite{shu2019meta}  &  80.7 $\pm$ 0.52 \\ \hline
		CMW-Net  &  80.9 $\pm$ 0.48  &  CMW-Net-SL  &  \textbf{84.7 $\pm$ 0.28} \\
		\bottomrule
	\end{tabular} \vspace{-2mm}
\end{table}	

\vspace{-3mm}
\subsection{Feature-dependent Label Noise Experiment}\label{dependnoise}
We then evaluate the capability of our method against the feature-dependent label noise, which is more approximate to the real-world bias scenarios \cite{chen2021beyond,zhang2020learning}.

\textbf{Datasets.}\ We follow the PMD noise generation scheme proposed in \cite{zhang2020learning}.
Let $\eta_{y_1}(x)=P(y=y_1|x)$ be the true posterior label distribution for the sample $x$. The noise label is generated by replacing the most confident label $u_x=\arg \max_{y} \eta_{y}(x)$ of each training sample $x$ to its second confident category $s_x$ with conditional probability
$\tau_{u_x,s_x} = P(\tilde{y}=u_x|y=s_x,x)$. We use three types of $\tau_{u_x,s_x}$ designed in \cite{zhang2020learning} as follows:
\begin{align*}
	\text{Type-\uppercase\expandafter{\romannumeral1}}: \tau_{u_x,s_x} =& -\frac{1}{2}\left[\eta_{u_x}(x)-\eta_{s_x}(x)\right]^2+\frac{1}{2}, \\
	\text{Type-\uppercase\expandafter{\romannumeral2}}: \tau_{u_x,s_x} =&1-\left[\eta_{u_x}(x)-\eta_{s_x}(x)\right]^3, \\
	\text{Type-\uppercase\expandafter{\romannumeral3}}: \tau_{u_x,s_x} =&1- \frac{1}{3}\left[\left[\eta_{u_x}(x)-\eta_{s_x}(x)\right]^3 +  \right.\\
	&\left.\left[\eta_{u_x}(x)-\eta_{s_x}(x)\right]^2 +\left[\eta_{u_x}(x)-\eta_{s_x}(x)\right] \right].
\end{align*}
Besides, we also consider the hybrid noise consisting of both feature-dependent noise and symmetric as well as asymmetric noise as in Sec. \ref{noise}. We use CIFAR-10 and CIFAR-100 benchmarks with such simulated label noise.

\textbf{Baselines.}\
Following the benchmark in \cite{zhang2020learning}, we compare the following baselines: 1) ERM; 2) LRT \cite{zheng2020error}; 3) GCE \cite{zhang2018generalized}; 4) MW-Net \cite{shu2019meta} and 5) PLC \cite{zhang2020learning}, which represents the SOTA method specifically designed for addressing heterogeneous feature-dependent label noise. All these methods are generic and handle label noise without assuming the noise structures.

\begin{table}[t]
	\setlength{\abovecaptionskip}{0.cm}
	\setlength{\belowcaptionskip}{-2cm}
	\caption{Comparison of different competing methods on mini WebVision dataset. Results for baseline methods are copied from \cite{li2019dividemix}. * denotes results trained with Inception-ResNet-v2.} \label{webvision}\vspace{-1mm}
	\centering
	\begin{tabular}{c|c|c|c|c}
		\toprule
		\multirow{2}{*}{Methods} &  \multicolumn{2}{c|}{WebVision} & \multicolumn{2}{c}{ILSVRC12}\\
		&   top1 & top5 & top1 & top5 \\
		\hline
		Forward* \cite{patrini2017making}& 61.12 &82.68 &  57.36 & 82.36  \\
		MentorNet* \cite{jiang2018mentornet}&  63.00& 81.40 &57.80& 79.92  \\
		Co-teaching* \cite{han2018co}  & 63.58 &85.20&61.48 &84.70 \\
		Interative-CV* \cite{chen2019understanding} &   65.24 &85.34& 61.60& 84.98 \\
		MW-Net \cite{shu2019meta} & 69.34 & 87.44 & 65.80 & 87.52  \\
		CMW-Net  & 70.56 & 88.76 & 66.44 & 87.68 \\
		DivideMix*  \cite{li2019dividemix} & 77.32 & 91.64& 75.20 & 90.84  \\
		ELR* \cite{liu2020early} & 77.78 & 91.68& 70.29 & 89.76  \\
		DivideMix \cite{li2019dividemix}&   76.32 & 90.65& 74.42 & 91.21  \\	
		CMW-Net-SL &   78.08 & 92.96& 75.72 & 92.52  \\
		\hline
		DivideMix with C2D \cite{zheltonozhskii2022contrast}& 79.42 & 92.32 &  \textbf{78.57} & 93.04  \\
		CMW-Net-SL+C2D & \textbf{80.44} & \textbf{93.36} &  {77.36} & \textbf{93.48}  \\
		\bottomrule
	\end{tabular} \vspace{-4mm}
\end{table}
\begin{table}\vspace{-0mm}
	\setlength{\abovecaptionskip}{0.cm}
	\setlength{\belowcaptionskip}{-2cm}
	\caption{performance comparison of classification accuracy (\%) on WebFG-496.}\label{FG} \vspace{0mm}
	\centering
	\setlength{\tabcolsep}{1mm}{
		\begin{tabular}{c|c|c|c|c}
			\toprule
			Methods & Web-Bird & Web-Aircraft & Web-Car &Average   \\ \hline
			ERM & 66.56 & 64.33 & 67.42 &66.10  \\
			Decoupling \cite{malach2017decoupling}& 70.56 &75.97 &75.00& 73.84  \\
			Co-teaching \cite{han2018co}& 73.85& 72.76 &73.10 &73.24    \\
			Peer-learning \cite{sun2021webly} & 76.48 &74.38& 78.52 &76.46  \\
			MW-Net      & 75.60 & 72.93 & 77.33 &  75.29 \\
			CMW-Net     & 75.72 & 73.72 & 77.42 & 75.62  \\
			CMW-Net-SL & \textbf{77.41} & \textbf{76.48} & \textbf{79.70 }& \textbf{77.86 } \\
			\bottomrule
	\end{tabular}}\vspace{-6mm}
\end{table}
\begin{figure*}
	\centering 
	\subfigure[Typical noisy labeled samples corrected by our method from Animal-10N \cite{song2019selfie}. The original training label is {\color{red}{cat}}.]{\label{figsm1}
	\begin{minipage}[b]{0.12\textwidth} 
		\centering 
		\centerline{\includegraphics[width=0.9\textwidth, height=0.9\textwidth]{./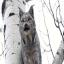}} 
		\centerline{\color{blue}{lynx}}
	\end{minipage}
	\begin{minipage}[b]{0.12\textwidth} 
		\centering 
		\centerline{\includegraphics[width=0.9\textwidth, height=0.9\textwidth]{./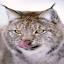}} 
		\centerline{\color{blue}{lynx}}
	\end{minipage}
	\begin{minipage}[b]{0.12\textwidth} 
		\centering 
		\centerline{\includegraphics[width=0.9\textwidth, height=0.9\textwidth]{./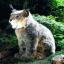}} 
		\centerline{\color{blue}{lynx}}
	\end{minipage}
	\begin{minipage}[b]{0.12\textwidth} 
		\centering 
		\centerline{\includegraphics[width=0.9\textwidth, height=0.9\textwidth]{./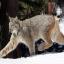}} 
		\centerline{\color{blue}{lynx}}
	\end{minipage}
	\begin{minipage}[b]{0.12\textwidth} 
		\centering 
		\centerline{\includegraphics[width=0.9\textwidth, height=0.9\textwidth]{./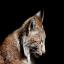}} 
		\centerline{\color{blue}{lynx}}
	\end{minipage}
	\begin{minipage}[b]{0.12\textwidth} 
		\centering 
		\centerline{\includegraphics[width=0.9\textwidth, height=0.9\textwidth]{./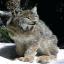}} 
		\centerline{\color{blue}{lynx}}
	\end{minipage}
	\begin{minipage}[b]{0.12\textwidth} 
		\centering 
		\centerline{\includegraphics[width=0.9\textwidth, height=0.9\textwidth]{./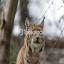}} 
		\centerline{\color{blue}{lynx}}
	\end{minipage}
	\begin{minipage}[b]{0.12\textwidth} 
		\centering 
		\centerline{\includegraphics[width=0.9\textwidth, height=0.9\textwidth]{./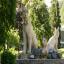}} 
		\centerline{\color{blue}{lynx}}
\end{minipage}}  \\ \vspace{-2mm}
\subfigure[Typical noisy labeled samples corrected by our method from Animal-10N \cite{song2019selfie}. The original training label is {\color{red}{lynx}}.]{\label{figsm2}
	\begin{minipage}[b]{0.12\textwidth} 
		\centering 
		\centerline{\includegraphics[width=0.9\textwidth, height=0.9\textwidth]{./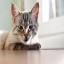}} 
		\centerline{\color{blue}{cat}}
	\end{minipage}
	\begin{minipage}[b]{0.12\textwidth} 
		\centering 
		\centerline{\includegraphics[width=0.9\textwidth, height=0.9\textwidth]{./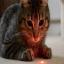}} 
		\centerline{\color{blue}{cat}}
	\end{minipage}
	\begin{minipage}[b]{0.12\textwidth} 
		\centering 
		\centerline{\includegraphics[width=0.9\textwidth, height=0.9\textwidth]{./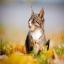}} 
		\centerline{\color{blue}{cat}}
	\end{minipage}
	\begin{minipage}[b]{0.12\textwidth} 
		\centering 
		\centerline{\includegraphics[width=0.9\textwidth, height=0.9\textwidth]{./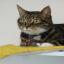}} 
		\centerline{\color{blue}{cat}}
	\end{minipage}
	\begin{minipage}[b]{0.12\textwidth} 
		\centering 
		\centerline{\includegraphics[width=0.9\textwidth, height=0.9\textwidth]{./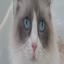}} 
		\centerline{\color{blue}{cat}}
	\end{minipage}
	\begin{minipage}[b]{0.12\textwidth} 
		\centering 
		\centerline{\includegraphics[width=0.9\textwidth, height=0.9\textwidth]{./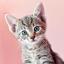}} 
		\centerline{\color{blue}{cat}}
	\end{minipage}
	\begin{minipage}[b]{0.12\textwidth} 
		\centering 
		\centerline{\includegraphics[width=0.9\textwidth, height=0.9\textwidth]{./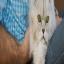}} 
		\centerline{\color{blue}{cat}}
	\end{minipage}
	\begin{minipage}[b]{0.12\textwidth} 
		\centering 
		\centerline{\includegraphics[width=0.9\textwidth, height=0.9\textwidth]{./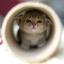}} 
		\centerline{\color{blue}{cat}}
\end{minipage}} \\ \vspace{-2mm}
\subfigure[Typical noisy labeled samples corrected by our method from mini-WebVision \cite{li2017webvision}. The original training label is {\color{red}{tailed frog}}.]{ \label{figsm3}
	\begin{minipage}[b]{0.12\textwidth} 
		\centering 
		\centerline{\includegraphics[width=0.9\textwidth, height=0.9\textwidth]{./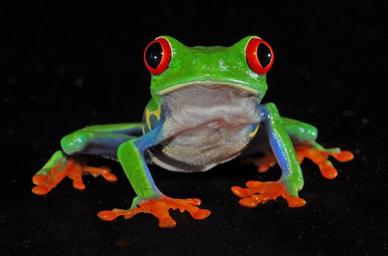}} 
		\centerline{\color{blue}{tree frog}}
	\end{minipage}
	\begin{minipage}[b]{0.12\textwidth} 
		\centering 
		\centerline{\includegraphics[width=0.9\textwidth, height=0.9\textwidth]{./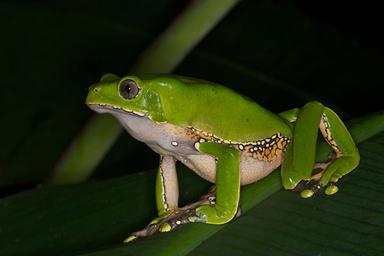}} 
		\centerline{\color{blue}{tree frog}}
	\end{minipage}
	\begin{minipage}[b]{0.12\textwidth} 
		\centering 
		\centerline{\includegraphics[width=0.9\textwidth, height=0.9\textwidth]{./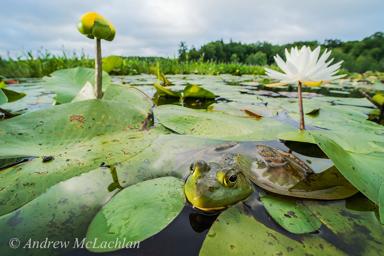}} 
		\centerline{\color{blue}{bullfrog}}
	\end{minipage}
	\begin{minipage}[b]{0.12\textwidth} 
		\centering 
		\centerline{\includegraphics[width=0.9\textwidth, height=0.9\textwidth]{./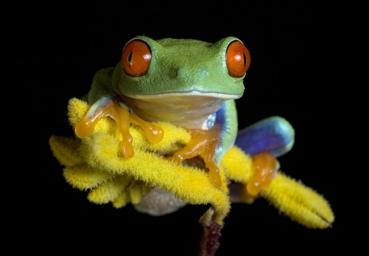}} 
		\centerline{\color{blue}{tree frog}}
	\end{minipage}
	\begin{minipage}[b]{0.12\textwidth} 
		\centering 
		\centerline{\includegraphics[width=0.9\textwidth, height=0.9\textwidth]{./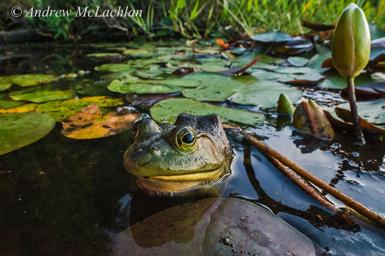}} 
		\centerline{\color{blue}{bullfrog}}
	\end{minipage}
	\begin{minipage}[b]{0.12\textwidth} 
		\centering 
		\centerline{\includegraphics[width=0.9\textwidth, height=0.9\textwidth]{./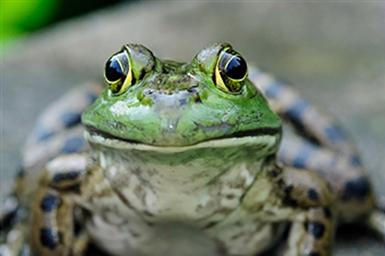}} 
		\centerline{\color{blue}{bullfrog}}
	\end{minipage}
	\begin{minipage}[b]{0.12\textwidth} 
		\centering 
		\centerline{\includegraphics[width=0.9\textwidth, height=0.9\textwidth]{./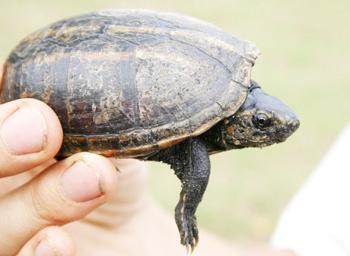}} 
		\centerline{\color{blue}{mud turtle}}
	\end{minipage}
	\begin{minipage}[b]{0.12\textwidth} 
		\centering 
		\centerline{\includegraphics[width=0.9\textwidth, height=0.9\textwidth]{./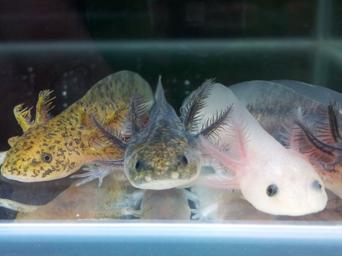}} 
		\centerline{\color{blue}{axolotl}}
	\end{minipage}}
	\vspace{-6mm}
	\caption{Examples of randomly selected samples with noisy labels corrected by our method. The original training labels and generated pseudo-labels by model are shown in {\color{red}{red}} and {\color{blue}{blue}}, respectively. More comprehensive examples are depicted in the SM.}	\label{figwebfig}   \vspace{-4mm}
\end{figure*}

\textbf{Results.}\
Table \ref{classfea1} lists the performance of different competing methods under three types of feature-dependent noise at noise levels 35\% and 70\%. It is seen that our method achieves the best performance
on all cases. Table \ref{classfea2} further shows the results on datasets corrupted with a combination of feature dependent and independent noises, where feature-independent noise is overlayed on the feature-dependent one and thus bias patterns are more complicated. The superiority of the proposed method can still be easily observed.

\begin{figure}[t] \vspace{-2mm}
	\centering
	\subfigcapskip=-1mm
	\subfigure[ANIMAL-10N]{\label{figanimal}
		\includegraphics[width=0.21\textwidth]{./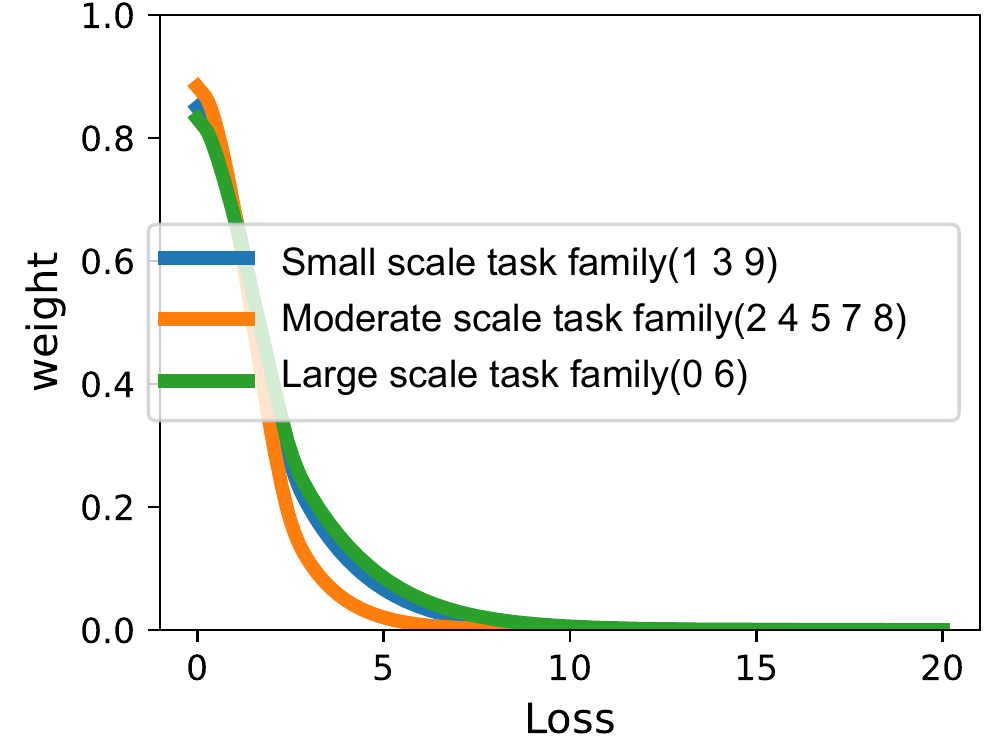}}  \ \
	\subfigure[mini WebVision]{\label{figwebvision}
		\includegraphics[width=0.21\textwidth]{./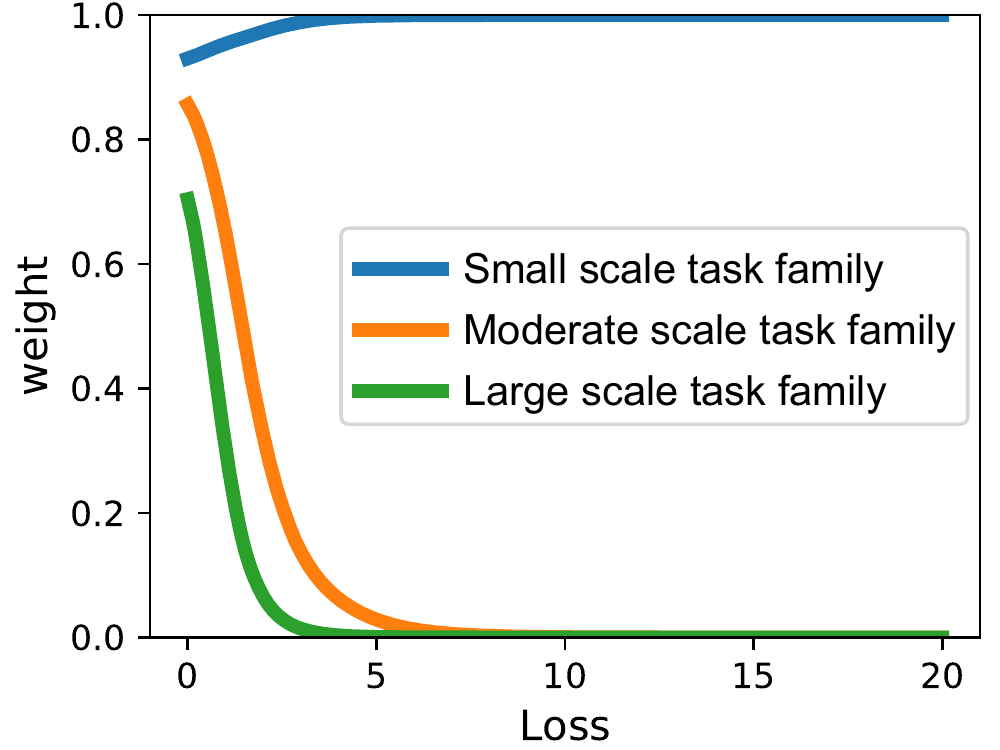}}\vspace{-4mm}
	\caption{Weighting schemes learned by CMW-Net on real biased datasets.}\vspace{-6mm}
\end{figure}

The generation mechanism of such feature-dependent noise results in noisy samples near the decision boundary \cite{zhang2020learning}, which are harder to distinguish and more likely to be mislabeled. From Fig.\ref{figah}, it can be seen that the proposed method can still finely distinguish most clean and noisy samples (some noisy samples are wrongly assigned to high weights due to they are samples near the decision boundary). As compared, from Fig.\ref{figad}, it can be observed that MW-Net totally fails to distinguish clean and noisy samples, and assigns high weights to all samples, naturally leading to its performance degeneration. Note that the PLC method \cite{zhang2020learning}, which is specifically designed for feature-dependent label noise data, also achieves fine results. The main idea of this method is to progressively correct noisy labels and refine the model for those relatively reliable samples with high confidence, measured by a dynamically specified threshold gradually deceased in iterations. Considering its general availability to a wider range of data bias cases and relatively more concise meta-learning framework, it should be rational to say that the proposed method is effective.

\vspace{-4mm}
\section{Learning with  Real Biased Data}	\label{real}
\subsection{Learning with Real-world Noisy Datasets}
\textbf{Datasets.}\ We adopt two real-world datasets, ANIMAL-10N \cite{xiao2015learning} and WebVision \cite{song2019selfie}. Animal-10N contains 55,000 human-labeled online images for 10 confusing animal classes, all with approximately similar noisy label distributions (8\% noisy samples). Following previous works \cite{song2019selfie}, 50,000 images are exploited for training while the left for testing. For ease of comparison to previous works \cite{jiang2018mentornet,chen2019understanding}, we consider the mini WebVision dataset which contains the top 50 classes from the Google image subset of WebVision. The performance evaluation is implemented on both the validation sets of mini WebVision \cite{li2017webvision} and the corresponding class samples of ImageNet \cite{deng2009imagenet}. ResNet-50 is adopted as the classifier network. More implementation details are specified in SM.

\textbf{Results.}\
Tables \ref{animal} and \ref{webvision} compare the test performance of all competing methods trained on the Animal-10N and mini WebVision datasets, respectively. For the Animal-10N dataset, we compare $4$ methods that have reported performance on this dataset. Compared with sample selection methods ActiveBias \cite{chang2017active} and Co-teaching \cite{han2018co}, our CMW-Net attains better performance, showing its better screening capability for useful samples. Under soft-label amelioration, our method achieves a further performance gain over recent label correction methods SELFIE \cite{song2019selfie} and PLC \cite{zhang2020learning}. Figs. \ref{figsm1} and \ref{figsm2}
visualize typical noisy examples selected by CMW-Net as well as its generated pseudo-labels. Though they are a pair of easily confused categories (cat, lynx), our method can still extract their wrong labels and correct them as the true ones.
Fig. \ref{figanimal} further shows the weighting functions learned by CMW-Net, complying with the class balance and inter-class noise homogeneity property of this dataset.

\begin{table}[t]
	\setlength{\abovecaptionskip}{0.cm}
	\setlength{\belowcaptionskip}{-2cm}
	\caption{Long-tail recognition accuracy of different competing methods by using ResNet-10 as the classifier on ImageNet-LT \cite{liu2019large} . Results for baselines are copied from \cite{liu2019large}.} \label{imagenetlt}\vspace{-1mm}
	\centering
	\begin{tabular}{c|c|c|c|c}
		\toprule
		\multirow{2}{*}{Methods} &  \multicolumn{4}{c}{Accuracy} \\ \cline{2-5}
		&   Many & Medium &Few & Overall \\
		\hline
		ERM & 40.9 &10.7 &0.4 &20.9 \\
		Lifted Loss \cite{oh2016deep} & 35.8 & 30.4 & 17.9& 30.8 \\
		Focal loss \cite{lin2018focal} & 36.4 &29.9 &16 &30.5 \\
		Range Loss \cite{zhang2017range}  & 35.8& 30.3 &17.6& 30.7\\
		OLTR \cite{liu2019large}  & 43.2 & 35.1 & 18.5 & 35.6 \\ \hline
		OLTR \cite{liu2019large} + CMW-Net  & \textbf{47.2} & \textbf{39.2} & \textbf{19.7} & \textbf{39.5} \\
		\bottomrule
	\end{tabular}  \vspace{-4mm}
\end{table}

Table \ref{webvision} also shows the superiority of CMW-Net compared to other competing methods without involving soft labels. By introducing soft labels, our CMW-Net-SL achieves superior performance to recent SOTA methods, DivideMix and ELR. Furthermore, by combining the self-supervised pretraining technique proposed in the C2D method \cite{zheltonozhskii2022contrast}, we further boost the performance. In Fig. \ref{figwebfig}, we show some typical noisy examples corrected by the proposed method, showing its capability of recovering these easily confused samples.

Fig. \ref{figwebvision} plots the learned weighting functions by CMW-Net, revealing certain helpful data bias insight. For small-scale task family, the corresponding weighting function is with larger weights and shows increasing tendency. It is beneficial to more emphasize their contained rare samples especially those marginal informative ones for alleviating their possibly encountered class-imbalance bias issue. Yet for moderate and larger-scale task family containing relatively abundant training data, the weighting functions are with monotonically decreasing shapes to suppress the negative effect brought by their contained noisy samples. Such more comprehensive and faithful exploration and encoding for data bias situations naturally leads to the better performance of CMW-Net than conventional sample weighting strategies.

\subsection{Webly Supervised Fine-Grained Recognition}	
We further run our method on a benchmark WebFG-496 dataset proposed in \cite{sun2021webly}, consisting of three sub-datasets: Web-aircraft, Web-bird, Web-car, which contain 13,503 images with 100 types of airplanes, 18,388 images with 200 species of birds, and 21,448 images with 196 categories of cars, respectively. The aim is to use web images to train a fine-grained recognition model. The data bias of this dataset is validated to be complicated, with both label noise and class imbalance patterns, as well as certain inter-class variance \cite{sun2021webly}. Experimental results are shown in Table \ref{FG}. It is seen that CMW-Net-SL evidently improves other reported SOTA performance \cite{sun2021webly}. This further validates the effectiveness of our method for such real dataset with complex data biases. More implementation details are given in SM.

\begin{table}[t]
	\setlength{\abovecaptionskip}{0.cm}
	\setlength{\belowcaptionskip}{-2cm}
	\caption{ Validation accuracy of InceptionResNet-v2 with transferable CMW-Net and different competing methods on full WebVision and ImageNet validation sets. Results for baselines are copied from original papers.} \label{fullwebvision}
	\centering
	\begin{tabular}{c|c|c|c|c}
		\toprule
		\multirow{2}{*}{Methods} &  \multicolumn{2}{c|}{WebVision} & \multicolumn{2}{c}{ILSVRC12} \\ \cline{2-5}
		&    top1 & top5 & top1 & top5   \\
		\hline
		ERM & 69.7 &87.0 & 62.9 & 83.6  \\
		MentorNet \cite{jiang2018mentornet}& 70.8& 88.0 &62.5& 83.0  \\
		MentorMix \cite{jiang2020beyond}   & {74.3} &90.5&67.5 &{87.2}  \\
		HAR \cite{cao2020heteroskedastic} & 75.0 & 90.6 & 67.1 & 86.7 \\
		 MILE \cite{rajeswar2022multi} & 76.5 & 90.9 & 68.7 &  86.4  \\
		Heteroscedastic \cite{collier2021correlated} & 76.6  &  92.1 & 68.6 &  87.1  \\	
	CurriculumNet \cite{guo2018curriculumnet}  & \textbf{79.3}	 & \textbf{93.6} & - & - \\  \hline
		ERM + CMW-Net-SL   & 77.9 & 92.6 & \textbf{69.6} & \textbf{88.5} \\
		\bottomrule
	\end{tabular}  \vspace{-4mm}
\end{table}

\section{Transferability of CMW-Net} \label{trans}
As aforementioned, a potential usefulness of the meta-learned weighing scheme by CMW-Net is that it is model-agnostic and hopefully equipped into other learning algorithms in a plug-and-play manner. To validate such transferable capability of CMW-Net, we attempt to transfer meta-learned CMW-Net on relatively smaller dataset to significantly larger-scale ones. In specific, we use CMW-Net trained on CIFAR-10 with feature-dependent label noise (i.e.,35\% Type-\uppercase\expandafter{\romannumeral1} + 30\% Asymmetric) as introduced in Sec. 4.3 since it finely simulates the real-world noise configuration. The extracted weighting function is depicted in Fig.\ref{figad}. We deploy it on two large-scale real-world biased datasets, ImageNet-LT \cite{liu2019large} and full WebVision \cite{li2017webvision}.

Table \ref{imagenetlt} shows the performance on ImageNet-LT. By readily equipping our learned CMW-Net upon the SOTA OLTR algorithm \cite{liu2019large} on this dataset, it can be seen that around 4\% higher overall accuracy can be readily obtained.
Besides, the performance on full WebVision is compared in Table \ref{fullwebvision}.

It is interesting to see that by directly integrating the learned CMW-Net into the simple ERM algorithm with more training epochs, the performance can be further improved, outperforming most of these SOTA methods, only slightly inferior to the CurriculumNet method \cite{guo2018curriculumnet}, whose results were obtained with ensemble of six models. Even with a relatively concise form, our method still outperforms the second-best Heteroscedastic method by an evident margin. This further validates the potential usefulness of CMW-Net to practical large-scale problems with complicated data bias situations, with an intrinsic reduction of the labor and computation costs by readily specifying proper weighting scheme for a learning algorithm. More experimental details are presented in SM.

\vspace{-3mm}
\section{Extensional Applications}		\label{application}
We then evaluate the generality of our proposed adaptive sample weighting strategy in more robust learning tasks, including partial-label learning and semi-supervised learning. The experiments on the selective classification task are introduced in SM due to page limitation.
\vspace{-3mm}
\subsection{Partial-Label Learning}
\subsubsection{Problem Formulation}	
Partial-label learning (PLL) \cite{jin2002learning} aims to deal with the problem where each instance is provided with a set of candidate labels, only one of which is the correct label.
Denote $\mathcal{X} \subset \mathbb{R}^d$ as the input space, $\mathcal{Y} :=\{1,\cdots,C\}$ as the label space, where $C$ is the number of all training classes. Denote the partially labeled dataset as $\mathcal{D}_{PLL} = \{(x_i,Y_i)\}_{i=1}^N$, where $Y_i \in \mathcal{Y}$ is the candidate label set of $x_i$. 
The goal of PLL is to find latent ground-truth label $y$ for each of $x_i$s through observing their partial label sets. The basic definition of PLL is that true label $y$ of an instance $x$ must be in its candidate label set ${Y}$.
The PLL risk estimator is then defined as:
\begin{equation} \label{pc}
	\mathcal{R}_{PLL}(f) = \mathbb{E}_{P(x, Y)} [\ell_{PLL}(f(x), {Y})],
\end{equation} 		
where $\ell_{PLL}(\cdot, \cdot)$ is the loss function and $f(\cdot)$ is the classifier.

To estimate Eq.(\ref{pc}), it usually treats all the candidate labels equally \cite{jin2002learning}, i.e., $\ell_{PLL}(f(x), Y) = \frac{1}{|Y|} \sum_{y\in Y} \ell(f(x), y)$. Considering that only the true label contributes to retrieving the classifier, PRODEN \cite{lv2020progressive} defines the PLL loss as the minimal loss over
the candidate label set:
\begin{align*}
	\ell_{PLL}(f(x), {Y}) = \min_{y\in Y} \ell(f(x), y).
\end{align*}
They further relax the $\min$ operator of the above equality by the dynamic weights as follows:
\begin{align*}
	\ell_{PLL}(f(x), {Y}) =\sum_{y\in Y}  w_{y} \ell(f(x), y),
\end{align*}
where all $w_y$s consist of a one-hot vector, expected to reflect the confidence of the label $y\in Y$ being the true label.

\begin{figure}[t]
	\centering
	\subfigcapskip=-1mm
	\includegraphics[width=0.24\textwidth]{./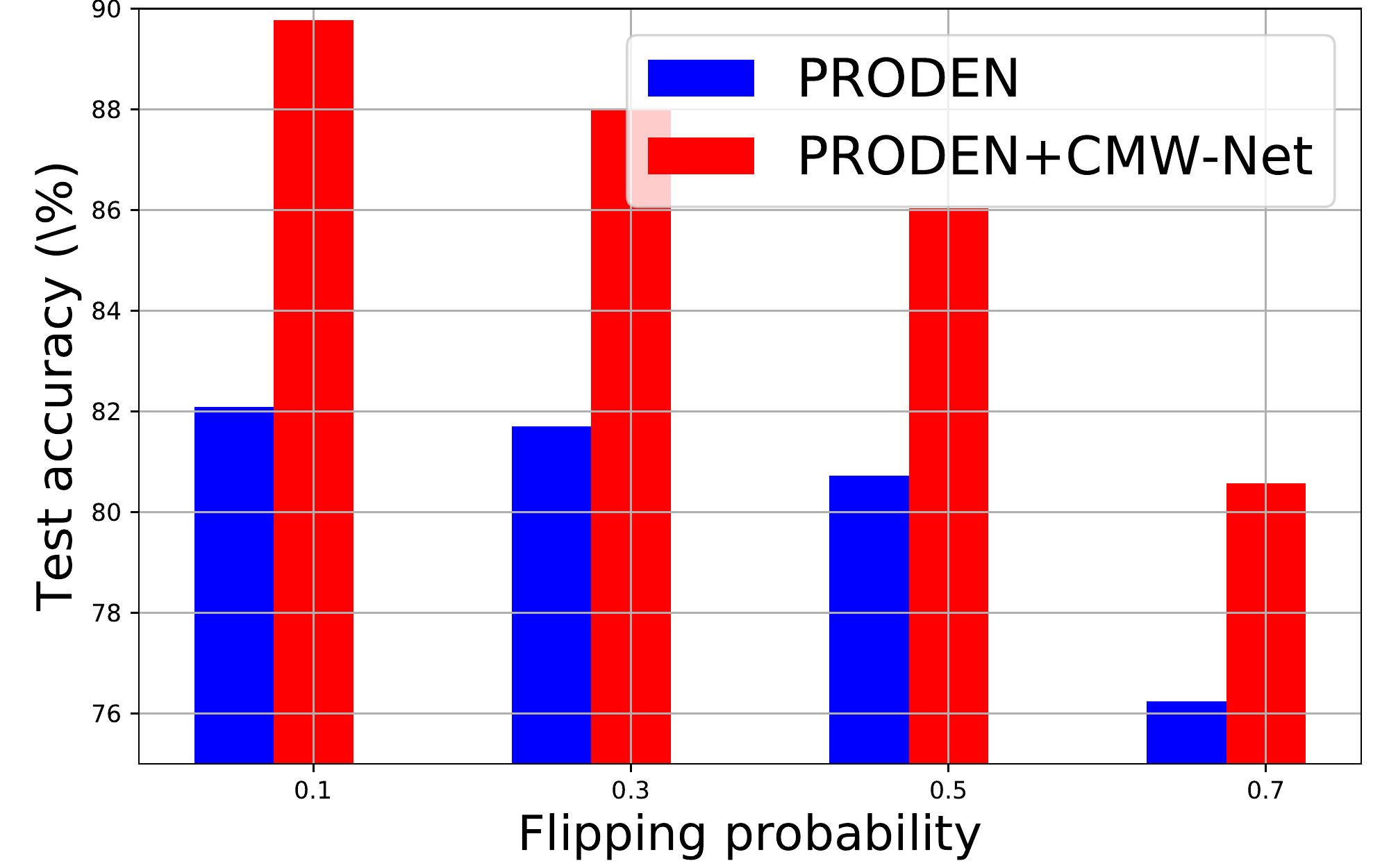}  \hspace{-1mm}
	\includegraphics[width=0.24\textwidth]{./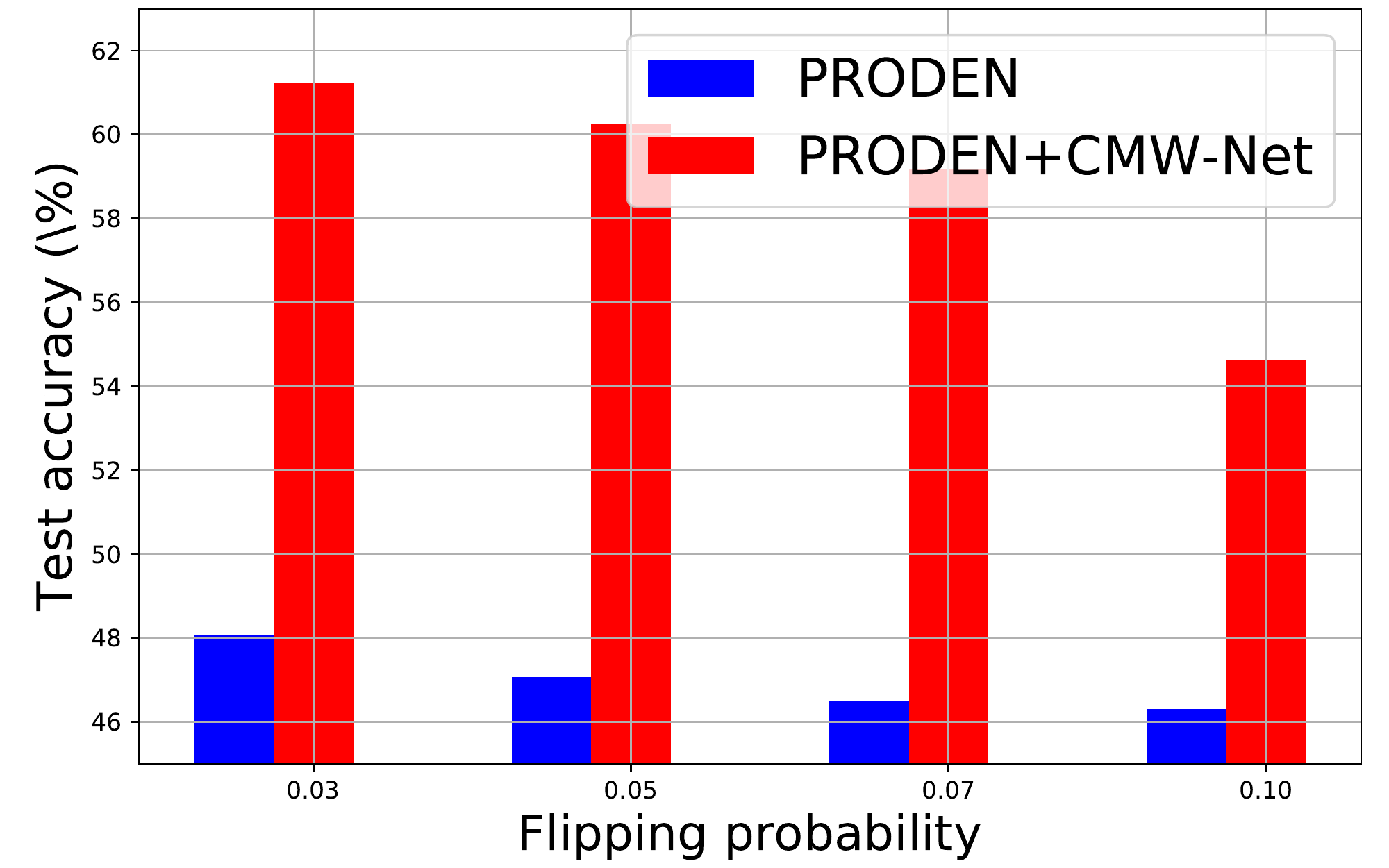}\vspace{-8mm}
	\caption{Accuracy comparisons on PRODEN w/o CMW-Net strategy over (left) CIFAR-10 and (right) CIFAR100 under partial label learning setting.}\label{figpatialnet}  \vspace{-4mm}
\end{figure}

\vspace{-1mm}
\subsubsection{CMW-Net Amelioration and Experiments}
PRODEN \cite{lv2020progressive} represents the recent SOTA method against such PLL task, through progressively identifying the true labels from the partial label sets, and refining them in turn to ameliorate the classifier. Then by taking the samples with predicted labels as training data, which still contain many wrong annotations, the CMW-Net method (i.e., Algorithm \ref{alg:example} by using meta-data establishing technique introduced in Sec. 4.2) can be readily employed to further improve the classifier.

Following PRODEN \cite{lv2020progressive}, two sets of partial label datasets are generated from CIFAR-10 and CIFAR-100, respectively, under different flipping probabilities. As shown in Fig. \ref{figpatialnet}, it is seen that CMW-Net can significantly enhance the performance of the baseline method in both test cases, showing its potential usability in this PLL task. More experimental settings and results are presented in SM.

\begin{table*}
\setlength{\abovecaptionskip}{0.cm}
\setlength{\belowcaptionskip}{-2cm}
\caption{Performance comparison of Fixmatch w/o CMW-Net on CIFAR-10, CIFAR-100 and ImageNet datasets in test error over 3 trials. 
The baselines results of CIFAR are copied from \cite{sohn2020fixmatch}, and those of ImageNet are copied from \cite{cai2021exponential}. } \label{semi}\vspace{-1mm}
\centering
\begin{tabular}{l|c|c|c|c|c|c|c|c}
\toprule
&   \multicolumn{3}{c|}{CIFAR-10}   &    \multicolumn{3}{c|}{CIFAR-100} & \multicolumn{2}{c}{ImageNet (10\% labels)}\\
\hline
Method & 40 labels & 250 labels & 4000 labels &400 labels & 2500 labels & 10000 labels & top-1 & top5\\ \hline
FixMatch (RA) \cite{sohn2020fixmatch}& 13.81 $\pm$ 3.37 & 5.07 $\pm$ 0.65 & 4.26 $\pm$ 0.05 & 48.85 $\pm$ 1.75 & 28.29 $\pm$ 0.11 & 22.60 $\pm$ 0.12 &32.9  & 13.3\\ \hline
FixMatch (RA) + CMW-Net &  \textbf{9.60 $\pm$ 0.62} & \textbf{4.73 $\pm$ 0.15}  & \textbf{4.25 $\pm$ 0.03}  &\textbf{ 47.70 $\pm$ 1.14}  & \textbf{27.43 $\pm$ 0.12} & \textbf{22.55 $\pm$ 0.09}   & \textbf{30.8} & \textbf{ 11.3} \\
\bottomrule
\end{tabular}\vspace{-4mm}
\end{table*}

\vspace{-3mm}
\subsection{Semi-Supervised Learning}
\subsubsection{Problem Formulation}
To reduce the annotation cost for supervised learning, an alternative strategy is to train the classifier with small labeled set as well as a large amount of unlabeled samples. This constitutes the main aim of semi-supervised learning (SSL). Let $D = \{D_L,D_U\}$ denote the entire dataset, including a small labeled dataset $D_L=\{(x_i,y_i)\}_{i=1}^L$ and a large-scale unlabeled dataset $D_U=\{(x_i)\}_{i=1}^U$, and $L\ll U$.
Formally, SSL aims to solve the following optimization problem \cite{yang2021survey}:
\begin{align*}
	\min_{\mathbf{w}} \sum_{(x_l,y_l)\in D_L} \mathcal{L}_S (x_l,y_l;\mathbf{w}) + \alpha \sum_{x_u\in D_U}  \mathcal{L}_U (x_u;\mathbf{w}),
\end{align*}
where $\mathcal{L}_S$ denotes the supervised loss, e.g., cross-entropy for classification, and $\mathcal{L}_U$ denotes the unsupervised loss, e.g., consistency loss \cite{xie2020unsupervised} or a regularization term \cite{miyato2018virtual}. $\mathbf{w}$ denotes the model parameters and $\alpha > 0$ denotes the compromise parameter balancing two terms.

Generally, different specifications of the unsupervised loss $\mathcal{L}_U$ lead to different SSL algorithms.
One commonly used strategy is the Pseudo Labeling approach \cite{lee2013pseudo}, which aims to sufficiently use labelled data to predict the labels of the unlabeled data, and take these pseudo-labeled data as labeled ones in training (reflected in the term $\mathcal{L}_U$). Recently, the SOTA SSL methods, like VAT \cite{miyato2018virtual}, MixMatch \cite{berthelot2019mixmatch}, UDA \cite{xie2020unsupervised}, and FixMatch \cite{sohn2020fixmatch} makes good progress to enhance the pseudo labeling capability by using sample augmentation techniques, through encouraging consistency under different augmented data \cite{xie2020unsupervised}. We take the recent SOTA Fixmatch \cite{sohn2020fixmatch} as a typical example.
Denote $\alpha(x_l)$ and $\mathcal{A}(x_u)$ as augmentation operators imposed on labeled and unlabeled samples, respectively. Then the FixMatch model can be written as:
\begin{align}
	\min_{\mathbf{w}} &\sum_{(x_l,y_l)\in D_L} \ell(f(\alpha(x_l);\mathbf{w}),y_l) \nonumber \\
	&  +\alpha \sum_{x_u \in D_U}  \mathbf{1}(\max(z_{u}\geq \tau) \ell(f(\mathcal{A}(x_u);\mathbf{w}),{y}_u), \label{SSL1}
\end{align}
where ${y}_u$ is the pseudo label on $x_u$, calculated by $y_u=\arg\max_{j} z_{uj}, z_u = f(\alpha(x_u);\mathbf{w})$ in iteration. $\tau$ is a scalar hyperparameter denoting the threshold above which we retain a pseudo-label. Note that $ \mathbf{1}(\max(z_{u}\geq \tau) $ corresponds to a hard weighting scheme with manually specified hyperparameter $\tau$. Albeit attaining good performance, the above FixMatch model is still with limitation that its hard-thresholding weighting scheme treats all unlabeled (augmented) samples equally, and its involved hyper-parameter $\tau$ is often not easily and adaptably specified against different tasks. The method thus still has room for further performance enhancement.
\vspace{-1mm}
\subsubsection{CMW-Net Amelioration and Experiments}\vspace{-1mm}
To better distinguish clean and noisy pseudo-labels, we can easily substitute the original hard weighting scheme as CMW-Net, to make sample weights capable of more sufficiently reflecting noise extents and  adaptable to training data/tasks. Then the problem (\ref{SSL1}) can be ameliorated as:
\begin{align*}
	\min_{\mathbf{w}} &\sum_{(x_l,y_l)\in D_L} \ell(f(\alpha(x_l);\mathbf{w}),y_l) \\
	&  +\alpha\sum_{(x_u)\in D_U} \mathcal{V}(L_u^{tr}(\mathbf{w}),N_i;\Theta)  \ell(f(\mathcal{A}(x_u);\mathbf{w}),y_u).
\end{align*}
The algorithm can also be readily designed by integrating the updating step for the meta-parameter $\Theta$ into the original algorithm of Fixmatch (the labeled data are naturally used as meta data), so as to make the weighting scheme iteratively extracted together with the classifier parameter $\mathbf{w}$ in an automatic and more likely intelligent manner.

We conduct experiments on several standard SSL image classification benchmarks, including CIFAR-10, CIFAR-100 \cite{krizhevsky2009learning} and ImageNet dataset \cite{deng2009imagenet}. Results are shown in Table \ref{semi}. It is evident that our CMW-Net consistently helps improve the performance of FixMatch, showing its potential application prospects on this task.
Especially, when FixMatch is trained with smaller labeled data resources, pseudo labels generated by FixMatch tend to be relatively unreliable, naturally resulting in performance degradation. CMW-Net is capable of adaptively reducing the negative effect of unreliable pseudo labels, and thus improves FixMatch significantly in this case.
More experimental settings and results are presented in SM.

\vspace{-3mm}
\section{Conclusion and Discussion}\label{conclusion}
In this study, we have proposed a novel meta-model, called CMW-Net, for adaptively extracting an explicit sample weighing scheme directly from training data. Compared with current sample weighing approaches, CMW-Net is validated to possess better flexibility against complicated data bias situations with inter-class heterogeneity. Assisted by additional soft pseudo-label information, the proposed method achieves competitive (mostly superior) performance under various data bias cases, including class imbalance, feature independent or dependent label noise, and more practical real-world data bias scenarios, beyond those SOTA methods specifically designed on these robust learning tasks. The extracted weighting schemes can always help faithfully reveal bias insights underlying training data, making the good effect of the method rational and interpretable. Two potential application prospects of CMW-Net are specifically illustrated and substantiated. One is its fine task-transferability of the learned weighting scheme, implying a possible efficiency-speedup methodology for handling robust learning tasks under big data, through avoiding its time-consuming and laborsome weighting function tuning process. The other is its wide range of possible extensional applications for other robust learning tasks,  e.g., partial-label learning, semi-supervised learning and selective classification.

In our future investigation, we'll apply the proposed adaptive sample weighting strategies to more robust learning tasks to further validate its generality. Attributed to its relatively concise modeling manner, it is also hopeful to develop deeper and more comprehensive statistical learning understanding for revealing its intrinsic generalization capability across different tasks \cite{shu2021learning}. Besides, we'll try to build more wider range of connections with our method to previous techniques on exploring data insights, like importance weighting  \cite{liu2015classification}. More sufficient and comprehensive task-level feature representation will also be further investigated in our future research. Further algorithm efficiency enhancement of our model will also be investigated in our future research.

\ifCLASSOPTIONcompsoc
\section*{Acknowledgments}
\else
\section*{Acknowledgment}
\fi

This work was supported by the National Key Research and Development Program of China under Grant 2021ZD0112900; in part by the National Natural Science
Foundation of China (NSFC) Project under Contract 61721002; and in part by the Macao Science and Technology Development Fund under Grant 061/2020/A2 and The Major Key Project of PCL  under contract PCL2021A12.

\newpage
\begin{IEEEbiography}[{\includegraphics[width=1in,height=1.25in,clip,keepaspectratio]{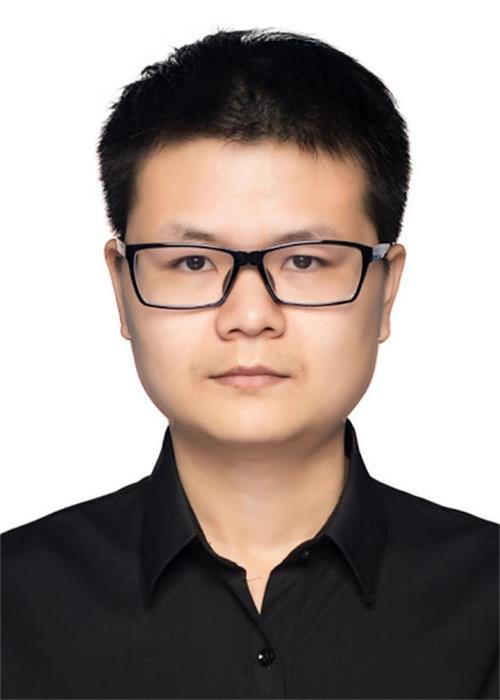}}]{Jun Shu}
	received the B.E. degree from Xi'an Jiaotong University, Xi'an, China, in 2016, where he is currently pursuing the Ph.D degree, under the tuition of Prof. Deyu Meng and Prof. Zongben Xu. His current research interests include machine learning and computer vision, especially on meta learning, robust deep learning and AutoML.
\end{IEEEbiography}
\begin{IEEEbiography}[{\includegraphics[width=1in,height=1.25in,clip,keepaspectratio]{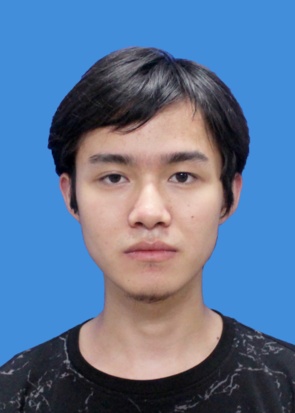}}]{Xiang Yuan}
	received the B.E. degree from Xi'an Jiaotong University, Xi'an, China, in 2020, where he is currently pursuing the Ph.D degree, under the tuition of Prof. Deyu Meng and Prof. Zongben Xu. His current research interests include meta learning and robust deep learning.
\end{IEEEbiography}

\begin{IEEEbiography}[{\includegraphics[width=1in,height=1.25in,clip,keepaspectratio]{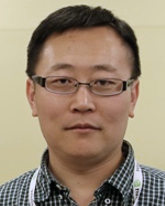}}]{Deyu Meng}
	received the B.Sc., M.Sc., and Ph.D. degrees from Xi'an Jiaotong University, Xi'an, China, in 2001, 2004, and 2008,
	respectively. He was a Visiting Scholar with Carnegie Mellon University, Pittsburgh, PA, USA, from 2012 to 2014. He is currently a Professor with the School of Mathematics and Statistics, Xi'an Jiaotong University, and an Adjunct Professor with the Faculty	of Information Technology, Macau University of	Science and Technology, Taipa, Macau, China. His research interests include model-based deep learning, variational networks, and meta learning.
\end{IEEEbiography}

\begin{IEEEbiography}[{\includegraphics[width=1in,height=1.25in,clip,keepaspectratio]{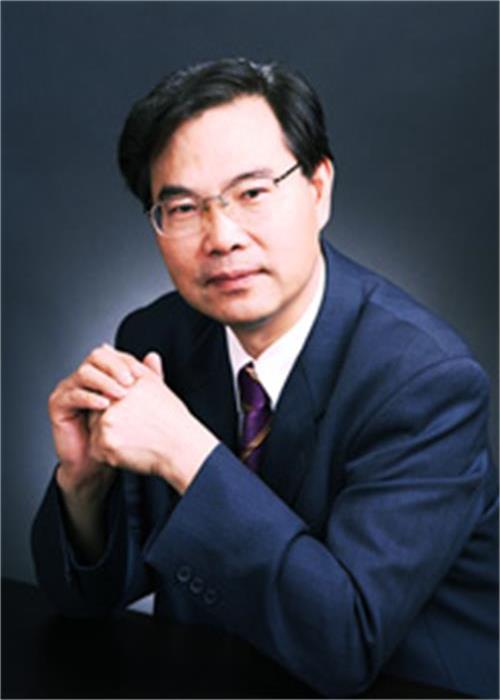}}]{Zongben Xu}
	received the PhD degree in mathematics from Xi'an Jiaotong University, Xi'an, China, in 1987. He currently serves as the Academician of the Chinese Academy of Sciences, the chief scientist of the National Basic Research	Program of China (973 Project), and the director
	of the Institute for Information and System Sciences with Xi’an Jiaotong University. His current research interests include nonlinear functional analysis and intelligent information processing.	He was a recipient of the National Natural Science Award of China, in 2007, and the winner of the CSIAM Su Buchin Applied Mathematics Prize, in 2008.
\end{IEEEbiography}

\clearpage
\onecolumn
\appendices

\section{Technical Details in Section 3} \label{sec1}
\subsection{Derivation of the Weighting Scheme in CMW-Net }
We first derive the equivalent forms of the updating steps for CMW-Net and CMW-Net-SL parameters $\Theta$, as expressed in Eqs. (10) and (15), in the main text, respectively.

Recall the update equation of the CWM-Net parameters as follows:
\begin{align}\label{eqcwm}
\Theta^{(t+1)} =  \Theta^{(t)} -\beta \frac{1}{m}\sum_{i=1}^{m} \nabla_{ \Theta} L_i^{meta}(\hat{\mathbf{w}} ^{(t+1)}(\Theta))\Big|_{\Theta^{(t)}}.
\end{align}
The gradient can be calculated by the following derivation:
\begin{align}\label{eq9}
\begin{split}
& \frac{1}{m}\sum_{i=1}^{m} \nabla_{ \Theta} L_i^{meta}(\hat{\mathbf{w}} ^{(t+1)}(\Theta))\Big|_{\Theta^{(t)}}\\
=& \frac{1}{m}\sum_{i=1}^{m} \frac{\partial L_i^{meta} (\hat{\mathbf{w}} ^{(t+1)}(\Theta))}{\partial \hat{\mathbf{w}} ^{(t+1)}(\Theta)} \frac{\partial \hat{\mathbf{w}} ^{(t+1)}(\Theta)}{\partial \Theta}\Big|_{ \Theta^{(t)}} \\
=& \frac{1}{m}\sum_{i=1}^{m} \frac{\partial L_i^{meta} (\hat{\mathbf{w}} ^{(t+1)}(\Theta))}{\partial \hat{\mathbf{w}} ^{(t+1)}(\Theta)} \sum_{j=1}^n \frac{\partial \hat{\mathbf{w}} ^{(t+1)}(\Theta)}{\partial \mathcal{V}(L_j^{tr}(\mathbf{w}^{(t)}),N_j;\Theta)} \frac{\partial \mathcal{V}(L_j^{tr}(\mathbf{w}^{(t)}),N_j;\Theta)}{\partial \Theta}\Big|_{ \Theta^{(t)}}.
\end{split}
\end{align}
Let
\begin{align}\label{eqg}
G_{ij}=\frac{\partial L_i^{meta} (\hat{\mathbf{w}})}{\partial \hat{\mathbf{w}}}\Big|_{\hat{\mathbf{w}} ^{(t+1)}(\Theta)}^T \frac{\partial L_j^{tr} (\mathbf{w})}{\partial \mathbf{w}} \Big|_{\mathbf{w}^{(t)}},
\end{align}
and by substituting Eqs. (\ref{eqg}) and (\ref{eq9}) into Eq. (\ref{eqcwm}), we can get:
\begin{align}\label{eq10}
\begin{split}
\Theta^{(t+1)}	= \Theta^{(t)} + {\alpha\beta}\sum_{j=1}^n \left(\frac{1}{m} \sum_{i=1}^{m}G_{ij} \right)  \frac{\partial \mathcal{V}(L_j^{tr}(\mathbf{w}^{(t)}),N_j;\Theta)}{\partial \Theta}\Big|_{ \Theta^{(t)}}.
\end{split}
\end{align}
This corresponds to Eq. (10) in the main text.

1) For the CMW-Net, since
\begin{align}
\hat{\mathbf{w}}^{(t+1)}(\Theta)=
\mathbf{w}^{(t)} - \alpha
\sum_{i=1}^n    \mathcal{V}(L_i^{tr}(\mathbf{w}^{(t)}),N_i;\Theta)\nabla_{\mathbf{w}} L_i^{tr}(\mathbf{w})\Big|_{\mathbf{w}^{(t)}},
\end{align}
thus we have
\begin{align*}
& \frac{1}{m}\sum_{i=1}^{m} \nabla_{ \Theta} L_i^{meta}(\hat{\mathbf{w}} ^{(t+1)}(\Theta))\Big|_{\Theta^{(t)}}\\
=& \frac{-\alpha}{m}\sum_{i=1}^{m} \frac{\partial L_i^{meta} (\hat{\mathbf{w}})}{\partial \hat{\mathbf{w}}}\Big|_{\hat{\mathbf{w}} ^{(t+1)}(\Theta)} \sum_{j=1}^n \frac{\partial L_j^{tr} (\mathbf{w})}{\partial \mathbf{w}}\Big|_{\mathbf{w}^{(t)}} \frac{\partial \mathcal{V}(L_j^{tr}(\mathbf{w}^{(t)}),N_j;\Theta)}{\partial \Theta}\Big|_{ \Theta^{(t)}}\\
=& -\alpha\sum_{j=1}^n \left(\frac{1}{m} \sum_{i=1}^{m} \frac{\partial L_i^{meta} (\hat{\mathbf{w}})}{\partial \hat{\mathbf{w}}}\Big|_{\hat{\mathbf{w}} ^{(t+1)}(\Theta)}^T \frac{\partial L_j^{tr} (\mathbf{w})}{\partial \mathbf{w}} \Big|_{\mathbf{w}^{(t)}}\right)  \frac{\partial \mathcal{V}(L_j^{tr}(\mathbf{w}^{(t)}),N_j;\Theta)}{\partial \Theta}\Big|_{ \Theta^{(t)}}.
\end{align*}

2) For the CMW-Net-SL, since
\begin{align}
\hat{\mathbf{w}}^{(t+1)}(\Theta)=
\mathbf{w}^{(t)} - \alpha
\sum_{i=1}^n   \left\{\mathcal{V}(L_i^{tr}(\mathbf{w}^{(t)}),N_i;\Theta)\nabla_{\mathbf{w}} L_i^{tr}(\mathbf{w})\Big|_{\mathbf{w}^{(t)}} + \left(1-\mathcal{V}(L_i^{tr}(\mathbf{w}^{(t)}),N_i;\Theta)\right)\nabla_{\mathbf{w}} L_i^{P_{se}}(\mathbf{w})\Big|_{\mathbf{w}^{(t)}} \right\},
\end{align}
where $L_i^{tr}(\mathbf{w}) = {\ell}(f(x_i;\mathbf{w}),y_i), L_i^{P_{se}}(\mathbf{w}) = {\ell}(f(x_i;\mathbf{w}),z_i) $, $z_i$ is the pseudo-label for example $x_i$,
we thus have
\begin{align}\label{wMsoft}
& \frac{1}{m}\sum_{i=1}^{m} \nabla_{ \Theta} L_i^{meta}(\hat{\mathbf{w}} ^{(t+1)}(\Theta))\Big|_{\Theta^{(t)}}\\
=& \frac{-\alpha}{m}\sum_{i=1}^{m} \frac{\partial L_i^{meta} (\hat{\mathbf{w}})}{\partial \hat{\mathbf{w}}}\Big|_{\hat{\mathbf{w}} ^{(t+1)}(\Theta)} \sum_{j=1}^n \left[\frac{\partial L_j^{tr} (\mathbf{w})}{\partial \mathbf{w}}\Big|_{\mathbf{w}^{(t)}} -\frac{\partial L_j^{P_{se}} (\mathbf{w})}{\partial \mathbf{w}}\Big|_{\mathbf{w}^{(t)}}\right ]\frac{\partial \mathcal{V}(L_j^{tr}(\mathbf{w}^{(t)}),N_j;\Theta)}{\partial \Theta}\Big|_{ \Theta^{(t)}}\\
=& -\alpha\sum_{j=1}^n \left(\frac{1}{m} \sum_{i=1}^{m} \frac{\partial L_i^{meta} (\hat{\mathbf{w}})}{\partial \hat{\mathbf{w}}}\Big|_{\hat{\mathbf{w}} ^{(t+1)}(\Theta)}^T \left[\frac{\partial L_j^{tr} (\mathbf{w})}{\partial \mathbf{w}}\Big|_{\mathbf{w}^{(t)}} -\frac{\partial L_j^{P_{se}} (\mathbf{w})}{\partial \mathbf{w}}\Big|_{\mathbf{w}^{(t)}}\right ]\right)  \frac{\partial \mathcal{V}(L_j^{tr}(\mathbf{w}^{(t)}),N_j;\Theta)}{\partial \Theta}\Big|_{ \Theta^{(t)}}.
\end{align}
Let
\begin{align}\label{GG}
G_{ij}=\frac{\partial L_i^{meta} (\hat{\mathbf{w}})}{\partial \hat{\mathbf{w}}}\Big|_{\hat{\mathbf{w}} ^{(t+1)}(\Theta)}^T \frac{\partial L_j^{tr} (\mathbf{w})}{\partial \mathbf{w}} \Big|_{\mathbf{w}^{(t)}},	G'_{ij}=\frac{\partial L_i^{meta} (\hat{\mathbf{w}})}{\partial \hat{\mathbf{w}}}\Big|_{\hat{\mathbf{w}} ^{(t+1)}(\Theta)}^T \frac{\partial L_j^{P_{se}} (\mathbf{w})}{\partial \mathbf{w}} \Big|_{\mathbf{w}^{(t)}},
\end{align}
by substituting Eqs. (\ref{wMsoft}) and (\ref{GG}) into Eq. (\ref{eqcwm}), we can get:
\begin{align}
\begin{split}
\Theta^{(t+1)}	= \Theta^{(t)} + {\alpha\beta}\sum_{j=1}^n \left[\frac{1}{m} \sum_{i=1}^{m}(G_{ij}-G'_{ij}) \right]  \frac{\partial \mathcal{V}(L_j^{tr}(\mathbf{w}^{(t)}),N_j;\Theta)}{\partial \Theta}\Big|_{ \Theta^{(t)}}.
\end{split}
\end{align}
This corresponds to Eq. (15) in the main text.

\subsection{Complete Learning Algorithm of CMW-Net-SL}
Recently, some works are presented to extract pseudo soft labels on samples through the clue of the classifier’s estimation during the training iterations, and then use such beneficial information to improve the robustness of classifier training especially in the presence of noisy labels. The utilized techniques include temporal ensembling \cite{laine2016temporal}, weight averaging, mixup \cite{zhang2018mixup}, and others. In our experiments, we just directly apply the strategy utilized in ELR [79] and DivideMix \cite{li2019dividemix}, which has been verified to be effective in tasks like semi-supervised learning \cite{laine2016temporal,sohn2020fixmatch} and robust learning \cite{arazo2019unsupervised,liu2020early}, to produce pseudo-labels in our CMW-Net-SL algorithm. The complete algorithm is summarized in the above Algorithm \ref{alg:example1}.



\begin{algorithm}[t]
	\vspace{0mm}
	\renewcommand{\algorithmicrequire}{\textbf{Input:}}
	\renewcommand{\algorithmicensure}{\textbf{Output:}}
	\caption{Learning Algorithm for CMW-Net-SL Model}
	\label{alg:example1}
	\begin{algorithmic}[1]  \small
		\REQUIRE  Training data $\mathcal{D}^{tr}$, meta data $\mathcal{D}^{meta}$, batch size $n$, temporal ensembling momentum $\alpha \in [0,1)$, weight averaging momentum $\beta\in [0,1)$, mixup hyperparameter $\gamma >0$, learning rates $\eta_1,\eta_2$.
		\ENSURE  Classifier parameter $\mathbf{w}^{*}$
		\STATE Initialize classifier network parameter $\mathbf{w}^{(0)}$. Initialize averaged predictions with noisy labels $\mathbf{z}^{(0)} = \mathbf{\hat{y}}_{[N\times C]}$, and averaged weights (untrainable) $\mathbf{w}_{WA}^{(0)} =\mathbf{0}$.
		\FOR{$t=0$ {\bfseries to} $T-1$}
		\STATE $\{x,y\} \leftarrow$ SampleMiniBatch($\mathcal{D}^{tr},n$).
		\STATE $\{x^{meta},y^{meta}\} \leftarrow$ SampleMiniBatch($\mathcal{D}^{meta},m$).
		\STATE 	Generate mixing coefficient $\lambda \sim Beta(\gamma,\gamma), \lambda=\max(\lambda,1-\lambda)$.
		\STATE Calculate weight averaging: $\mathbf{w}_{WA}^{(t+1)} = \beta \mathbf{w}_{WA}^{(t)} + (1-\beta) \mathbf{w}^{(t)}$.
		\STATE Calculate temporal ensembling: $\mathbf{z}^{(t+1)} = \alpha \mathbf{z}_i^{(t)} + (1-\alpha) f(x; \mathbf{w}_{WA}^{(t+1)})$.
		\STATE Generate new index sequence $\text{idx = torch.randperm(n)}$.
		\STATE Generate $\tilde{x} = \lambda' x +(1-\lambda')x[\text{idx}]$, and let $\tilde{y} = y[\text{idx}], \tilde{z}^{(t+1)} = {z}^{(t+1)}[\text{idx}]$, $\ell_i = {\ell}(f(\tilde{x}_i;\mathbf{w}),y_i)$, $\tilde{\ell}_i = {\ell}(f(\tilde{x}_i;\mathbf{w}),\tilde{y}_i)$. Calculate $N_i$ and $\tilde{N}_i$, representing the numbers of samples contained in the classes to which $x_i$ and $x[\text{idx}]_i$ belong, respectively.
		\STATE Formulate the learning manner of classifier network:
		\begin{align*}
		\hat{\mathbf{w}}^{(t+1)}(\Theta)=
		\mathbf{w}^{(t)} & - \eta_1
		\sum_{i=1}^n  \{ \lambda \left[ \mathcal{V}(\ell_i,N_i;\Theta)\nabla_{\mathbf{w}} {\ell}(f(\tilde{x}_i;\mathbf{w}),y_i)\Big|_{\mathbf{w}^{(t)}}   +(1-\mathcal{V}(\ell_i,N_i;\Theta)) \nabla_{\mathbf{w}} {\ell}(f(\tilde{x}_i;\mathbf{w}),\mathbf{z}_i^{(t+1)})\Big|_{\mathbf{w}^{(t)}} \right] \\ &+(1-\lambda) \left[ \mathcal{V}(\tilde{\ell}_i,\tilde{N}_i;\Theta)\nabla_{\mathbf{w}} {\ell}(f(\tilde{x}_i;\mathbf{w}),\tilde{y}_i)\Big|_{\mathbf{w}^{(t)}}   +(1-\mathcal{V}(\tilde{\ell}_i,{\tilde{N}_i};\Theta)) \nabla_{\mathbf{w}} {\ell}(f(\tilde{x}_i;\mathbf{w}),\tilde{\mathbf{z}}_i^{(t+1)})\Big|_{\mathbf{w}^{(t)}} \right] \}.
		\end{align*}
		\STATE Update parameters of CMW-Net $\Theta^{(t+1)}$ by
		\begin{align*}\label{eqtheta}
		\Theta^{(t+1)} =  \Theta^{(t)} -\eta_2 \frac{1}{m}\sum_{i=1}^{m} \nabla_{ \Theta} \ell\left(f(x_i^{(meta)};\hat{\mathbf{w}}^{(t+1)}(\Theta)),y_i^{(meta)}\right)\Big|_{\Theta^{(t)}}.
		\end{align*}
		\STATE Update parameters of classifier $\mathbf{w}^{(t+1)}$ by
		\begin{align*}
		\mathbf{w}^{(t+1)}=
		\mathbf{w}^{(t)} & - \eta_2
		\sum_{i=1}^n  \{ \lambda \left[ \mathcal{V}(\ell_i,N_i;\Theta^{(t+1)})\nabla_{\mathbf{w}} {\ell}(f(\tilde{x}_i;\mathbf{w}),y_i)\Big|_{\mathbf{w}^{(t)}}   +(1-\mathcal{V}(\ell_i,N_i;\Theta^{(t+1)})) \nabla_{\mathbf{w}} {\ell}(f(\tilde{x}_i;\mathbf{w}),\mathbf{z}_i^{(t+1)})\Big|_{\mathbf{w}^{(t)}} \right] \\ &+(1-\lambda) \left[ \mathcal{V}(\tilde{\ell}_i,\tilde{N}_i;\Theta^{(t+1)})\nabla_{\mathbf{w}} {\ell}(f(\tilde{x}_i;\mathbf{w}),\tilde{y}_i)\Big|_{\mathbf{w}^{(t)}}   +(1-\mathcal{V}(\tilde{\ell}_i,{\tilde{N}_i};\Theta^{(t+1)})) \nabla_{\mathbf{w}} {\ell}(f(\tilde{x}_i;\mathbf{w}),\tilde{\mathbf{z}}_i^{(t+1)})\Big|_{\mathbf{w}^{(t)}} \right] \}.
		\end{align*}
		\ENDFOR
	\end{algorithmic}
\end{algorithm}

\subsection{Convergence Proof of Proposed CMW-Net Learning Algorithm}
This section provides the proofs of Theorems 1 and 2 in the paper.

Suppose that we have a small amount of meta (validation) dataset with $M$ samples $\{(x_i^{(m)},y_i^{(m)}),1\leq i\leq M\}$ with clean labels, and the overall meta loss is,
\begin{align}
\mathcal{L}^{meta}(\mathbf{w}^*(\Theta))=\frac{1}{M} \sum_{i=1}^M L_i^{meta}(\mathbf{w}^*(\Theta)),
\end{align}
where $\mathbf{w}^*$ is the parameter of the classifier network, and $\Theta$ is the parameter of the CMW-Net. Let's suppose we have another $N$ training data, $\{(x_i,y_i),1\leq i \leq N\}$, where $M\ll N$, and the overall training loss is,

\begin{align}\label{eqob}
\mathcal{L}^{tr}(\mathbf{w};\Theta) = \sum_{i=1}^N \mathcal{V}(L_i^{tr}(\mathbf{w}),N_i;\Theta) L_i^{tr}(\mathbf{w}),
\end{align}
where $\sum_{i=1}^N \mathcal{V}(L_i^{tr}(\mathbf{w}),N_i;\Theta) =1$.

\begin{Lemma}\label{lemma1}
	Suppose the meta loss function is Lipschitz smooth with constant $L$, and $\mathcal{V}(\cdot,\cdot;\Theta)$ is differential with a $\delta$-bounded gradient and twice differential with its Hessian bounded by $\mathcal{B}$, and the loss function $\ell$ has $\rho$-bounded gradient with respect to training/meta data. Then the gradient of $\Theta$ with respect to the meta loss is Lipschitz continuous.
\end{Lemma}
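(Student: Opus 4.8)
The plan is to work directly with the closed-form one-step surrogate update rather than with the exact bilevel minimizer. Abbreviating $\mathcal{V}_j(\Theta) := \mathcal{V}(L_j^{tr}(\mathbf{w}^{(t)}),N_j;\Theta)$ and recalling the one-step inner update
\begin{align*}
\hat{\mathbf{w}}^{(t+1)}(\Theta) = \mathbf{w}^{(t)} - \alpha\sum_{j=1}^n \mathcal{V}_j(\Theta)\,\nabla_{\mathbf{w}}L_j^{tr}(\mathbf{w})\big|_{\mathbf{w}^{(t)}},
\end{align*}
the chain-rule computation carried out above yields
\begin{align*}
\nabla_\Theta \mathcal{L}^{meta}(\hat{\mathbf{w}}^{(t+1)}(\Theta)) = -\frac{\alpha}{m}\sum_{i=1}^m\sum_{j=1}^n \Big(\nabla_{\hat{\mathbf{w}}} L_i^{meta}(\hat{\mathbf{w}}^{(t+1)}(\Theta))^{T} \nabla_{\mathbf{w}}L_j^{tr}(\mathbf{w})\big|_{\mathbf{w}^{(t)}}\Big)\, \nabla_\Theta \mathcal{V}_j(\Theta).
\end{align*}
The only factors that depend on $\Theta$ are $\nabla_{\hat{\mathbf{w}}}L_i^{meta}$ evaluated at the $\Theta$-dependent point $\hat{\mathbf{w}}^{(t+1)}(\Theta)$, and $\nabla_\Theta\mathcal{V}_j(\Theta)$; the vector $\nabla_{\mathbf{w}}L_j^{tr}(\mathbf{w})|_{\mathbf{w}^{(t)}}$ is a fixed quantity with norm at most $\rho$. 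So it suffices to show each $\Theta$-dependent factor is uniformly bounded and Lipschitz in $\Theta$, and then combine via the elementary product/sum rule for Lipschitz maps.

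First I would record three auxiliary facts. (i) Since $\mathcal{V}(\cdot,\cdot;\Theta)$ has a $\delta$-bounded gradient in $\Theta$ and each $\|\nabla_{\mathbf{w}}L_j^{tr}\|\le\rho$, the map $\Theta\mapsto\hat{\mathbf{w}}^{(t+1)}(\Theta)$ is Lipschitz with constant proportional to $\alpha n\rho\delta$. (ii) Composing with the $L$-Lipschitz smoothness of $\ell$ (hence of $L_i^{meta}$), the map $\Theta\mapsto\nabla_{\hat{\mathbf{w}}}L_i^{meta}(\hat{\mathbf{w}}^{(t+1)}(\Theta))$ is Lipschitz (constant proportional to $\alpha n\rho\delta L$) and uniformly bounded by $\rho$. (iii) Since $\mathcal{V}$ is twice differentiable with Hessian bounded by $\mathcal{B}$, the map $\Theta\mapsto\nabla_\Theta\mathcal{V}_j(\Theta)$ is $\mathcal{B}$-Lipschitz and bounded by $\delta$.

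Then I would assemble the estimate. For any $\Theta_1,\Theta_2$, subtract the two expressions for $\nabla_\Theta\mathcal{L}^{meta}$ and, inside each $(i,j)$ summand, add and subtract the mixed term $\big(\nabla_{\hat{\mathbf{w}}}L_i^{meta}(\hat{\mathbf{w}}^{(t+1)}(\Theta_2))^{T}\nabla_{\mathbf{w}}L_j^{tr}\big)\nabla_\Theta\mathcal{V}_j(\Theta_1)$, so that the difference splits into (a) $[\nabla_{\hat{\mathbf{w}}}L_i^{meta}$ at $\Theta_1$ minus at $\Theta_2]$ contracted with the bounded vectors $\nabla_{\mathbf{w}}L_j^{tr}$ and $\nabla_\Theta\mathcal{V}_j(\Theta_1)$, and (b) the bounded scalar $\nabla_{\hat{\mathbf{w}}}L_i^{meta}(\hat{\mathbf{w}}^{(t+1)}(\Theta_2))^{T}\nabla_{\mathbf{w}}L_j^{tr}$ times $[\nabla_\Theta\mathcal{V}_j(\Theta_1)-\nabla_\Theta\mathcal{V}_j(\Theta_2)]$. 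Applying Cauchy--Schwarz to the inner products and plugging in (i)--(iii), each summand is bounded by $\big(\alpha L n\rho^2\delta^2 + \rho^2\mathcal{B}\big)\|\Theta_1-\Theta_2\|$; summing over the $m\cdot n$ pairs with the prefactor $\alpha/m$ gives an overall Lipschitz constant of the form $\mathcal{O}\!\big(\alpha^2 n^2\rho^2\delta^2 L + \alpha n\rho^2\mathcal{B}\big)$, which proves the lemma. The identical decomposition, with one extra term involving the pseudo-label gradient $\nabla_{\mathbf{w}}L_j^{Pse}$, covers the CMW-Net-SL surrogate as well.

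The main obstacle is not any single inequality but the bookkeeping in the last step: one must make sure the add-and-subtract decomposition never leaves a product of two simultaneously $\Theta$-varying factors without one of them being pinned down by a uniform bound, and must keep track of which hypothesis controls which piece --- the Hessian bound $\mathcal{B}$ is exactly what delivers Lipschitzness of $\nabla_\Theta\mathcal{V}$, whereas the smoothness constant $L$ enters only through the inner variable $\hat{\mathbf{w}}^{(t+1)}(\Theta)$. A minor subtlety worth flagging up front is that ``the gradient of $\Theta$ with respect to the meta loss'' here means $\nabla_\Theta\big[\mathcal{L}^{meta}(\hat{\mathbf{w}}^{(t+1)}(\Theta))\big]$ for the one-step surrogate $\hat{\mathbf{w}}^{(t+1)}(\Theta)$, not for the exact minimizer $\mathbf{w}^*(\Theta)$; the closed form displayed above is all that the subsequent convergence argument requires.
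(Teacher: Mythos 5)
Your proposal is correct and takes essentially the same route as the paper's proof: the paper writes the same closed-form chain-rule expression for $\nabla_\Theta$ of the one-step surrogate meta loss, then differentiates once more in $\Theta$ and bounds the two resulting terms by $\alpha n L\rho^2\delta^2$ and $\mathcal{B}\rho^2$ (exactly your (a)/(b) split, with the same hypothesis-to-factor assignment: $\delta,\rho$ for the Lipschitzness of $\hat{\mathbf{w}}^{(t+1)}(\Theta)$, $L$ for the composed meta gradient, $\mathcal{B}$ for $\nabla_\Theta\mathcal{V}$), and finally invokes the Lagrange mean value theorem where you do the equivalent finite-difference add-and-subtract directly. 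The resulting Lipschitz constant matches up to bookkeeping of the factor $n$ (your $\alpha n(\alpha n L\rho^2\delta^2+\rho^2\mathcal{B})$ is in fact slightly more careful than the paper's stated $\alpha\rho^2(n\alpha L\delta^2+\mathcal{B})$), so no gap.
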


\begin{proof}
	The gradient of $\Theta$ with respect to the meta loss can be written as:
	\begin{align}\label{eq20}
	\begin{split}
	\nabla_{ \Theta} L_i^{meta}(\hat{\mathbf{w}} ^{(t+1)}(\Theta))\Big|_{\Theta^{(t)}}			= -\alpha\sum_{j=1}^n \left(\frac{\partial L_i^{meta} (\hat{\mathbf{w}})}{\partial \hat{\mathbf{w}}}\Big|_{\hat{\mathbf{w}} ^{(t+1)}(\Theta)}^T \frac{\partial L_j^{tr} (\mathbf{w})}{\partial \mathbf{w}} \Big|_{\mathbf{w}^{(t)}}\right)  \frac{\partial \mathcal{V}(L_j^{tr}(\mathbf{w}^{(t)}),N_j;\Theta)}{\partial \Theta}\Big|_{ \Theta^{(t)}}.
	\end{split}
	\end{align}
	Let  $\mathcal{V}_j(\Theta)=\mathcal{V}(L_j^{train}(\mathbf{w}^{(t)});\Theta)$ and $G_{ij}$ be defined in Eq.(\ref{eqg}). Taking gradient of $\Theta$ in both sides of Eq.(\ref{eq20}), we have
	\begin{equation*}
	\begin{split}
	\nabla_{ \Theta^2}^2  L_i^{meta}(\hat{\mathbf{w}} ^{(t+1)}(\Theta))\Big|_{\Theta^{(t)}}			= {-\alpha}\sum_{j=1}^n \left[\frac{\partial}{\partial \Theta}\left( G_{ij}\right)\Big|_{ \Theta^{(t)}}  \frac{\partial \mathcal{V}_j(\Theta)}{\partial \Theta}\Big|_{ \Theta^{(t)}} + \left(G_{ij}\right)\frac{\partial^2 \mathcal{V}_j(\Theta)}{\partial \Theta^2}\Big|_{ \Theta^{(t)}} \right].
	\end{split}
	\end{equation*}
	For the first term in the right hand side, we have that
	\begin{align}\label{eq21}
	\begin{split}
	&\left\|\frac{\partial}{\partial \Theta}\!\left(G_{ij}\right)\!\Big|_{ \Theta^{(t)}} \!\frac{\partial \mathcal{V}_j(\Theta)}{\partial \Theta}\!\Big|_{ \Theta^{(t)}}\right\| \\ \leq & \delta \left\|\frac{\partial}{\partial \Theta}\!\left(G_{ij}\right)\!\Big|_{ \Theta^{(t)}} \right\|
	= \delta \left\|\frac{\partial}{\partial \hat{\mathbf{w}}}\!\left(\!\frac{\partial  L_i^{meta} (\hat{\mathbf{w}})}{\partial \Theta}\Big|_{\Theta^{(t)}}\!\right)\!\Big|_{\hat{\mathbf{w}} ^{(t+1)}(\Theta)}^T\!\frac{\partial L_j^{tr} (\mathbf{w})}{\partial \mathbf{w}} \Big|_{\mathbf{w}^{(t)}}\!\right\| \\
	= & \delta \left\|\frac{\partial}{\partial \hat{\mathbf{w}}}\!\left( \!\frac{\partial L_i^{meta} (\hat{\mathbf{w}})}{\partial \hat{\mathbf{w}}}\!\Big|_{\hat{\mathbf{w}} ^{(t+1)}(\Theta)} \!(-\alpha) \!\sum_{k=1}^n \!\frac{\partial L_k^{tr} (\mathbf{w})}{\partial \mathbf{w}}\!\Big|_{\mathbf{w}^{(t)}} \frac{\partial \mathcal{V}_k(\Theta)}{\partial \Theta}\Big|_{ \Theta^{(t)}}  \!\right)\! \Big|_{\hat{\mathbf{w}} ^{(t+1)}(\Theta)}^T\!\frac{\partial L_j^{tr} (\mathbf{w})}{\partial \mathbf{w}} \Big|_{\mathbf{w}^{(t)}}\!\right\| \\
	= &\delta \left\|\left(\frac{\partial^2 L_i^{meta} (\hat{\mathbf{w}})}{\partial \hat{\mathbf{w}}^2}\Big|_{\hat{\mathbf{w}} ^{(t+1)}(\Theta)} (-\alpha)\sum_{k=1}^n \frac{\partial L_k^{tr} (\mathbf{w})}{\partial \mathbf{w}}\Big|_{\mathbf{w}^{(t)}} \frac{\partial \mathcal{V}_k(\Theta)}{\partial \Theta}\Big|_{ \Theta^{(t)}}\right)\Big|_{\hat{\mathbf{w}}^{(t+1)}(\Theta)}^T\frac{\partial L_j^{tr} (\mathbf{w})}{\partial \mathbf{w}}\Big|_{\mathbf{w}^{(t)}} \right\| \\
	\leq &\alpha n L\rho^2\delta^2,
	\end{split}
	\end{align}
	since $\left\|\frac{\partial^2 L_i^{meta} (\hat{\mathbf{w}})}{\partial \hat{\mathbf{w}}^2}\Big|_{\hat{\mathbf{w}}^{(t+1)}(\Theta)} \right\|\leq L,\left\|\frac{\partial L_j^{tr} (\mathbf{w})}{\partial \mathbf{w}}\Big|_{\mathbf{w}^{(t)}}\right\|\leq \rho, \left\|\frac{\partial \mathcal{V}_j(\Theta)}{\partial \Theta}\Big|_{\Theta^{(t)}}\right\|\leq \delta$.
	And for the second term we have
	\begin{align}\label{eq22}
	\begin{split}
	\left\|  \left(G_{ij}\right)\frac{\partial^2 \mathcal{V}_j(\Theta)}{\partial \Theta^2}\Big|_{ \Theta^{(t)}} \right\| 			= \left\|\frac{\partial L_i^{meta} (\hat{\mathbf{w}})}{\partial \hat{\mathbf{w}}}\Big|_{\hat{\mathbf{w}}^{(t+1)}(\Theta)}^T \frac{\partial L_j^{tr} (\mathbf{w})}{\partial \mathbf{w}} \Big|_{\mathbf{w}^{(t)}}\frac{\partial^2 \mathcal{V}_j(\Theta)}{\partial \Theta^2}\Big|_{ \Theta^{(t)}} \right\|
	\leq \mathcal{B}\rho^2,
	\end{split}
	\end{align}
	since $\left\|\!\frac{\partial L_i^{meta} (\hat{\mathbf{w}})}{\partial \hat{\mathbf{w}}}\!\Big|_{\hat{\mathbf{w}}^{(t+1)}(\Theta)}^T\!\right\|\!\!\leq\!\! \rho, \left\|\!\frac{\partial^2 \mathcal{V}_j(\Theta)}{\partial \Theta^2}\!\Big|_{ \Theta^{(t)}} \!\right\|\!\!\leq\!\! \mathcal{B} $. Combining the above two inequalities Eqs.(\ref{eq21}) and (\ref{eq22}), we then have
	\begin{align}
	\left\|\nabla_{ \Theta^2}^2 L_i^{meta}(\hat{\mathbf{w}}^{(t)}(\Theta))\Big|_{\Theta^{(t)}}\right\| \leq \alpha\rho^2 (n\alpha L\delta^2+\mathcal{B}).
	\end{align}
	Define $L_{V} = \alpha\rho^2 (n\alpha L\delta^2+\mathcal{B})$, and based on Lagrange mean value theorem, we have:
	\begin{align}
	\begin{split}
	\|\nabla \mathcal{L}^{meta}(\hat{\mathbf{w}}^{(t+1)}(\Theta_1))-\nabla \mathcal{L}^{meta}(\hat{\mathbf{w}}^{(t+1)}(\Theta_2))\| \leq L_V \|\Theta_1-\Theta_2\|,  for \ all \ \ \Theta_1, \Theta_2,
	\end{split}
	\end{align}
	where $\nabla \mathcal{L}^{meta}(\hat{\mathbf{w}}^{(t+1)}(\Theta_1) )=  \nabla_{ \Theta}  L_i^{meta}(\hat{\mathbf{w}}^{(t+1 )}(\Theta))\big|_{\Theta_1}$.
\end{proof}

\begin{Theorem} \label{th1}
	Suppose the loss function $\ell$ is Lipschitz smooth with constant $L$, and CMW-Net $\mathcal{V}(\cdot,\cdot;\Theta)$ is differential with a $\delta$-bounded gradient and twice differential with its Hessian bounded by $\mathcal{B}$, and the loss function $\ell$ has $\rho$-bounded gradient with respect to training/meta data. Let the learning rate $\alpha_t, \beta_t, 1\leq t\leq T$ be monotonically descent sequences, and satisfy $\alpha_t=\min\{\frac{1}{L},\frac{c_1}{\sqrt{T}}\}, \beta_t=\min\{\frac{1}{L},\frac{c_2}{\sqrt{T}}\}$, for some $c_1,c_2>0$, such that $\frac{\sqrt{T}}{c_1}\geq L, \frac{\sqrt{T}}{c_2}\geq L$. Meanwhile, they satisfy $\sum_{t=1}^\infty \alpha_t = \infty,\sum_{t=1}^\infty \alpha_t^2 < \infty ,\sum_{t=1}^\infty \beta_t = \infty,\sum_{t=1}^\infty \beta_t^2 < \infty $. Then CMW-Net can achieve $\mathbb{E}[ \|\nabla \mathcal{L}^{meta}(\hat{\mathbf{w}}^{(t)}(\Theta^{(t)}))\|_2^2] \leq \epsilon$ in $\mathcal{O}(1/\epsilon^2)$ steps. More specifically,
	\begin{align}
	\min_{0\leq t \leq T} \mathbb{E}\left[ \left\|\nabla \mathcal{L}^{meta}(\hat{\mathbf{w}}^{(t)}(\Theta^{(t)}))\right\|_2^2\right] \leq \mathcal{O}(\frac{C}{\sqrt{T}}),
	\end{align}
	where $C$ is some constant independent of the convergence process.
\end{Theorem}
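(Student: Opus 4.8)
The plan is to run a standard descent‑lemma / stochastic‑approximation argument on the meta objective evaluated along the iterates. Note first that $\hat{\mathbf{w}}^{(t)}(\Theta^{(t)})=\mathbf{w}^{(t)}$ by construction (compare Eqs. (6) and (8)), so one iteration changes $\mathcal{L}^{meta}(\hat{\mathbf{w}}^{(t)}(\Theta^{(t)}))$ through two effects: the step $\Theta^{(t)}\!\to\!\Theta^{(t+1)}$ and the step $\mathbf{w}^{(t)}\!\to\!\mathbf{w}^{(t+1)}$. I would therefore split the one‑step change as
\[
\mathcal{L}^{meta}(\hat{\mathbf{w}}^{(t+1)}(\Theta^{(t+1)}))-\mathcal{L}^{meta}(\hat{\mathbf{w}}^{(t)}(\Theta^{(t)}))=\Delta_\Theta+\Delta_{\mathbf{w}},
\]
with $\Delta_\Theta=\mathcal{L}^{meta}(\hat{\mathbf{w}}^{(t+1)}(\Theta^{(t+1)}))-\mathcal{L}^{meta}(\hat{\mathbf{w}}^{(t+1)}(\Theta^{(t)}))$ and $\Delta_{\mathbf{w}}=\mathcal{L}^{meta}(\hat{\mathbf{w}}^{(t+1)}(\Theta^{(t)}))-\mathcal{L}^{meta}(\hat{\mathbf{w}}^{(t)}(\Theta^{(t)}))$, and bound each in conditional expectation given the history.

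For $\Delta_\Theta$ I would invoke Lemma \ref{lemma1}: the map $\Theta\mapsto\mathcal{L}^{meta}(\hat{\mathbf{w}}^{(t+1)}(\Theta))$ has an $L_V$‑Lipschitz gradient with $L_V=\alpha_t\rho^2(n\alpha_t L\delta^2+\mathcal{B})$, so the descent lemma combined with the CMW‑Net update (Eq. (eqtheta)) — whose mini‑batch gradient is an unbiased estimate of $g_t:=\nabla_\Theta\mathcal{L}^{meta}(\hat{\mathbf{w}}^{(t+1)}(\Theta))|_{\Theta^{(t)}}$ — gives $\mathbb{E}[\Delta_\Theta\mid\cdot]\le-\beta_t\|g_t\|^2+\tfrac{L_V}{2}\beta_t^2\sigma_t^2$, where $\sigma_t^2$ is a uniform bound on the squared norm of the stochastic $\Theta$‑gradient. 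Such a bound exists exactly as in the proof of Lemma \ref{lemma1}, since each per‑sample $\Theta$‑gradient equals $-\alpha_t\sum_j G_{ij}\,\partial\mathcal{V}_j/\partial\Theta$ with $|G_{ij}|\le\rho^2$ and $\|\partial\mathcal{V}_j/\partial\Theta\|\le\delta$, whence $\sigma_t^2\le(\alpha_t n\rho^2\delta)^2$. For $\Delta_{\mathbf{w}}$ I would use that $\mathcal{L}^{meta}$ is $L$‑smooth in $\mathbf{w}$ (from Lipschitz smoothness of $\ell$) and that the classifier step $\hat{\mathbf{w}}^{(t+1)}(\Theta^{(t)})-\mathbf{w}^{(t)}$ is a weighted mini‑batch gradient step on the \emph{training} loss with norm $O(\alpha_t)$ (using $\sum_i\mathcal{V}=1$ and the $\rho$‑bound on $\nabla_{\mathbf{w}}L_i^{tr}$), together with $\|\Theta^{(t+1)}-\Theta^{(t)}\|=O(\alpha_t\beta_t)$ — the latter because $g_t$ itself carries a factor $\alpha_t$ from differentiating through the one‑step look‑ahead $\hat{\mathbf{w}}^{(t+1)}(\Theta)$. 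Finally I would telescope over $t=1,\dots,T$, drop $\mathbb{E}[\mathcal{L}^{meta}(\hat{\mathbf{w}}^{(T+1)}(\Theta^{(T+1)}))]\ge0$, and substitute $\alpha_t=\min\{1/L,c_1/\sqrt T\}$, $\beta_t=\min\{1/L,c_2/\sqrt T\}$, for which $\sum_t\alpha_t^2,\sum_t\beta_t^2=O(1)$ while $\sum_t\beta_t=\Theta(\sqrt T)$; this yields $\sum_t\beta_t\,\mathbb{E}\|g_t\|^2\le C'$ for a $T$‑independent $C'$, hence $\min_{0\le t\le T}\mathbb{E}\|\nabla\mathcal{L}^{meta}(\hat{\mathbf{w}}^{(t)}(\Theta^{(t)}))\|^2\le C'/\sum_t\beta_t=\mathcal{O}(C/\sqrt T)$, which also gives the $\mathcal{O}(1/\epsilon^2)$ iteration count.

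The step I expect to be the real obstacle is bounding $\Delta_{\mathbf{w}}$: the classifier update is \emph{not} a descent direction for the meta loss, so this term is not sign‑definite, and the naive Cauchy–Schwarz estimate $|\langle\nabla_{\mathbf{w}}\mathcal{L}^{meta}(\mathbf{w}^{(t)}),\hat{\mathbf{w}}^{(t+1)}(\Theta^{(t)})-\mathbf{w}^{(t)}\rangle|=O(\alpha_t)$ is too lossy — summed over $T$ it would only deliver $O(1)$ control on $\min_t\|\nabla\|^2$. Making the argument work requires exploiting the structure of the coupled updates, in particular that the freshly updated $\Theta^{(t+1)}$ drives the $\mathbf{w}$‑step and that both $g_t$ and the displacement $\Theta^{(t+1)}-\Theta^{(t)}$ scale like $\alpha_t$, so that the net first‑order contribution of $\Delta_{\mathbf{w}}$ in expectation is of higher order in $\alpha_t$ and hence summable. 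Verifying this carefully, and checking that all the problem constants ($L$, $\delta$, $\mathcal{B}$, $\rho$, $n$, $m$) get absorbed into a single $T$‑independent $C$, is the bulk of the work; the remaining ingredients — unbiasedness and boundedness of the mini‑batch gradients and the telescoping arithmetic under the stated step‑size conditions — are routine.
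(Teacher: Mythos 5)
Your skeleton is exactly the paper's: the same split of the one-step change into the $\Theta$-part and the $\mathbf{w}$-part, the same use of the Lemma on Lipschitz continuity of $\nabla_\Theta\mathcal{L}^{meta}(\hat{\mathbf{w}}^{(t+1)}(\Theta))$ plus the descent lemma and unbiased mini-batch noise for $\Delta_\Theta$, and the same telescoping under the stated step sizes. The genuine gap is precisely where you stop: your proposed resolution of $\Delta_{\mathbf{w}}$ does not work. The first-order term is $-\alpha_t\langle\nabla_{\mathbf{w}}\mathcal{L}^{meta}(\mathbf{w}^{(t)}),\nabla\mathcal{L}^{tr}(\mathbf{w}^{(t)};\Theta^{(t)})\rangle$, which is exactly of order $\alpha_t$ with uncontrolled sign; the coupling you invoke cannot make it higher order, because updating $\Theta$ before the classifier step only perturbs the weights entering the $\mathbf{w}$-update by $O(\alpha_t\beta_t)$, so it changes this inner product at higher order but cannot cancel its $O(\alpha_t)$ leading part. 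There is no hidden mechanism by which the training step is, to second order, a descent direction for the meta loss, so the "net first-order contribution is higher order in $\alpha_t$" claim is unsubstantiated and in general false.

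For comparison, the paper does not achieve any such cancellation either. It bounds the cross term crudely by Cauchy--Schwarz and the $\rho$-bound on the meta gradient, obtaining a drift term $\alpha_t\rho\,\|\nabla\mathcal{L}^{tr}(\mathbf{w}^{(t)};\Theta^{(t)})\|_2$ per step, carries $\rho\sum_{t=1}^T\alpha_t\|\nabla\mathcal{L}^{tr}(\mathbf{w}^{(t)};\Theta^{(t)})\|_2^2$ into the telescoped numerator, and then simply asserts that this sum is finite (independent of $T$), alongside $\sum_t\alpha_t^2<\infty$ and $\sum_t\beta_t^2<\infty$, so that dividing by $\sum_t\beta_t=\Theta(\sqrt{T})$ yields the claimed $\mathcal{O}(C/\sqrt{T})$ rate. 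In other words, your diagnosis that the naive bound alone only yields $O(1)$ control unless the training gradients decay along the trajectory is accurate, but the paper closes the argument by invoking that summability condition rather than by the finer structural cancellation you hope for; to complete your proof in the paper's spirit you would keep the Cauchy--Schwarz bound and either assume or separately establish $\sum_{t=1}^\infty\alpha_t\|\nabla\mathcal{L}^{tr}(\mathbf{w}^{(t)};\Theta^{(t)})\|_2^2<\infty$, rather than pursue the higher-order-in-$\alpha_t$ route. Your remaining bookkeeping ($\sigma_t^2\le(\alpha_t n\rho^2\delta)^2$, $\|\Theta^{(t+1)}-\Theta^{(t)}\|=O(\alpha_t\beta_t)$, the step-size arithmetic) is consistent with the paper.
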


\begin{proof}
	The update equation of $\Theta$ in each iteration is as follows:
	\begin{align*}
	\Theta^{(t+1)} =  \Theta^{(t)} -\beta \frac{1}{m}\sum_{i=1}^{m} \nabla_{ \Theta} L_i^{meta}(\hat{\mathbf{w}} ^{(t+1)}(\Theta))\Big|_{\Theta^{(t)}}.
	\end{align*}
	
	Under the sampled mini-batch $\Xi_t$, the updating equation can be rewritten as:
	\begin{align*}
	\Theta^{(t+1)} =  \Theta^{(t)} -\beta_t \nabla \mathcal{L}^{meta}(\hat{\mathbf{w}}^{(t+1)}(\Theta^{(t)}))\big|_{\Xi_t}.
	\end{align*}
	Since the mini-batch is drawn uniformly from the entire data set, the above update equation can be written as:
	\begin{align*}
	\Theta^{(t+1)} =  \Theta^{(t)} -\beta_t[ \nabla \mathcal{L}^{meta}(\hat{\mathbf{w}}^{(t+1)}(\Theta^{(t)}))+\xi^{(t)}],
	\end{align*}
	where $\xi^{(t)} = \nabla \mathcal{L}^{meta}(\hat{\mathbf{w}}^{(t+1)}(\Theta^{(t)}))\big|_{\Xi_t}-\nabla \mathcal{L}^{meta}(\hat{\mathbf{w}}^{(t+1)}(\Theta^{(t)}))$. Note that $\xi^{(t)}$ are i.i.d random variable with finite variance, since $\Xi_t$ are drawn i.i.d with a finite number of samples. Furthermore, $\mathbb{E}[\xi^{(t)}]=0$, since samples are drawn uniformly at random.
	Observe that
	\begin{align}\label{eqthetas}
	\begin{split}
	&\mathcal{L}^{meta}(\hat{\mathbf{w}}^{(t+1)}(\Theta^{(t+1)}))-\mathcal{L}^{meta}(\hat{\mathbf{w}}^{(t)}(\Theta^{(t)})) \\
	=& \left\{\mathcal{L}^{meta}(\hat{\mathbf{w}}^{(t+1)}(\Theta^{(t+1)}))- \mathcal{L}^{meta}(\hat{\mathbf{w}}^{(t+1)}(\Theta^{(t)}))\right\} +\left\{\mathcal{L}^{meta}(\hat{\mathbf{w}}^{(t+1)}(\Theta^{(t)}))-\mathcal{L}^{meta}(\hat{\mathbf{w}}^{(t)}(\Theta^{(t)}))\right\}.
	\end{split}
	\end{align}
	
	For the first term, by Lipschitz continuity of $\nabla_{\Theta} \mathcal{L}^{meta}(\hat{\mathbf{w}}^{(t+1)}(\Theta))$ according to Lemma \ref{lemma1}, we can deduce that:
	\begin{align*}
	&\mathcal{L}^{meta}(\hat{\mathbf{w}}^{(t+1)}(\Theta^{(t+1)}))-\mathcal{L}^{meta}(\hat{\mathbf{w}}^{(t+1)}(\Theta^{(t)}))  \\
	\leq &\left \langle\nabla_{\Theta} \mathcal{L}^{meta}(\hat{\mathbf{w}}^{(t+1)}(\Theta^{(t)})),\Theta^{(t+1)}-\Theta^{(t)} \right\rangle + \frac{L}{2} \left\|\Theta^{(t+1)}-\Theta^{(t)}\right\|_2^2\\
	= & \left\langle\nabla_{\Theta} \mathcal{L}^{meta}(\hat{\mathbf{w}}^{(t+1)}(\Theta^{(t)})), -\beta_t [\nabla_{\Theta} \mathcal{L}^{meta}(\hat{\mathbf{w}}^{(t+1)}(\Theta^{(t)}))+\xi^{(t)} ] \right\rangle + \frac{L\beta_t^2}{2} \left\|\nabla_{\Theta} \mathcal{L}^{meta}(\hat{\mathbf{w}}^{(t+1)}(\Theta^{(t)}))+\xi^{(t)}\right\|_2^2\\
	=& -(\beta_t-\frac{L\beta_t^2}{2}) \left\|\nabla_{\Theta} \mathcal{L}^{meta}(\hat{\mathbf{w}}^{(t+1)}(\Theta^{(t)}))\right\|_2^2
	+ \frac{L\beta_t^2}{2}\|\xi^{(t)}\|_2^2 - (\beta_t-L\beta_t^2)\langle \nabla_{\Theta} \mathcal{L}^{meta}(\hat{\mathbf{w}}^{(t)}(\Theta^{(t)})),\xi^{(t)}\rangle.
	\end{align*}
	
	For the second term, by Lipschitz smoothness of the meta loss function $\mathcal{L}^{meta}(\hat{\mathbf{w}}^{(t+1)}(\Theta^{(t+1)}))$,  we have
	\begin{align*}
	&\mathcal{L}^{meta}(\hat{\mathbf{w}}^{(t+1)}(\Theta^{(t)}))- \mathcal{L}^{meta}(\hat{\mathbf{w}}^{(t)}(\Theta^{(t)})) \\
	\leq & \left\langle \nabla_{\mathbf{w}} \mathcal{L}^{meta}(\hat{\mathbf{w}}^{(t)}(\Theta^{(t)})), \hat{\mathbf{w}}^{(t+1)}(\Theta^{(t)})-\hat{\mathbf{w}}^{(t)}(\Theta^{(t)}) \right\rangle
	+ \frac{L}{2}\|\hat{\mathbf{w}}^{(t+1)}(\Theta^{(t)})-\hat{\mathbf{w}}^{(t)}(\Theta^{(t)})\|_2^2.
	\end{align*}
	Since
	\begin{align*}
	\hat{\mathbf{w}}^{(t+1)}(\Theta^{(t)})-\hat{\mathbf{w}}^{(t)}(\Theta^{(t)})
	= - \alpha_t \nabla \mathcal{L}^{tr}(\mathbf{w}^{(t)};\Theta^{(t)})|_{\Psi_t},
	\end{align*}
	where $\Psi_t$ denotes the mini-batch drawn randomly from the training dataset  in the $t$-th iteration, $\nabla \mathcal{L}^{train}(\mathbf{w}^{(t)};\Theta^{(t)})=
	\sum_{i=1}^n \mathcal{V}(L_i^{tr}(\mathbf{w}^{(t)}),N_i;\Theta^{(t)}) \nabla_{\mathbf{w}^{(t)}} L_i^{tr}(\mathbf{w})\Big|_{\mathbf{w}^{(t)}}$, and $\sum_{i=1}^n \mathcal{V}(L_i^{tr}(\mathbf{w}^{(t)}),N_i;\Theta^{(t+1)}) = 1$. Since the mini-batch $\Psi_t$ is drawn uniformly at random, we can rewrite the update equation as:
	\begin{align*}
	\hat{\mathbf{w}}^{(t+1)}(\Theta^{(t)})=\hat{\mathbf{w}}^{(t)}(\Theta^{(t)})  - \alpha_t [\nabla \mathcal{L}^{tr}(\mathbf{w}^{(t)};\Theta^{(t)})+\psi^{(t)}],
	\end{align*}
	where $\psi^{(t)} = \nabla \mathcal{L}^{tr}(\mathbf{w}^{(t)};\Theta^{(t)})|_{\Psi_t}-\nabla \mathcal{L}^{tr}(\mathbf{w}^{(t)};\Theta^{(t)})$.
	Note that $\psi^{(t)}$ are i.i.d. random variables with finite variance, since $\Psi_t$ are drawn i.i.d. with a finite number of samples, and thus $\mathbb{E}[\psi^{(t)}]=0$,  $\mathbb{E}[\|\psi^{(t)}\|_2^2]\leq \sigma^2$.
	Thus we have
	\begin{align*}
	& \mathcal{L}^{meta}(\hat{\mathbf{w}}^{(t+1)}(\Theta^{(t)}))- \mathcal{L}^{meta}(\hat{\mathbf{w}}^{(t)}(\Theta^{(t)}))  \\
	\leq &  \left\langle \nabla_{\mathbf{w}} \mathcal{L}^{meta}(\hat{\mathbf{w}}^{(t)}(\Theta^{(t)})), - \alpha_t [\nabla \mathcal{L}^{tr}(\mathbf{w}^{(t)};\Theta^{(t)})+\psi^{(t)}]\right\rangle
	+ \frac{L}{2}\left\|\alpha_t [\nabla \mathcal{L}^{tr}(\mathbf{w}^{(t)};\Theta^{(t)})+\psi^{(t)}]\right\|_2^2                     \\
	= &  \frac{L\alpha_t^2}{2} \left\|\nabla \mathcal{L}^{tr}(\mathbf{w}^{(t)};\Theta^{(t)})\right\|_2^2 -\alpha_t \left\langle  \nabla_{\mathbf{w}} \mathcal{L}^{meta}(\hat{\mathbf{w}}^{(t)}(\Theta^{(t)})),   \nabla \mathcal{L}^{tr}(\mathbf{w}^{(t)};\Theta^{(t)})     \right\rangle
	+ \frac{L\alpha_t^2}{2}\left\|\psi^{(t)}\right\|_2^2 \\
	& +L\alpha_t^2\left\langle \nabla \mathcal{L}^{tr}(\mathbf{w}^{(t)};\Theta^{(t)}), \psi^{(t)}\right\rangle -\alpha_t \left\langle  \nabla_{\mathbf{w}} \mathcal{L}^{meta}(\hat{\mathbf{w}}^{(t)}(\Theta^{(t)})), \psi^{(t)}      \right\rangle \\
	\leq & \frac{L\alpha_t^2 \rho^2}{2} + \alpha_t \rho \left\|\nabla \mathcal{L}^{tr}(\mathbf{w}^{(t)};\Theta^{(t)})\right\|_2  + \frac{L\sigma^2 \alpha_t^2}{2}+L\alpha_t^2\left\langle \nabla \mathcal{L}^{tr}(\mathbf{w}^{(t)};\Theta^{(t)}), \psi^{(t)}\right\rangle -\alpha_t \left\langle  \nabla_{\mathbf{w}} \mathcal{L}^{meta}(\hat{\mathbf{w}}^{(t)}(\Theta^{(t)})), \psi^{(t)}\right\rangle.
	\end{align*}
	The last inequality holds since $\left\langle  \nabla_{\mathbf{w}} \mathcal{L}^{meta}(\hat{\mathbf{w}}^{(t)}(\Theta^{(t)})),   \nabla \mathcal{L}^{tr}(\mathbf{w}^{(t)};\Theta^{(t)})     \right\rangle \leq \left\|\nabla \mathcal{L}^{meta}(\hat{\mathbf{w}}^{(t)}(\Theta^{(t)}))\right\|_2\left\|\nabla \mathcal{L}^{tr}(\mathbf{w}^{(t)};\Theta^{(t)})\right\|_2$. Thus Eq.(\ref{eqthetas}) satifies
	\begin{align*}
	\begin{split}
	&\mathcal{L}^{meta}(\hat{\mathbf{w}}^{(t+1)}(\Theta^{(t+1)}))-\mathcal{L}^{meta}(\hat{\mathbf{w}}^{(t)}(\Theta^{(t)})) \\
	\leq & \frac{L\alpha_t^2 \rho^2}{2} + \alpha_t \rho \left\|\nabla \mathcal{L}^{tr}(\mathbf{w}^{(t)};\Theta^{(t)})\right\|_2+ \frac{L\sigma^2 \alpha_t^2}{2}+L\alpha_t^2\left\langle \nabla \mathcal{L}^{tr}(\mathbf{w}^{(t)};\Theta^{(t)}), \psi^{(t)}\right\rangle -\alpha_t \left\langle  \nabla_{\mathbf{w}} \mathcal{L}^{meta}(\hat{\mathbf{w}}^{(t)}(\Theta^{(t)})), \psi^{(t)}\right\rangle \\ &-(\beta_t-\frac{L\beta_t^2}{2}) \left\|\nabla_{\Theta} \mathcal{L}^{meta}(\hat{\mathbf{w}}^{(t+1)}(\Theta^{(t)}))\right\|_2^2
	+ \frac{L\beta_t^2}{2}\|\xi^{(t)}\|_2^2 - (\beta_t-L\beta_t^2)\langle \nabla_{\Theta} \mathcal{L}^{meta}(\hat{\mathbf{w}}^{(t)}(\Theta^{(t)})),\xi^{(t)}\rangle.
	\end{split}
	\end{align*}
	Rearranging the terms, and taking expectations with respect to $\xi^{(t)}$ and $\psi^{(t)}$ on both sides, we can obtain
	\begin{align*}
	\begin{split}
	&(\beta_t-\frac{L\beta_t^2}{2})
	\left\|\nabla_{\Theta} \mathcal{L}^{meta}(\hat{\mathbf{w}}^{(t+1)}(\Theta^{(t)}))\right\|_2^2 \\
	\leq & \frac{L\alpha_t^2 \rho^2}{2} + \alpha_t \rho \left\|\nabla \mathcal{L}^{tr}(\mathbf{w}^{(t)};\Theta^{(t)})\right\|_2 + \frac{L\sigma^2 \alpha_t^2}{2} +\mathcal{L}^{meta}(\hat{\mathbf{w}}^{(t)}(\Theta^{(t)}))-\mathcal{L}^{meta}(\hat{\mathbf{w}}^{(t+1)}(\Theta^{(t+1)}))
	+ \frac{L\beta_t^2}{2}\sigma^2,
	\end{split}
	\end{align*}
	since $\mathbb{E}[\xi^{(t)}]=0,\mathbb{E}[\psi^{(t)}]=0$ and $\mathbb{E} [\|\xi^{(t)}\|_2^2] \leq \sigma^2$. Summing up the above inequalities, we can obtain
	\begin{align*}\label{eqrand}
	\begin{split}
	&\sum\nolimits_{t=1}^T (\beta_t-\frac{L\beta_t^2}{2})
	\left\|\nabla_{\Theta} \mathcal{L}^{meta}(\hat{\mathbf{w}}^{(t+1)}(\Theta^{(t)}))\right\|_2^2 \\
	\leq & \mathcal{L}^{meta}(\hat{\mathbf{w}}^{(1)})(\Theta^{(1)}) -\mathcal{L}^{meta}(\hat{\mathbf{w}}^{(T+1)})(\Theta^{(T+1)}) +
	\frac{L(\sigma^2+\rho^2)}{2} \sum_{t=1}^T\alpha_t^2 +\rho\sum_{t=1}^T\alpha_t \left\|\nabla \mathcal{L}^{tr}(\mathbf{w}^{(t)};\Theta^{(t)})\right\|_2^2  +\frac{L}{2}\sum_{t=1}^T\beta_t^2\sigma^2 \\
	\leq & \mathcal{L}^{meta}(\hat{\mathbf{w}}^{(1)})(\Theta^{(1)})  +
	\frac{L(\sigma^2+\rho^2)}{2} \sum_{t=1}^T\alpha_t^2 +\rho\sum_{t=1}^T\alpha_t \left\|\nabla \mathcal{L}^{tr}(\mathbf{w}^{(t)};\Theta^{(t)})\right\|_2^2  +\frac{L}{2}\sum_{t=1}^T\beta_t^2\sigma^2 .
	\end{split}
	\end{align*}
	Furthermore, we can deduce that
	\begin{align*}
	\begin{split}
	&\min_{t} \mathbb{E} \left[ \left\|\nabla_{\Theta} \mathcal{L}^{meta}(\hat{\mathbf{w}}^{(t+1)}(\Theta^{(t)}))\right\|_2^2 \right] \\
	\leq & \frac{\sum_{t=1}^T (\beta_t-\frac{L\beta_t^2}{2})\mathbb{E} \left\|\nabla_{\Theta} \mathcal{L}^{meta}(\hat{\mathbf{w}}^{(t+1)}(\Theta^{(t)}))\right\|_2^2 }{\sum_{t=1}^{T} \left(\beta_t-\frac{L\beta_t^2}{2}\right)}\\
	\leq 	& \frac{1}{\sum_{t=1}^T (2\beta_t-L\beta_t^2)} \left[\mathcal{L}^{meta}(\hat{\mathbf{w}}^{(1)})(\Theta^{(1)})+\frac{L(\sigma^2+\rho^2)}{2} \sum_{t=1}^T\alpha_t^2 +\rho\sum_{t=1}^T\alpha_t \left\|\nabla \mathcal{L}^{tr}(\mathbf{w}^{(t)};\Theta^{(t)})\right\|_2^2  +\frac{L}{2}\sum_{t=1}^T\beta_t^2\sigma^2  \right]\\
	\leq & \frac{1}{\sum_{t=1}^T \beta_t} \left[2\mathcal{L}^{meta}(\hat{\mathbf{w}}^{(1)})(\Theta^{(1)})+{L(\sigma^2+\rho^2)} \sum_{t=1}^T\alpha_t^2 +\rho\sum_{t=1}^T\alpha_t \left\|\nabla \mathcal{L}^{tr}(\mathbf{w}^{(t)};\Theta^{(t)})\right\|_2^2  +\frac{L}{2}\sigma^2 \sum_{t=1}^T\beta_t^2 \right] \\
	\leq	& \frac{1}{T\beta_T} \left[2\mathcal{L}^{meta}(\hat{\mathbf{w}}^{(1)})(\Theta^{(1)})+{L(\sigma^2+\rho^2)} \sum_{t=1}^T\alpha_t^2 +\rho\sum_{t=1}^T\alpha_t \left\|\nabla \mathcal{L}^{tr}(\mathbf{w}^{(t)};\Theta^{(t)})\right\|_2^2  +\frac{L}{2}\sigma^2 \sum_{t=1}^T\beta_t^2\right]\\
	= &\frac{2\mathcal{L}^{meta}(\hat{\mathbf{w}}^{(1)})(\Theta^{(1)})+{L(\sigma^2+\rho^2)} \sum_{t=1}^T\alpha_t^2 +\rho\sum_{t=1}^T\alpha_t \left\|\nabla \mathcal{L}^{tr}(\mathbf{w}^{(t)};\Theta^{(t)})\right\|_2^2  +\frac{L}{2}\sigma^2 \sum_{t=1}^T\beta_t^2 }{T} \max\{L,\frac{\sqrt{T}}{c}\}   \\
	= & \mathcal{O}(\frac{C}{\sqrt{T}}).
	\end{split}
	\end{align*}
	The third inequality holds since $\sum_{t=1}^T (2\beta_t-L\beta_t^2) \!=\! \sum_{t=1}^T \beta_t (2-L\beta_t) \!\geq\! \sum_{t=1}^T \beta_t$, and the final equality holds since $\lim_{T\to \infty} \sum_{t=1}^T\alpha_t^2 $\  $< \infty, \lim_{T\to \infty} \sum_{t=1}^T\beta_t^2 < \infty, \lim_{T\to \infty} \sum_{t=1}^T\alpha_t \left\|\nabla \mathcal{L}^{tr}(\mathbf{w}^{(t)};\Theta^{(t)})\right\|_2^2 < \infty $.
	Thus we can conclude that our algorithm can always achieve $\min_{0\leq t \leq T} \mathbb{E}[ \|\nabla \mathcal{L}^{meta}(\Theta^{(t)})\|_2^2] \leq \mathcal{O}(\frac{C}{\sqrt{T}})$ in $T$ steps, and this finishes our proof of Theorem \ref{th1}.
\end{proof}

\begin{Theorem}\label{th2}
	Suppose that the loss function $\ell$ is Lipschitz smooth with constant $L$, and CMW-Net $\mathcal{V}(\cdot,\cdot;\Theta)$ is differential with a $\delta$-bounded gradient and twice differential with its Hessian bounded by $\mathcal{B}$, and the loss function $\ell$ has $\rho$-bounded gradient with respect to training/meta data. Let the learning rate $\alpha_t, \beta_t, 1\leq t\leq T$ be monotonically descent sequences, and satisfy $\alpha_t=\min\{\frac{1}{L},\frac{c_1}{\sqrt{T}}\}, \beta_t=\min\{\frac{1}{L},\frac{c_2}{\sqrt{T}}\}$, for some $c_1,c_2>0$, such that $\frac{\sqrt{T}}{c_1}\geq L, \frac{\sqrt{T}}{c_2}\geq L$. Meanwhile, they satisfy $\sum_{t=1}^\infty \alpha_t = \infty,\sum_{t=1}^\infty \alpha_t^2 < \infty ,\sum_{t=1}^\infty \beta_t = \infty,\sum_{t=1}^\infty \beta_t^2 < \infty $. Then CMW-Net can achieve $\mathbb{E}[ \|\nabla \mathcal{L}^{tr}(\mathbf{w}^{(t)};\Theta^{(t)})\|_2^2] \leq \epsilon$ in $\mathcal{O}(1/\epsilon^2)$ steps. More specifically,
	\begin{align}
	\min_{0\leq t \leq T} \mathbb{E}\left[ \left\|\nabla \mathcal{L}^{tr}(\mathbf{w}^{(t)};\Theta^{(t)})\right\|_2^2 \right] \leq \mathcal{O}(\frac{C}{\sqrt{T}})
	\end{align}
	where $C$ is some constant independent of the convergence process.
\end{Theorem}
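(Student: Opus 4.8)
The plan is to run a standard stochastic-descent analysis on the inner objective $\mathcal{L}^{tr}(\cdot\,;\Theta)$, essentially transcribing the argument just used for Theorem~\ref{th1} but now telescoping the \emph{training} loss rather than the meta loss, and crucially treating the $\mathbf{w}$-iteration as SGD on a slowly moving target. First I would collect the regularity facts implied by the hypotheses: since $\ell$ is $L$-smooth with $\rho$-bounded gradients, $\mathcal{V}(\cdot,\cdot;\Theta)$ has $\delta$-bounded gradient and $\mathcal{B}$-bounded Hessian, the weights lie in $[0,1]$ with $\sum_i\mathcal{V}=1$, and the per-sample losses stay bounded along the trajectory, the map $\mathbf{w}\mapsto\mathcal{L}^{tr}(\mathbf{w};\Theta)$ is Lipschitz smooth with some constant $L_{tr}=L_{tr}(L,\rho,\delta,\mathcal{B},n)$, and both $\mathcal{L}^{tr}(\mathbf{w};\cdot)$ and $\nabla_{\mathbf{w}}\mathcal{L}^{tr}(\mathbf{w};\cdot)$ are Lipschitz in $\Theta$ with constants of the same flavour. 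I would also note, exactly as in the derivation of Lemma~\ref{lemma1}, that the meta-gradient driving the $\Theta$-update is bounded, so the $\Theta$-step satisfies $\|\Theta^{(t+1)}-\Theta^{(t)}\|\le \Gamma\,\alpha_t\beta_t$ for a constant $\Gamma$ — the extra $\alpha_t$ coming from differentiating through one $\mathbf{w}$-step, and the bounded coefficient from the fact that $G_{ij}$ is a product of two $\rho$-bounded gradients while $\partial\mathcal{V}/\partial\Theta$ is $\delta$-bounded.

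The core identity to exploit is the one-step decomposition
\begin{align*}
\mathcal{L}^{tr}(\mathbf{w}^{(t+1)};\Theta^{(t+1)})-\mathcal{L}^{tr}(\mathbf{w}^{(t)};\Theta^{(t)})
&= \underbrace{\big[\mathcal{L}^{tr}(\mathbf{w}^{(t+1)};\Theta^{(t+1)})-\mathcal{L}^{tr}(\mathbf{w}^{(t)};\Theta^{(t+1)})\big]}_{\text{effect of the }\mathbf{w}\text{-update}} \\
&\quad + \underbrace{\big[\mathcal{L}^{tr}(\mathbf{w}^{(t)};\Theta^{(t+1)})-\mathcal{L}^{tr}(\mathbf{w}^{(t)};\Theta^{(t)})\big]}_{\text{drift from the }\Theta\text{-update}}.
\end{align*}
For the first bracket I would invoke the descent lemma for $\mathcal{L}^{tr}(\cdot;\Theta^{(t+1)})$: the $\mathbf{w}$-update in Eq.~\eqref{eq8} already uses the updated $\Theta^{(t+1)}$, so $\mathbf{w}^{(t+1)}-\mathbf{w}^{(t)}=-\alpha_t\big[\nabla_{\mathbf{w}}\mathcal{L}^{tr}(\mathbf{w}^{(t)};\Theta^{(t+1)})+\psi^{(t)}\big]$ with $\psi^{(t)}$ a zero-mean mini-batch noise of bounded variance $\sigma^2$, and taking expectations gives the usual $-\alpha_t\big(1-\tfrac{L_{tr}\alpha_t}{2}\big)\|\nabla_{\mathbf{w}}\mathcal{L}^{tr}(\mathbf{w}^{(t)};\Theta^{(t+1)})\|_2^2+\tfrac{L_{tr}\alpha_t^2}{2}\sigma^2$. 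The second bracket I would bound by $\Gamma'\|\Theta^{(t+1)}-\Theta^{(t)}\|\le\Gamma'\Gamma\,\alpha_t\beta_t$ using Lipschitzness in $\Theta$. Summing over $t$, lower-bounding $\mathcal{L}^{tr}\ge 0$ so that the left side telescopes, and using that the error series $\sum_t\alpha_t^2$ and $\sum_t\alpha_t\beta_t$ are finite (the latter by Cauchy--Schwarz against $\sum\alpha_t^2,\sum\beta_t^2<\infty$) yields $\sum_t\alpha_t\,\mathbb{E}\|\nabla_{\mathbf{w}}\mathcal{L}^{tr}(\mathbf{w}^{(t)};\Theta^{(t+1)})\|_2^2\le C$, a constant absorbing $\mathcal{L}^{tr}(\mathbf{w}^{(1)};\Theta^{(1)})$, the variance terms and the drift terms. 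I would then replace $\Theta^{(t+1)}$ by $\Theta^{(t)}$ inside the gradient at the price of an extra $O(\|\Theta^{(t+1)}-\Theta^{(t)}\|^2)=O(\alpha_t^2\beta_t^2)$ additive term (via $(a+b)^2\le 2a^2+2b^2$ and Lipschitzness of $\nabla_{\mathbf{w}}\mathcal{L}^{tr}$ in $\Theta$), which is again summable, and conclude with the weighted-average bound
\[
\min_{0\le t\le T}\mathbb{E}\big[\|\nabla\mathcal{L}^{tr}(\mathbf{w}^{(t)};\Theta^{(t)})\|_2^2\big]
\;\le\; \frac{\sum_{t}\alpha_t\,\mathbb{E}\big[\|\nabla\mathcal{L}^{tr}(\mathbf{w}^{(t)};\Theta^{(t)})\|_2^2\big]}{\sum_{t}\alpha_t}
\;\le\; \frac{C}{T\alpha_T}\;=\;\mathcal{O}\!\left(\frac{C}{\sqrt{T}}\right),
\]
since $\alpha_t$ is nonincreasing and $\alpha_T=\min\{1/L,c_1/\sqrt{T}\}=c_1/\sqrt{T}$ for $T$ large; inverting this rate gives the $\mathcal{O}(1/\epsilon^2)$ iteration count.

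The main obstacle is the non-stationarity: $\mathbf{w}^{(t)}$ descends a moving target $\mathcal{L}^{tr}(\cdot;\Theta^{(t)})$ whose parameter is itself drifting, so no off-the-shelf fixed-objective SGD theorem applies verbatim. The structural fact that rescues the argument is precisely that the $\Theta$-update carries an explicit $\alpha_t\beta_t$ prefactor with a bounded coefficient $\tfrac1m\sum_i G_{ij}$, so that every drift and coupling term is of order $\alpha_t\beta_t$ or $\alpha_t^2\beta_t^2$ and is therefore summable against the prescribed learning-rate schedule — this reduces the problem to the stationary case up to a vanishing perturbation, leaving the same $\mathcal{O}(1/\sqrt{T})$ rate. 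A secondary point that must be made explicit (it is glossed over above) is the boundedness of $\nabla_\Theta\mathcal{L}^{tr}=\sum_i(\partial\mathcal{V}/\partial\Theta)\,L_i^{tr}$ and of the various Lipschitz constants involving $L_i^{tr}$, which relies on the per-sample losses remaining bounded along the iterates (or on the normalization $\sum_i\mathcal{V}=1$ together with $\ell\ge 0$); this is routine given the compactness implicit in the setting but should not be skipped.
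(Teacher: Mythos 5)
Your proposal is correct and follows essentially the same route as the paper's proof: the same one-step decomposition of $\mathcal{L}^{tr}(\mathbf{w}^{(t+1)};\Theta^{(t+1)})-\mathcal{L}^{tr}(\mathbf{w}^{(t)};\Theta^{(t)})$ into a $\mathbf{w}$-descent part (handled by the descent lemma with zero-mean, $\sigma^2$-bounded mini-batch noise) and a $\Theta$-drift part controlled by the $O(\alpha_t\beta_t)$ size of the meta-update (bounded coefficient $\frac1m\sum_i G_{ij}$ times the $\delta$-bounded $\partial\mathcal{V}/\partial\Theta$, with bounded per-sample losses), followed by telescoping, summability of the error series, and dividing by $\sum_t\alpha_t=\Omega(\sqrt{T})$. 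The only cosmetic differences are that you take the descent step at $\Theta^{(t+1)}$ and convert back to $\Theta^{(t)}$ with a summable $O(\alpha_t^2\beta_t^2)$ correction, whereas the paper takes the drift at $\mathbf{w}^{(t+1)}$ and bounds it through $\beta_t\|\nabla_\Theta\mathcal{L}^{meta}\|$, which rests on the same structural $O(\alpha_t)$ bound for the meta-gradient that you invoke directly.
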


\begin{proof}
	It is easy to conclude that
	$\alpha_t$ satisfy $\sum_{t=0}^{\infty}\alpha_t = \infty, \sum_{t=0}^{\infty}\alpha_t^2 < \infty$.
	Recall the update equation of $\mathbf{w}$ in each iteration as follows:
	\begin{align*}
	\begin{split}
	\mathbf{w}^{(t+1)}= \mathbf{w}^{(t)} - \alpha
	\sum_{i=1}^n   \mathcal{V}(L_i^{tr}(\mathbf{w}),S_{y_i};\Theta^{(t+1)}) \nabla_{\mathbf{w}} L_i^{tr}(\mathbf{w})\Big|_{\mathbf{w}^{(t)}}.
	\end{split}
	\end{align*}
	Under the sampled mini-batch $\Psi_t$ from the training dataset, the updating equation can be rewritten as:
	\begin{align*}
	\mathbf{w}^{(t+1)}= \mathbf{w}^{(t)} - \alpha_t \nabla \mathcal{L}^{tr}(\mathbf{w}^{(t)};\Theta^{(t+1)})|_{\Psi_t},
	\end{align*}
	where $\nabla \mathcal{L}^{train}(\mathbf{w}^{(t)};\Theta^{(t+1)})=
	\sum_{i=1}^n \mathcal{V}(L_i^{tr}(\mathbf{w}^{(t)}),N_i;\Theta^{(t+1)}) \nabla_{\mathbf{w}^{(t)}} L_i^{tr}(\mathbf{w})\Big|_{\mathbf{w}^{(t)}}$, and $\sum_{i=1}^n \mathcal{V}(L_i^{tr}(\mathbf{w}^{(t)}),N_i;\Theta^{(t+1)}) = 1$. Since the mini-batch $\Psi_t$ is drawn uniformly at random, we can rewrite the update equation as:
	\begin{align*}
	\mathbf{w}^{(t+1)}= \mathbf{w}^{(t)} - \alpha_t [\nabla \mathcal{L}^{tr}(\mathbf{w}^{(t)};\Theta^{(t+1)})+\psi^{(t)}],
	\end{align*}
	where $\psi^{(t)} = \nabla \mathcal{L}^{tr}(\mathbf{w}^{(t)};\Theta^{(t+1)})|_{\Psi_t}-\nabla \mathcal{L}^{tr}(\mathbf{w}^{(t)};\Theta^{(t+1)})$.
	Note that $\psi^{(t)}$ are i.i.d. random variables with finite variance, since $\Psi_t$ are drawn i.i.d. with a finite number of samples, and thus $\mathbb{E}[\psi^{(t)}]=0$,  $\mathbb{E}[\|\psi^{(t)}\|_2^2]\leq \sigma^2$.
	
	The objective function $\mathcal{L}^{tr}(\mathbf{w};\Theta)$ defined in Eq. (\ref{eqob}) can be easily proved to be Lipschitz-smooth with constant $L$, and have $\rho$-bounded gradient with respect to training data.
	Observe that
	\begin{align}\label{eq23}
	\begin{split}
	&\mathcal{L}^{tr}(\mathbf{w}^{(t+1)};\Theta^{(t+1)}) -\mathcal{L}^{tr}(\mathbf{w}^{(t)};\Theta^{(t)}) \\
	= & \left\{\mathcal{L}^{tr}(\mathbf{w}^{(t+1)};\Theta^{(t+1)})-\mathcal{L}^{tr}(\mathbf{w}^{(t+1)};\Theta^{(t)})\right\}
	+\left\{\mathcal{L}^{tr}(\mathbf{w}^{(t+1)};\Theta^{(t)})-\mathcal{L}^{tr}(\mathbf{w}^{(t)};\Theta^{(t)})\right\}.
	\end{split}
	\end{align}
	For the first term, by Lipschitz smoothness of the training loss function $\mathcal{L}^{tr}(\hat{\mathbf{w}}^{(t+1)}(\Theta^{(t+1)}))$,  we have
	\begin{align*}
	\begin{split}
	&\mathcal{L}^{tr}(\mathbf{w}^{(t+1)};\Theta^{(t)})-\mathcal{L}^{tr}(\mathbf{w}^{(t)};\Theta^{(t)}) \\
	\leq & \left\langle\nabla \mathcal{L}^{tr}(\mathbf{w}^{(t)};\Theta^{(t)}),\mathbf{w}^{(t+1)}-\mathbf{w}^{(t)} \right\rangle + \frac{L}{2} \left\|\mathbf{w}^{(t+1)}-\mathbf{w}^{(t)}\right\|_2^2\\
	= & \left\langle \nabla \mathcal{L}^{tr}(\mathbf{w}^{(t)};\Theta^{(t)}), -\alpha_t [\nabla \mathcal{L}^{tr}(\mathbf{w}^{(t)};\Theta^{(t)})+\psi^{(t)}] \right\rangle
	+ \frac{L\alpha_t^2}{2} \left\|\nabla \mathcal{L}^{tr}(\mathbf{w}^{(t)};\Theta^{(t)})+\psi^{(t)}\right\|_2^2\\
	= & -(\alpha_t-\frac{L\alpha_t^2}{2}) \left\|\nabla \mathcal{L}^{tr}(\mathbf{w}^{(t)};\Theta^{(t)})\right\|_2^2
	+ \frac{L\alpha_t^2}{2}\left\|\psi^{(t)}\right\|_2^2 - (\alpha_t-L\alpha_t^2)\left\langle \nabla \mathcal{L}^{tr}(\mathbf{w}^{(t)};\Theta^{(t)}), \psi^{(t)}\right\rangle.
	\end{split}
	\end{align*}
	For the second term, we have
	\begin{small}
		\begin{align*}
		\begin{split}
		&\mathcal{L}^{tr}(\mathbf{w}^{(t+1)};\Theta^{(t+1)})-\mathcal{L}^{tr}(\mathbf{w}^{(t+1)};\Theta^{(t)})\\
		= &\frac{1}{n}\sum_{i=1}^n \left\{\mathcal{V}(\mathcal{L}_i^{tr}(\mathbf{w}^{(t+1)}),N_i;\Theta^{(t+1)})-\mathcal{V}(\mathcal{L}_i^{tr}(\mathbf{w}^{(t+1)}),N_i;\Theta^{(t)})\right\} \mathcal{L}_i^{train}(\mathbf{w}^{(t+1)})\\
		\leq &  \frac{1}{n}\sum_{i=1}^n \left\{\left\langle \frac{\partial \mathcal{V}(\mathcal{L}_i^{tr}(\mathbf{w}^{(t+1)}),N_i;\Theta)}{\partial \Theta}\Big|_{ \Theta^{(t)}}, \Theta^{(t+1)}-\Theta^{(t)}\right\rangle +\frac{\delta}{2} \left\|\Theta^{(t+1)}-\Theta^{(t)}\right\|_2^2\right\}\mathcal{L}_i^{tr}(\mathbf{w}^{(t+1)})\\
		=&  \frac{1}{n}\sum_{i=1}^n \left\{\left\langle \frac{\partial \mathcal{V}(\mathcal{L}_i^{tr}(\mathbf{w}^{(t+1)}),N_i;\Theta)}{\partial \Theta}\Big|_{ \Theta^{(t)}}, -\beta_t[ \nabla_{\Theta} \mathcal{L}^{meta}(\hat{\mathbf{w}}^{(t+1)}(\Theta^{(t)}))+\xi^{(t)}] \right\rangle
		+ \frac{\delta\beta_t^2}{2} \left\| \nabla \mathcal{L}^{meta}(\hat{\mathbf{w}}^{(t+1)}(\Theta^{(t)}))+\xi^{(t)}\right\|_2^2 \right\}\mathcal{L}_i^{tr}(\mathbf{w}^{(t+1)})\\
		=& \frac{1}{n}\sum_{i=1}^n \left\{\left\langle \frac{\partial \mathcal{V}(\mathcal{L}_i^{tr}(\mathbf{w}^{(t+1)}),N_i;\Theta)}{\partial \Theta}\Big|_{ \Theta^{(t)}}, -\beta_t[ \nabla \mathcal{L}^{meta}(\hat{\mathbf{w}}^{(t+1)}(\Theta^{(t)}))+\xi^{(t)}]\right\rangle +\frac{\delta\beta_t^2}{2} \left(\left\| \nabla \mathcal{L}^{meta}(\hat{\mathbf{w}}^{(t+1)}(\Theta^{(t)}))\right\|_2^2+\|\xi^{(t)}\|_2^2 +\right.\right.\\
		&\phantom{=\;\;}\left.\left.2\left\langle\nabla \mathcal{L}^{meta}(\hat{\mathbf{w}}^{(t+1)}(\Theta^{(t)})),\xi^{(t)} \right\rangle \right) \right\}L_i^{tr}(\mathbf{w}^{(t+1)}).
		\end{split}
		\end{align*}
	\end{small}
	Therefore, for Eq.(\ref{eq23}), we have:
	\begin{align*}
	\begin{split}
	&\mathcal{L}^{tr}(\mathbf{w}^{(t+1)};\Theta^{(t+1)}) -\mathcal{L}^{tr}(\mathbf{w}^{(t+1)};\Theta^{(t)})  \\
	\leq &  \frac{1}{n}\sum_{i=1}^n \left\{\left\langle \frac{\partial \mathcal{V}(\mathcal{L}_i^{tr}(\mathbf{w}^{(t+1)}),N_i;\Theta)}{\partial \Theta}\Big|_{ \Theta^{(t)}}, -\beta_t[ \nabla \mathcal{L}^{meta}(\hat{\mathbf{w}}^{(t+1)}(\Theta^{(t)}))+\xi^{(t)}]\right\rangle +\frac{\delta\beta_t^2}{2} \left(\left\| \nabla \mathcal{L}^{meta}(\hat{\mathbf{w}}^{(t+1)}(\Theta^{(t)}))\right\|_2^2+\|\xi^{(t)}\|_2^2 +\right.\right.\\
	&\phantom{=\;\;}\left.\left.2\left\langle\nabla \mathcal{L}^{meta}(\hat{\mathbf{w}}^{(t+1)}(\Theta^{(t)})),\xi^{(t)} \right\rangle \right) \right\}L_i^{tr}(\mathbf{w}^{(t+1)})
	-(\alpha_t-\frac{L\alpha_t^2}{2}) \left\|\nabla \mathcal{L}^{tr}(\mathbf{w}^{(t)};\Theta^{(t)})\right\|_2^2 + \frac{L\alpha_t^2}{2}\left\|\psi^{(t)}\right\|_2^2 \\
	& \phantom{=\;\;}- (\alpha_t-L\alpha_t^2)\left\langle \nabla \mathcal{L}^{tr}(\mathbf{w}^{(t)};\Theta^{(t)}), \psi^{(t)}\right\rangle.
	\end{split}
	\end{align*}
	Taking expectation of the both sides of the above inequality and based on $\mathbb{E}[\xi^{(t)}]=0,\mathbb{E}[\psi^{(t)}]=0$, we have
	\begin{align*}
	&\mathbb{E}[\mathcal{L}^{train}(\mathbf{w}^{(t+1)};\Theta^{(t+1)})]-\mathbb{E}[\mathcal{L}^{train}(\mathbf{w}^{(t)};\Theta^{(t)})] \\
	& \leq \mathbb{E}   \frac{1}{n}\sum_{i=1}^n \left\{\left\langle \frac{\partial \mathcal{V}(\mathcal{L}_i^{tr}(\mathbf{w}^{(t+1)}),N_i;\Theta)}{\partial \Theta}\Big|_{ \Theta^{(t)}}, -\beta_t \nabla \mathcal{L}^{meta}(\hat{\mathbf{w}}^{(t+1)}(\Theta^{(t)}))\right\rangle +\frac{\delta\beta_t^2}{2} \left(\left\| \nabla \mathcal{L}^{meta}(\hat{\mathbf{w}}^{(t+1)}(\Theta^{(t)}))\right\|_2^2+\|\xi^{(t)}\|_2^2 \right) \right\}\\
	&L_i^{tr}(\mathbf{w}^{(t+1)}) -(\alpha_t-\frac{L\alpha_t^2}{2}) \mathbb{E}\left\|\nabla \mathcal{L}^{tr}(\mathbf{w}^{(t)};\Theta^{(t)})\right\|_2^2   + \frac{L\alpha_t^2}{2}  \mathbb{E}[ \|\psi^{(t)}\|_2^2 ].
	\end{align*}
	Summing up the above inequalities over $t=1,...,T$ in both sides and rearranging the terms, we obtain
	\begin{small}
		\begin{align*}
		&\sum_{t=1}^{T} \left(\alpha_t-\frac{L\alpha_t^2}{2}\right) \mathbb{E} \left\|\nabla \mathcal{L}^{tr}(\mathbf{w}^{(t)};\Theta^{(t)})\right\|_2^2  \\
		\leq &
		\sum_{t=1}^{T} \beta_t\mathbb{E} \frac{1}{n}\sum_{i=1}^n \!\left\|  L_i^{tr}(\mathbf{w}^{(t+1)}) \! \right\| \!\left\|\!  \frac{\partial \mathcal{V}(\mathcal{L}_i^{tr}(\mathbf{w}^{(t+1)}),N_i;\Theta)}{\partial \Theta}\Big|_{ \Theta^{(t)}}  \right\| \! \!\left\|\nabla \mathcal{L}^{meta}(\hat{\mathbf{w}}^{(t+1)}(\Theta^{(t)}))\right\|+ \sum_{t=1}^{T} \frac{L\alpha_t^2}{2}\mathbb{E}[\|\psi^{(t)}\|_2^2 ]+\mathbb{E}[\mathcal{L}^{train}(\mathbf{w}^{(1)};\Theta^{(1)})]\\
		&  -	\mathbb{E}[\mathcal{L}^{tr}(\mathbf{w}^{(T+1)};\Theta^{(T+1)})]+\sum_{t=1}^{T} \frac{\delta\beta_t^2}{2}\left\{\frac{1}{n}\sum_{i=1}^n \left\|  L_i^{tr}(\mathbf{w}^{(t+1)})\right\| \left(\mathbb{E}\left\| \nabla  \mathcal{L}^{meta}(\hat{\mathbf{w}}^{(t+1)}(\Theta^{(t)}))\right\|_2^2+\mathbb{E}\left\|\xi^{(t)}\right\|_2^2 \right)\right\} \\
		\leq & \delta M \sum_{t=1}^{T} \beta_t \left\|\nabla \mathcal{L}^{meta}(\hat{\mathbf{w}}^{(t+1)}(\Theta^{(t)}))\right\|+ \sum_{t=1}^{T} \frac{L\alpha_t^2}{2} \sigma^2+\!\mathbb{E}[\mathcal{L}^{tr}(\mathbf{w}^{(1)};\Theta^{(1)})]   + \sum_{t=1}^{T} \frac{\delta\beta_t^2}{2}\left\{M(\rho^2+\sigma^2)\right\} < \infty.
		\end{align*}
	\end{small}
	The last inequality holds since $\sum_{t=0}^{\infty}\alpha_t^2 < \infty, \sum_{t=0}^{\infty}\beta_t^2 < \infty, \sum_{t=1}^{T} \beta_t \left\|\nabla \mathcal{L}^{meta}(\hat{\mathbf{w}}^{(t+1)}(\Theta^{(t)}))\right\| < \infty$, and $\frac{1}{n}\sum_{i=1}^n\|  L_i^{train}(\mathbf{w}^{(t)})\|\leq M$, i.e., the sum of limited number of samples' losses is bounded. Thus we have
	\begin{align*}
	&\min_{t} \mathbb{E} \left[ \left\|\nabla \mathcal{L}^{tr}(\mathbf{w}^{(t)};\Theta^{(t)})\right\|_2^2 \right] \\
	\leq & \frac{\sum_{t=1}^{T} \left(\alpha_t-\frac{L\alpha_t^2}{2}\right) \mathbb{E} \left\|\nabla_{\Theta} \mathcal{L}^{meta}(\hat{\mathbf{w}}^{(t+1)}(\Theta^{(t)}))\right\|_2^2 }{\sum_{t=1}^{T} \left(\alpha_t-\frac{L\alpha_t^2}{2}\right)} \\
	\leq &  \frac{\delta M \sum_{t=1}^{T} \beta_t \left\|\nabla \mathcal{L}^{meta}(\hat{\mathbf{w}}^{(t+1)}(\Theta^{(t)}))\right\|+ \sum_{t=1}^{T} \frac{L\alpha_t^2}{2} \sigma^2+\!\mathbb{E}[\mathcal{L}^{tr}(\mathbf{w}^{(1)};\Theta^{(1)})]   + \sum_{t=1}^{T} \frac{\delta\beta_t^2}{2}\left\{M(\rho^2+\sigma^2)\right\}  }{\sum_{t=1}^{T} \left(\alpha_t-\frac{L\alpha_t^2}{2}\right)} \\
	\leq &  \frac{\delta M \sum_{t=1}^{T} \beta_t \left\|\nabla \mathcal{L}^{meta}(\hat{\mathbf{w}}^{(t+1)}(\Theta^{(t)}))\right\|+ \sum_{t=1}^{T} \frac{L\alpha_t^2}{2} \sigma^2+\!\mathbb{E}[\mathcal{L}^{tr}(\mathbf{w}^{(1)};\Theta^{(1)})]   + \sum_{t=1}^{T} \frac{\delta\beta_t^2}{2}\left\{M(\rho^2+\sigma^2)\right\}  }{\sum_{t=1}^{T} \alpha_t} \\
	\leq &  \frac{\delta M \sum_{t=1}^{T} \beta_t \left\|\nabla \mathcal{L}^{meta}(\hat{\mathbf{w}}^{(t+1)}(\Theta^{(t)}))\right\|+ \sum_{t=1}^{T} \frac{L\alpha_t^2}{2} \sigma^2+\!\mathbb{E}[\mathcal{L}^{tr}(\mathbf{w}^{(1)};\Theta^{(1)})]   + \sum_{t=1}^{T} \frac{\delta\beta_t^2}{2}\left\{M(\rho^2+\sigma^2)\right\}  }{T \alpha_t} \\
	\leq &  \frac{\delta M \sum_{t=1}^{T} \beta_t \left\|\nabla \mathcal{L}^{meta}(\hat{\mathbf{w}}^{(t+1)}(\Theta^{(t)}))\right\|+ \sum_{t=1}^{T} \frac{L\alpha_t^2}{2} \sigma^2+\!\mathbb{E}[\mathcal{L}^{tr}(\mathbf{w}^{(1)};\Theta^{(1)})]   + \sum_{t=1}^{T} \frac{\delta\beta_t^2}{2}\left\{M(\rho^2+\sigma^2)\right\}  }{T} \max\{L,\frac{\sqrt{T}}{c}\}\\
	=& \mathcal{O}(\frac{C}{\sqrt{T}}).
	\end{align*}	
	The third inequality holds since $\sum_{t=1}^T (2\alpha_t-L\alpha_t^2) \!=\! \sum_{t=1}^T \alpha_t (2-L\alpha_t) \!\geq\! \sum_{t=1}^T \alpha_t$.
	Thus we can conclude that our algorithm can always achieve $\min_{0\leq t \leq T} \mathbb{E}\left[ \left\|\nabla \mathcal{L}^{tr}(\mathbf{w}^{(t)};\Theta^{(t)})\right\|_2^2 \right] \leq \mathcal{O}(\frac{C}{\sqrt{T}})$ in $T$ steps, and this completes our proof of Theorem \ref{th2}.
	
\end{proof}

\subsection{Pytorch codes of our algorithm}
The following is the Pytorch codes of our algorithm, which is essily completed based on the code of MW-Net. The main difference from MW-Net is to re-define the structure of meta-model (CMW-Net) and generate the task family labels in advance. The completed training code is avriable at \url{https://github.com/xjtushujun/CMW-Net}.
\begin{python}
	def norm_func(v_lambda):
	norm_c = torch.sum(v_lambda)
	if norm_c != 0:
	v_lambda_norm = v_lambda / norm_c
	else:
	v_lambda_norm = v_lambda
	return  v_lambda_norm

	class share(MetaModule):
	def __init__(self, input, hidden1, hidden2):
	super(share, self).__init__()
	self.layer = nn.Sequential( MetaLinear(input, hidden1), nn.ReLU(inplace=True) )
	
	def forward(self, x):
	output = self.layer(x)
	return output
	
	class task(MetaModule):
	def __init__(self, hidden2, output, num_classes):
	super(task, self).__init__()
	self.layers = nn.ModuleList()
	for i in range(num_classes):
	self.layers.append(nn.Sequential( MetaLinear(hidden2, output), nn.Sigmoid() ))
	
	def forward(self, x, num, c):
	si = x.shape[0]
	output = torch.tensor([]).cuda()
	for i in range(si):
	output = torch.cat(( output, self.layers[c[num[i]]]( x[i].unsqueeze(0) ) ),0)
	
	return output

	# The structure of CMW-Net
	class VNet(MetaModule):
	def __init__(self, input, hidden1, hidden2, output, num_classes):
	super(VNet, self).__init__()
	self.feature = share(input, hidden1, hidden2)
	self.classfier = task(hidden2, output, num_classes)
	
	def forward(self, x, num, c):
	num = torch.argmax(num, -1)
	output = self.classfier( self.feature(x), num, c )
	return output
	
	optimizer_a = torch.optim.SGD(model.params(), args.lr, momentum=args.momentum, nesterov=args.nesterov, weight_decay=args.weight_decay)
	optimizer_c = torch.optim.Adam(vnet.params(), 1e-3, weight_decay=1e-4)
	
	# Generating task family labels
	es = Kmeans(3)
	es.fit(train_loader.dataset.targets)
	c = es.labels_
	
	for iters in range(num_iters):
	adjust_learning_rate(optimizer_a, iters + 1)
	model.train()
	data, target = next(iter(train_loader))
	data, target = data.to(device), target.to(device)
	meta_model.load_state_dict(model.state_dict())
	y_f_hat = meta_model(data)
	cost = F.cross_entropy(y_f_hat, target, reduce=False)
	cost_v = torch.reshape(cost, (len(cost), 1))
	v_lambda = vnet(cost_v.data, target.data, c)
	v_lambda_norm = norm_func(v_lambda)
	l_f_meta = torch.sum(cost_v * v_lambda_norm)
	meta_model.zero_grad()
	grads = torch.autograd.grad(l_f_meta,(meta_model.params()),create_graph=True)
	meta_model.update_params(lr_inner=meta_lr,source_params=grads)
	
	data_meta,target_meta = next(iter(train_meta_loader))
	data_meta,target_meta = data_meta.to(device),target_meta.to(device)
	y_g_hat = meta_model(data_meta)
	l_g_meta = F.cross_entropy(y_g_hat, target_meta)
	optimizer_c.zero_grad()
	l_g_meta.backward()
	optimizer_c.step()
	
	y_f = model(data)
	cost_w = F.cross_entropy(y_f, target, reduce=False)
	cost_v = torch.reshape(cost_w, (len(cost_w), 1))
	with torch.no_grad():
	w_new = vnet(cost_v,target, c)
	w_v = norm_func(w_new)
	l_f = torch.sum(cost_v * w_v)
	optimizer_a.zero_grad()
	l_f.backward()
	optimizer_a.step()
\end{python}

\newpage

\begin{figure}[H]\vspace{-2mm}
	\centering
	\vspace{-2mm}
	\subfigcapskip=-2mm
	\subfigure[Class imbalance with imbalanced factor 10]{\label{fig11c}
		\includegraphics[width=0.85\textwidth]{./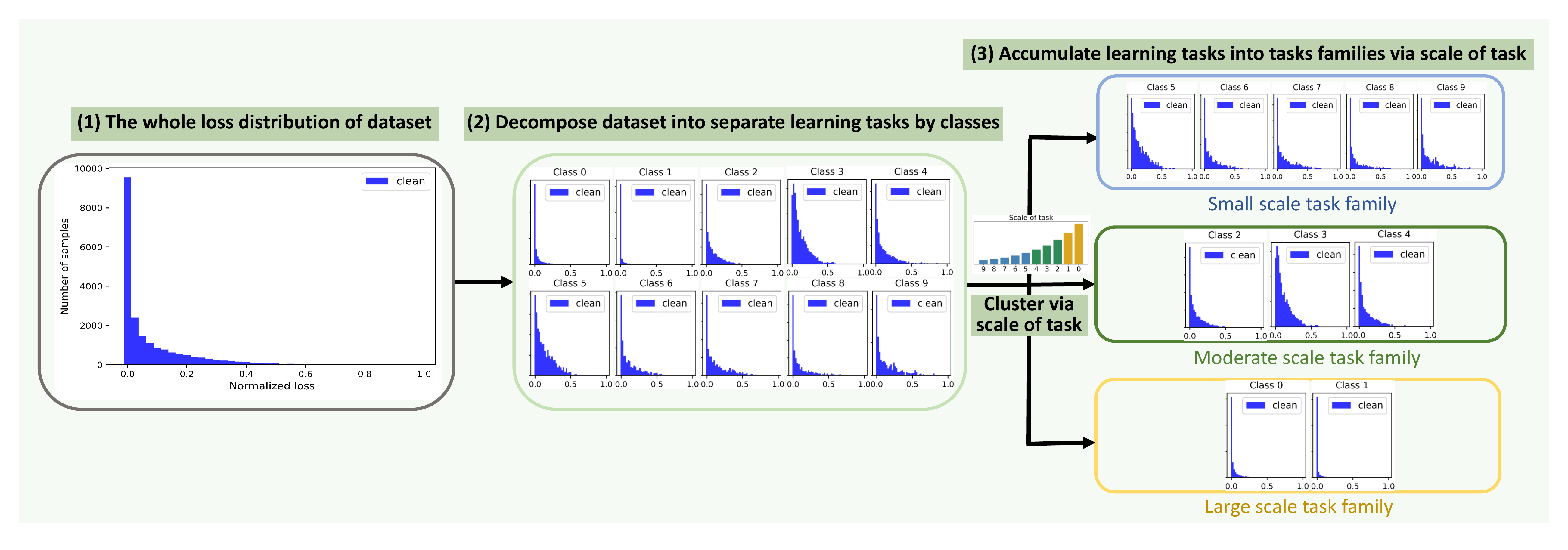}}\\\vspace{-5mm}
	\subfigure[Symmetric noise with noise rate 40\%]{\label{fig11a}
		\includegraphics[width=0.85\textwidth]{./fig/sym_loss.pdf}} \\\vspace{-5mm}
	\subfigure[Asymmetric noise with noise rate 40\%]{\label{fig11b}
		\includegraphics[width=0.85\textwidth]{./fig/asym_loss.pdf}} \\\vspace{-5mm}
	\subfigure[Feature-dependent noise with noise rate 40\%]{\label{fig11d}
		\includegraphics[width=0.85\textwidth]{./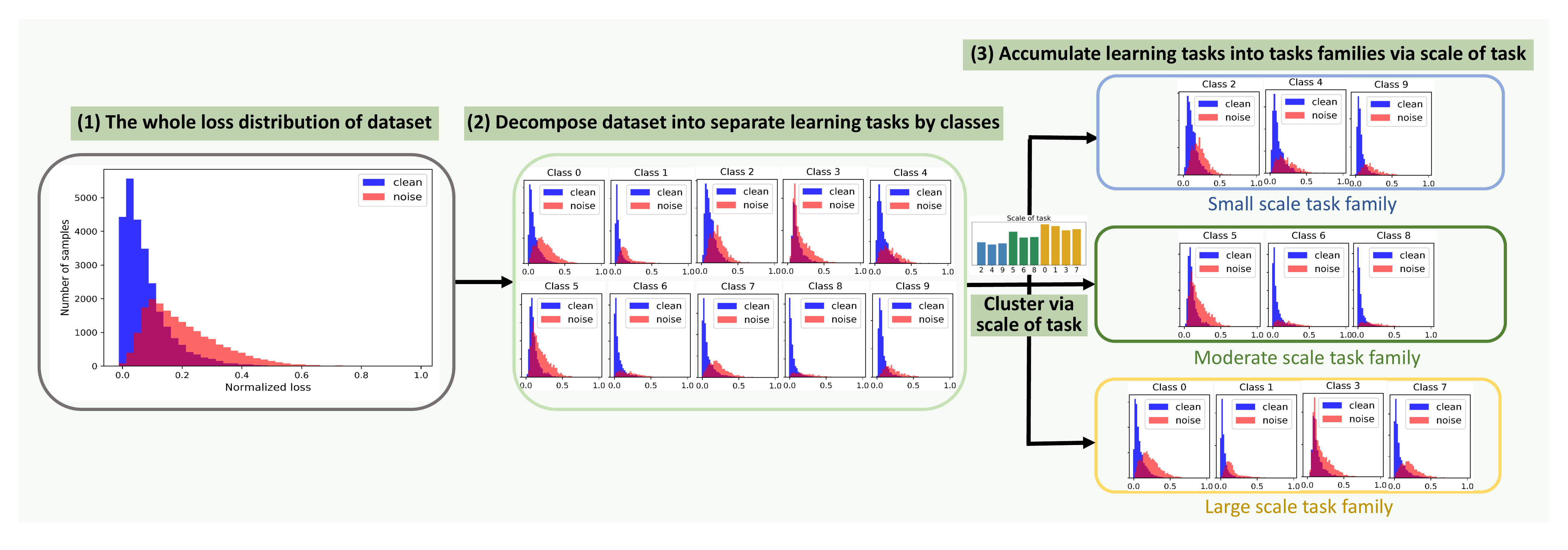}} \vspace{-3mm}
	\caption{ Illustration of the limitation and meta-essence understanding for MW-Net. The success of MW-Net is built upon homoscedastic bias assumption (e.g., in Fig.(b)(1,2), each class has similar loss distributions of clean and noise samples). While MW-Net fails under the heterogeneous bias (e.g., in Fig.(a,c,d)(1,2), each class has their specific loss distributions). The rationality can be revealed from the perspective of meta-learning (see Section 1.2 of main paper). The limitation of MW-Net demostrates that only sample-level loss information can not completely character the heterogeneous bias. This motivates us to introduce task-level information (i.e., scale of task) to reform MW-Net, making it able to distinguish individual bias properties of different tasks, and accumulate tasks with approximately homoscedastic data bias as a task family (e.g., Fig.(a,b,c,d)(3), and details see Section 1.3 \& 3.3 of main paper). }   \label{fig11}
\end{figure}

\section{More Experimental Results and Experimental Settings in Section 4}

\subsection{Additional Illustrations of the Limitation and Meta-Essence Understanding for MW-Net}
Fig. \ref{fig11} illustrates the limitation and meta-essence understanding for MW-Net. Compared with the Fig.2 in the main paper, we further show the class imbalance and feature-dependent bias cases, demonstrating the validity of our claim. 

\begin{algorithm}[t]
	\vspace{0mm}
	\setlength{\abovecaptionskip}{0.cm}
	\setlength{\belowcaptionskip}{-2cm}
	\renewcommand{\algorithmicrequire}{\textbf{Input:}}
	\renewcommand{\algorithmicensure}{\textbf{Output:}}
	\caption{The Efficient CMW-Net Meta-training Algorithm}
	\label{alg:example}
	\begin{algorithmic}[1]  \small
		\REQUIRE  Training dataset $\mathcal{D}^{tr}$, meta-data set $\mathcal{D}^{meta}$, batch size $n,m$, max iterations $T$, meta updating period $T_{Meta}$.
		\ENSURE  Classifier parameter $\mathbf{w}^{(*)}$, CMW-Net parameter $\Theta^{(*)}$
		\STATE
		Apply $K$-means on the sample numbers of all training classes to obtain $\Omega=\{\mu_k\}_{k=1}^{K}$ sorted in an ascending order.
		\STATE Initialize classifier network parameter $\mathbf{w}^{(0)}$ and CMW-Net parameter $\Theta^{(0)}$.
		\FOR  {$t=0$ {\bfseries to} $T-1$}
		\STATE $\{x,y\} \leftarrow$ SampleMiniBatch($\tilde{\mathcal{D}}^{tr},n$).
		\IF{$t$ \% $T_{Meta}=0$}
		\STATE $\{x^{meta},y^{meta}\} \leftarrow$ SampleMiniBatch($\mathcal{D}^{meta},m$).
		\STATE Formulate the learning manner of classifier network $\hat{\mathbf{w}}^{(t+1)}(\Theta)$ by Eq. (7).
		\STATE Update parameter $\Theta^{(t+1)}$ of CMW-Net by Eq. (8).
		\ENDIF 
		\STATE Update parameter $\mathbf{w}^{(t+1)}$ of classifier by Eq. (9).
		\ENDFOR
	\end{algorithmic}
\end{algorithm}

\begin{figure}[t]\vspace{-2mm}
	\centering
	\vspace{-2mm}
	\subfigcapskip=-2mm
	\subfigure[Comparision of computation cost]{\label{figa}
		\includegraphics[width=0.48\textwidth]{./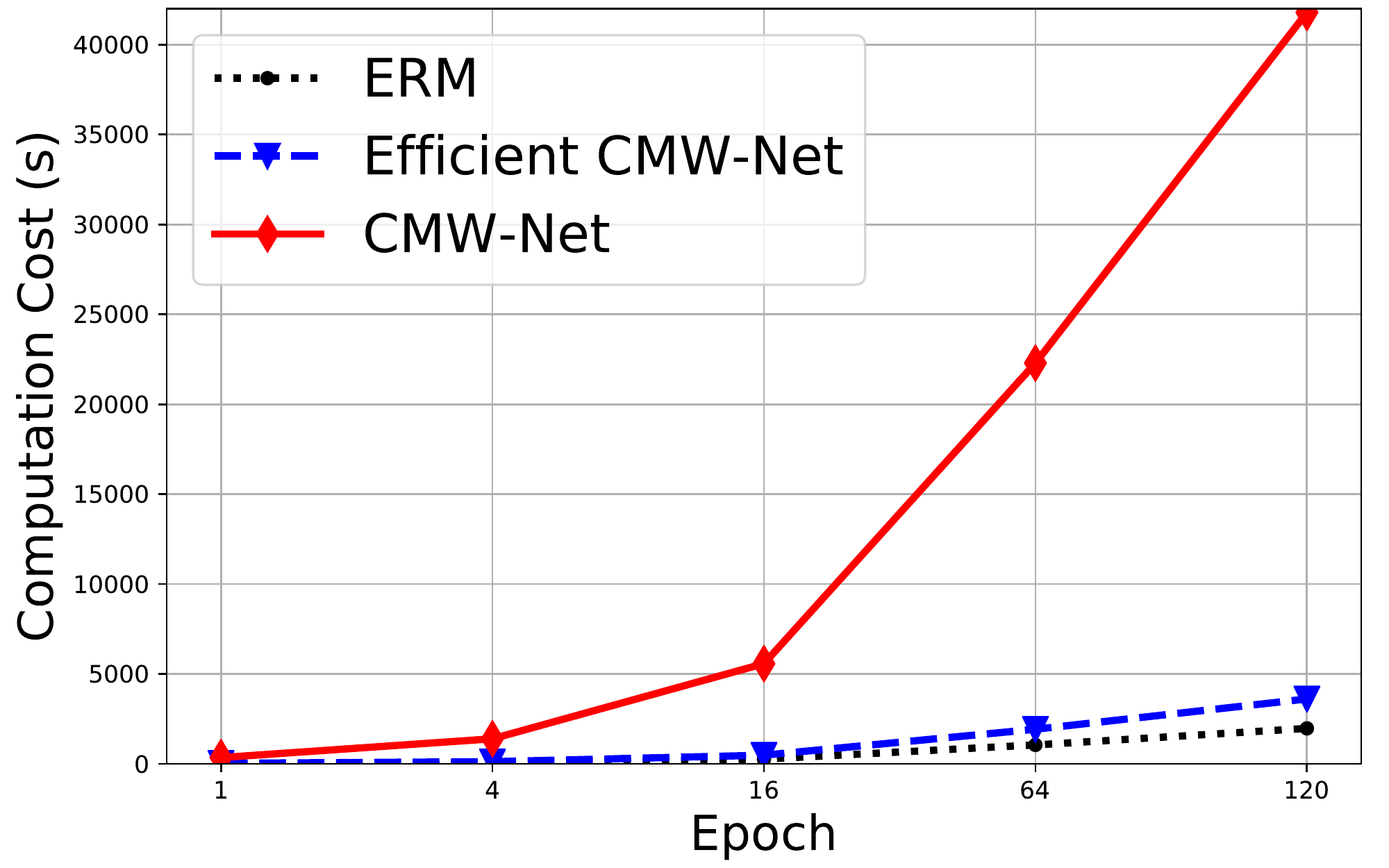}}  \hspace{-4mm}
	\subfigure[Comparision of performance]{\label{figb}
		\includegraphics[width=0.48\textwidth]{./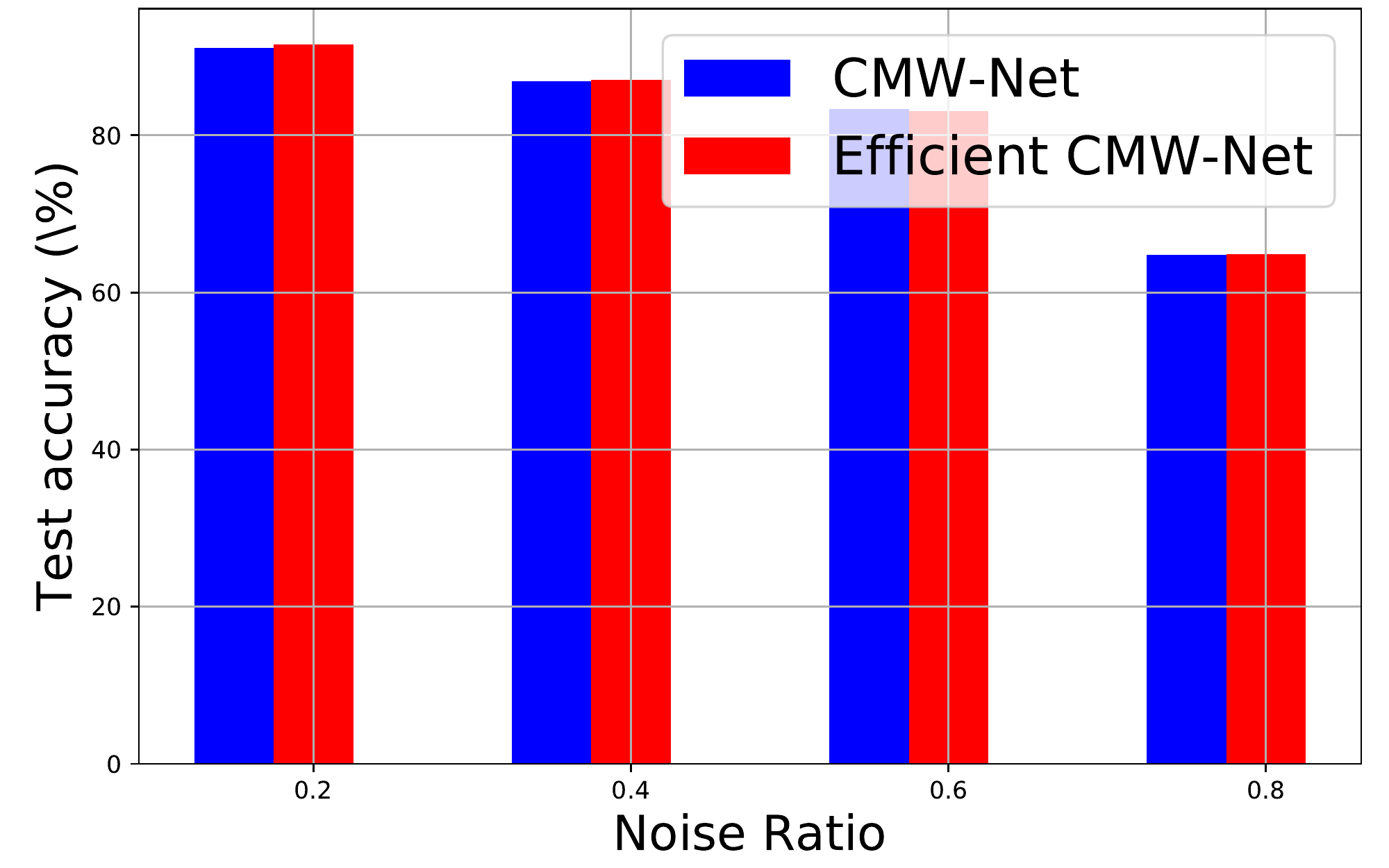}}\vspace{-3mm}
	\caption{Comparison of (a) computation cost and (b) performance between Efficient CMW-Net and CMW-Net. }\label{fig}
\end{figure}

\subsection{Efficient CMW-Net Algorithm}

To reduce the cost of step-wise optimization for CMW-Net, we attempt to update CMW-Net once after updating classifier model several steps ($T_{Meta}$ steps), and the revised algorithm is shown in Algorithm \ref{alg:example}, where the revised steps are highlighted in red. We set $T_{Meta} = 10$, and denote this method as Efficient CMW-Net. Fig.\ref{fig} shows the computation cost and performance of Efficient CMW-Net compared with CMW-Net. We can see that Efficient CMW-Net substantially reduces the computation cost of CMW-Net, while can still reserve the performance. This also implies that there remains a large room for further algorithm efficiency enhancement of our model by reducing the cost of meta-gradient optimization process, which will be further investigated in our future research.

\subsection{Class Imbalance Experiments}
In this series of experiments, we use ResNet-32 \cite{he2016deep} as the classifier network with softmax cross-entropy loss by SGD with a momentum 0.9, a weight decay $5 \times 10^{-4}$, an initial learning rate 0.1. The learning rate of ResNet-32 is divided by 100 after 160 and 180 epoch (for a total 200 epochs). The learning rate of CMW-Net is fixed as $10^{-4}$, and the weight decay of CMW-Net is fixed as $10^{-5}$. The batch size is set as 100 for all experiments.  We randomly selected fixed images per class in every epoch from the training set as the meta-data set and the number of selected images for each class is the same as the number of the least class.

Fig. \ref{SM1} shows the weighting schemes learned by CMW-Net on CIFAR-10-LT and CIFAR-100-LT, under different imbalance settings. It can be seen that our CMW-Net can adaptively learn proper weighting schemes according to different degrees of class imbalance. For example, when the dataset is balanced, CMW-Net tends to learn approximately similar weighting functions for all three  task families. When the degree of class imbalance becomes more significant, the weighting schemes extracted from different task families tend to be more largely varied, showing their different internal bias characteristics.

In Fig. \ref{SM2}, we further plot the confusion matrices produced by the MW-Net and CMW-Net methods, respectively. As can be easily seen, CMW-Net consistently outperforms MW-Net, and CMW-Net more evidently improves the accuracies of MW-Net under larger class imbalance rate. Specifically, CMW-Net gets more prominent performance gain on tail classes, and meanwhile maintains the performance on head classes.

\begin{figure*}[t]
	\centering
	\subfigcapskip=-1mm
	\subfigure[ 1 for CIFAR-10]{
		\includegraphics[width=0.23\textwidth]{./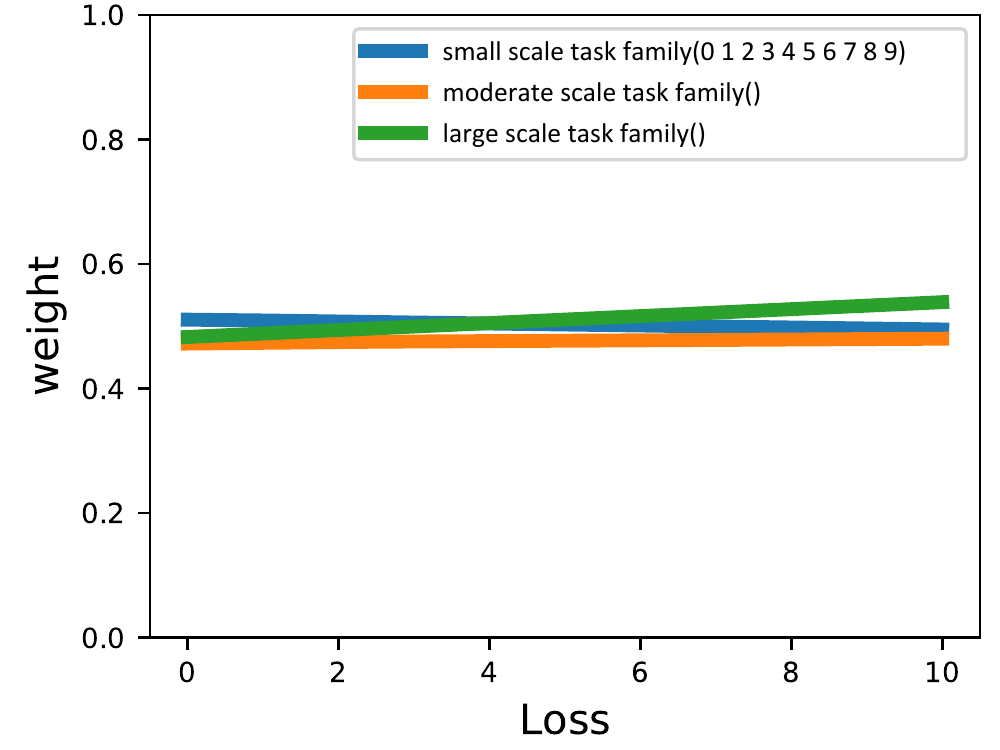}}
	\subfigure[ 10 for CIFAR-10]{
		\includegraphics[width=0.23\textwidth]{./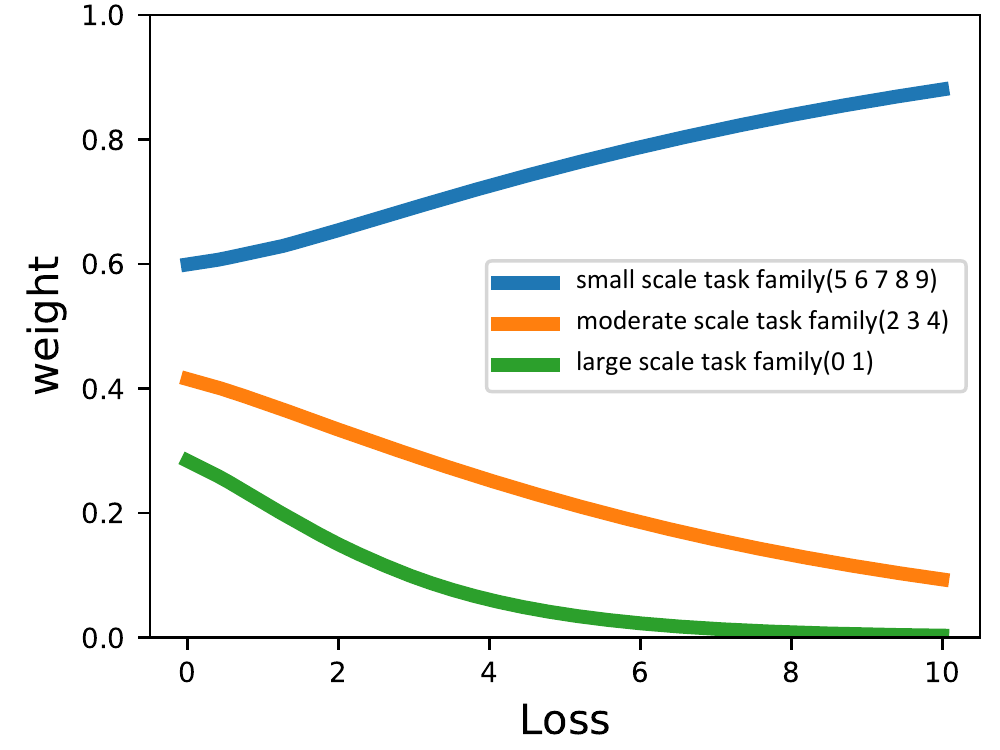}}
	\subfigure[ 20 for CIFAR-10]{
		\includegraphics[width=0.23\textwidth]{./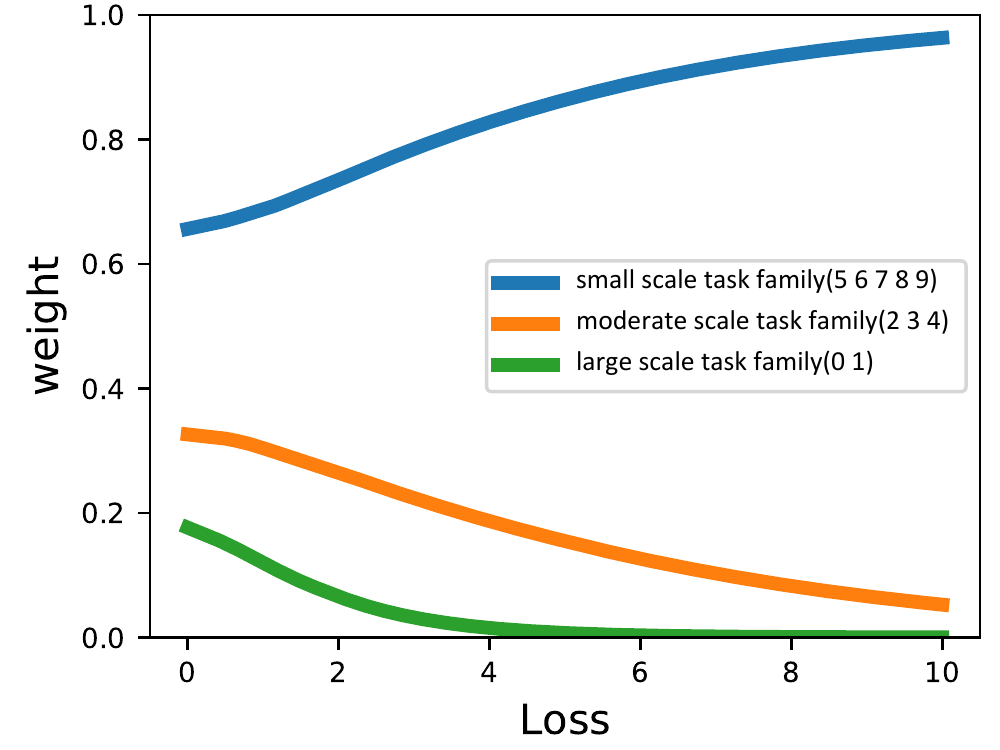}}
	\subfigure[ 50 for CIFAR-10]{
		\includegraphics[width=0.23\textwidth]{./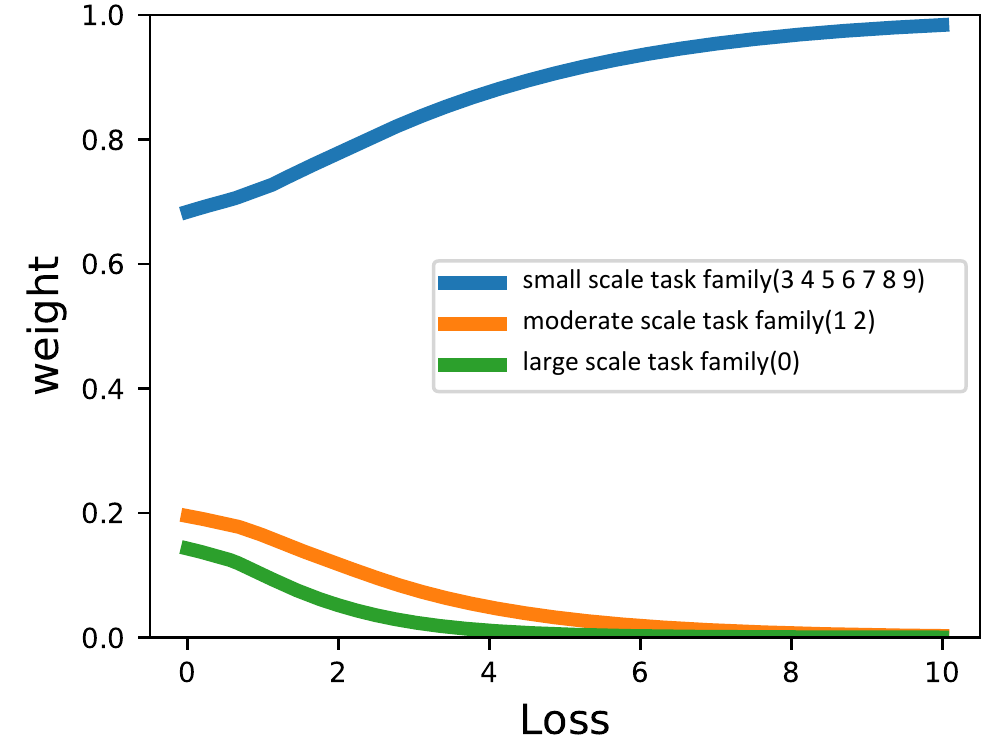}}  \\  \vspace{-3mm}
	\subfigure[ 100 for CIFAR-10]{
		\includegraphics[width=0.23\textwidth]{./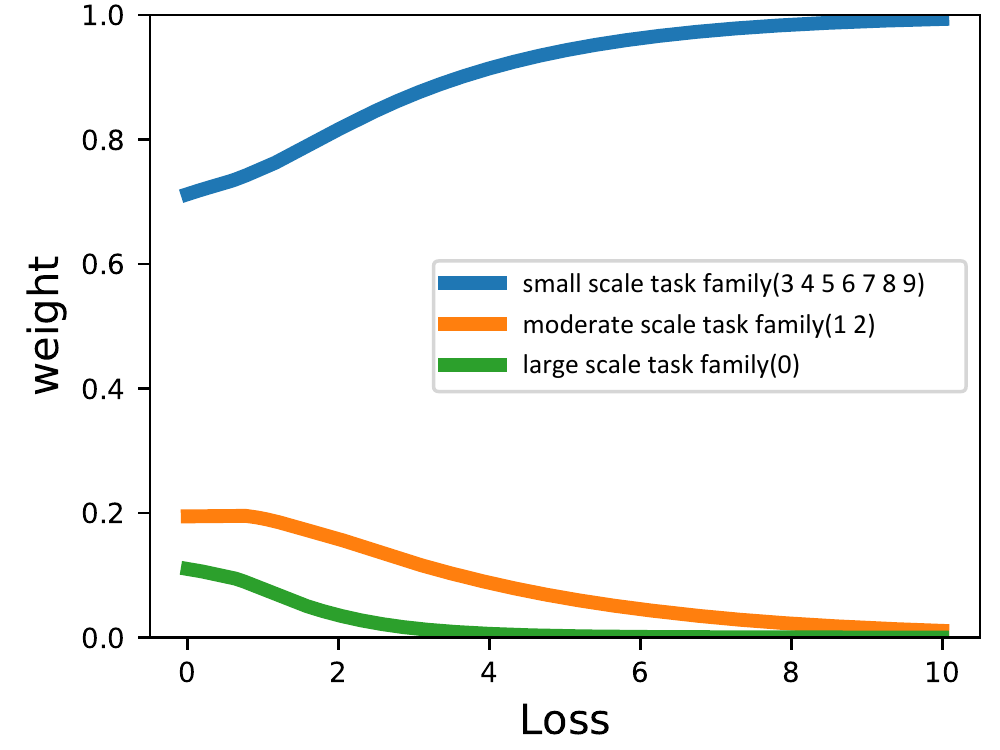}}
	\subfigure[ 200 for CIFAR-10]{
		\includegraphics[width=0.23\textwidth]{./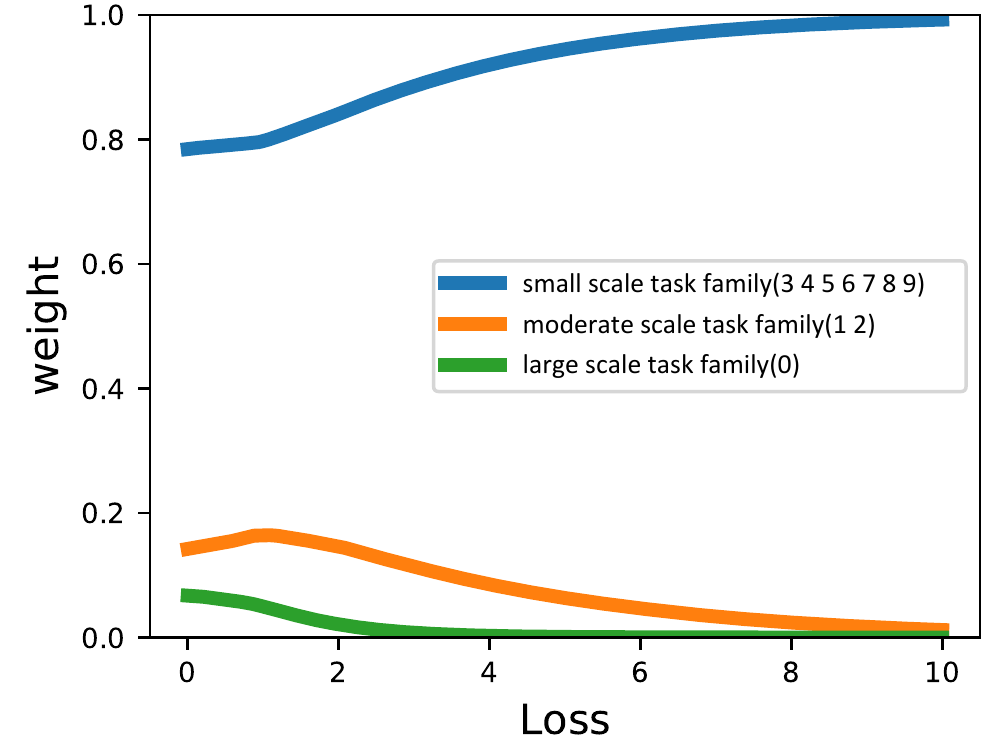}}
	\subfigure[ 1 for CIFAR-100]{
		\includegraphics[width=0.23\textwidth]{./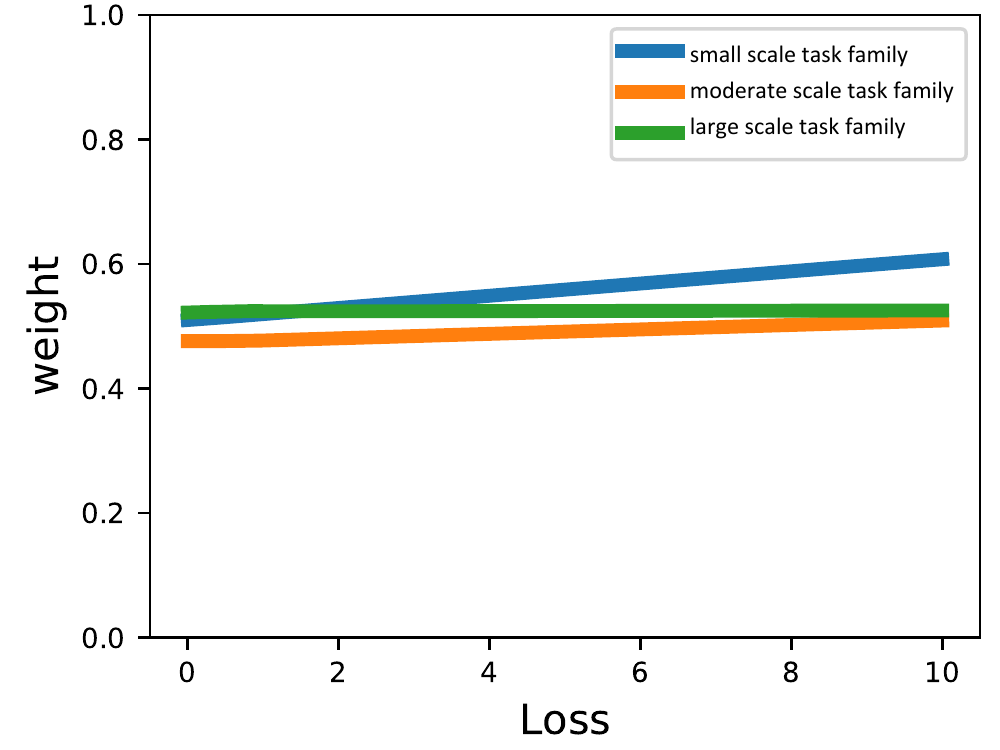}}
	\subfigure[ 10 for CIFAR-100]{
		\includegraphics[width=0.23\textwidth]{./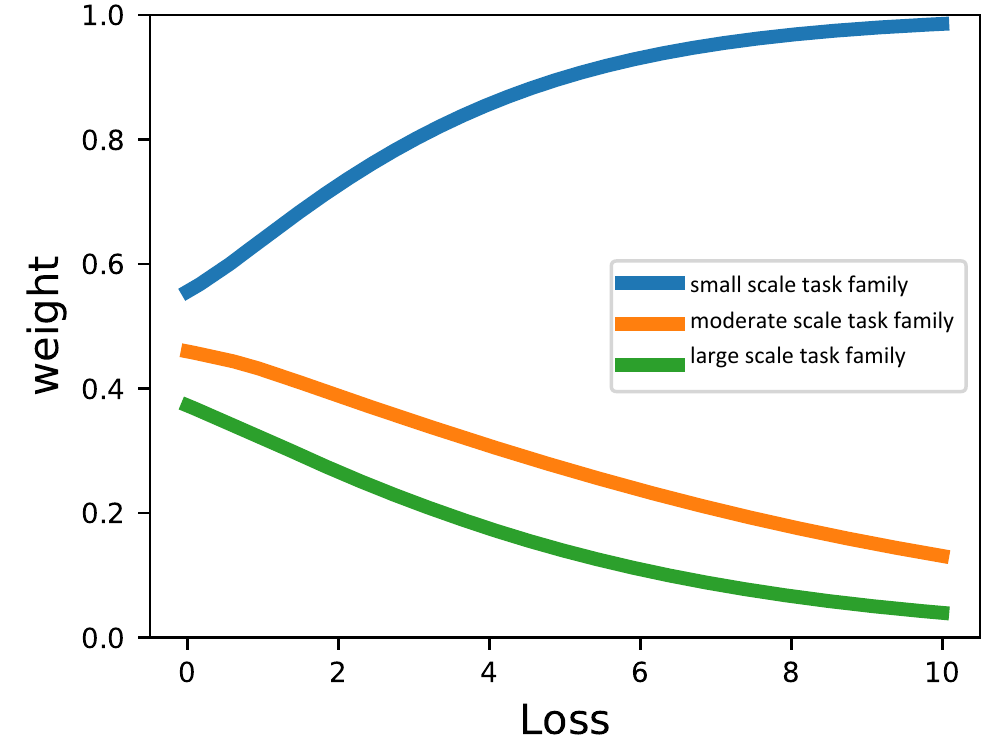}}\\ \vspace{-3mm}
	\subfigure[ 20 for CIFAR-10]{
		\includegraphics[width=0.23\textwidth]{./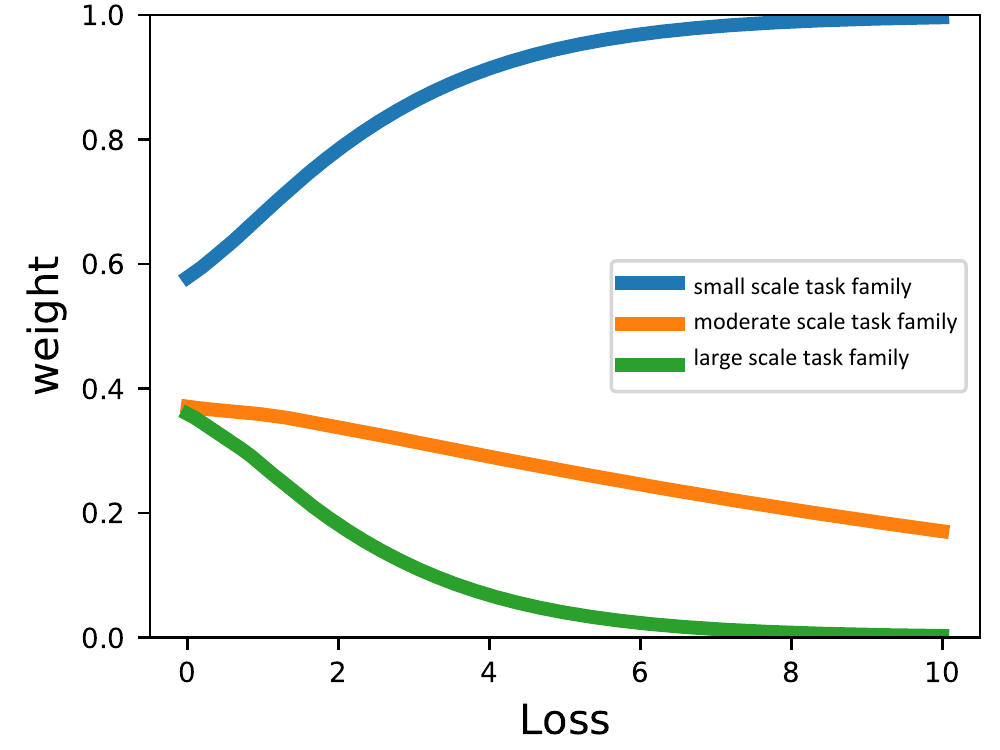}}
	\subfigure[ 50 for CIFAR-10]{
		\includegraphics[width=0.23\textwidth]{./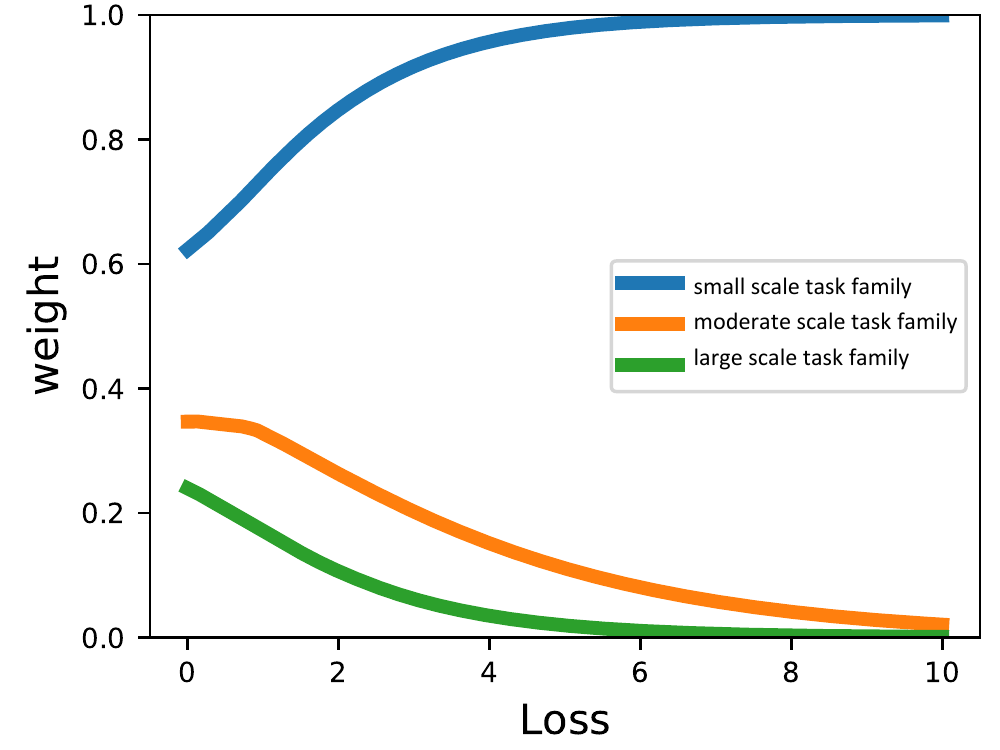}}
	\subfigure[ 100 for CIFAR-10]{
		\includegraphics[width=0.23\textwidth]{./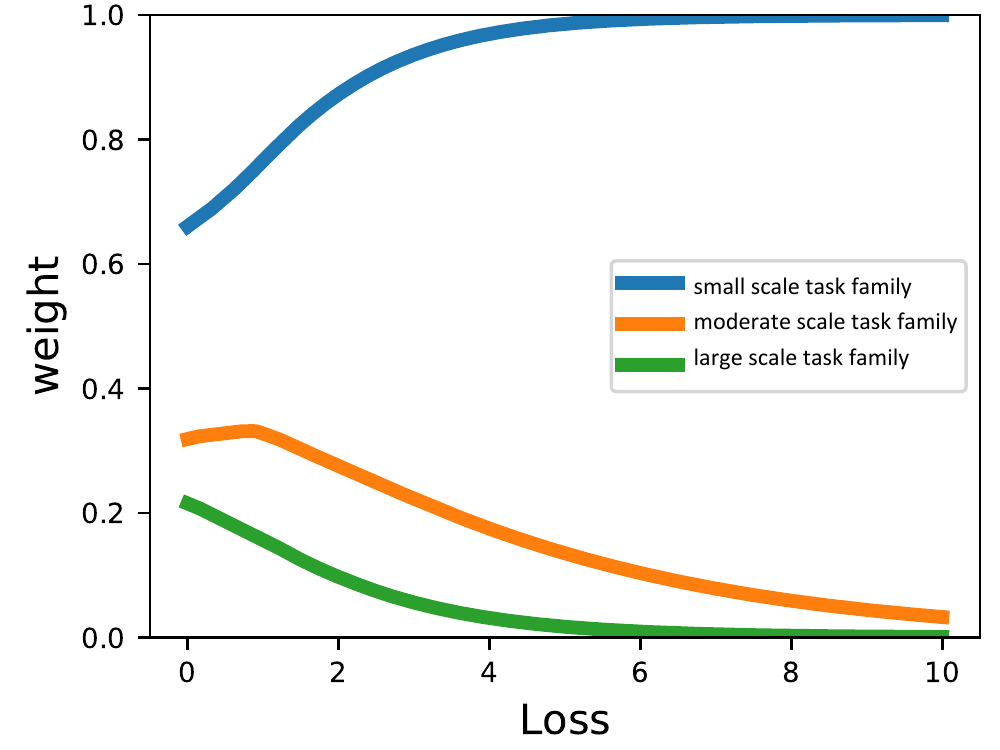}}
	\subfigure[ 200 for CIFAR-10]{
		\includegraphics[width=0.23\textwidth]{./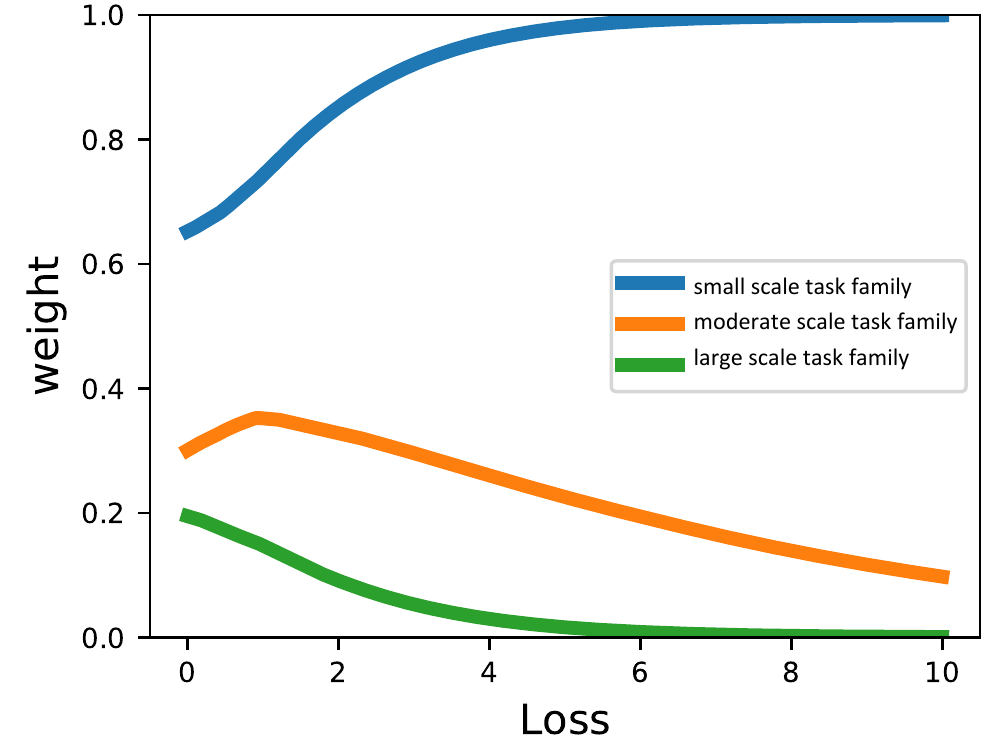}}\vspace{-3mm}
	\caption{Weighting schemes learned by CMW-Net on CIFAR-10-LT and CIFAR-100-LT with imbalance factors ranging from 1 to 200.}\label{SM1}
\end{figure*}
\begin{figure*}[t]
	\centering
	\subfigcapskip=-1mm
	\subfigure[ 1 for CIFAR-10]{
		\includegraphics[width=0.23\textwidth]{./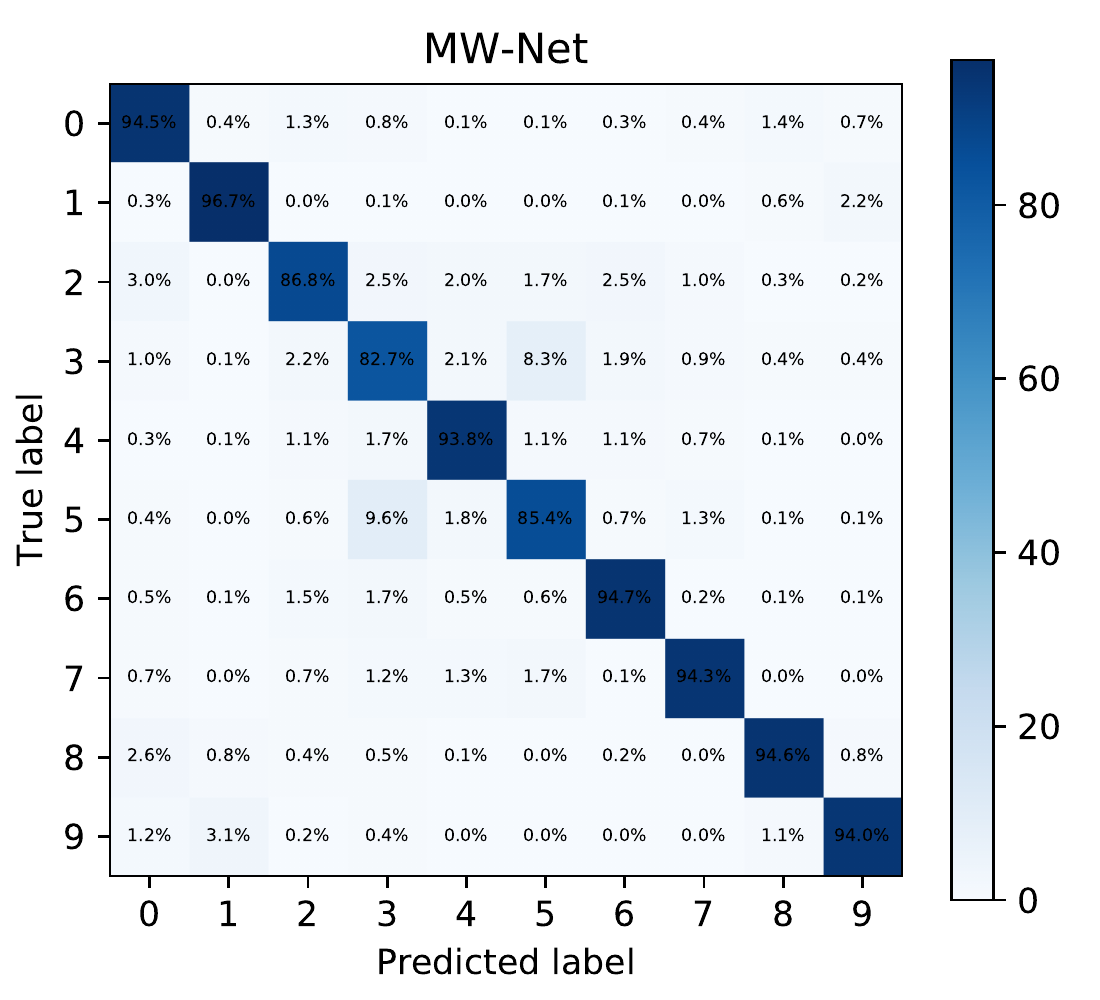}
		\includegraphics[width=0.23\textwidth]{./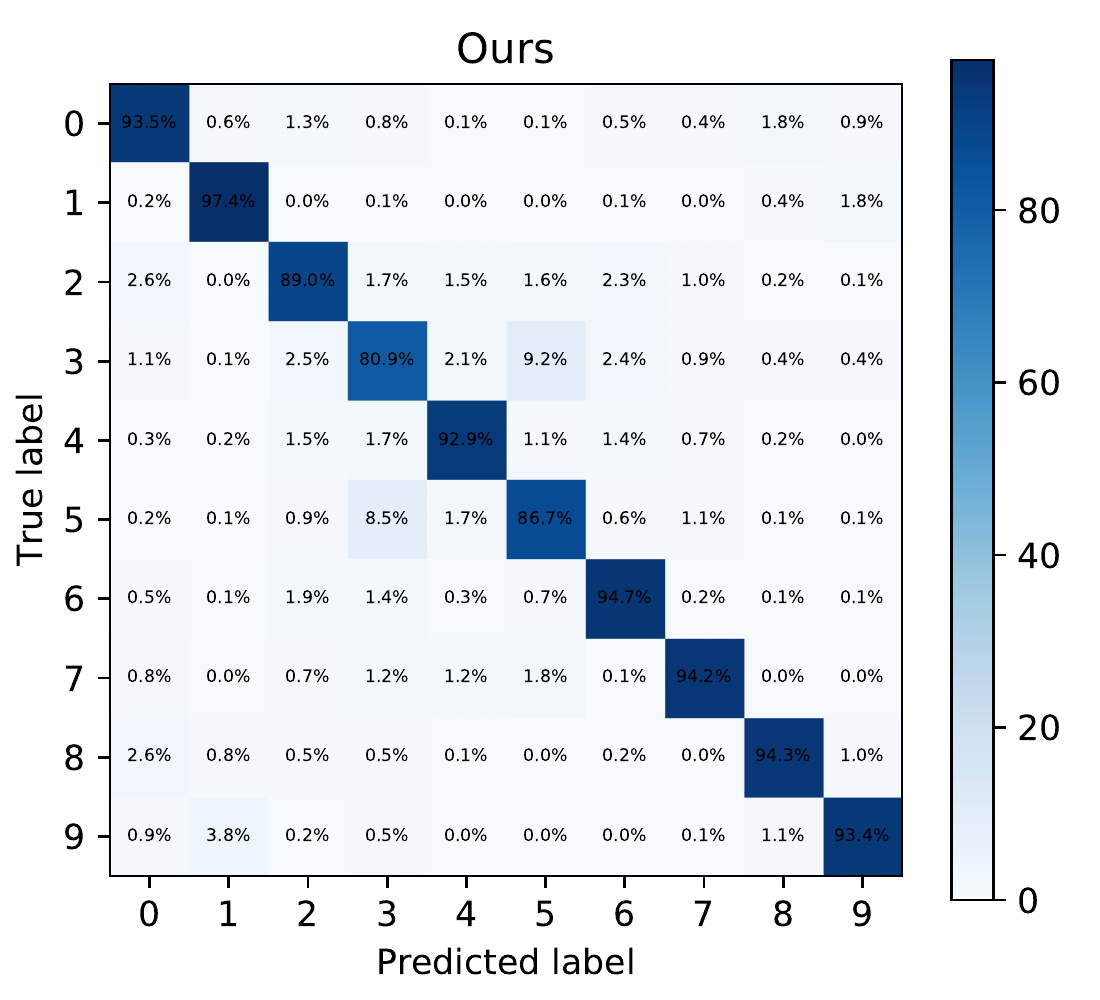}} \ \ \
	\subfigure[ 10 for CIFAR-10]{
		\includegraphics[width=0.23\textwidth]{./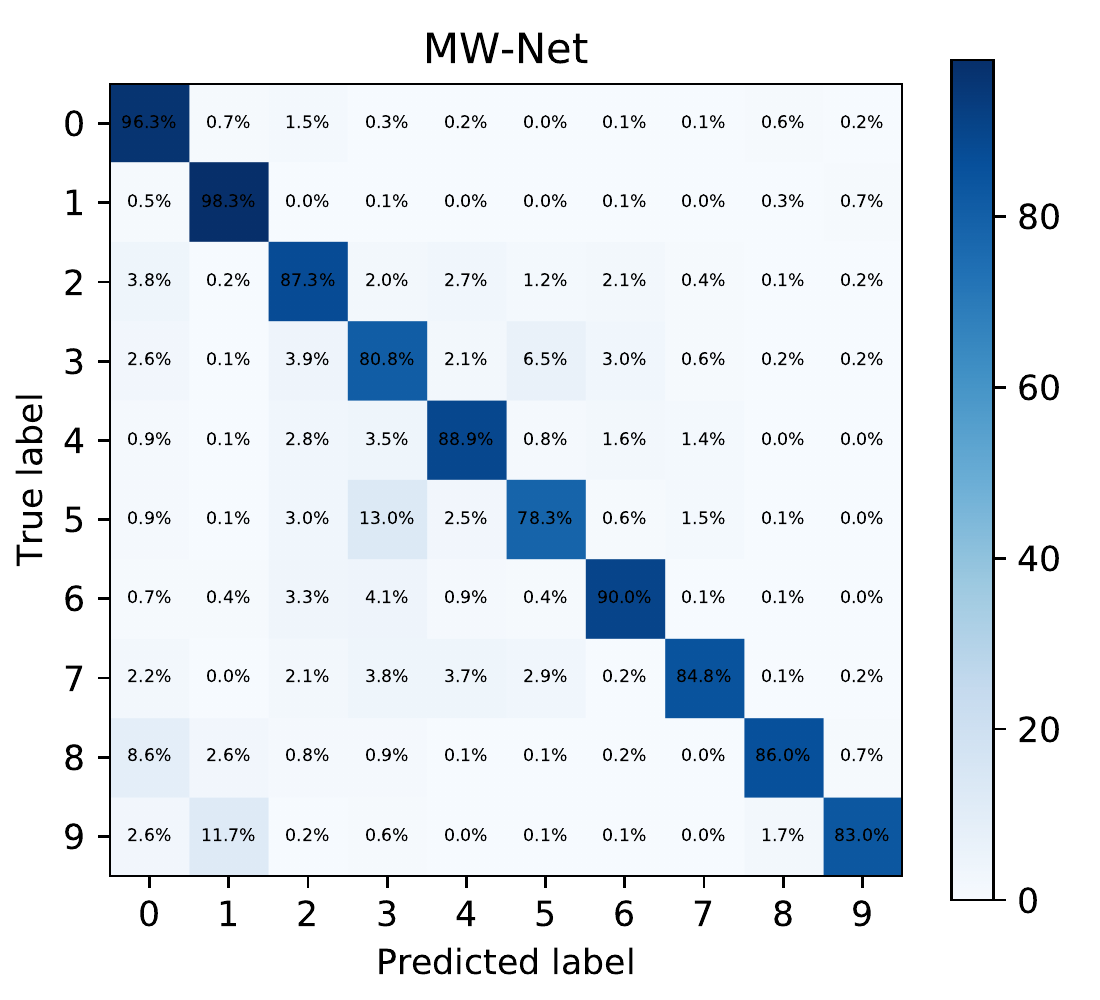}
		\includegraphics[width=0.23\textwidth]{./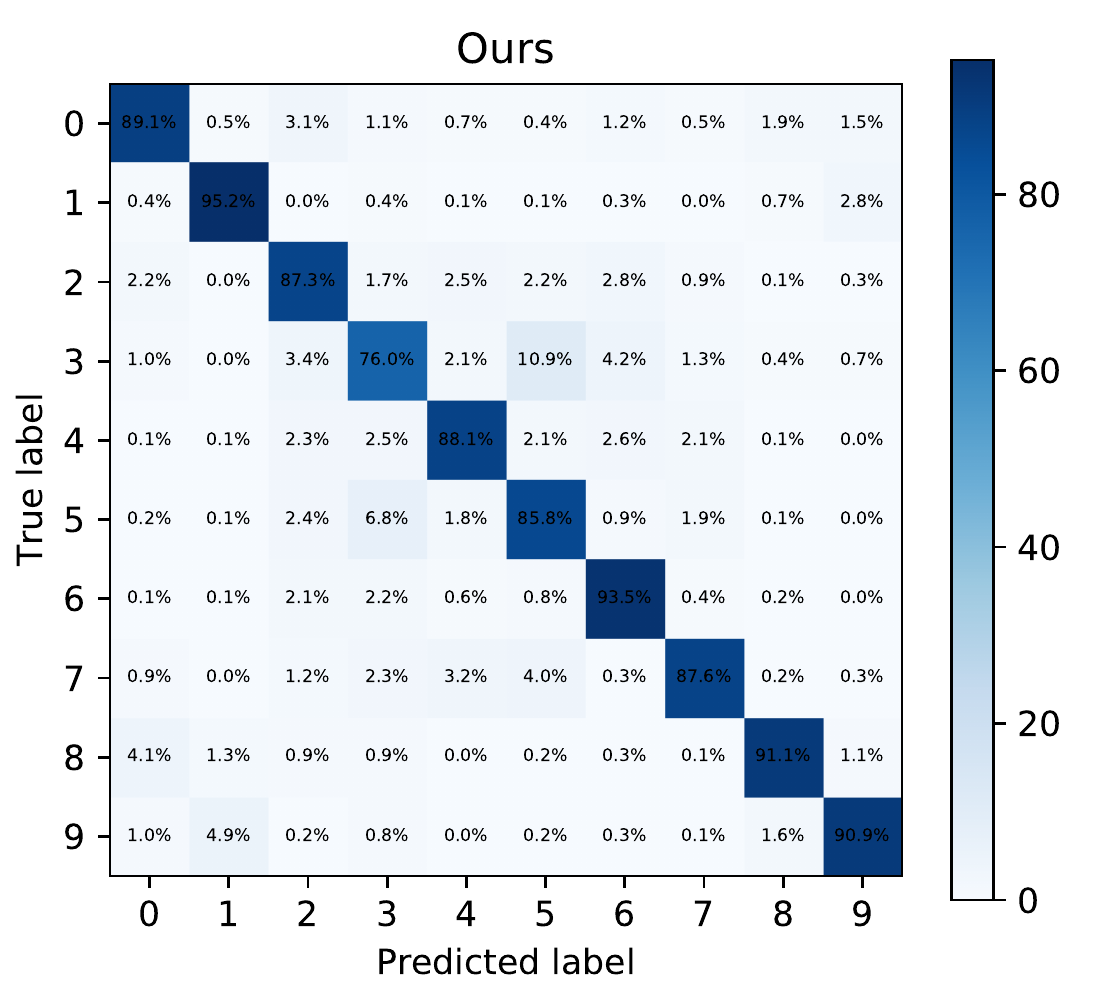}}  \\ \vspace{-3mm}
	\subfigure[ 20 for CIFAR-10]{
		\includegraphics[width=0.23\textwidth]{./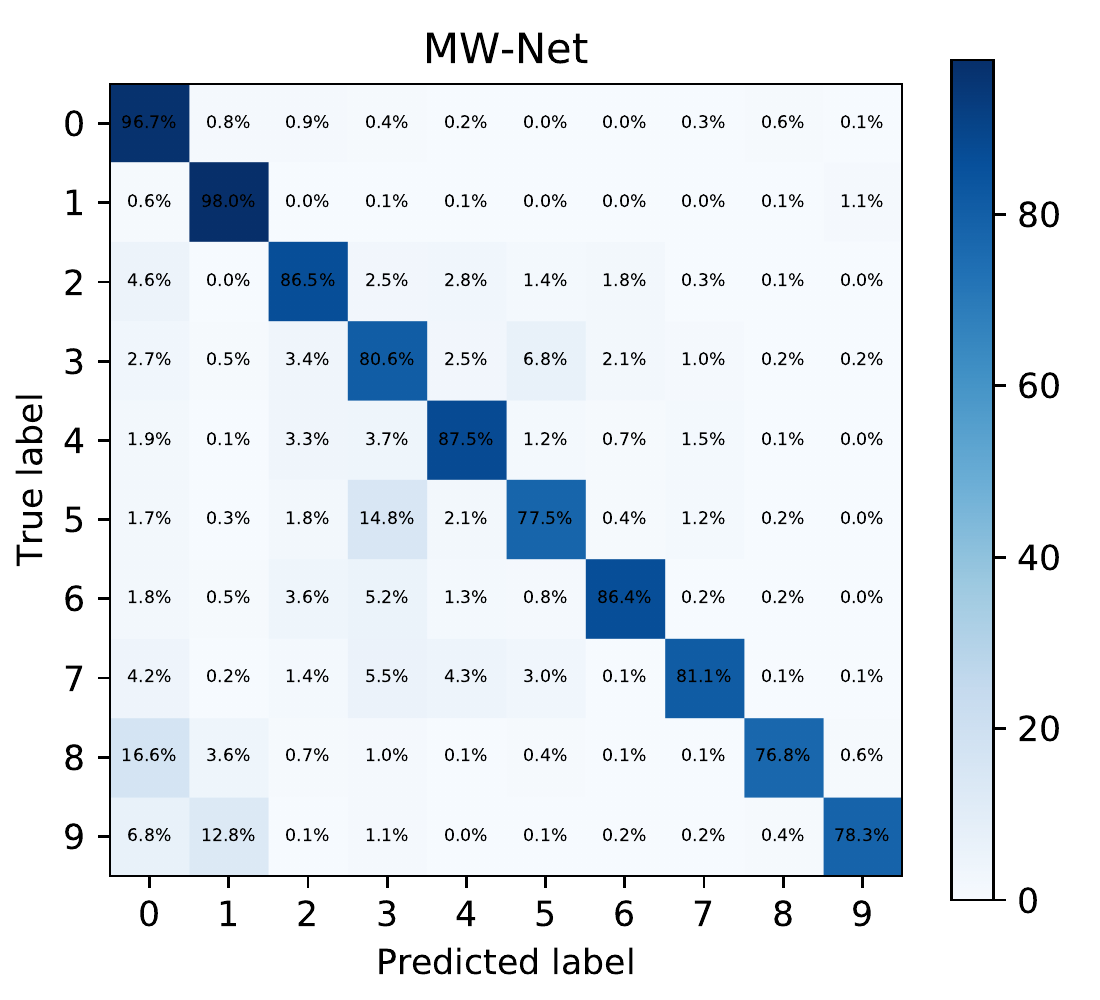}
		\includegraphics[width=0.23\textwidth]{./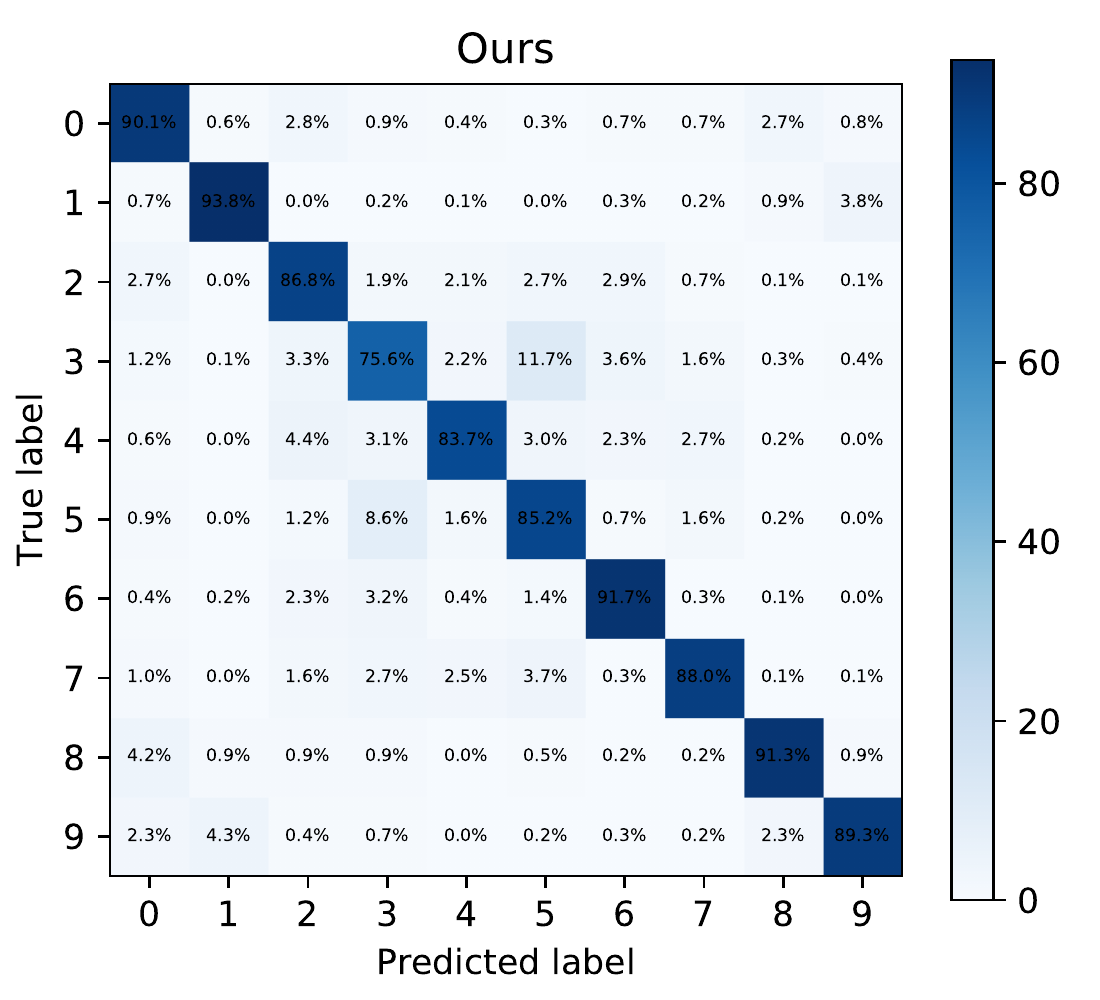}} \ \ \
	\subfigure[ 50 for CIFAR-10]{
		\includegraphics[width=0.23\textwidth]{./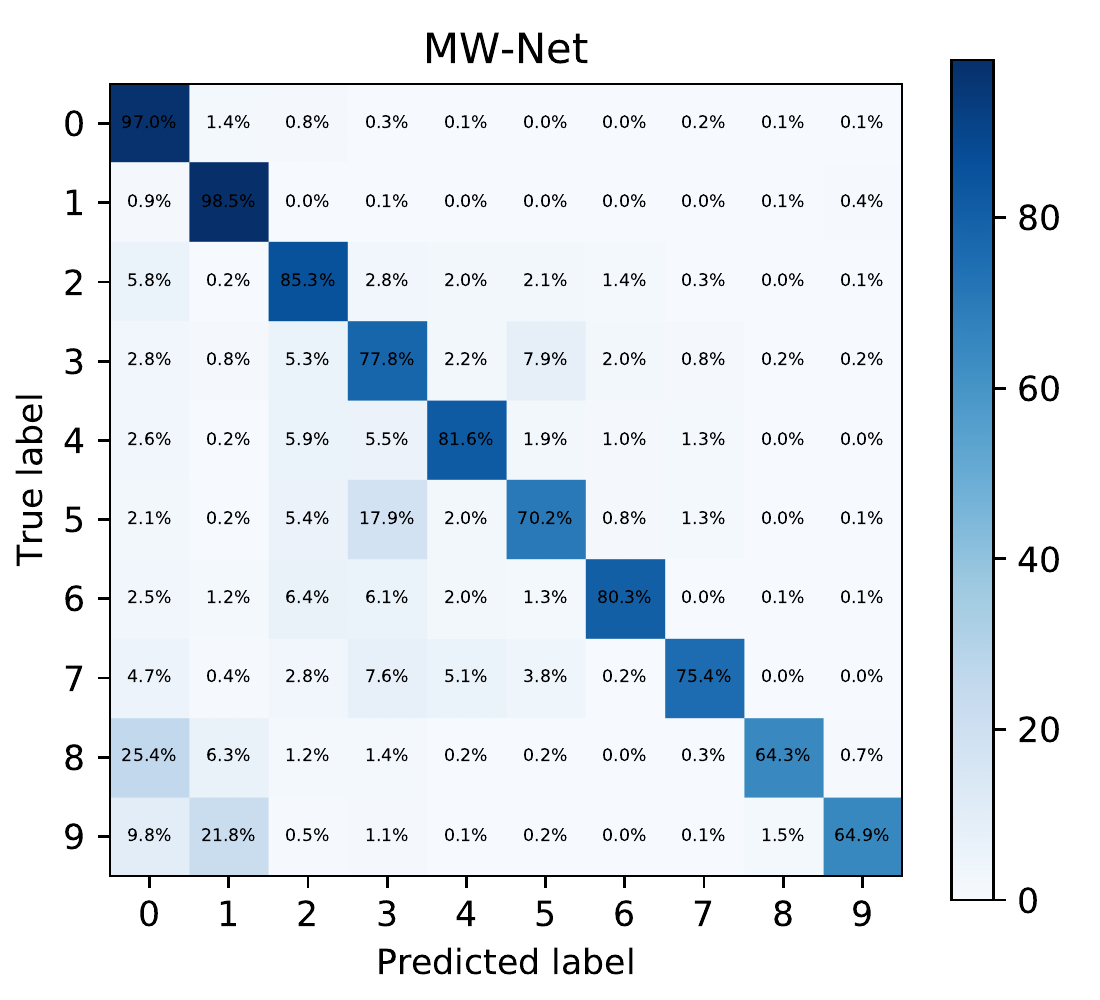}
		\includegraphics[width=0.23\textwidth]{./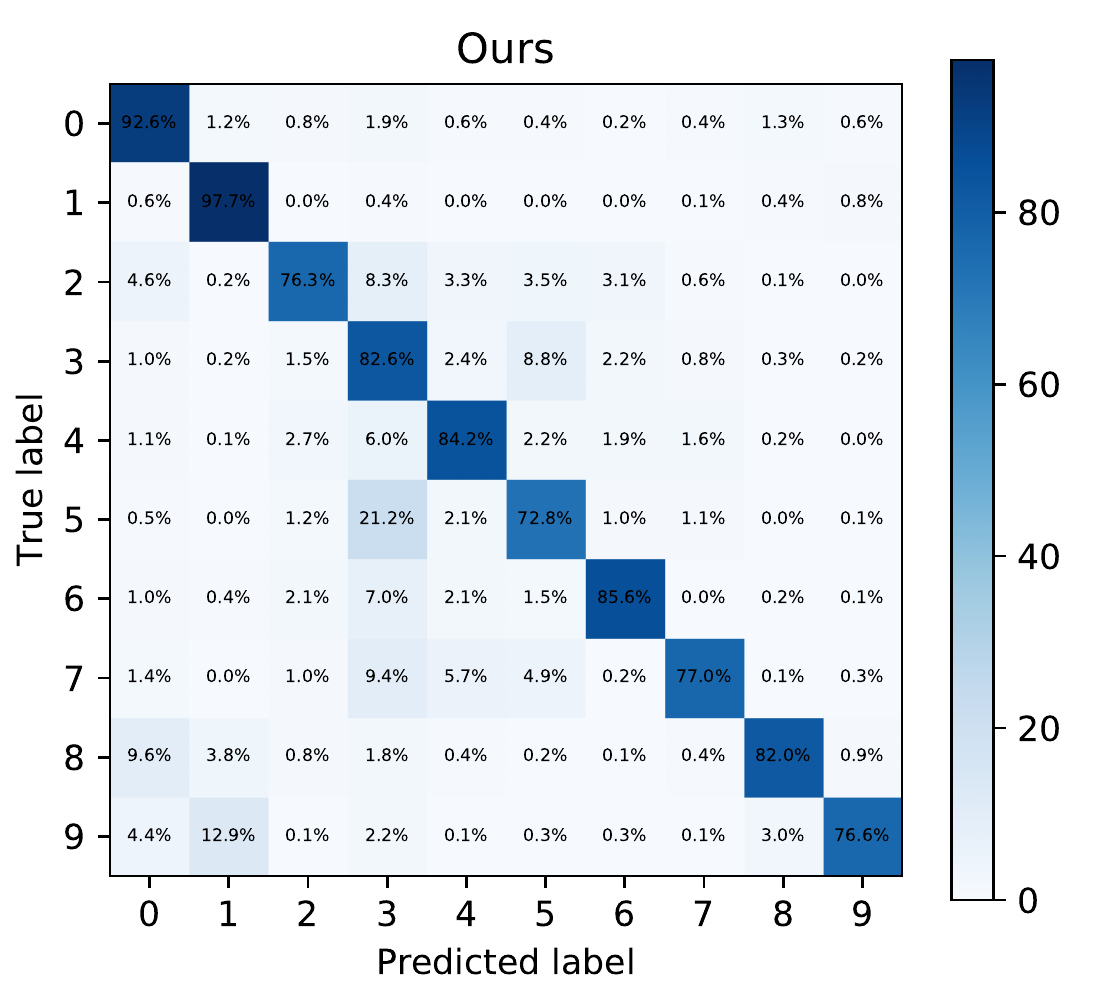}}\\ \vspace{-3mm}
	\subfigure[ 100 for CIFAR-10]{
		\includegraphics[width=0.23\textwidth]{./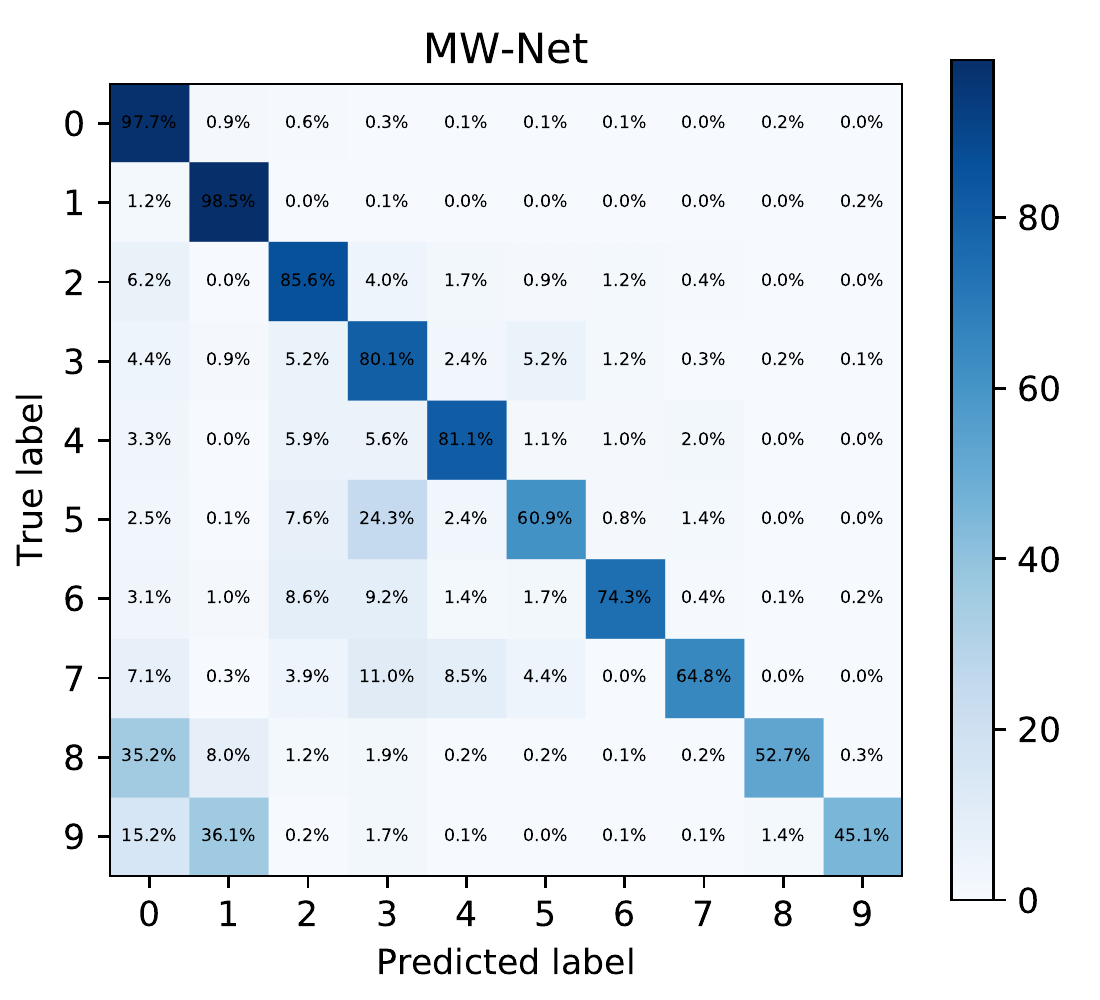}
		\includegraphics[width=0.23\textwidth]{./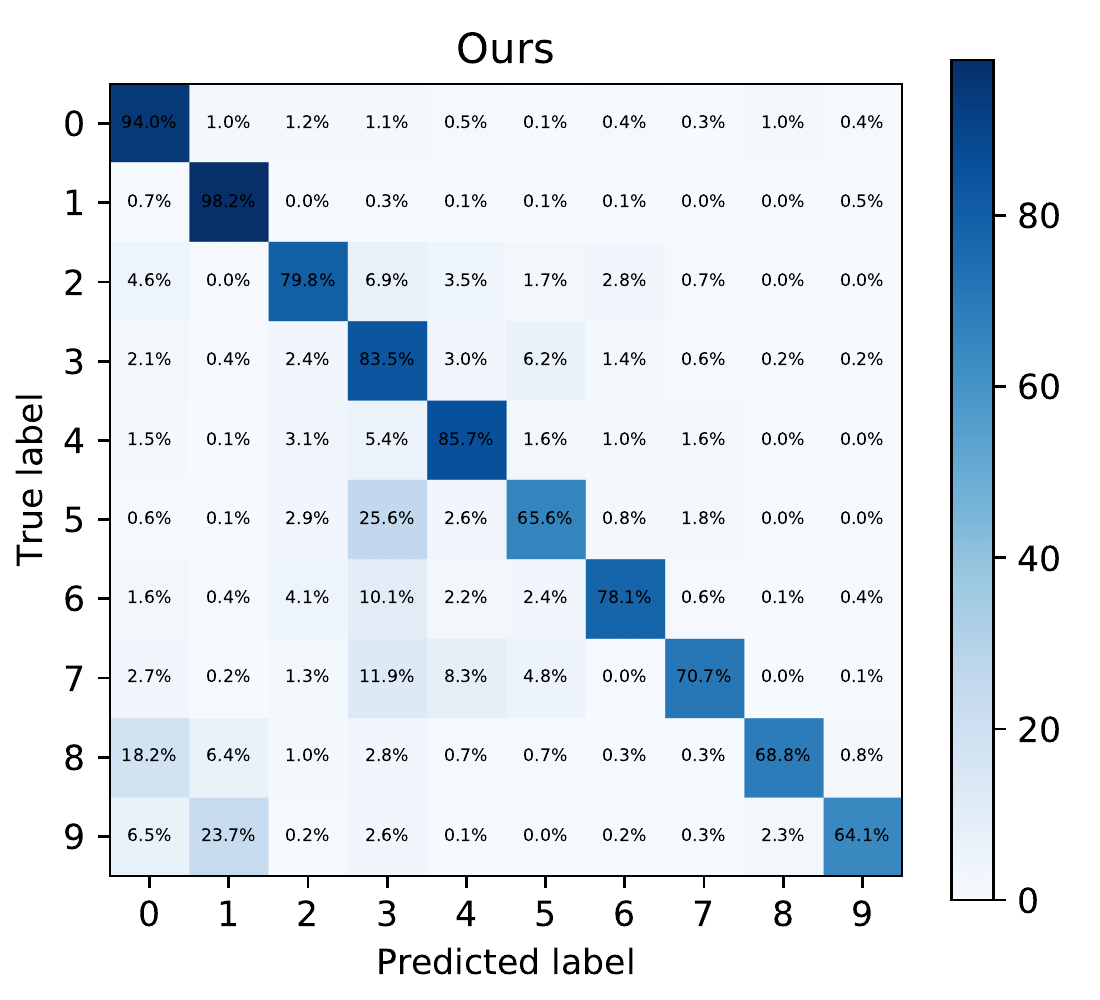}}   \ \ \
	\subfigure[ 200 for CIFAR-10]{
		\includegraphics[width=0.23\textwidth]{./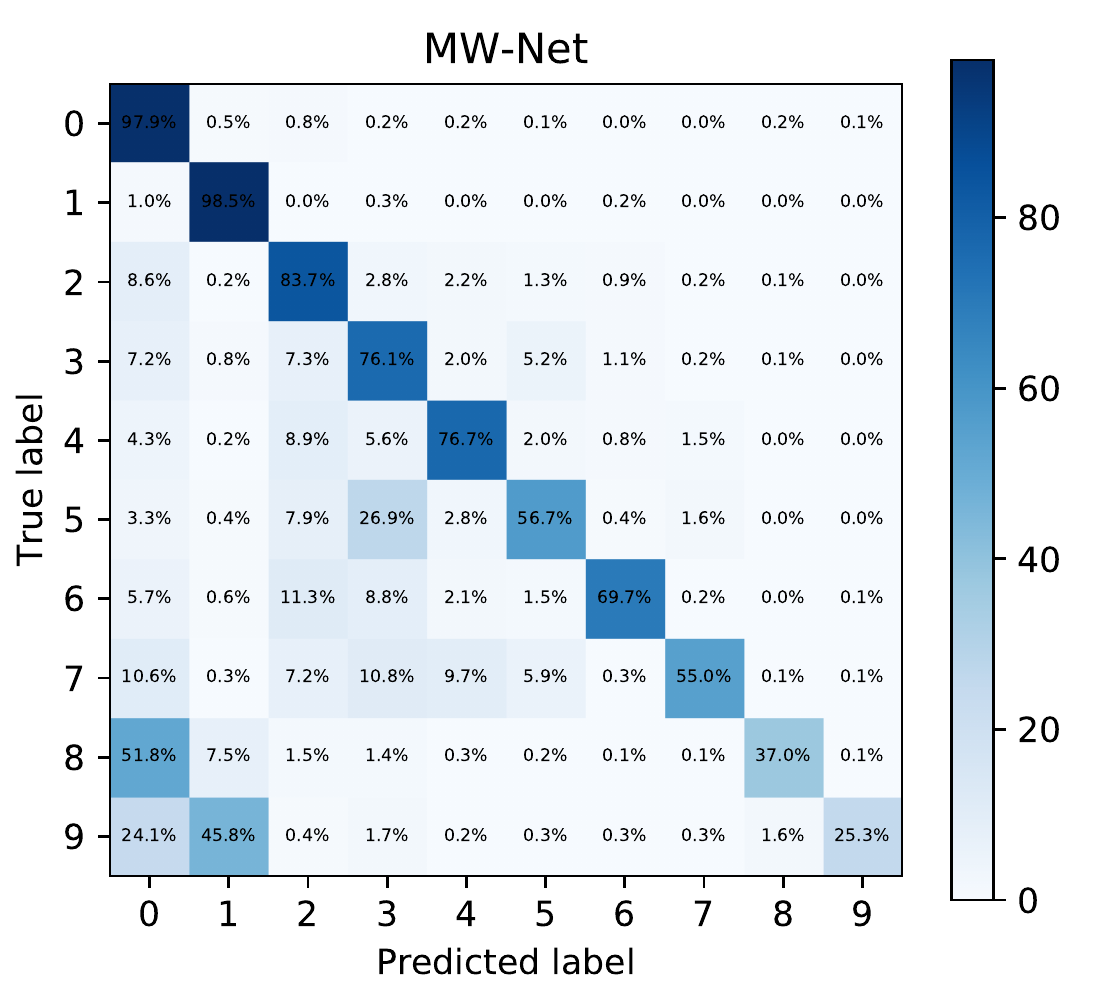}
		\includegraphics[width=0.23\textwidth]{./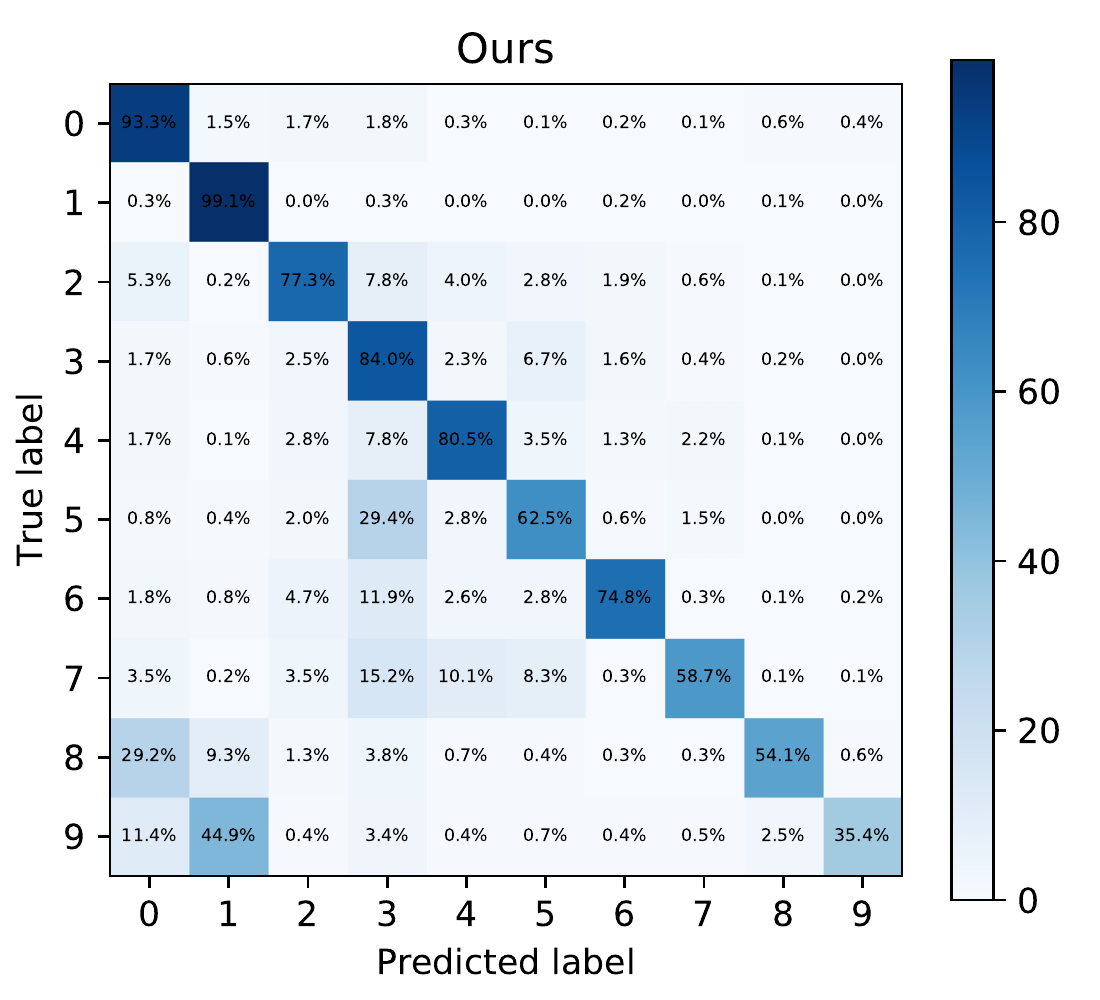}}\vspace{-3mm}
	\caption{Confusion matrices obtained by (left) MW-Net and (right) CMW-Net on CIFAR-10-LT with imbalance factors ranging from 1 to 200.}\label{SM2}
\end{figure*}

\subsection{Feature-independent Label Noise Experiment}
In this series of experiments, we adopted an 18-layer PreAct Resnet \cite{He2017Identity} as our classifier network, with softmax cross-entropy loss by SGD with a momentum 0.9, a weight decay $5 \times 10^{-4}$. For CMW-Net, we set the initial learning rate as 0.1 and the learning rate of classification network is divided by 10 after 80 and 100 epoch (for a total 120 epochs). For the CMW-Net-SL, we set the initial learning rate as 0.01 and the learning rate of classification network is divided by 10 after 150 epoch (for a total 300 epochs) following by Dividemix \cite{li2019dividemix}. The batch size is specified as 128 for all experiments. We adopt Adam optimizer to optimize CMW-Net and the learning rate of CMW-Net is fixed as $10^{-3}$, and the weight decay of CMW-Net is fixed as $10^{-4}$. We repeat the experiments with 3 random trials and report the mean value and standard deviation.

Motivated by M-correction \cite{arazo2019unsupervised} and Dividemix \cite{li2019dividemix}, we selected the meta data at each epoch according to the training loss. Specifically, we explore to create the meta dataset dynamically along iteration, based on the high-quality clean samples as well as its high-quality pseudo labels from the training set (with lowest losses) as an unbiased estimator of the clean data-label distribution in each iteration of our algorithm. To make the meta dataset balanced, we selected 10 images per class. In this case, the performance of meta dataset can be served as an indicator of whether CMW-Net is trained to filter noisy samples and generalize to clean test distribution.

Such meta dataset generation strategy may lack of diversity patterns to characterize the latent clean data-label distribution. To overcome this, we explore to utilize mixup technique \cite{zhang2018mixup} to enrich the variety of our proposed meta dataset distribution while maintaining the unbiasedness in terms of clean test distribution. The hyperparameter of convex combination is randomly sampled from a Beta distribution $Beta(1,1)$. Extensive experiments have verified the effectiveness of such created meta dataset from training dataset. Such property makes our meta-learning algorithm feasible to be applied to real-world biased datasets, where it is generally hard to collect an ideal high-quality extra clean meta dataset. We also use such meta dataset generation strategy in all our noisy labels experiments as well as all real-world biased datasets.

Figs. \ref{SM3} and \ref{SM4} show the empirical pdfs of cross-entropy loss for each class on CIFAR-10 dataset under symmetric and asymmetric noises with varying noise rates, respectively. The corresponding weighting functions and weight distributions over the training examples learned by MW-Net \cite{shu2019meta} and the proposed CMW-Net are also depicted. It can be easily observed that compared with MW-Net, CMW-Net has better flexibility to deal with both training data bias cases, even for inter-class heterogenous data biases. Specifically, the proposed CMW-Net can adaptively adjust its weighting schemes to adapt variations of noise rates, and behaves consistently with the underlying data biased patterns, naturally leading to its better performance on distinguishing clean and noisy images. Note that even for high noise rate (e.g., 80\%) scenarios, our method still shows fine capability of distinguishing clean and noisy images.

To further improve the learning effect of CMW-Net, we introduce additional soft label supervision to build the CMW-Net-SL strategy.
Figs. \ref{SM5} and \ref{SM6} show the confusion matrices learned by two methods, respectively. Specifically, the confusion matrices obtained by CMW-Net almost correspond to the noise transition matrices, and those calculated by CMW-Net-SL contain the refurbished labels by soft labels. Although the noise rate was in relatively high levels (e.g., 60\% and 80\%), most of the diagonal entries had probability larger than 0.95, implying the effectiveness of CMW-Net-SL on its fine label correction ability. This side information is thus validated to be able to compensate beneficially to the sample reweighting learning, and ameliorate both weighting scheme extracting and robust classifier learning in a stable way.

\subsection{Feature-dependent Label Noise Experiment}
In this series of experiments, we use ResNet-34 \cite{he2016deep} as the classifier network, with softmax cross-entropy loss by SGD with a momentum 0.9, a weight decay $5 \times 10^{-4}$ and an initial learning rate 0.1. The learning rate of ResNet-34 is set as CosineAnnealingWarmRestarts \cite{Loshchilov2017SGDR}. The learning rate of CWN-Net is fixed as $10^{-3}$, and the weight decay of CMW-Net is fixed as $10^{-4}$. The batch size is 128 for all experiments. We repeat the experiments with 3 random trials and report the mean value and standard deviation.

\begin{figure*}[t]
	\centering
	\subfigcapskip=-1mm
	\subfigure[Empirical pdf of cross-entropy loss for each class on CIFAR-10 dataset with varying noise rates under symmetric noise. ]{
		\label{fig1a} 
		\includegraphics[width=0.23\textwidth]{./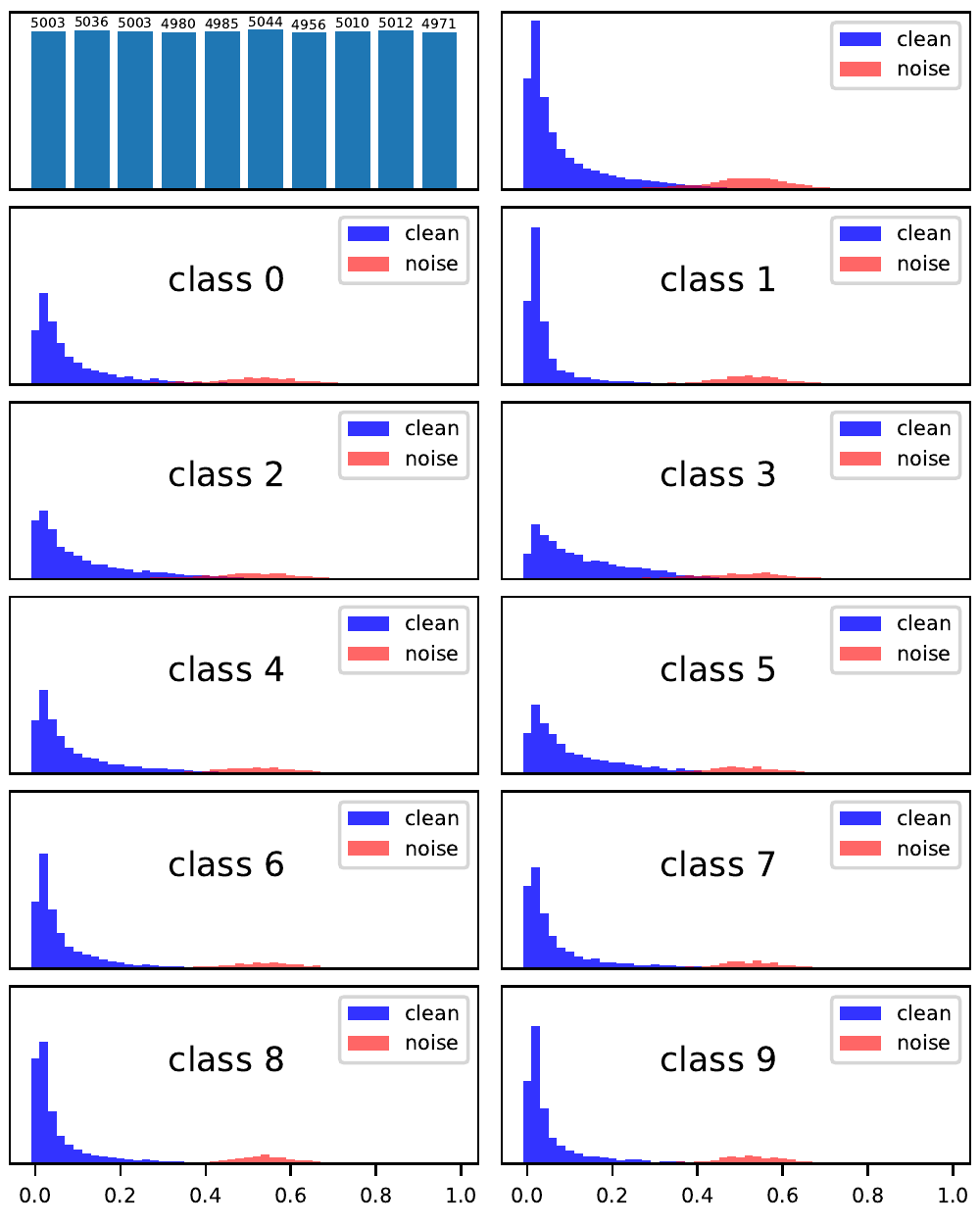} \ \ \
		\includegraphics[width=0.23\textwidth]{./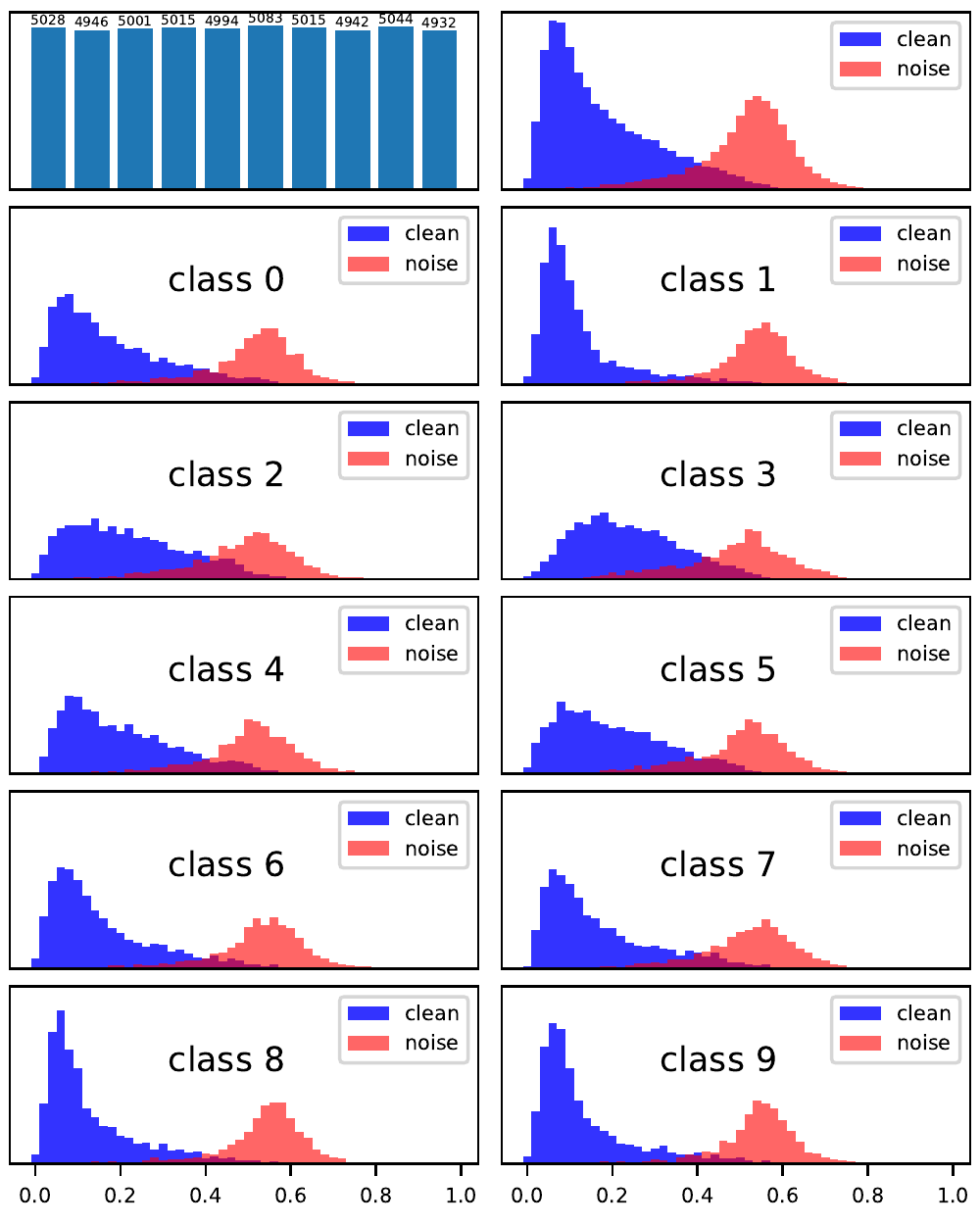}  \ \ \
		\includegraphics[width=0.23\textwidth]{./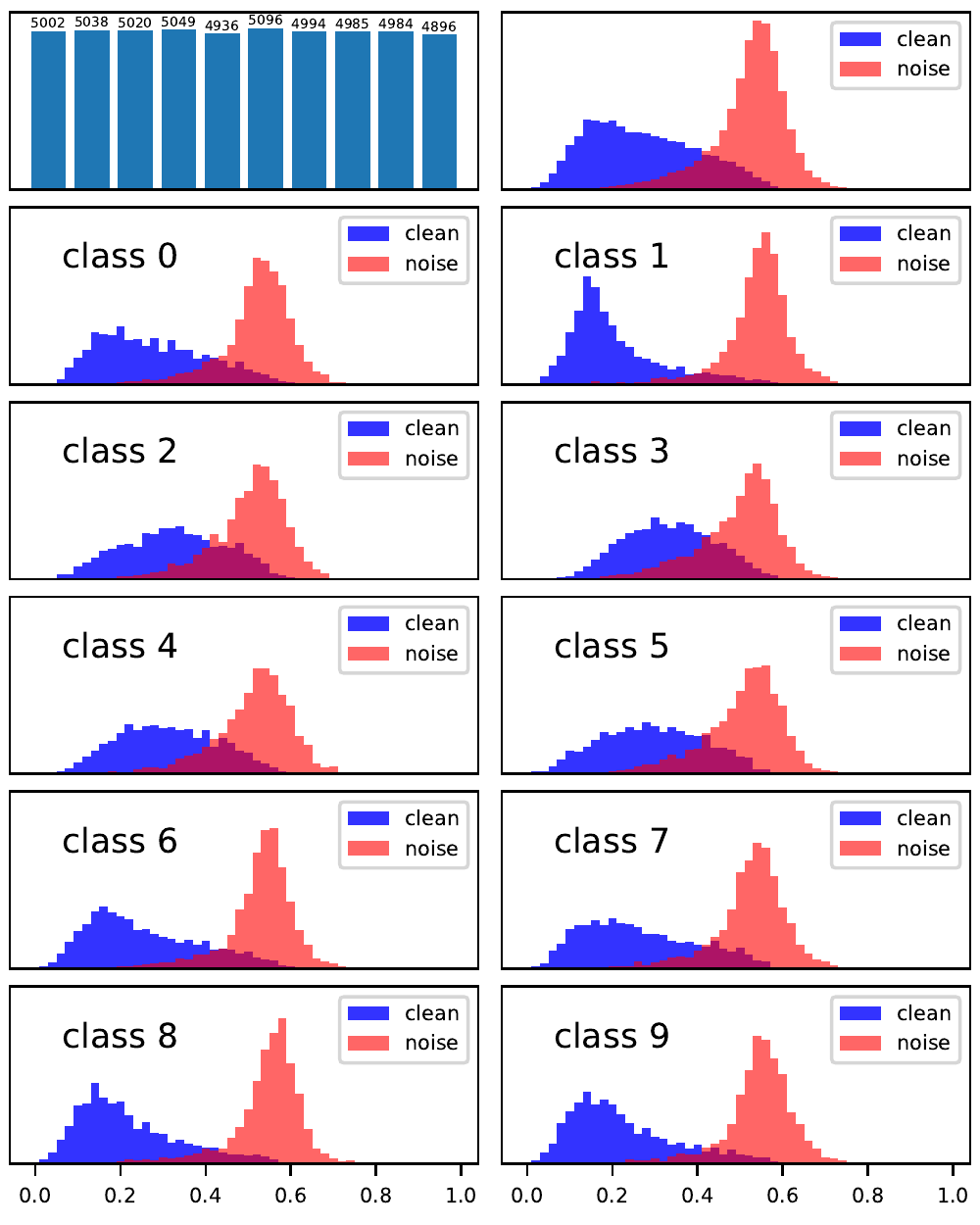} \ \ \
		\includegraphics[width=0.23\textwidth]{./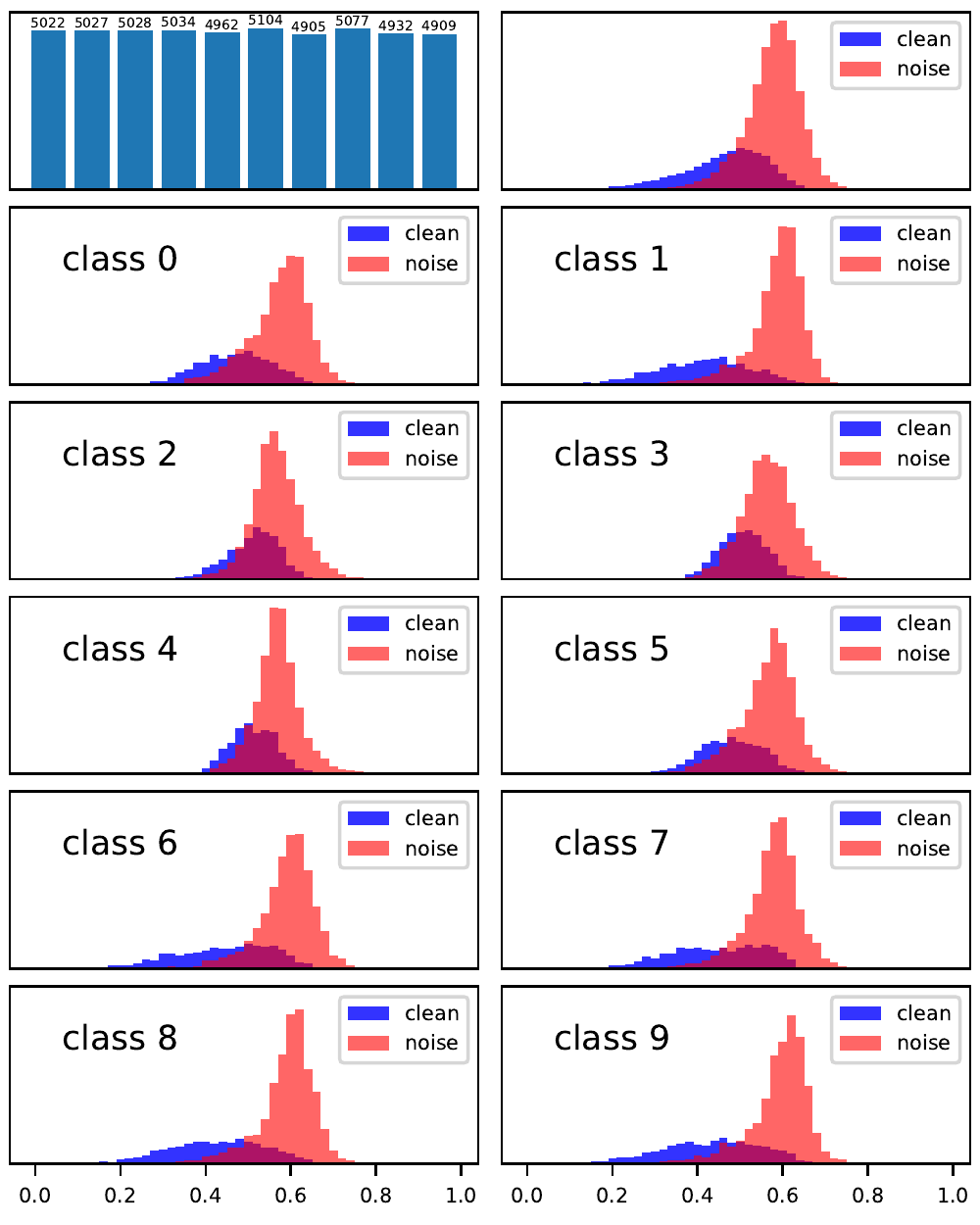}}
	\\ \vspace{-0.2cm}
	\subfigure[Weighting functions and histograms of all sample weights over all training examples learned by MW-Net under symmetric noise.]{
		\label{fig1b} 
		\includegraphics[width=0.23\textwidth]{./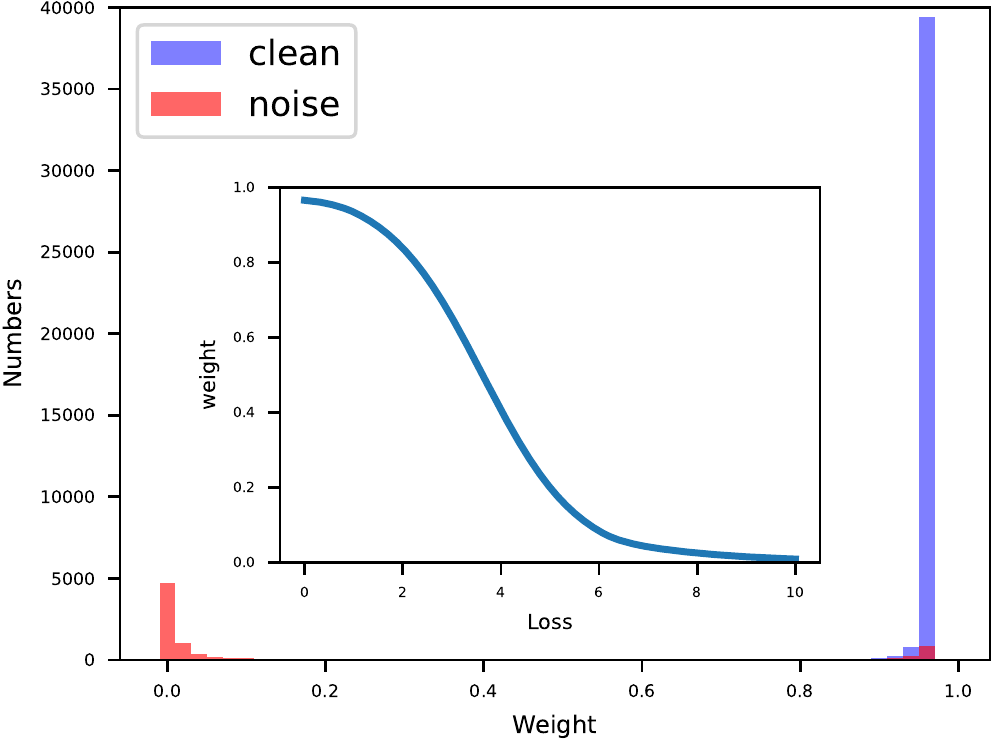}\ \ \
		\includegraphics[width=0.23\textwidth]{./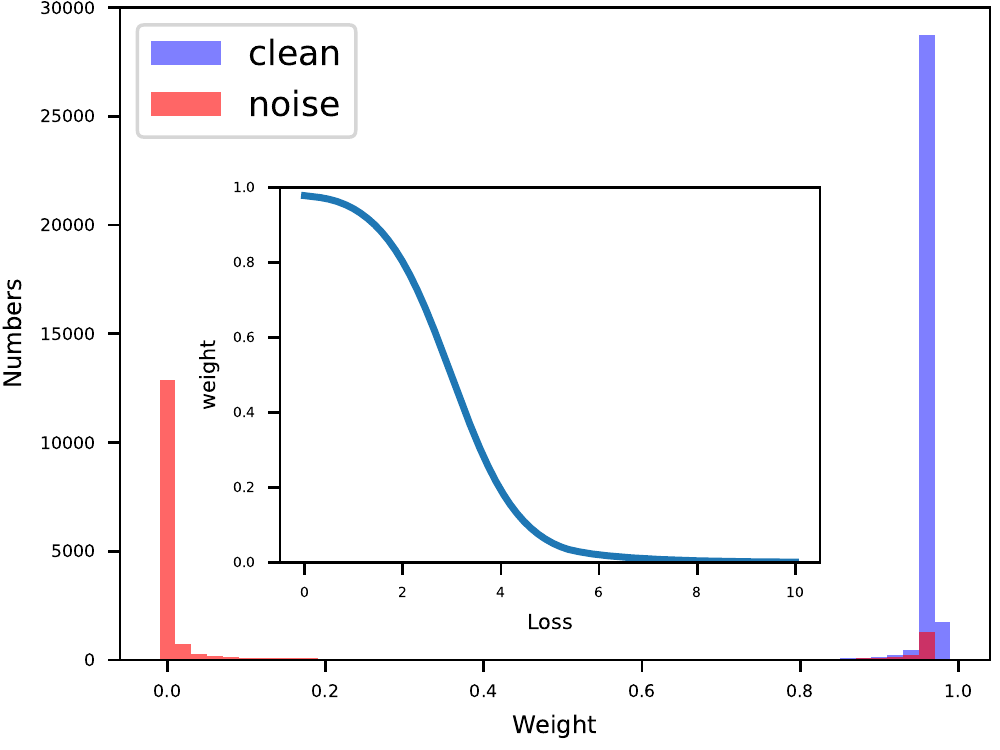}\ \ \
		\includegraphics[width=0.23\textwidth]{./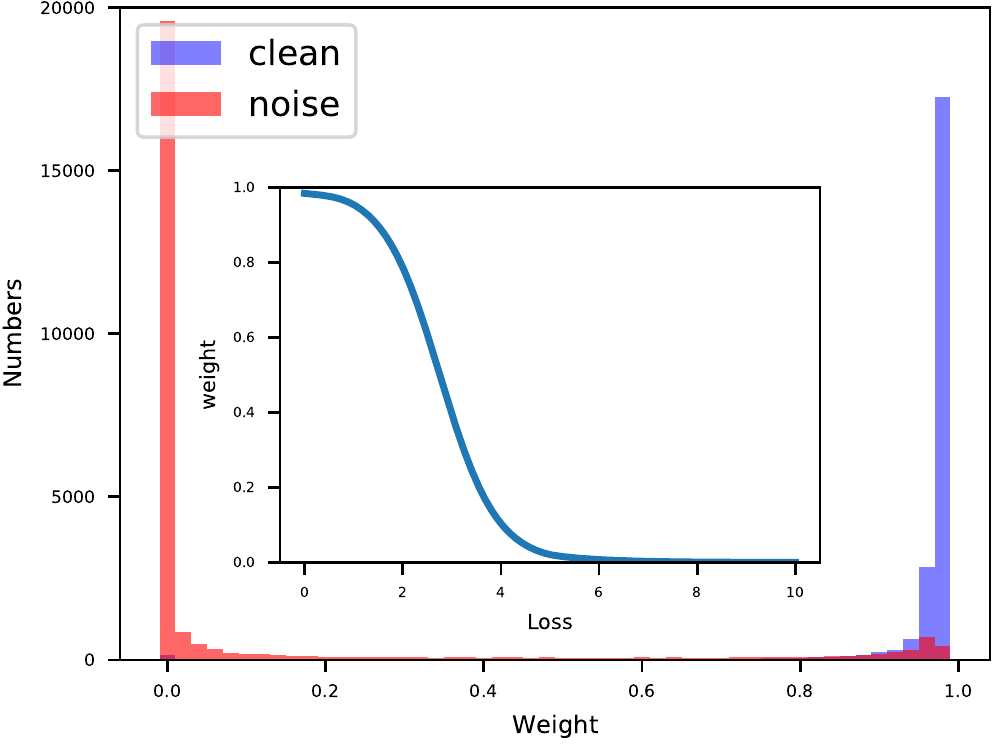}\ \ \
		\includegraphics[width=0.23\textwidth]{./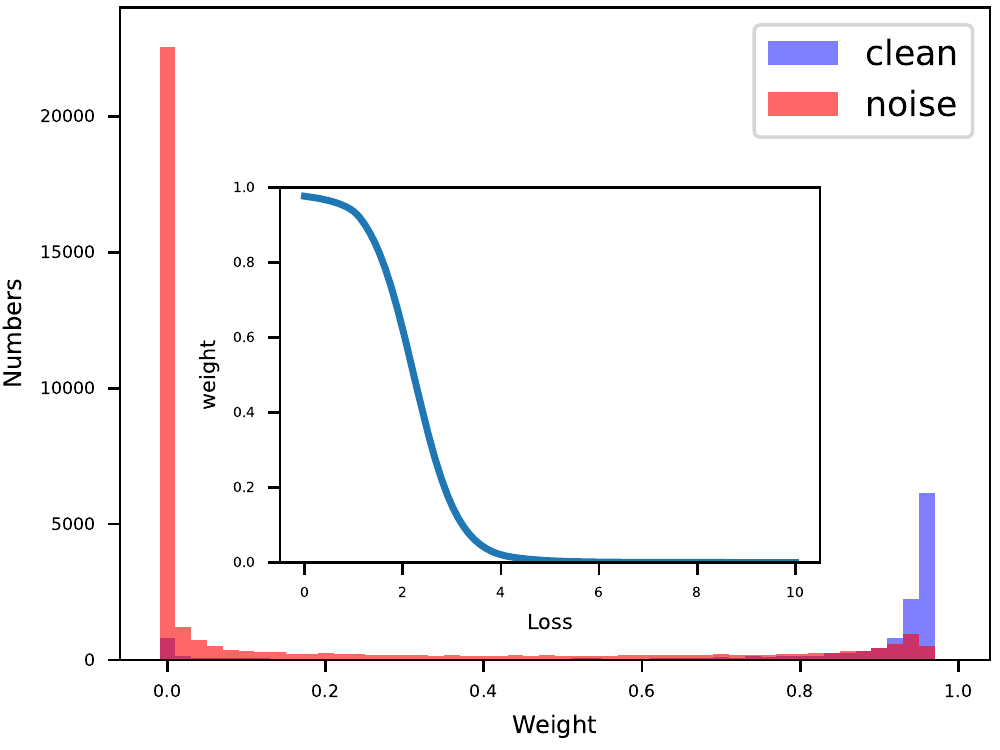}}
	\\ \vspace{-0.2cm}
	\subfigure[Weighting functions and histograms of all sample weights over all training examples learned by CMW-Net under symmetric noise.]{
		\label{fig1d} 
		\includegraphics[width=0.23\textwidth]{./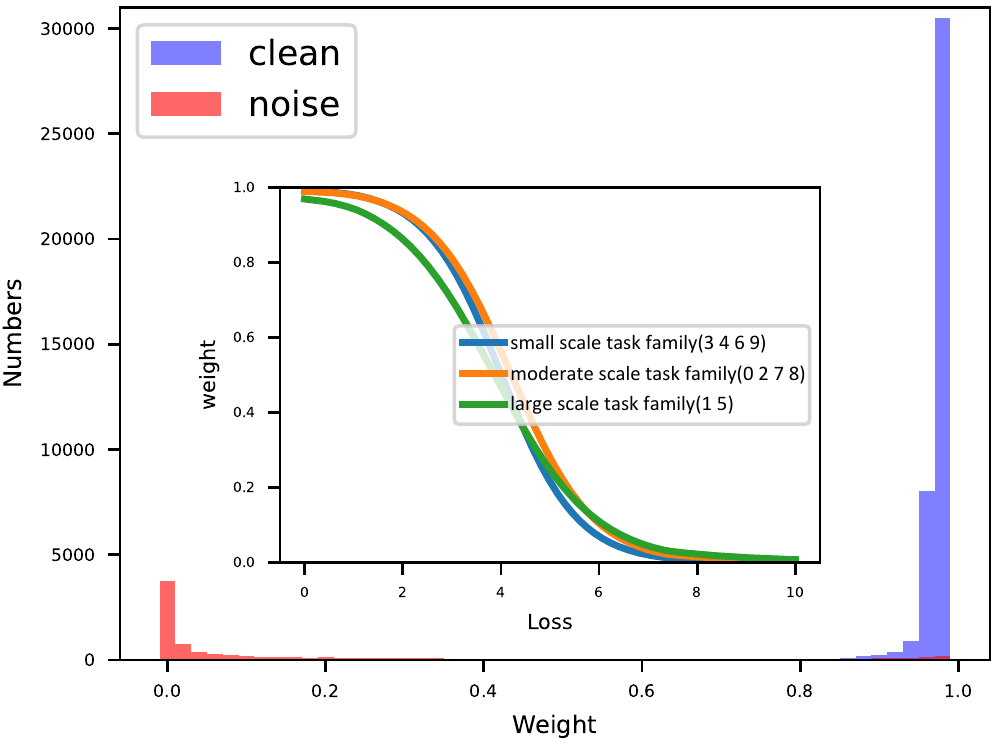}\ \ \
		\includegraphics[width=0.23\textwidth]{./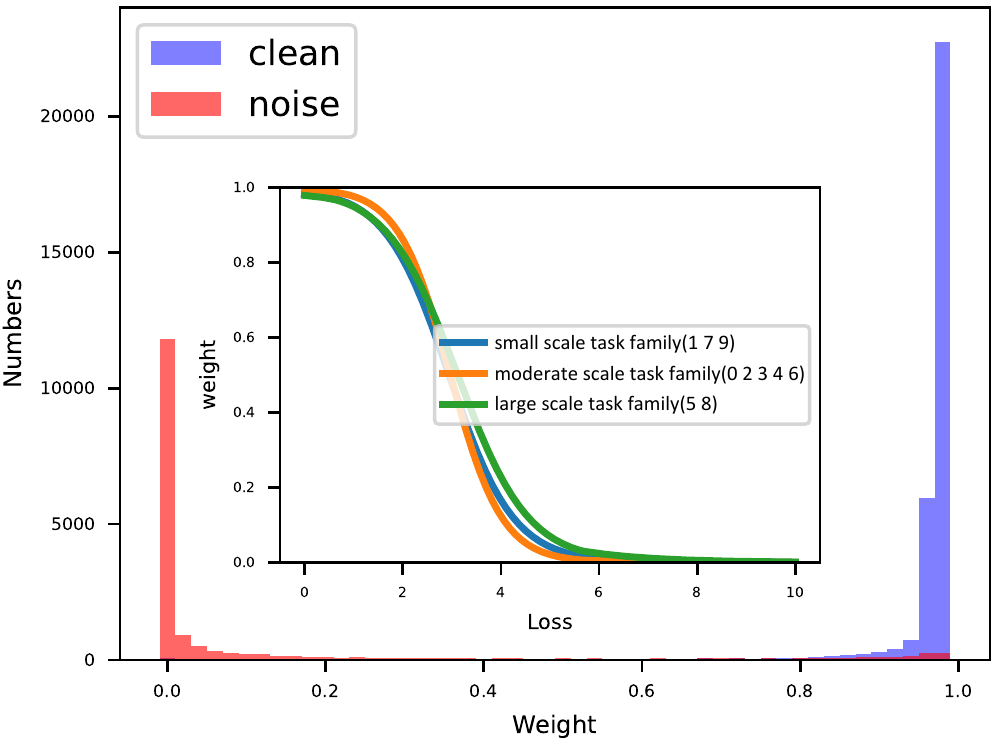}\ \ \
		\includegraphics[width=0.23\textwidth]{./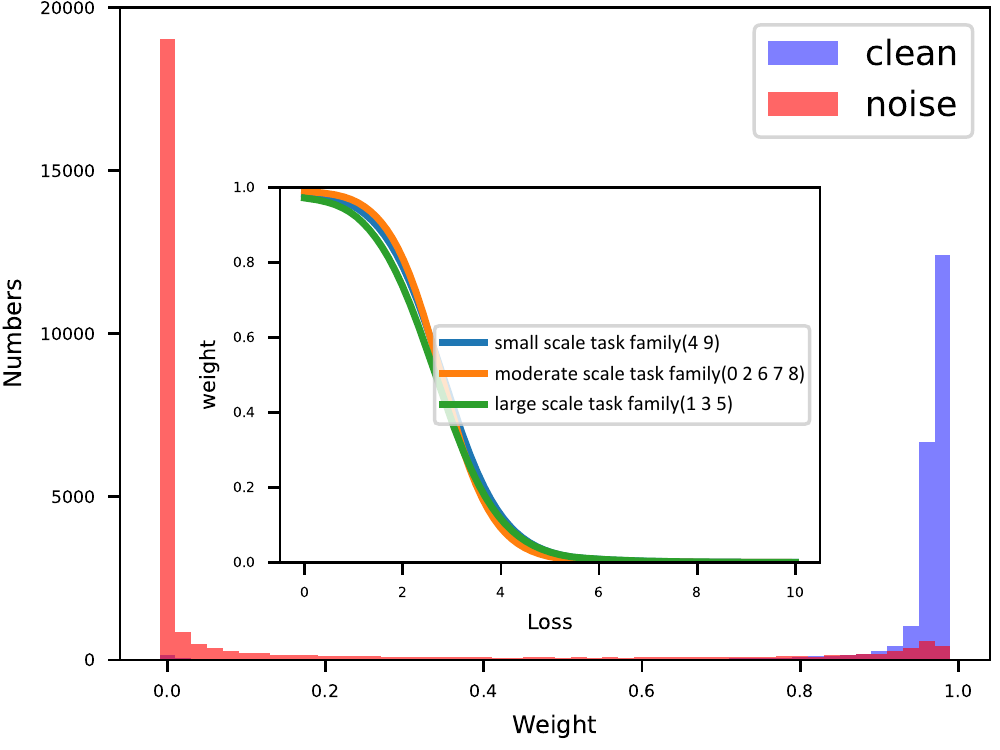}\ \ \
		\includegraphics[width=0.23\textwidth]{./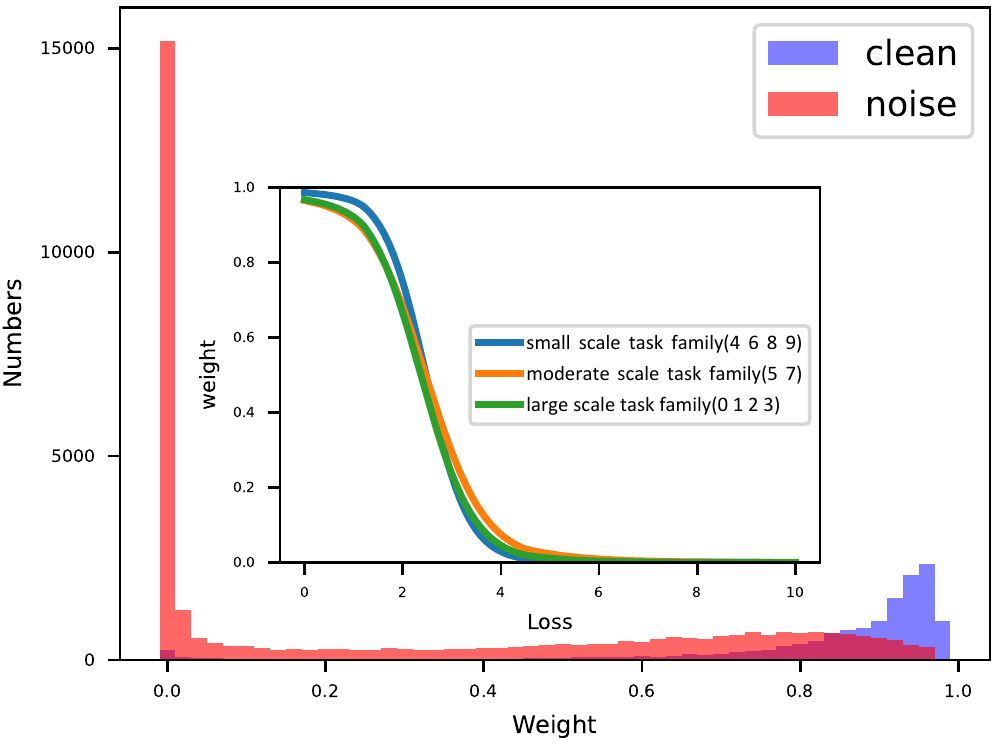}} \vspace{-0.3cm}
	\caption{(a) Empirical pdf of the cross-entropy loss calculated on all samples of each class on CIFAR-10 with varying noise rates (from left to right, the noise rates are 20\%, 40\%, 60\%, 80\%) under symmetric noise; (b)(c) The weighting functions and histograms of all sample weights over all training examples learned by MW-Net and CMW-Net.}\label{SM3}
\end{figure*} \vspace{-0.3cm}
\begin{figure*}[!h]
	\centering
	\subfigcapskip=-2mm
	\subfigure[Symmetry Noise (20\%)]{
		\includegraphics[width=0.23\textwidth]{./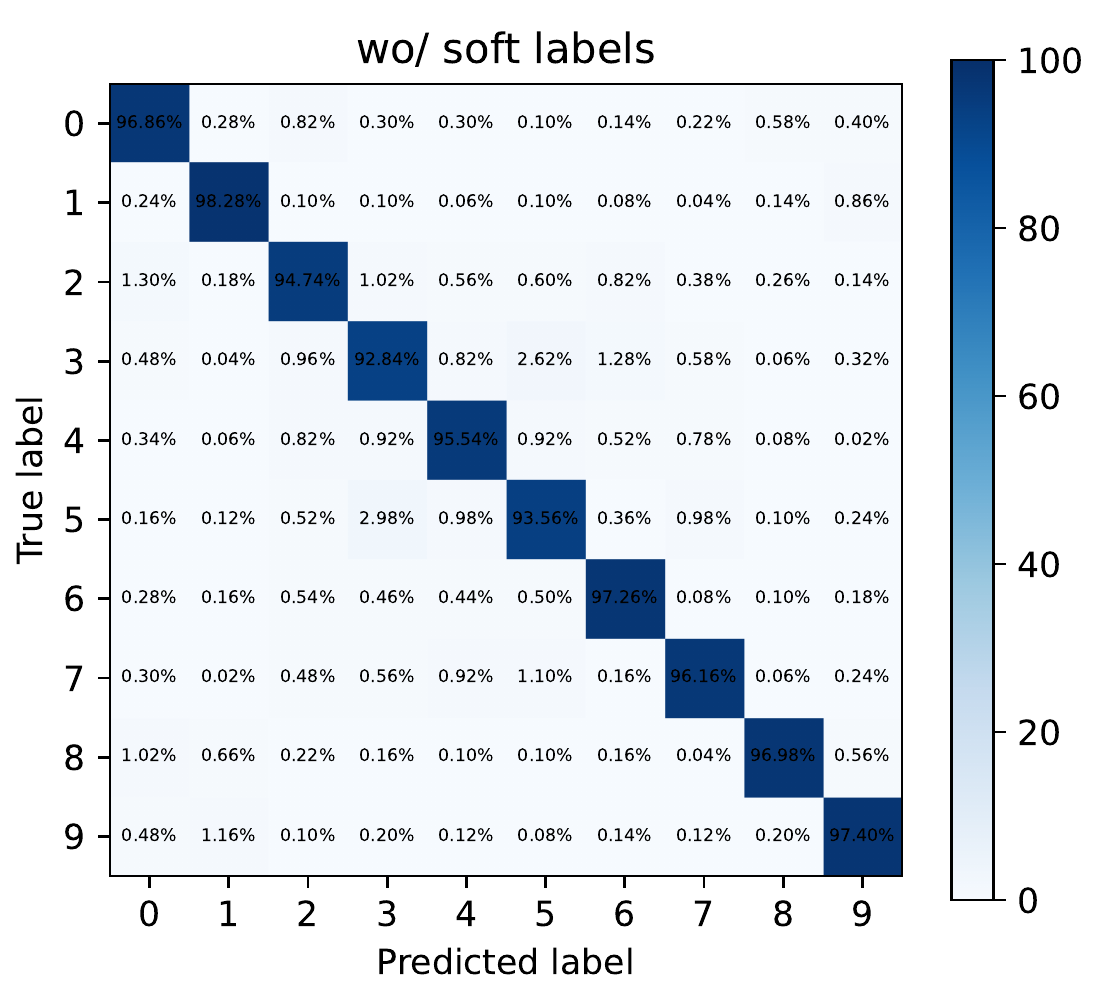}
		\includegraphics[width=0.23\textwidth]{./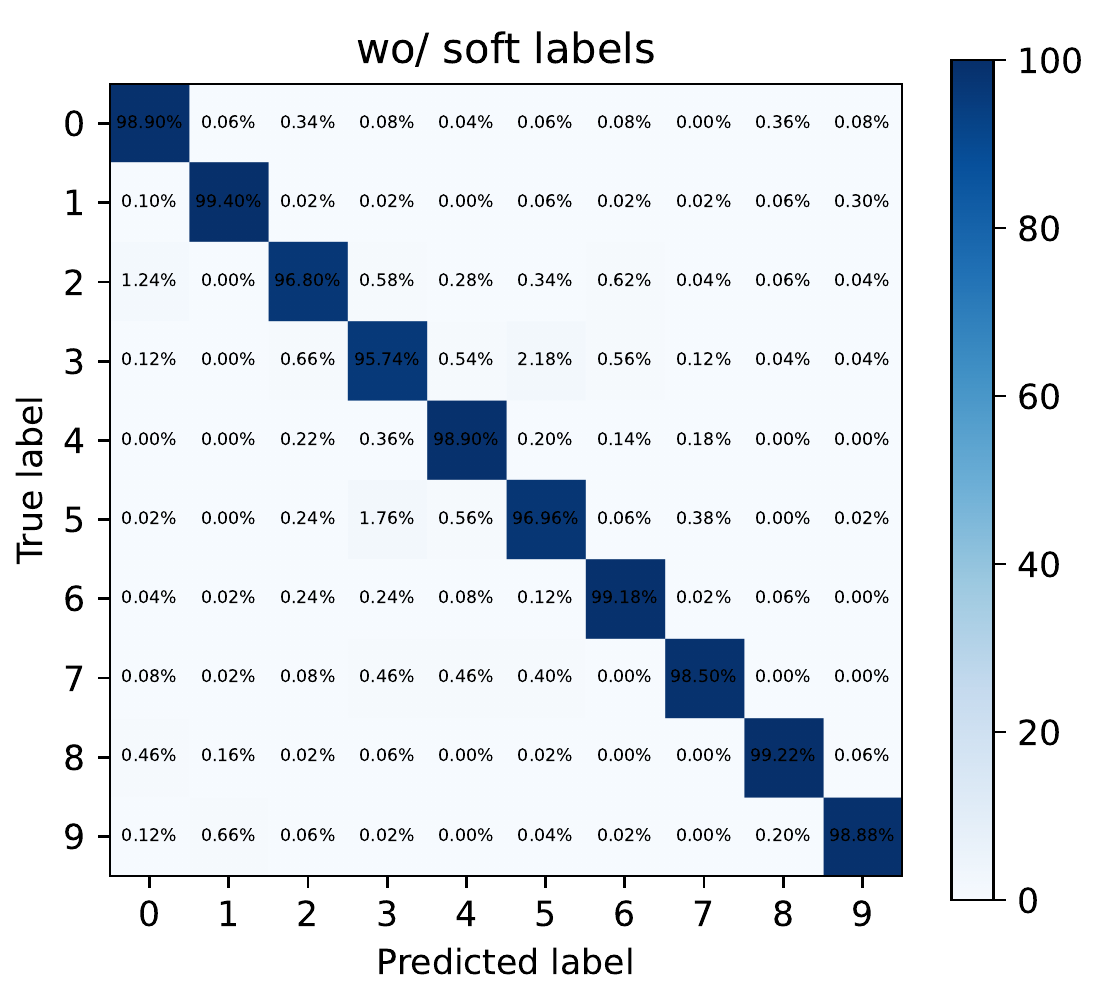}} \ \
	\subfigure[Symmetry Noise (40\%)]{
		\includegraphics[width=0.23\textwidth]{./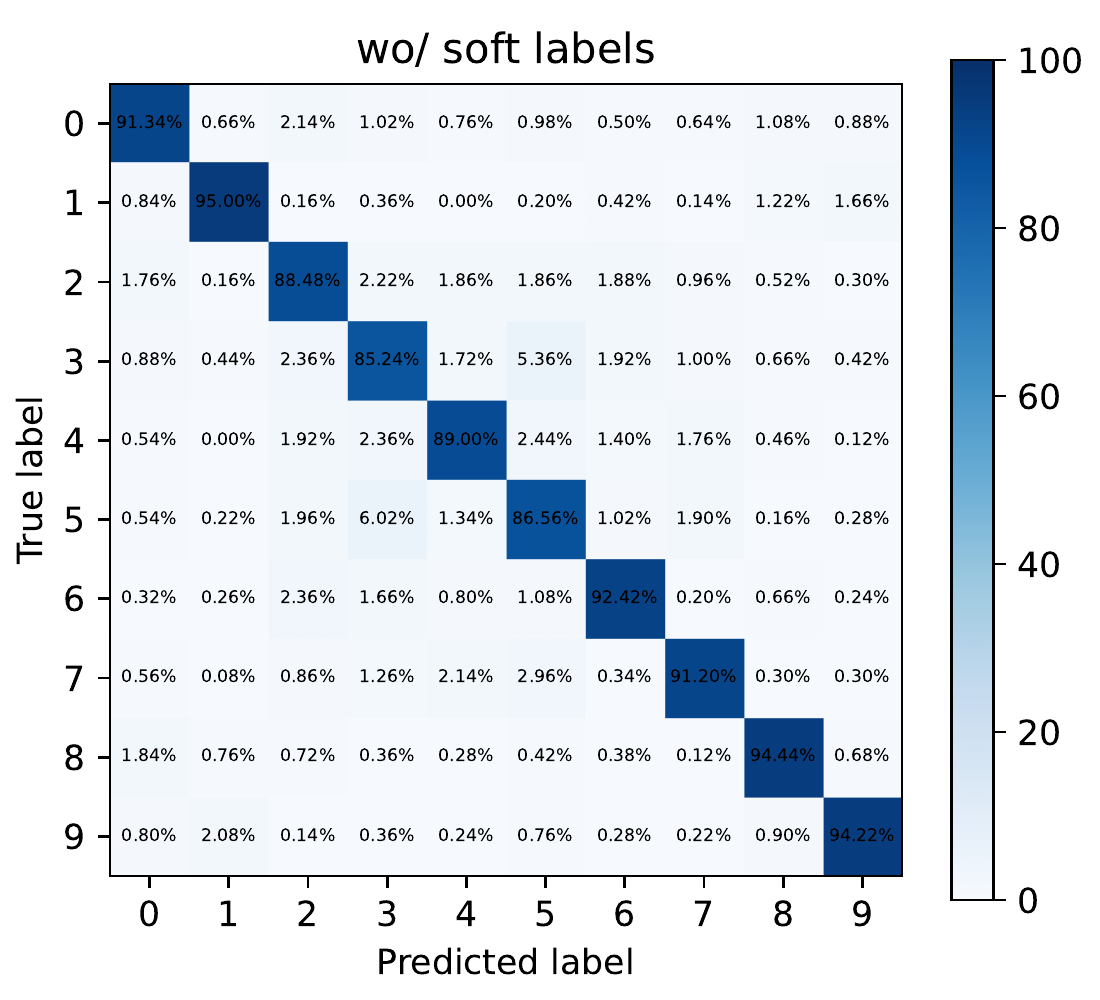}
		\includegraphics[width=0.23\textwidth]{./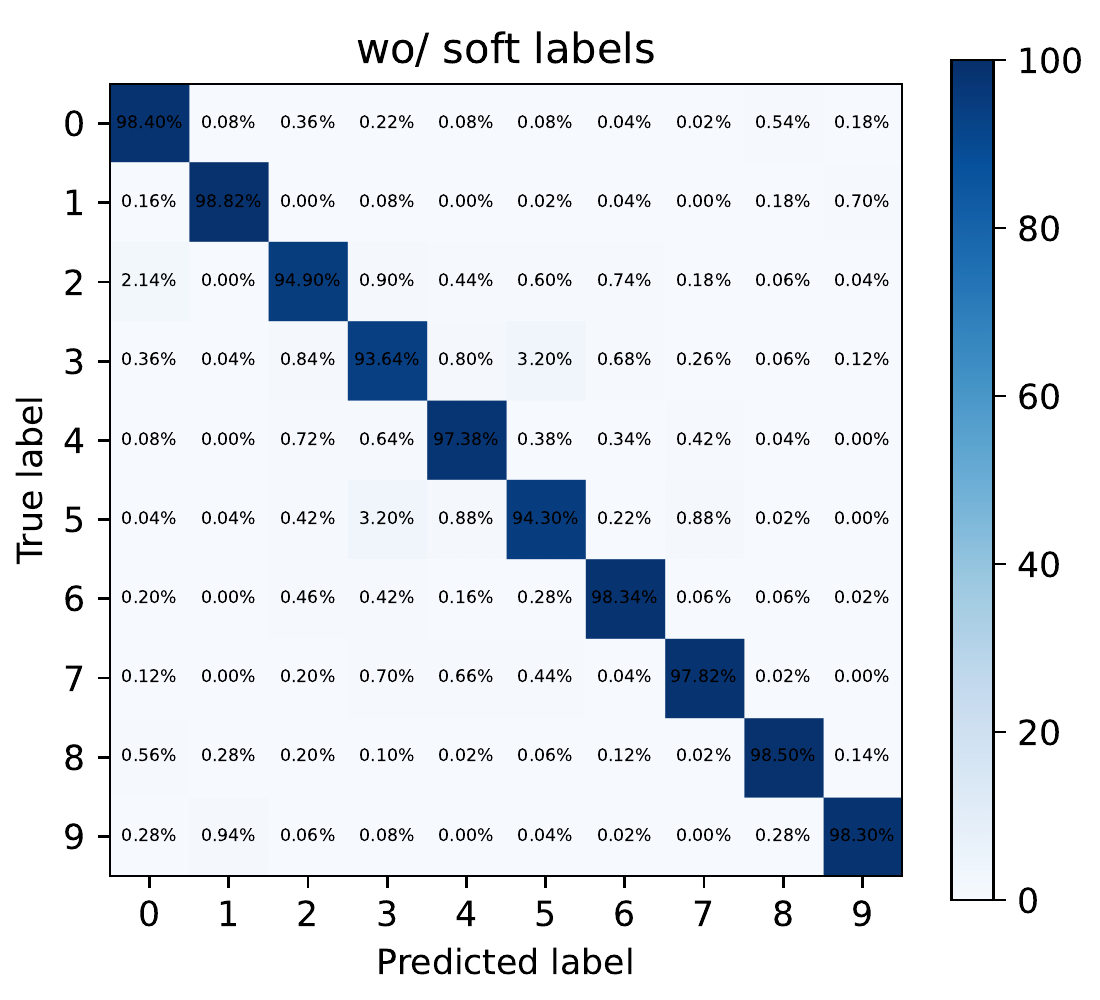}}  \\ \vspace{-4mm}
	\subfigure[Symmetry Noise (60\%)]{
		\includegraphics[width=0.23\textwidth]{./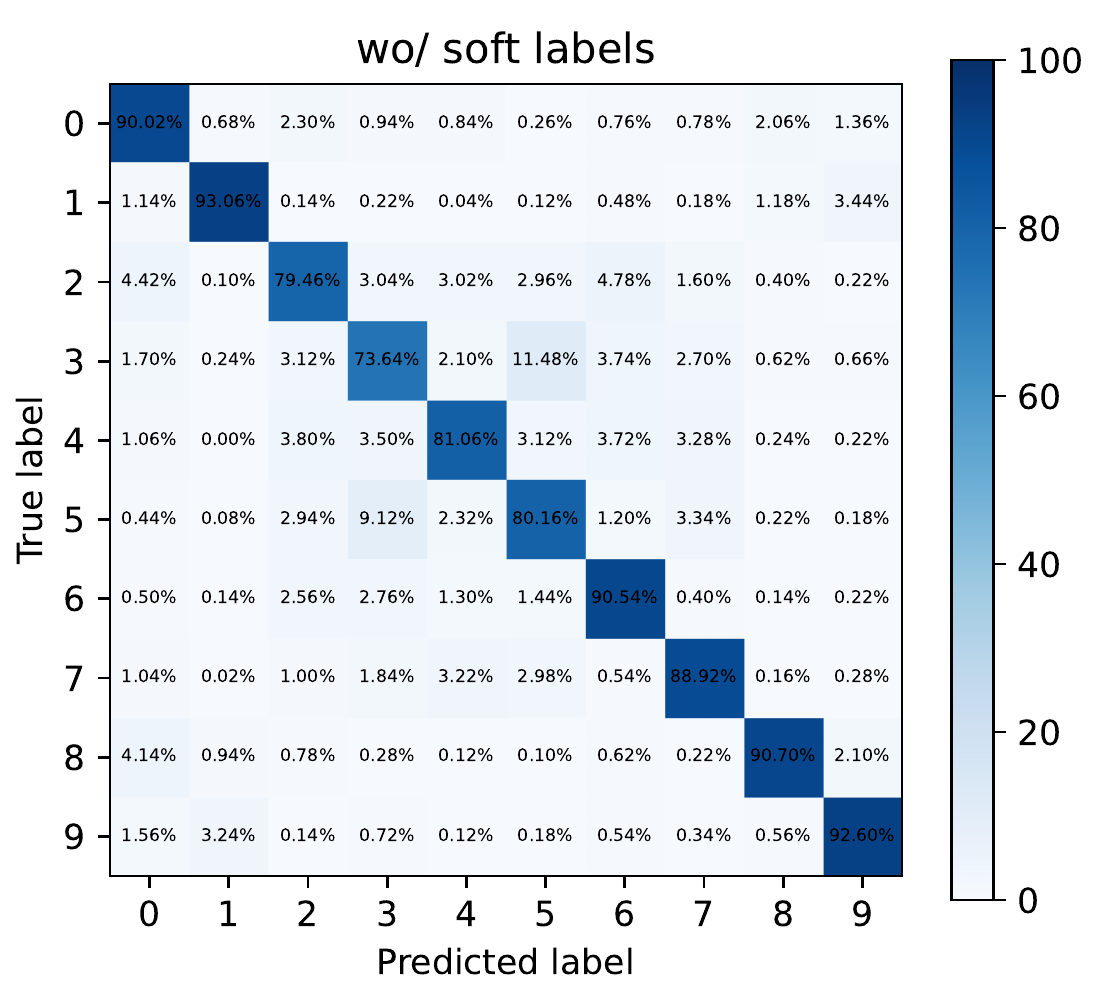}
		\includegraphics[width=0.24\textwidth]{./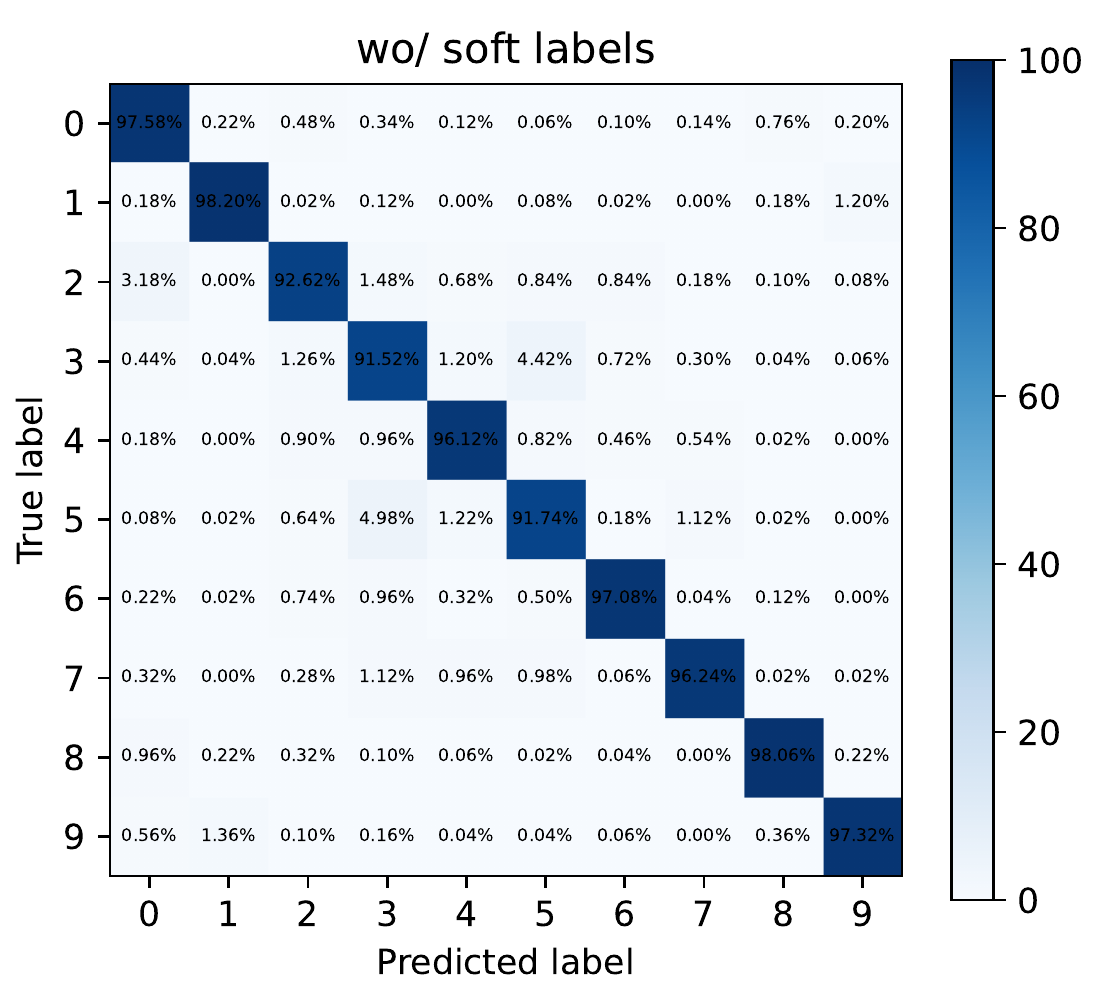}} \ \
	\subfigure[Symmetry Noise (80\%)]{
		\includegraphics[width=0.23\textwidth]{./fig/add/confusion_cmwn4_sym_80.pdf}
		\includegraphics[width=0.23\textwidth]{./fig/add/confusion_cmwn5_sym_80.pdf}} \vspace{-4mm}
	\caption{Confusion matrices obtained by CMW-Net without (left) or with (right) soft label amelioration on CIFAR-10 with symmetry noise with varying noise rates ranging from 20\% to 80\%.}\label{SM5}
\end{figure*}
\begin{figure*}[t]
	\centering
	\subfigcapskip=-1mm
	\subfigure[Empirical pdf of cross-entropy loss for each class on CIFAR-10 dataset with varying noise rates under asymmetric noise. ]{
		\label{fig1a} 
		\includegraphics[width=0.23\textwidth]{./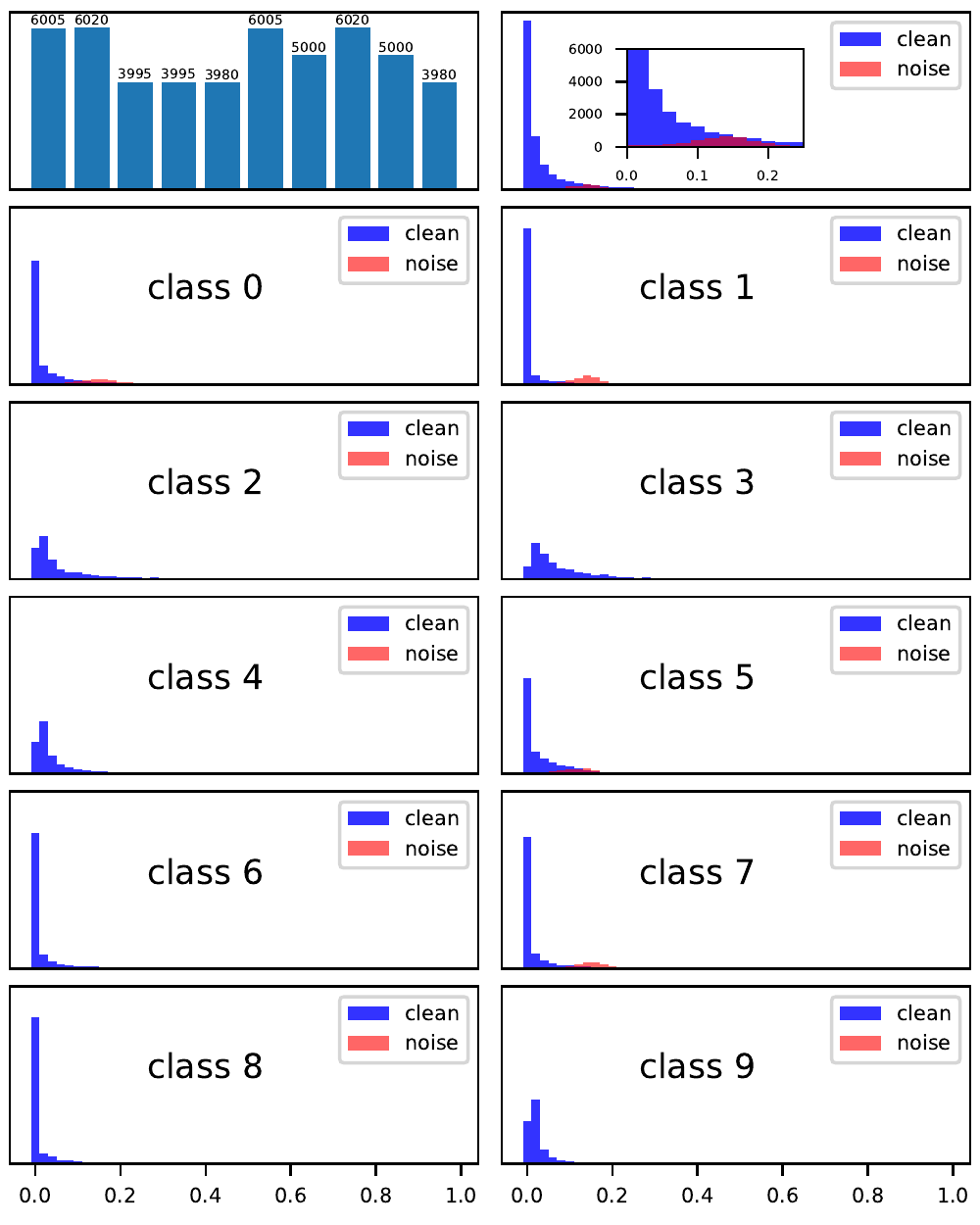} \ \ \
		\includegraphics[width=0.23\textwidth]{./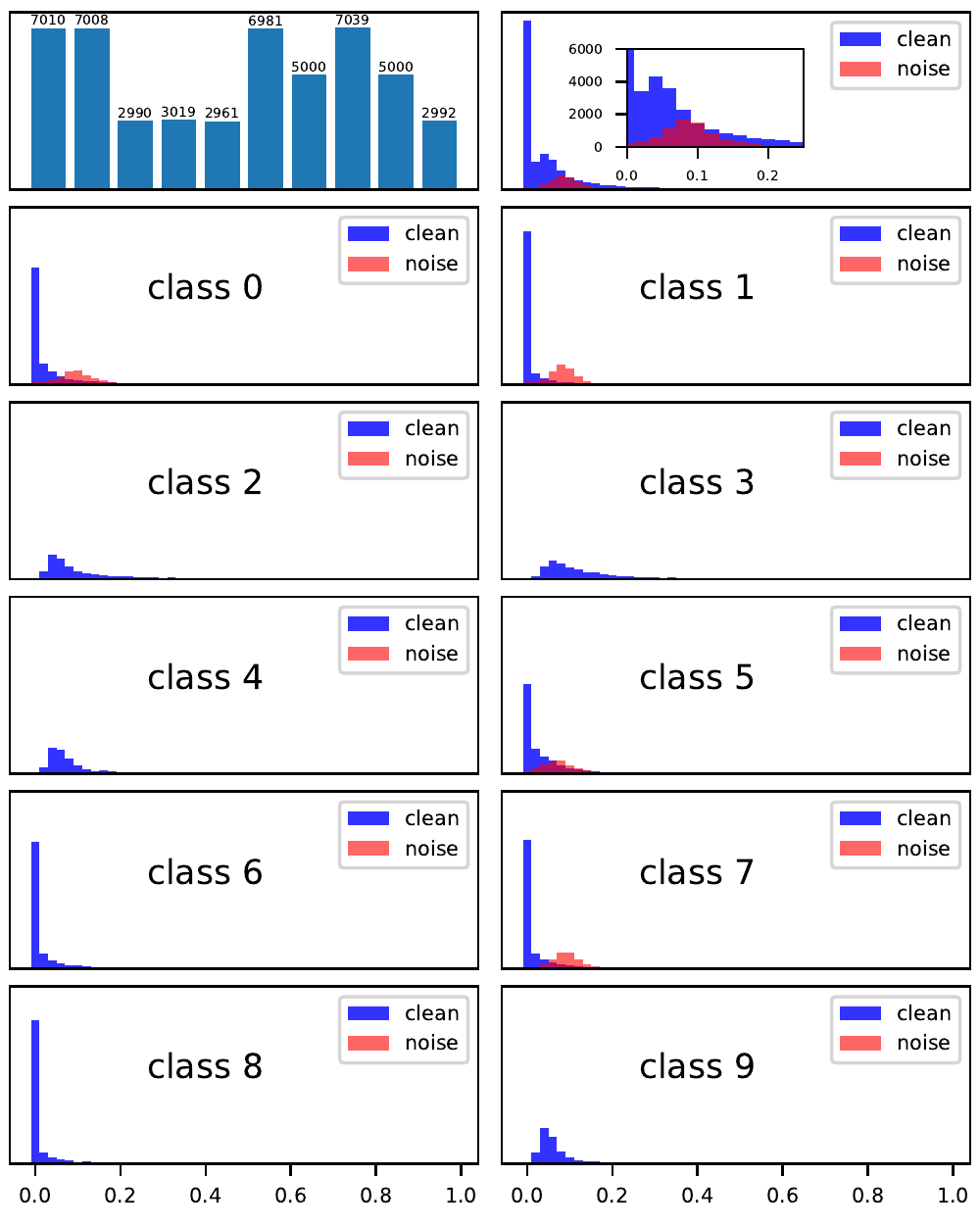}  \ \ \
		\includegraphics[width=0.23\textwidth]{./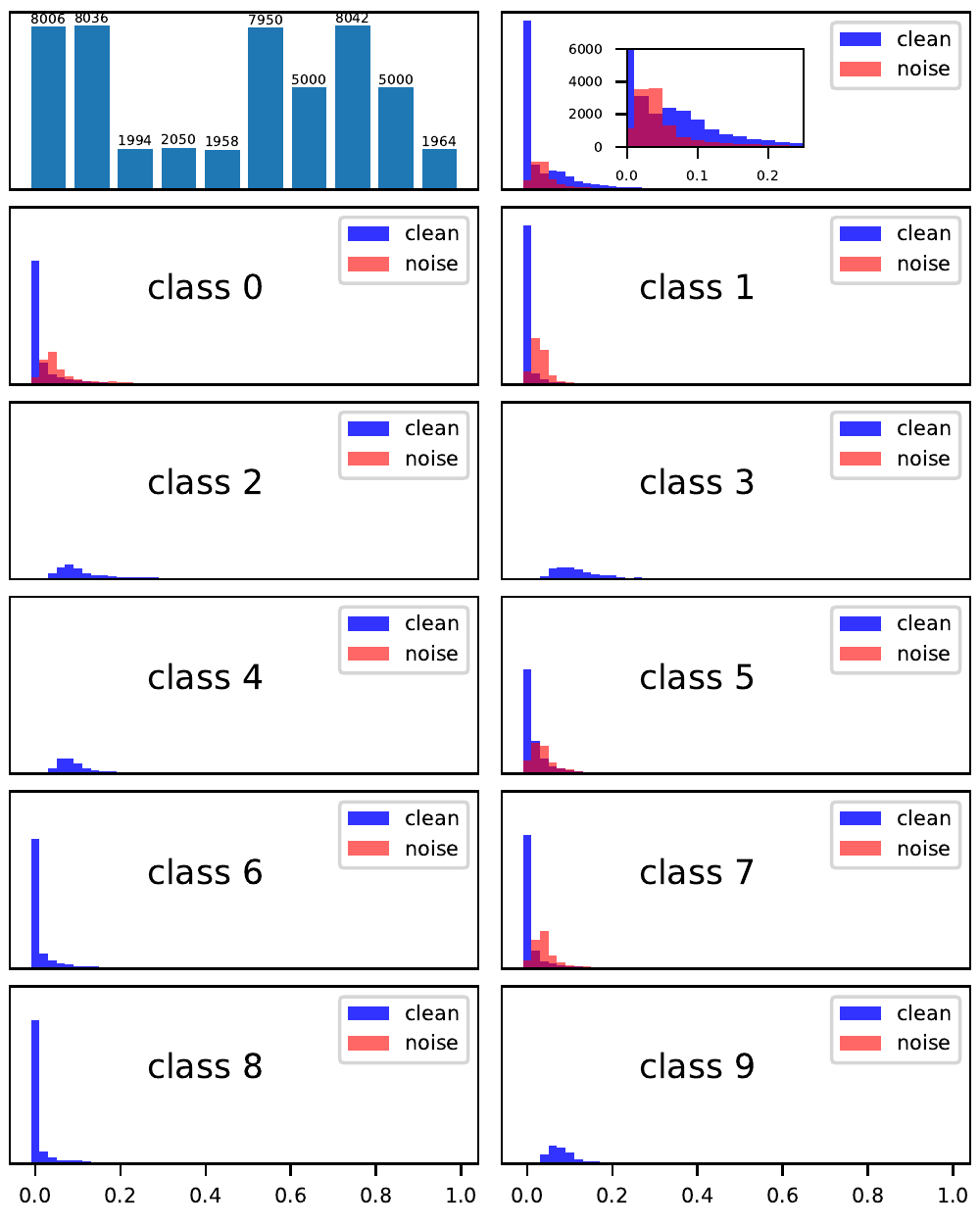} \ \ \
		\includegraphics[width=0.23\textwidth]{./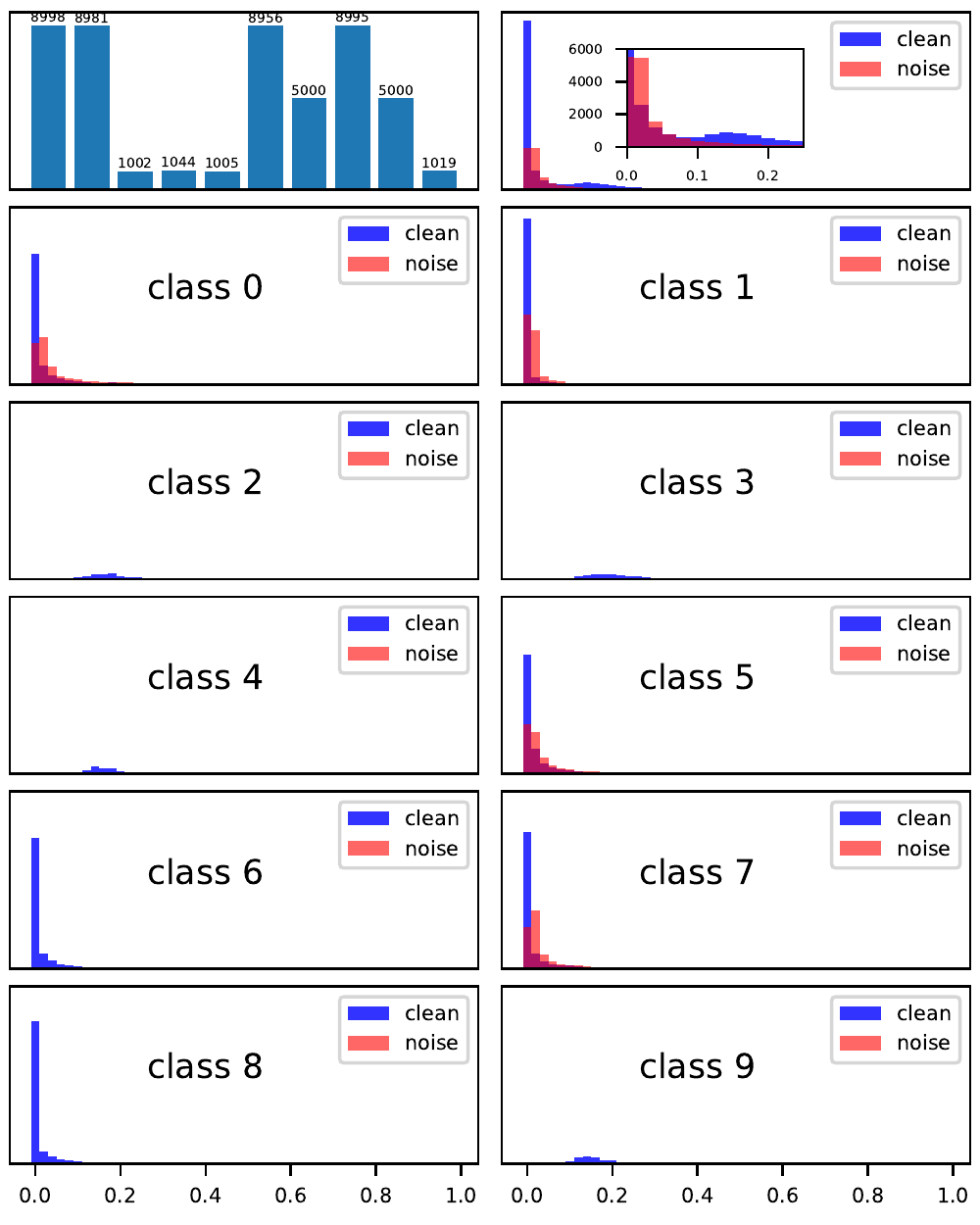}}
	\\ \vspace{-0.2cm}
	\subfigure[Weighting functions and histograms of all sample weights over all training examples learned by CMW-Net under asymmetric noise.]{
		\label{fig1b} 
		\includegraphics[width=0.23\textwidth]{./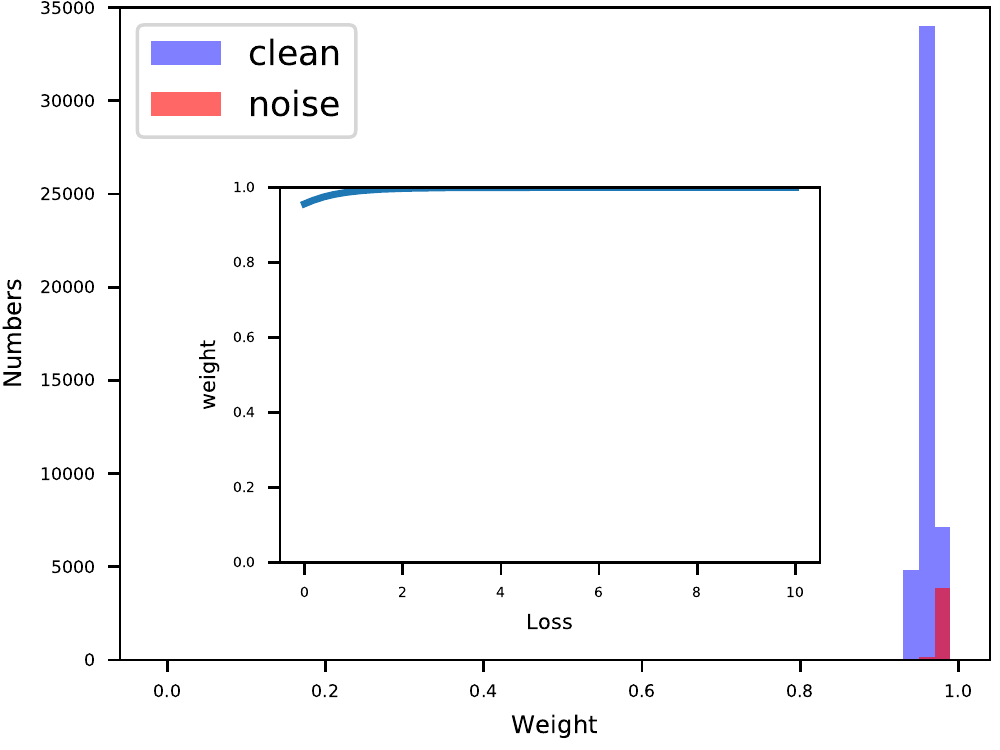}\ \ \
		\includegraphics[width=0.23\textwidth]{./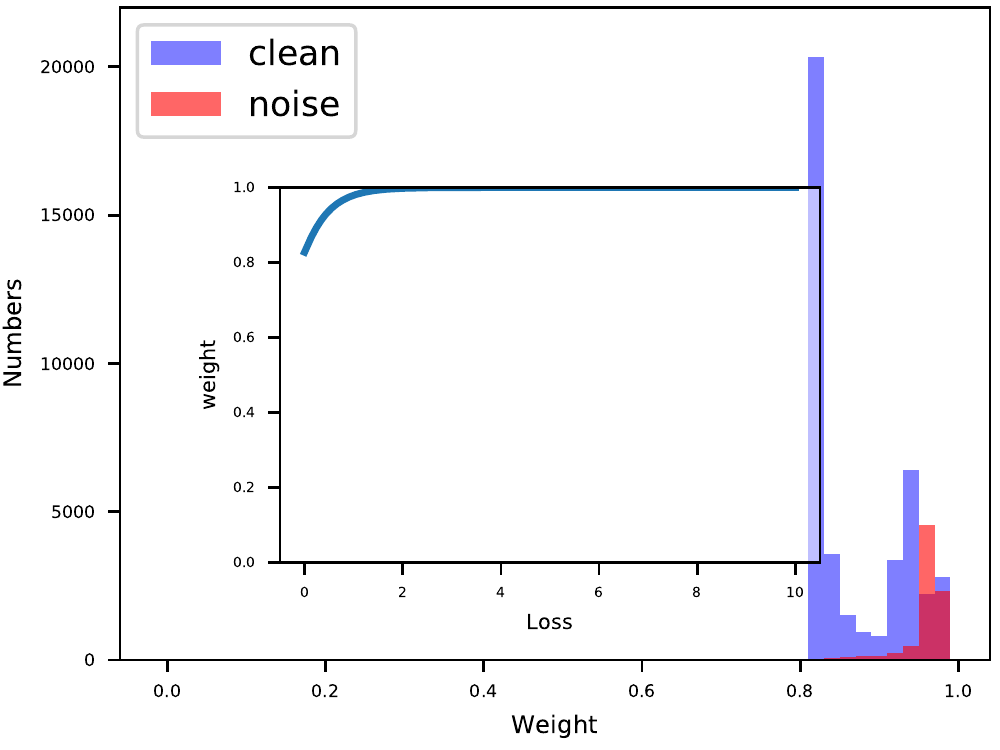}\ \ \
		\includegraphics[width=0.23\textwidth]{./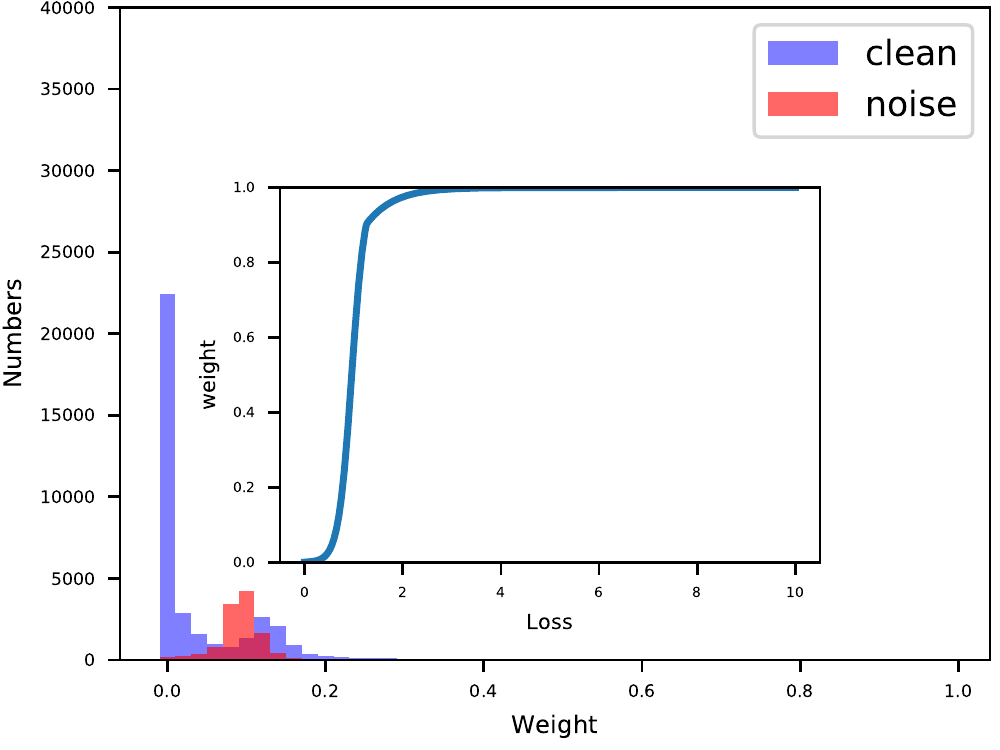}\ \ \
		\includegraphics[width=0.23\textwidth]{./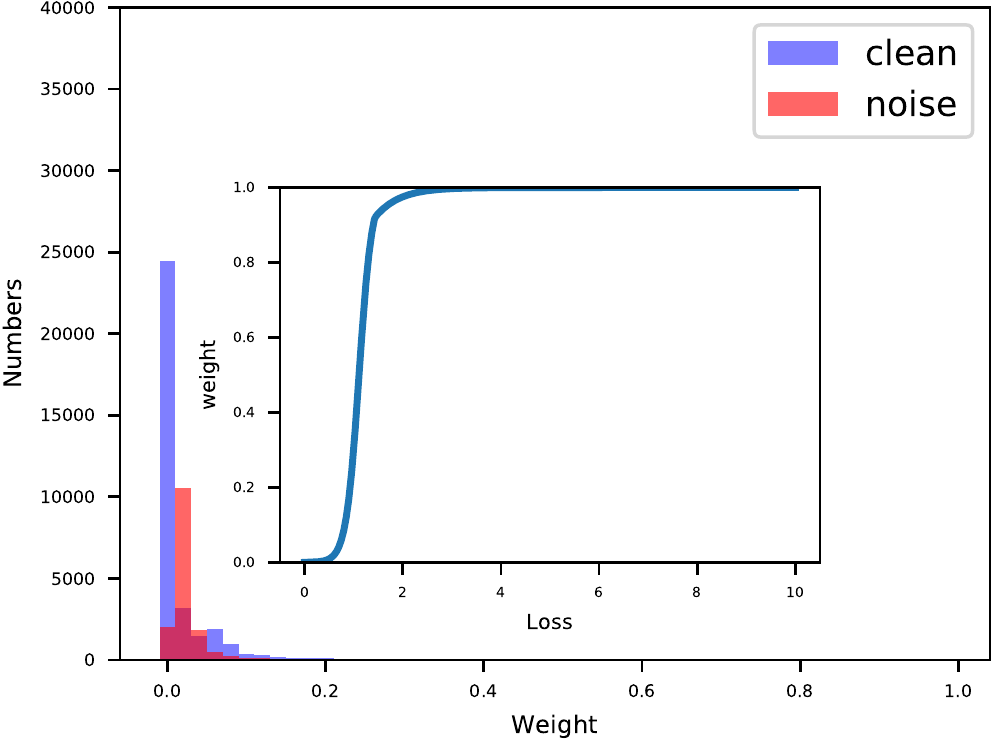}}
	\\ \vspace{-0.2cm}
	\subfigure[Weighting functions and weight distributions over the training examples learned by our CMW-Net under asymmetric noise.]{
		\label{fig1d} 
		\includegraphics[width=0.23\textwidth]{./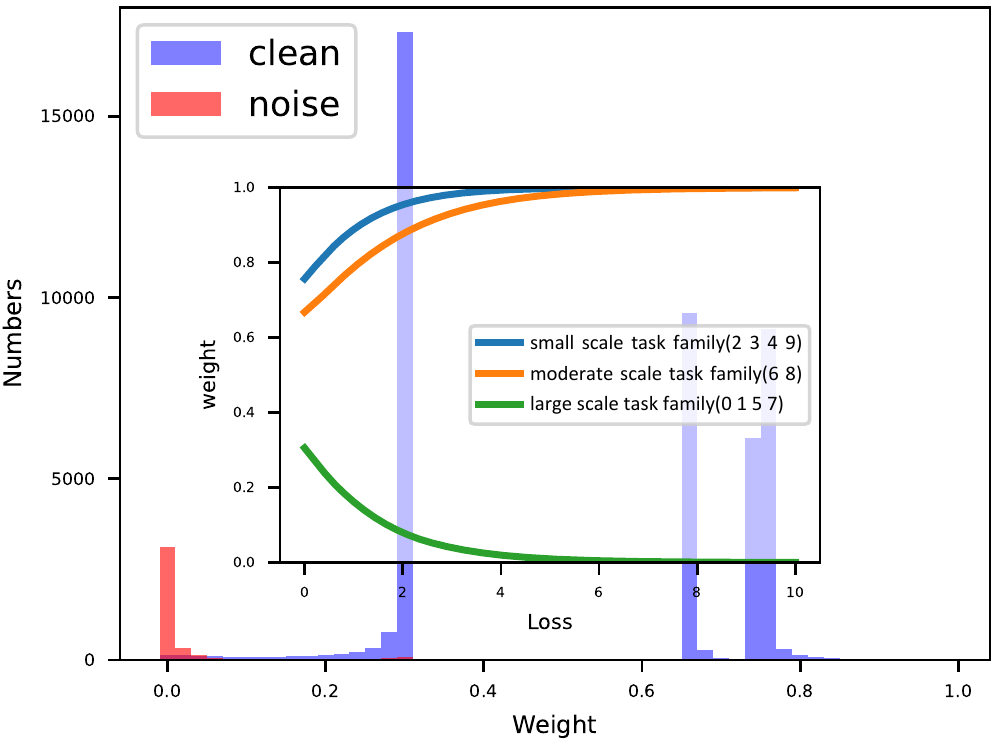}\ \ \
		\includegraphics[width=0.23\textwidth]{./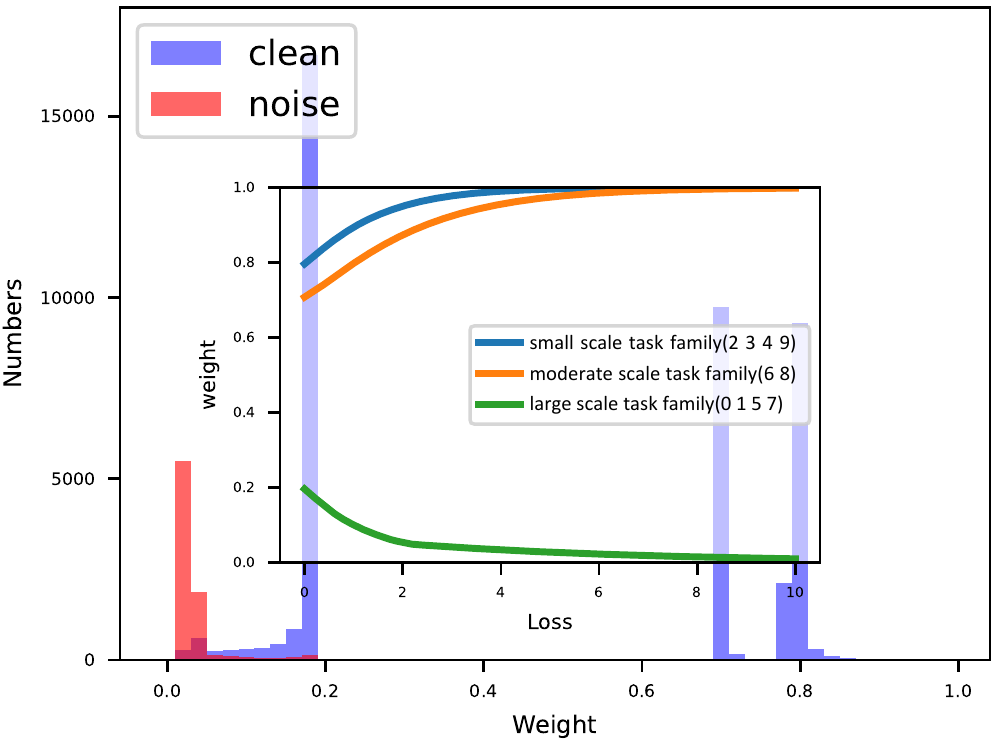}\ \ \
		\includegraphics[width=0.23\textwidth]{./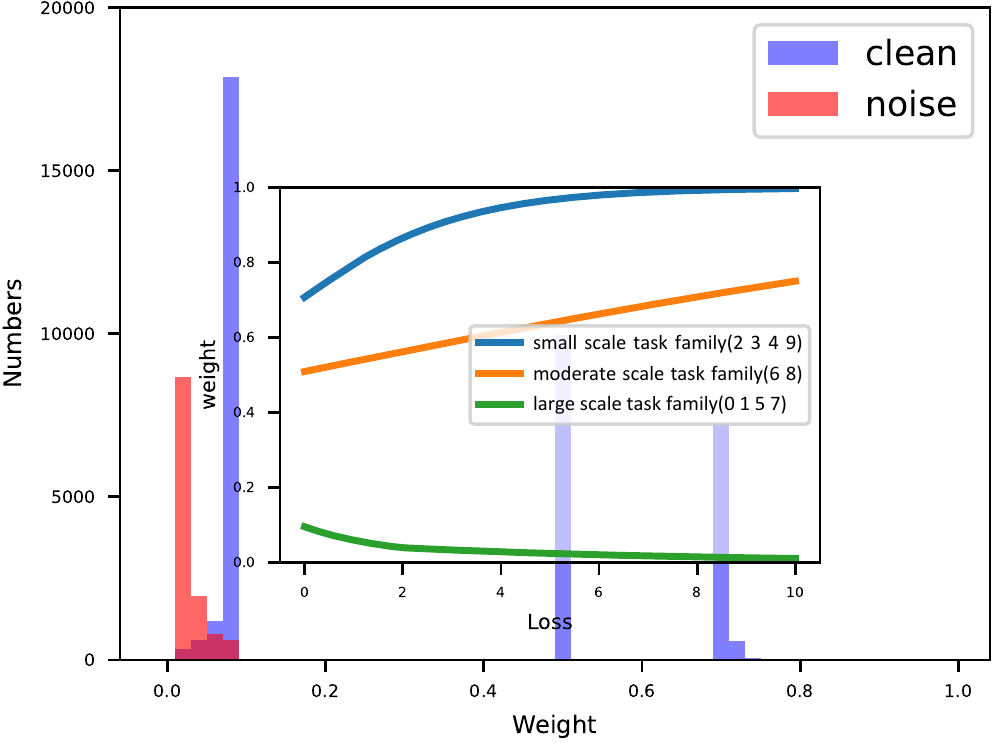}\ \ \
		\includegraphics[width=0.23\textwidth]{./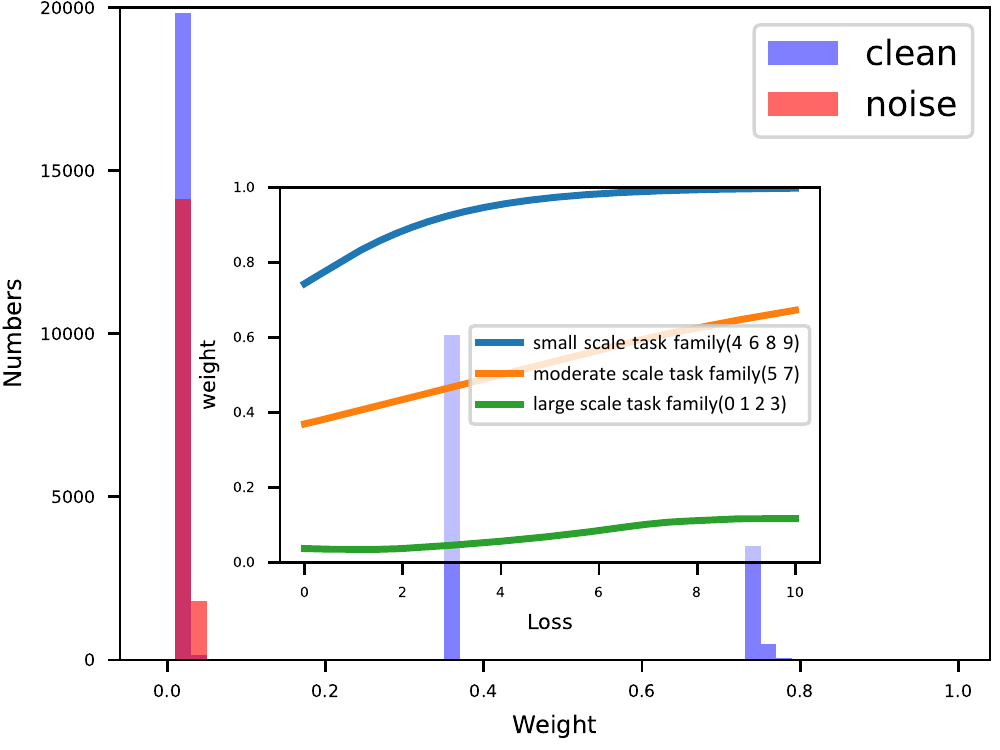}} \vspace{-0.3cm}
	\caption{(a) Empirical pdf of the cross-entropy loss calculated on all samples of each class on CIFAR-10 with varying noise rates (from left to right, the noise rates are 20\%, 40\%, 60\%, 80\%) under asymmetric noise; (b)(c) The weighting functions and histograms of all sample weights over all training examples learned by MW-Net and CMW-Net.}\label{SM4}
\end{figure*}
\begin{figure*}[!h]
	\centering
	\subfigcapskip=-2mm
	\subfigure[Asymmetry Noise (20\%)]{
		\includegraphics[width=0.23\textwidth]{./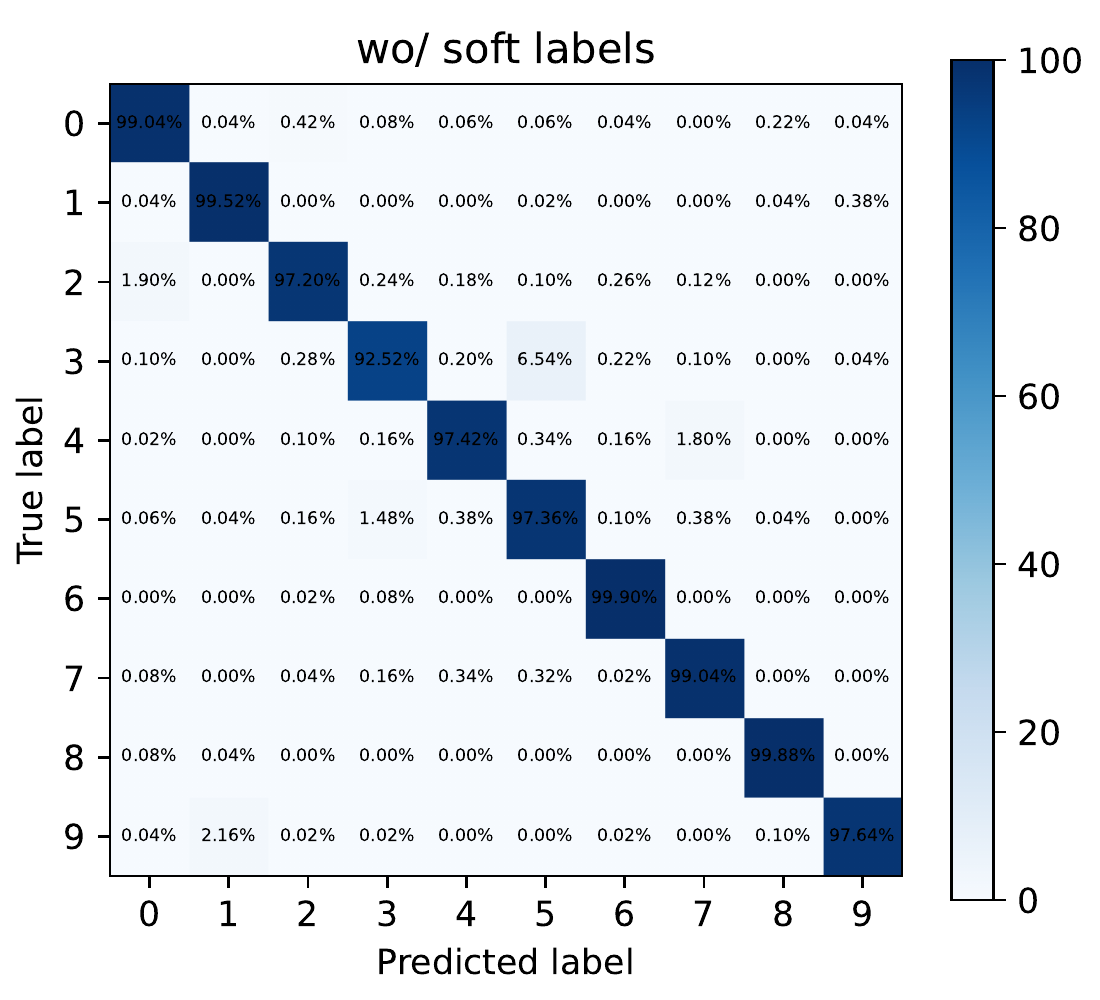}
		\includegraphics[width=0.23\textwidth]{./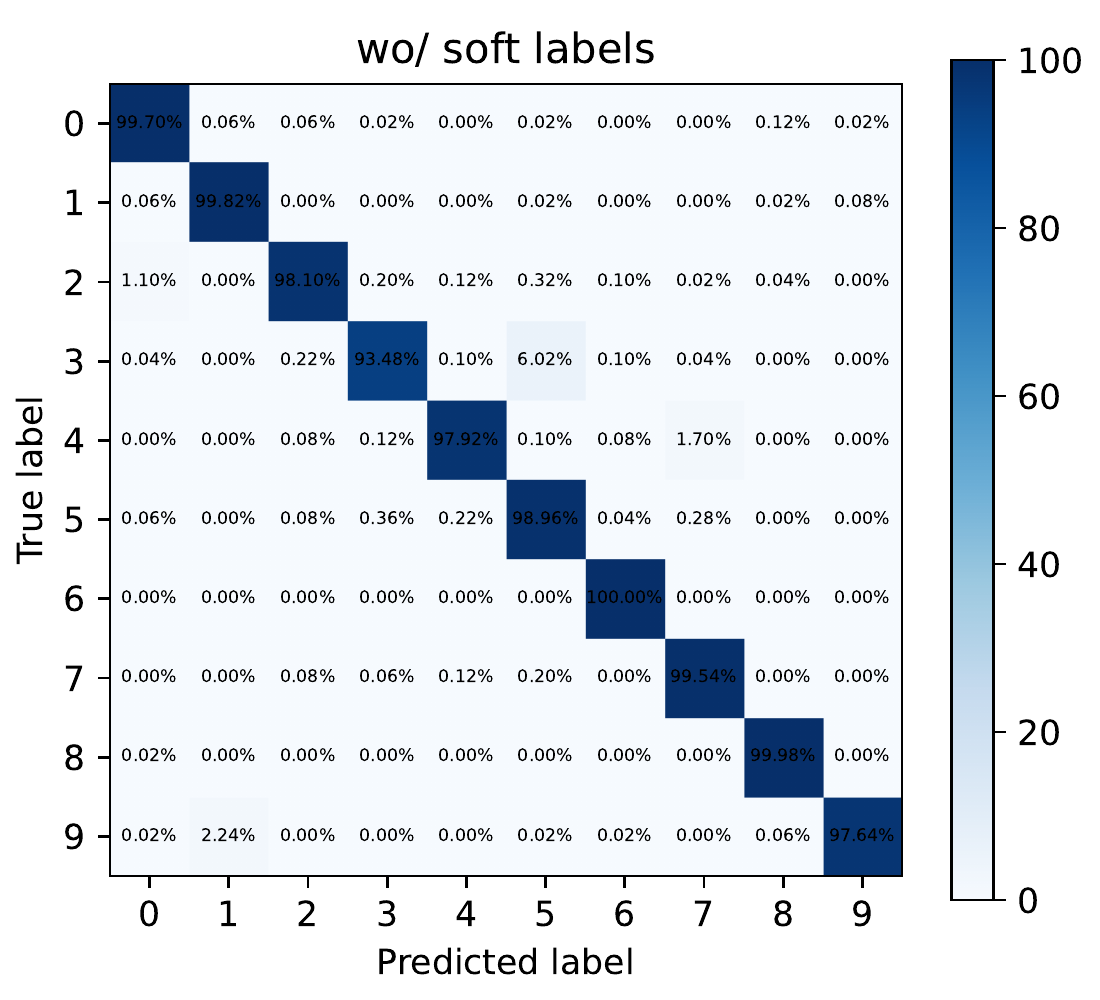}} \ \
	\subfigure[Asymmetry Noise (40\%)]{
		\includegraphics[width=0.23\textwidth]{./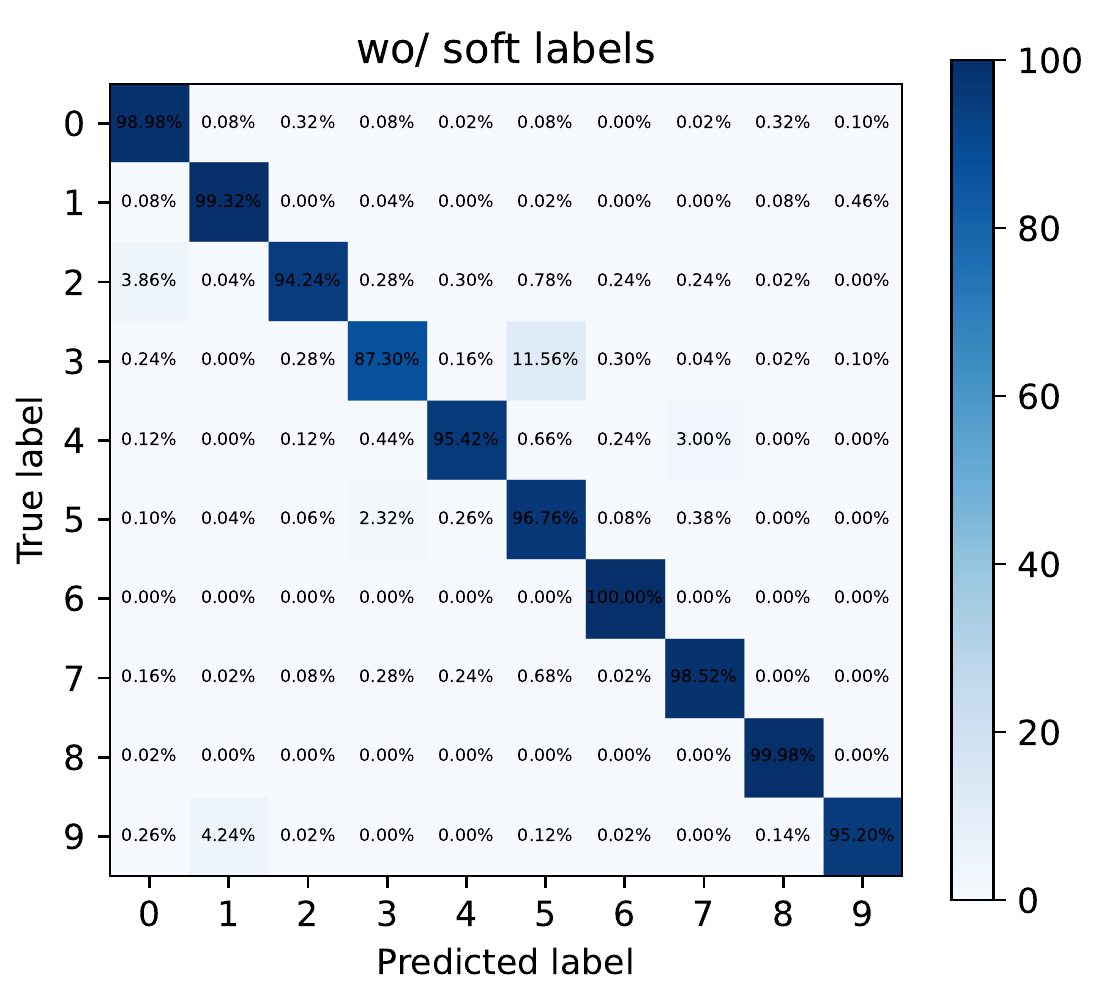}
		\includegraphics[width=0.23\textwidth]{./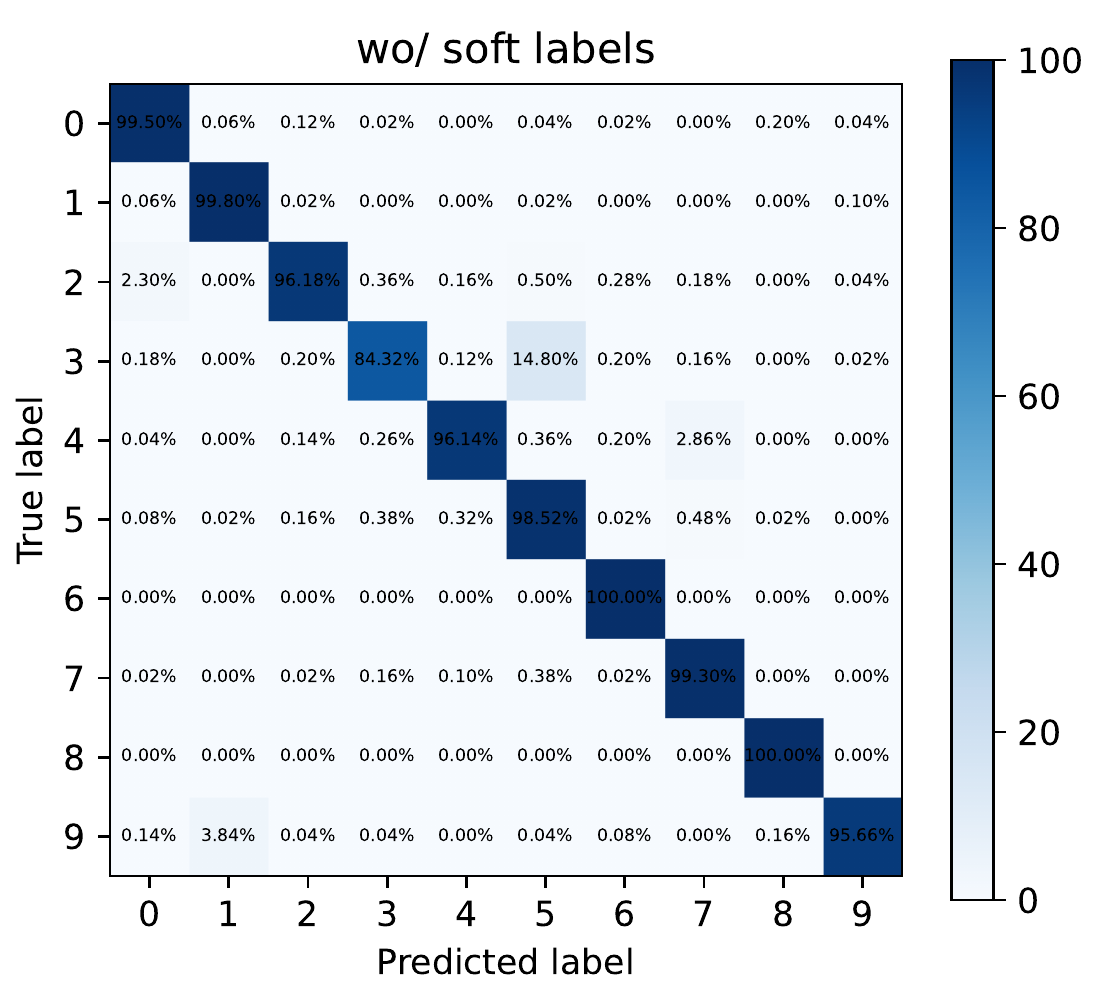}}  \\ \vspace{-3mm}
	\subfigure[Asymmetry Noise (60\%)]{
		\includegraphics[width=0.23\textwidth]{./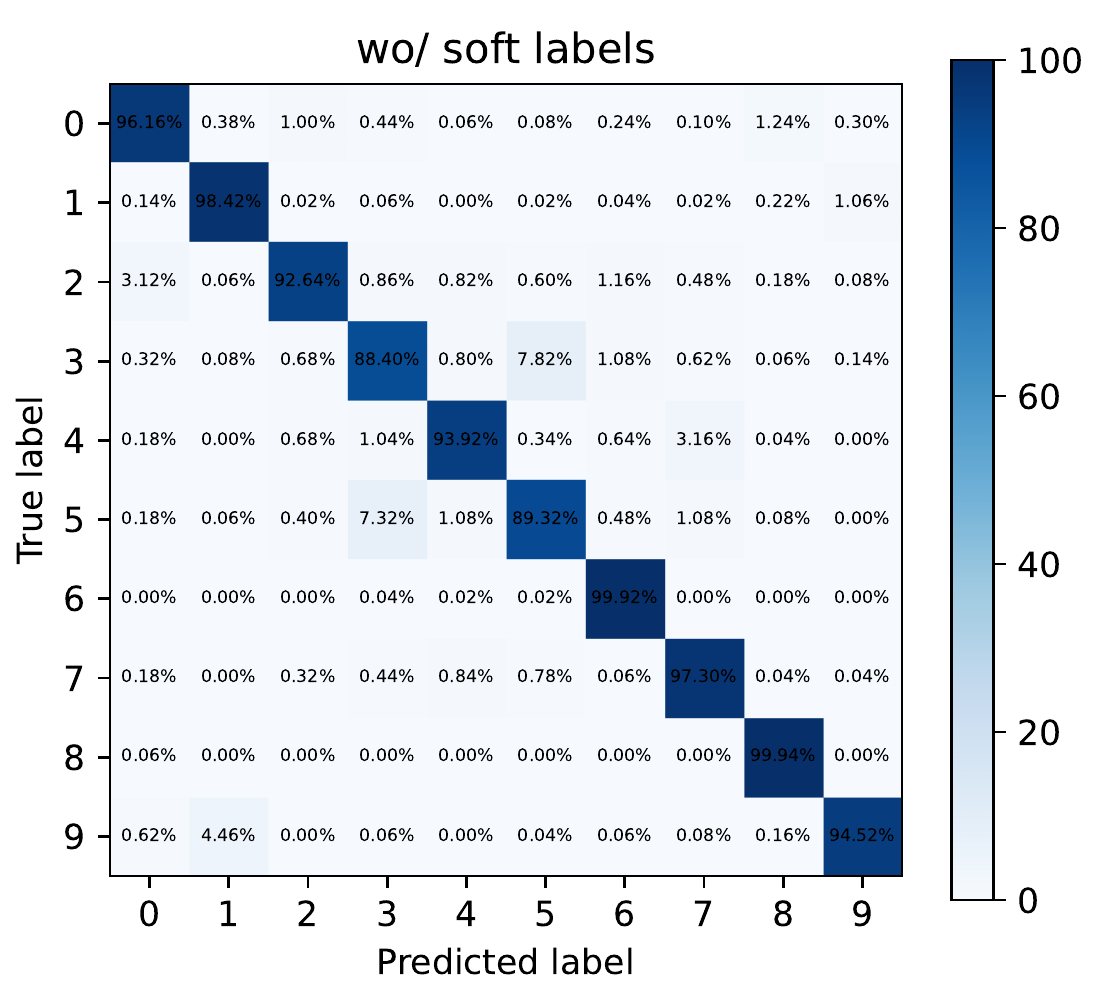}
		\includegraphics[width=0.23\textwidth]{./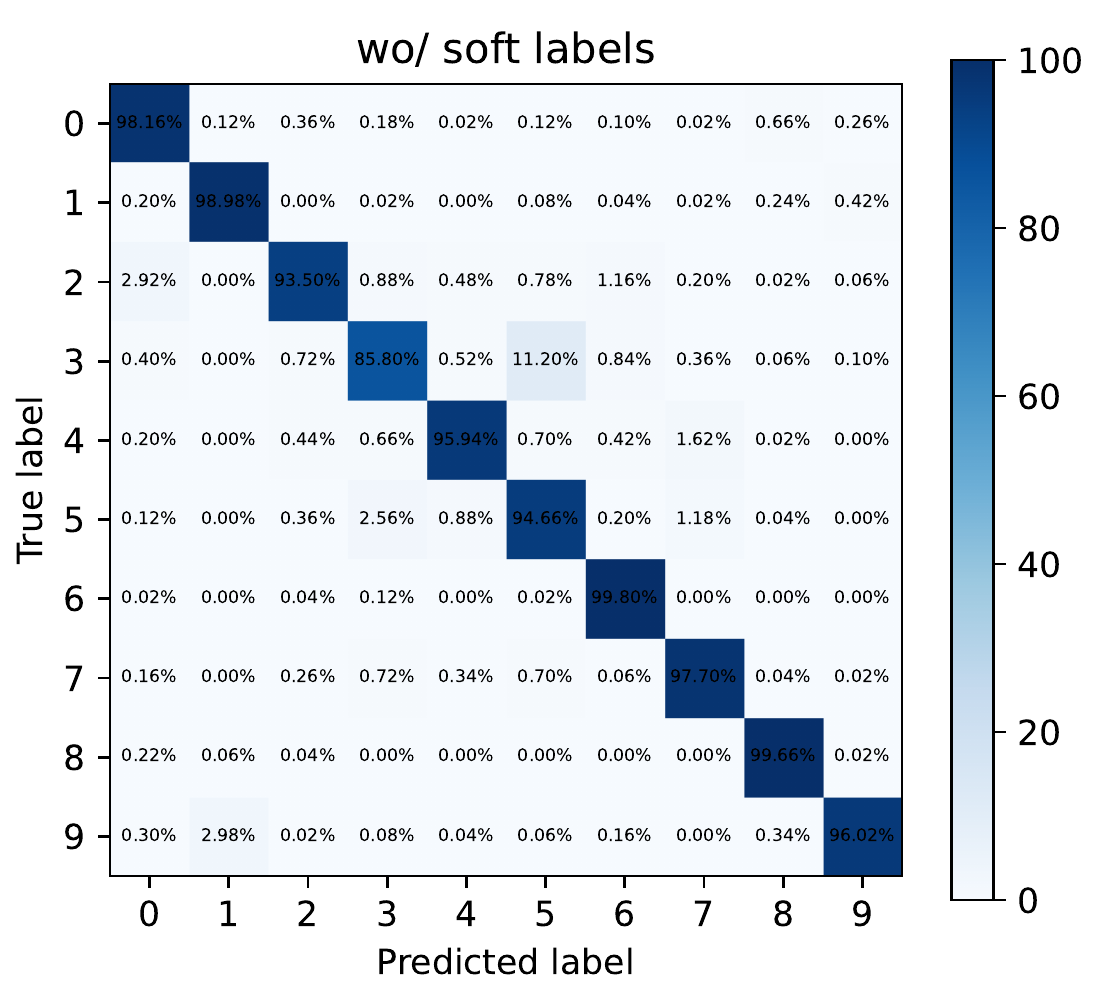}} \ \
	\subfigure[Asymmetry Noise (80\%)]{
		\includegraphics[width=0.23\textwidth]{./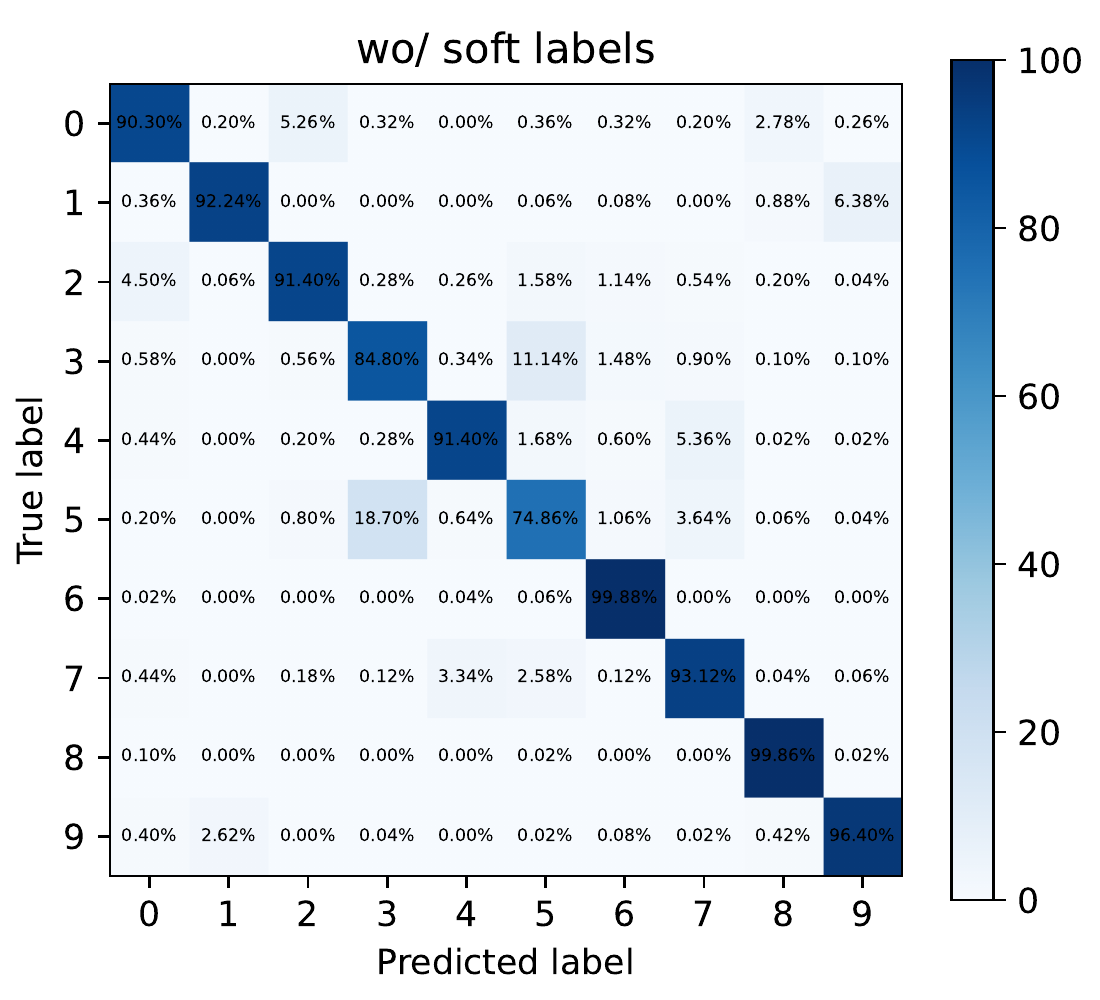}
		\includegraphics[width=0.23\textwidth]{./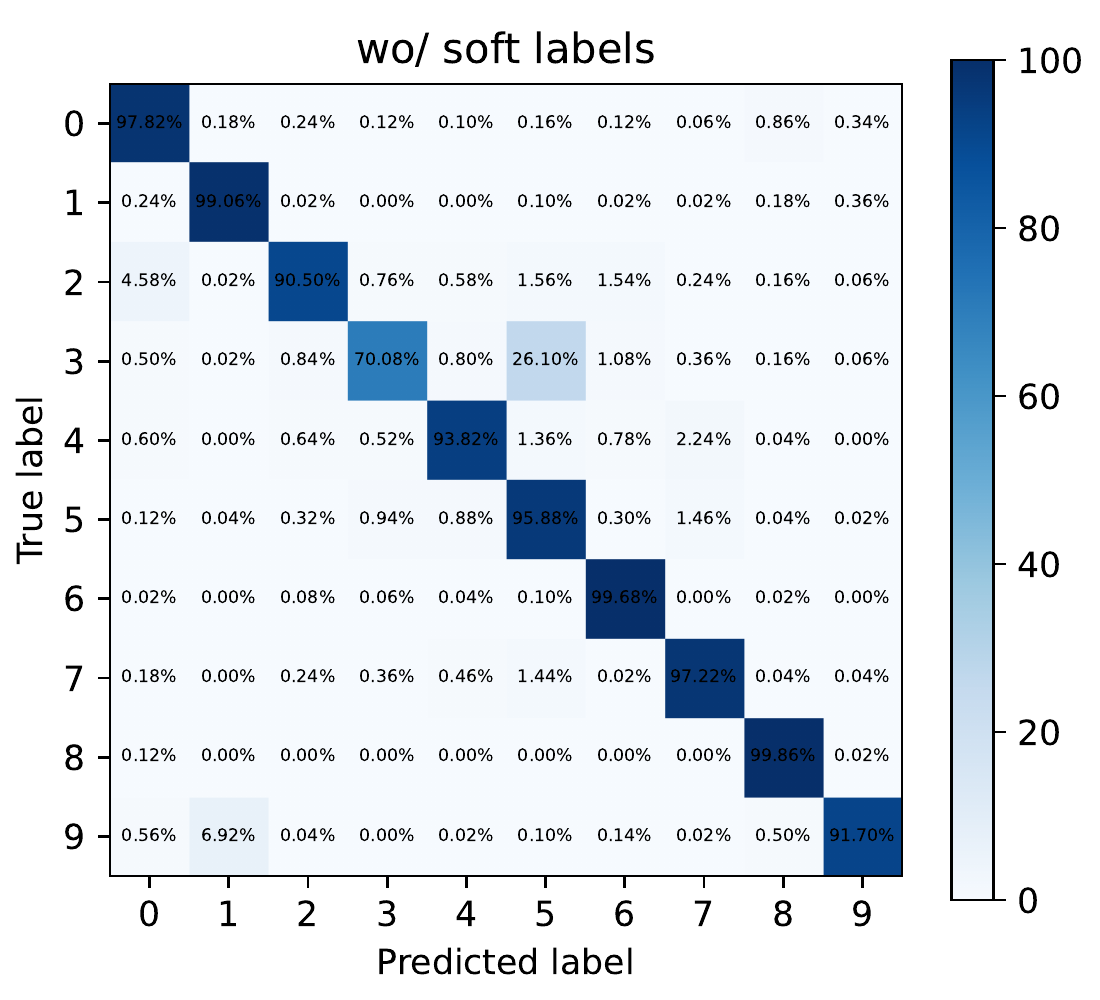}} \vspace{-4mm}
	\caption{Confusion matrices obtained by CMW-Net without (left) or with (right) soft label amelioration on CIFAR-10 with asymmetry noise with varying noise rates ranging from 20\% to 80\%.}\label{SM6}
\end{figure*}

\subsection{Additional Open-set Label Noise Experiment}

Open-set noise experiments use training samples that do not belong to any of the original classes in the dataset considered in the classification task.
Following \cite{wang2018iterative}, we yield CIFAR-10 with open-set noise by randomly replacing 40\% of its training images with images from CIFAR-100.
We used wide ResNet-28-2\cite{Zagoruyko2016Wide} as the base classifier network with softmax cross-entropy loss by SGD with a momentum 0.9, a weight decay $5\times10^{-4}$. We set the initial learning rate of classification network as 0.1 and the learning rate is divided by 10 after 80 and 100 epoch (for a total 120 epochs). The batch size is 128 for all experiments. We adopt Adam optimizer to learn CMW-Net, with a learning rate $10^{-3}$, and a weight decay $10^{-4}$. We repeat the experiments with 3 random trials and report the mean value and standard deviation.
We adopt the meta-data generation strategy as introduced in Sec. 4.2 of the main text, by randomly selecting 10 images per class at every epoch from the training set as the meta-data set. We train the network 20 epochs with cross-entropy loss for a warm-up to get the meta dataset stably.

The compared methods include: 1) ERM: use standard cross-entropy loss to train DNNs; 2) Forward \cite{patrini2017making}: correct the prediction by the label transition matrix;
3) GCE \cite{zhang2018generalized}: behave as a robust loss to handle the noisy labels; 4) M-correction \cite{arazo2019unsupervised} and 5) DivideMix \cite{li2019dividemix}: use different label correction methods; 6) L2RW \cite{ren2018learning} and 7) MW-Net \cite{shu2019meta}: represent the typical sample reweighting methods by meta-learning.

The classification accuracy on CIFAR-10 noisy datasets with 40\% open-set noise is reported in Table \ref{openset}. As can be seen, our method evidently outperforms all other competing methods, verifying that our model is capable of learning more accurate representation directly from datasets with open-set noisy labels. Such capability supports that our method can be applied to learning from web-search data possibly containing such type of open-set noisy labels, e.g., WebVision \cite{li2017webvision}.

\begin{table*}
	\setlength{\abovecaptionskip}{0.cm}
	\setlength{\belowcaptionskip}{-2cm}
	\caption{Test accuracy (\%) of all comparison methods under open-set noise on CIFAR-10.}\label{openset} \vspace{0mm}
	\centering
	\setlength{\tabcolsep}{0.5mm}{
	\begin{tabular}{l|c|c|c|c|c|c|c|c|c}
		\toprule
		Methods  &  ERM & Forward \cite{patrini2017making} &GCE \cite{zhang2018generalized}&M-correction \cite{arazo2019unsupervised}& DivideMix \cite{li2019dividemix} &  L2RW \cite{ren2018learning}& MW-Net \cite{shu2019meta}&  CMW-Net & CMW-Net-SL    \\  \hline
		Accuracy & 84.17$\pm$0.80 & 84.63$\pm$0.80 & 85.96 $\pm$ 0.72 & 89.71 $\pm$ 0.53 & 90.16$\pm$0.40 &  83.60$\pm$0.24 & 84.78 $\pm$ 0.51 & 84.81 $\pm$ 0.51 & \textbf{92.12 $\pm$ 0.18} \\
		\bottomrule
	\end{tabular}}
\end{table*}
\begin{figure}[t]
	\centering
	\subfigcapskip=-1mm
	\subfigure[On different numbers of task families]{\label{figablationa}
		\includegraphics[width=0.21\textwidth]{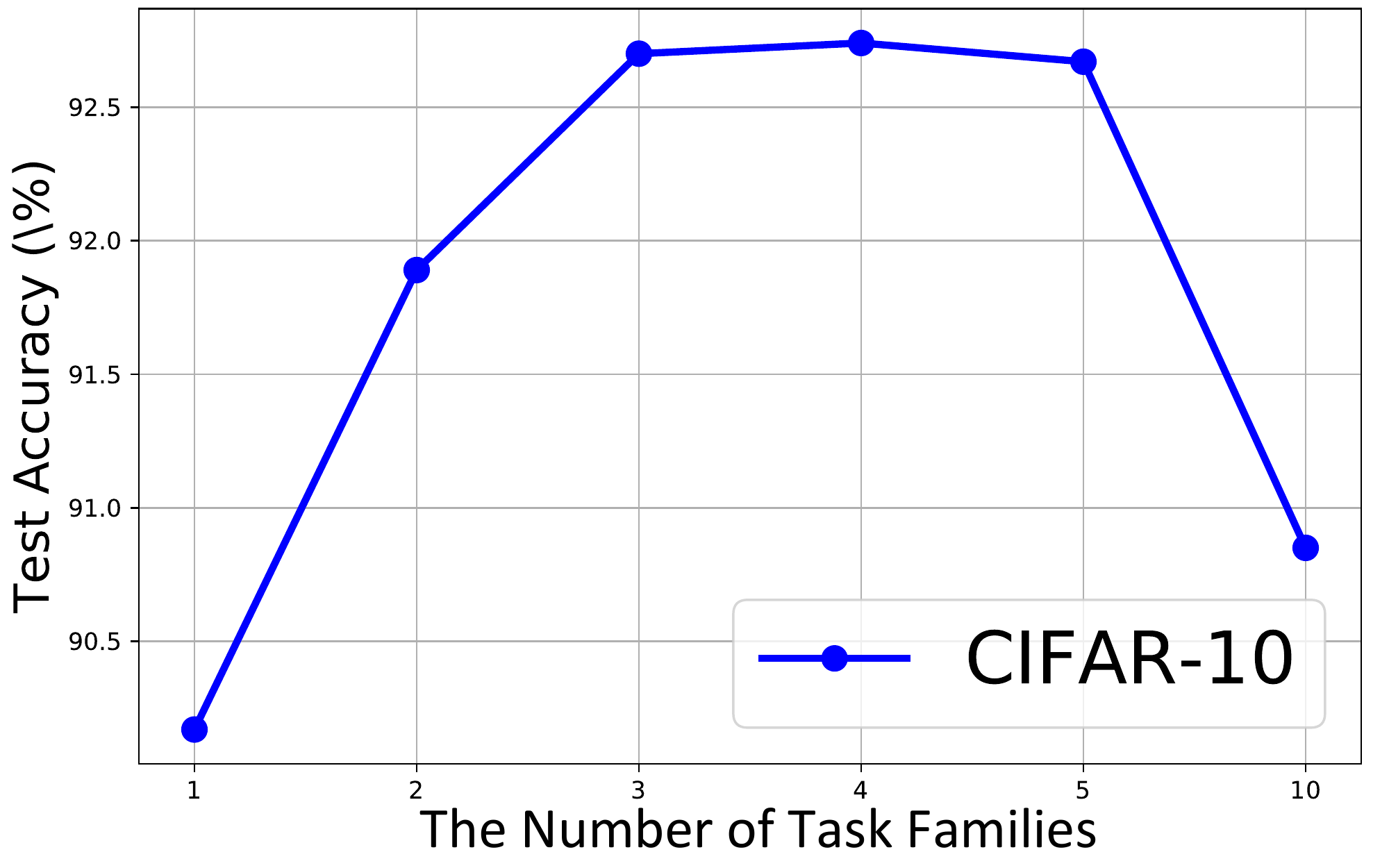}  \ \ \
		\includegraphics[width=0.21\textwidth]{./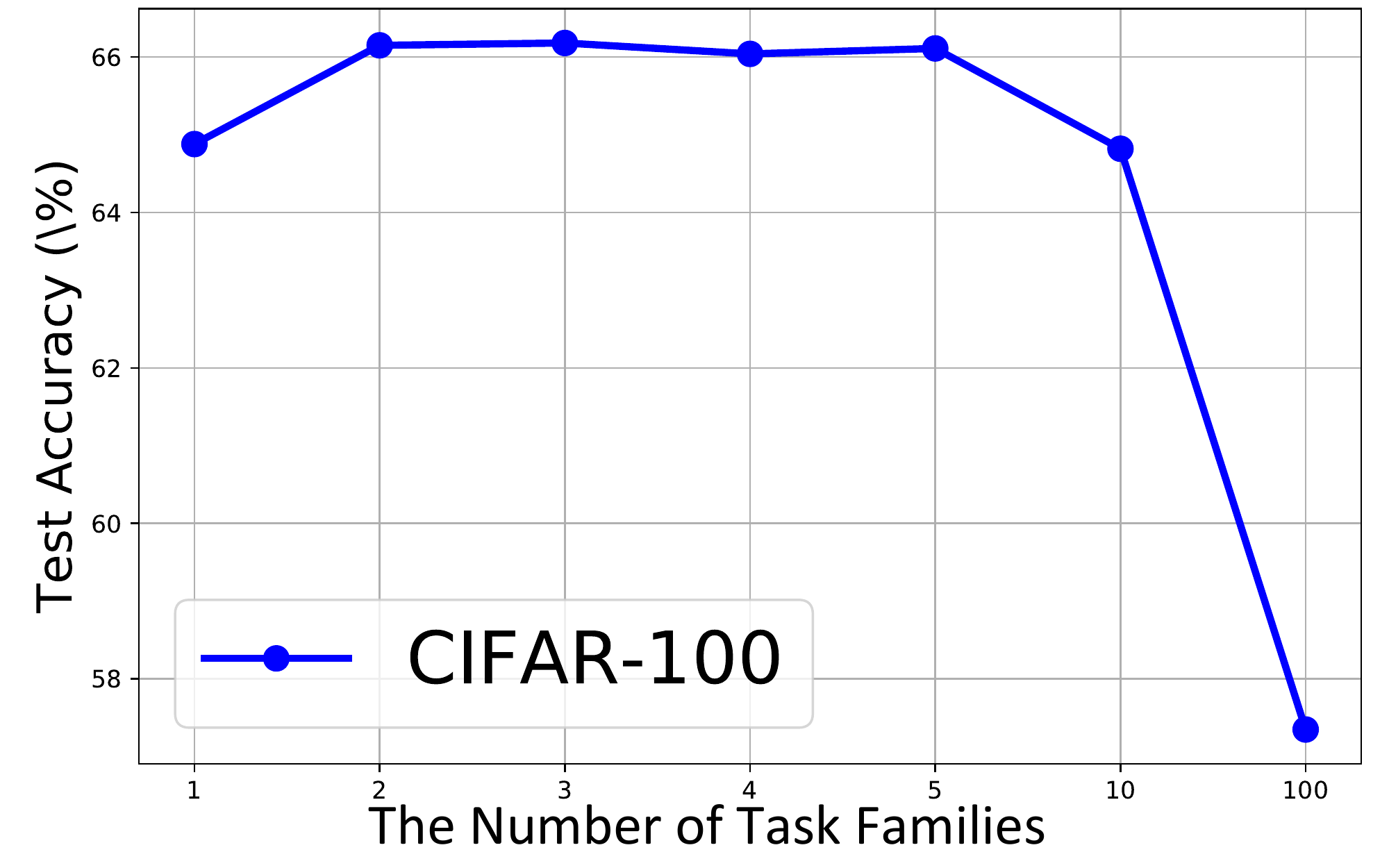}}  	\subfigure[On whether to use extra clean meta data]{\label{figablationb}
		\includegraphics[width=0.21\textwidth]{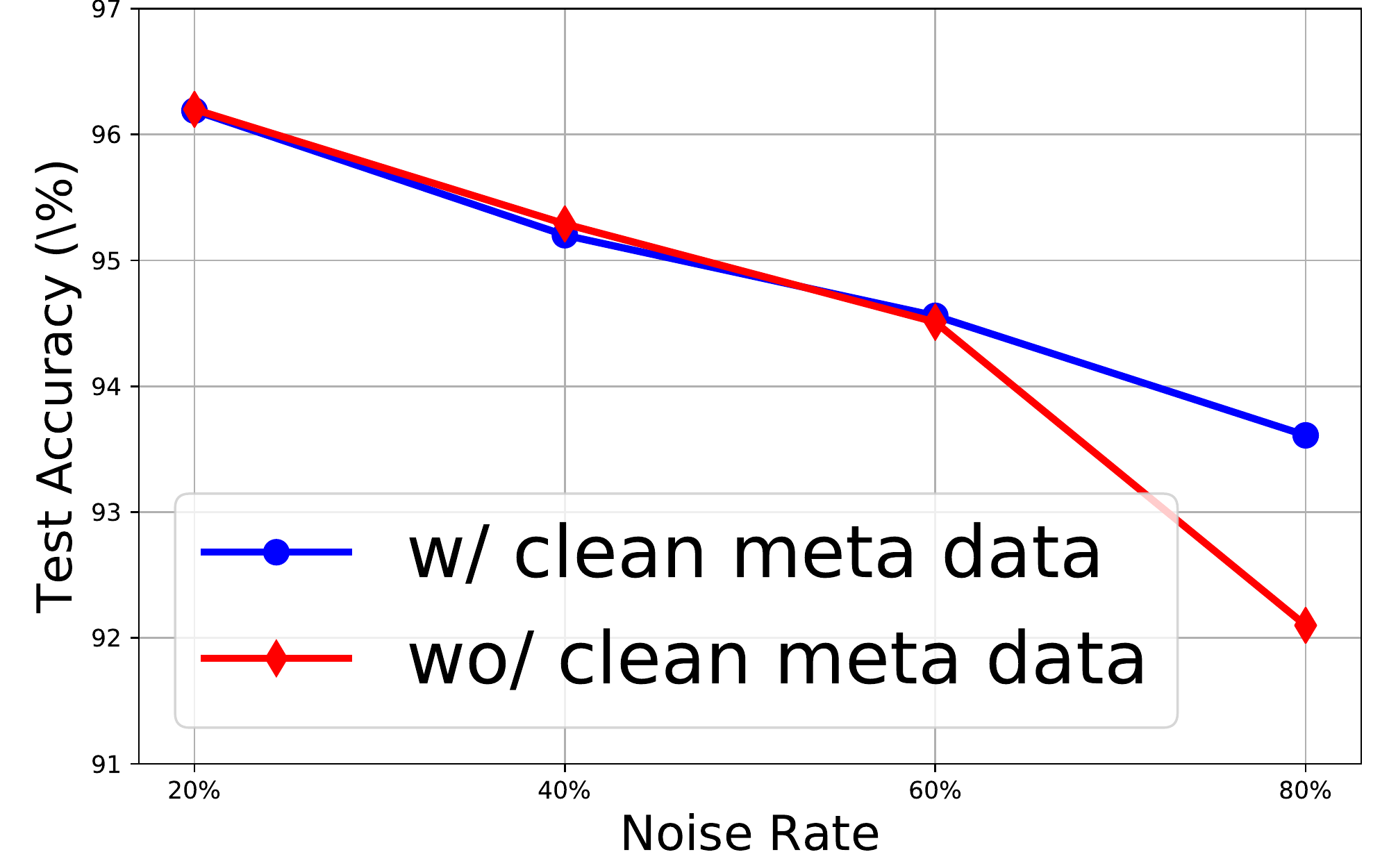} \ \ \
		\includegraphics[width=0.21\textwidth]{./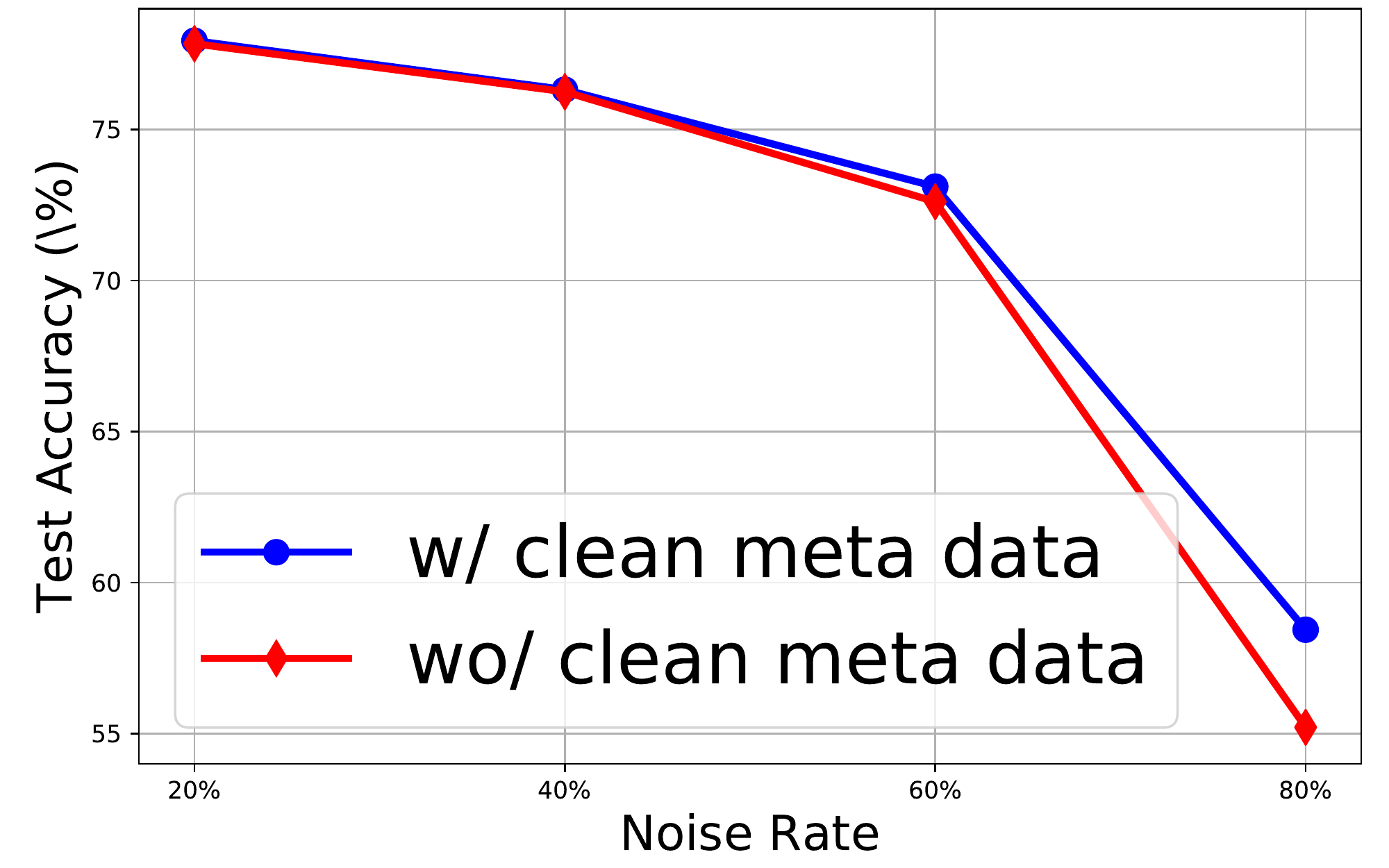}}
	\caption{Some ablation studies for parameter setting issues of our proposed method.}\label{figablation}  \end{figure}

\subsection{Ablation Study}

We perform ablation study to verify the effectiveness of two important components involved in our method:
1) the number of task families; 2) whether to use an extra clean meta-data or using the automatic meta-data-generation strategy as proposed in Sec. 4.2 of the main text. As shown in Fig. \ref{figablationa}, by setting the number of task families as three, our method can consistently adapt to inter-class heterogenous data bias. Actually, by setting $K=3$, where the training classes/tasks are separated as small, moderate, large-scales, correspondingly, all our experiments can achieve a stably fine performance. Furthermore, by observing Fig. \ref{figablationb}, we can see that the utilized  meta-data-generation strategy is practicable for dealing with real-world noisy datasets, in which an extra ideal clean meta data is always hard to be collected.

\section{More Experimental Results and Experimental Settings in Section 5}

\subsection{Learning with Real-world Noisy Datasets}

\textbf{Animal-10N.} \ ANIMAL-10N \cite{song2019selfie} contains 55,000 human-labeled online images for 10 animals with confusable appearances. The estimated label noise rate is 8\%. Following previous works \cite{song2019selfie,zhang2020learning}, 50,000 images are exploited as the training set and the left for testing.
Following SELFIE \cite{song2019selfie}, we use VGG-19 \cite{simonyan2015very} with batch normalization as the classifier network. The SGD optimizer is employed to train the network with a momentum 0.9, a weight decay $1\times10^{-3}$ for 100 epochs. We use an initial learning rate of 0.1, which is divided by 5 at 50\% and 75\% of the total number of epochs. The batch size is 128. We repeat the experiments with 3 random trials and report the mean value and standard deviation.

\textbf{Mini-WebVision.}\  As the full dataset of WebVision is very large, we follow \cite{jiang2018mentornet} to use a mini version, which contains the first 50 classes of the Google subset of the data for a total of about 61,000 images. Following the standard protocol \cite{jiang2018mentornet}, we test the trained model on the WebVision validation set and the ImageNet validation set. Following C2D \cite{zheltonozhskii2020contrast}, we used ResNet-50 architecture as the classifier network for training. For self-supervised pre-training, we directly use the pretrained self-supervised models released at \url{https://github.com/ContrastToDivide/C2D}, which is based on the SimCLR implementation.
We trained the network with softmax cross-entropy loss by SGD with a momentum 0.9, a weight decay $5\times10^{-4}$. We set the initial learning rate as 0.01 and the learning rate of classification network is divided by 10 after 50 epoch (for a total 90 epochs). The learning rate of CMW-Net is fixed as $10^{-4}$, and the weight decay of CMW-Net is fixed as $10^{-5}$. The batch size is 64.
We adopt the meta-data-generation strategy as introduced in Sec. 4.2 of the main text, to randomly select 10 images per class at every epoch from the training set as the meta-data set for the above two real-world biased datasets.

More typical noisy examples corrected by the proposed method on Animal-10N and Mini-WebVision are shown in Figs. \ref{figanifig} and \ref{figwebfig}, respectively. This further demonstrates our method's capability of recovering these easily confusable samples.

\begin{figure*}
	\centering 
	\subfigure[Samples selected from Animal-10N \cite{song2019selfie}. The original training label is {\color{red}{cat}}.]{
		\begin{minipage}[b]{0.12\textwidth} 
			\centering 
			\centerline{\includegraphics[width=0.9\textwidth, height=0.9\textwidth]{./fig/animal10/img_0/label_1_416.png}} 
			\centerline{\color{blue}{lynx}}
		\end{minipage}
		\begin{minipage}[b]{0.12\textwidth} 
			\centering 
			\centerline{\includegraphics[width=0.9\textwidth, height=0.9\textwidth]{./fig/animal10/img_0/label_1_3451.png}} 
			\centerline{\color{blue}{lynx}}
		\end{minipage}
		\begin{minipage}[b]{0.12\textwidth} 
			\centering 
			\centerline{\includegraphics[width=0.9\textwidth, height=0.9\textwidth]{./fig/animal10/img_0/label_1_2567.png}} 
			\centerline{\color{blue}{lynx}}
		\end{minipage}
		\begin{minipage}[b]{0.12\textwidth} 
			\centering 
			\centerline{\includegraphics[width=0.9\textwidth, height=0.9\textwidth]{./fig/animal10/img_0/label_1_963.png}} 
			\centerline{\color{blue}{lynx}}
		\end{minipage}
		\begin{minipage}[b]{0.12\textwidth} 
			\centering 
			\centerline{\includegraphics[width=0.9\textwidth, height=0.9\textwidth]{./fig/animal10/img_0/label_1_1298.png}} 
			\centerline{\color{blue}{lynx}}
		\end{minipage}
		\begin{minipage}[b]{0.12\textwidth} 
			\centering 
			\centerline{\includegraphics[width=0.9\textwidth, height=0.9\textwidth]{./fig/animal10/img_0/label_1_3291.png}} 
			\centerline{\color{blue}{lynx}}
		\end{minipage}
		\begin{minipage}[b]{0.12\textwidth} 
			\centering 
			\centerline{\includegraphics[width=0.9\textwidth, height=0.9\textwidth]{./fig/animal10/img_0/label_1_3330.png}} 
			\centerline{\color{blue}{lynx}}
		\end{minipage}
		\begin{minipage}[b]{0.12\textwidth} 
			\centering 
			\centerline{\includegraphics[width=0.9\textwidth, height=0.9\textwidth]{./fig/animal10/img_0/label_1_687.png}} 
			\centerline{\color{blue}{lynx}}
		\end{minipage}} \\ \vspace{-2mm}
		\subfigure[Samples selected from Animal-10N \cite{song2019selfie}. The original training label is {\color{red}{lynx}}.]{
		\begin{minipage}[b]{0.12\textwidth} 
			\centering 
			\centerline{\includegraphics[width=0.9\textwidth, height=0.9\textwidth]{./fig/animal10/img_1/label_0_5747.png}} 
			\centerline{\color{blue}{cat}}
		\end{minipage}
		\begin{minipage}[b]{0.12\textwidth} 
			\centering 
			\centerline{\includegraphics[width=0.9\textwidth, height=0.9\textwidth]{./fig/animal10/img_1/label_0_6030.png}} 
			\centerline{\color{blue}{cat}}
		\end{minipage}
		\begin{minipage}[b]{0.12\textwidth} 
			\centering 
			\centerline{\includegraphics[width=0.9\textwidth, height=0.9\textwidth]{./fig/animal10/img_1/label_0_6329.png}} 
			\centerline{\color{blue}{cat}}
		\end{minipage}
		\begin{minipage}[b]{0.12\textwidth} 
			\centering 
			\centerline{\includegraphics[width=0.9\textwidth, height=0.9\textwidth]{./fig/animal10/img_1/label_0_6342.png}} 
			\centerline{\color{blue}{cat}}
		\end{minipage}
		\begin{minipage}[b]{0.12\textwidth} 
			\centering 
			\centerline{\includegraphics[width=0.9\textwidth, height=0.9\textwidth]{./fig/animal10/img_1/label_0_6500.png}} 
			\centerline{\color{blue}{cat}}
		\end{minipage}
		\begin{minipage}[b]{0.12\textwidth} 
			\centering 
			\centerline{\includegraphics[width=0.9\textwidth, height=0.9\textwidth]{./fig/animal10/img_1/label_0_7750.png}} 
			\centerline{\color{blue}{cat}}
		\end{minipage}
		\begin{minipage}[b]{0.12\textwidth} 
			\centering 
			\centerline{\includegraphics[width=0.9\textwidth, height=0.9\textwidth]{./fig/animal10/img_1/label_0_8654.png}} 
			\centerline{\color{blue}{cat}}
		\end{minipage}
		\begin{minipage}[b]{0.12\textwidth} 
			\centering 
			\centerline{\includegraphics[width=0.9\textwidth, height=0.9\textwidth]{./fig/animal10/img_1/label_0_9049.png}} 
			\centerline{\color{blue}{cat}}
		\end{minipage}} \\ \vspace{-2mm}
		\subfigure[Samples selected from Animal-10N \cite{song2019selfie}. The original training label is {\color{red}{wolf}}.]{
		\begin{minipage}[b]{0.12\textwidth} 
			\centering 
			\centerline{\includegraphics[width=0.9\textwidth, height=0.9\textwidth]{./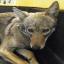}} 
			\centerline{\color{blue}{coyote}}
		\end{minipage}
		\begin{minipage}[b]{0.12\textwidth} 
			\centering 
			\centerline{\includegraphics[width=0.9\textwidth, height=0.9\textwidth]{./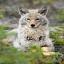}} 
			\centerline{\color{blue}{coyote}}
		\end{minipage}
		\begin{minipage}[b]{0.12\textwidth} 
			\centering 
			\centerline{\includegraphics[width=0.9\textwidth, height=0.9\textwidth]{./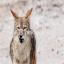}} 
			\centerline{\color{blue}{coyote}}
		\end{minipage}
		\begin{minipage}[b]{0.12\textwidth} 
			\centering 
			\centerline{\includegraphics[width=0.9\textwidth, height=0.9\textwidth]{./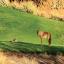}} 
			\centerline{\color{blue}{coyote}}
		\end{minipage}
		\begin{minipage}[b]{0.12\textwidth} 
			\centering 
			\centerline{\includegraphics[width=0.9\textwidth, height=0.9\textwidth]{./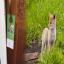}} 
			\centerline{\color{blue}{coyote}}
		\end{minipage}
		\begin{minipage}[b]{0.12\textwidth} 
			\centering 
			\centerline{\includegraphics[width=0.9\textwidth, height=0.9\textwidth]{./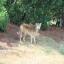}} 
			\centerline{\color{blue}{coyote}}
		\end{minipage}
		\begin{minipage}[b]{0.12\textwidth} 
			\centering 
			\centerline{\includegraphics[width=0.9\textwidth, height=0.9\textwidth]{./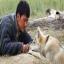}} 
			\centerline{\color{blue}{coyote}}
		\end{minipage}
		\begin{minipage}[b]{0.12\textwidth} 
			\centering 
			\centerline{\includegraphics[width=0.9\textwidth, height=0.9\textwidth]{./fig/animal10/img_1/label_0_9049.png}} 
			\centerline{\color{blue}{coyote}}
		\end{minipage}}\\  \vspace{-2mm}
		\subfigure[Samples selected from Animal-10N \cite{song2019selfie}. The original training label is {\color{red}{coyote}}.]{
		\begin{minipage}[b]{0.12\textwidth} 
			\centering 
			\centerline{\includegraphics[width=0.9\textwidth, height=0.9\textwidth]{./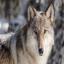}} 
			\centerline{\color{blue}{wolf}}
		\end{minipage}
		\begin{minipage}[b]{0.12\textwidth} 
			\centering 
			\centerline{\includegraphics[width=0.9\textwidth, height=0.9\textwidth]{./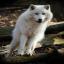}} 
			\centerline{\color{blue}{wolf}}
		\end{minipage}
		\begin{minipage}[b]{0.12\textwidth} 
			\centering 
			\centerline{\includegraphics[width=0.9\textwidth, height=0.9\textwidth]{./fig/animal10/img_3/label_2_17156.png}} 
			\centerline{\color{blue}{wolf}}
		\end{minipage}
		\begin{minipage}[b]{0.12\textwidth} 
			\centering 
			\centerline{\includegraphics[width=0.9\textwidth, height=0.9\textwidth]{./fig/animal10/img_3/label_2_17156.png}} 
			\centerline{\color{blue}{wolf}}
		\end{minipage}
		\begin{minipage}[b]{0.12\textwidth} 
			\centering 
			\centerline{\includegraphics[width=0.9\textwidth, height=0.9\textwidth]{./fig/animal10/img_3/label_2_17156.png}} 
			\centerline{\color{blue}{wolf}}
		\end{minipage}
		\begin{minipage}[b]{0.12\textwidth} 
			\centering 
			\centerline{\includegraphics[width=0.9\textwidth, height=0.9\textwidth]{./fig/animal10/img_3/label_2_17156.png}} 
			\centerline{\color{blue}{wolf}}
		\end{minipage}
		\begin{minipage}[b]{0.12\textwidth} 
			\centering 
			\centerline{\includegraphics[width=0.9\textwidth, height=0.9\textwidth]{./fig/animal10/img_3/label_2_17156.png}} 
			\centerline{\color{blue}{wolf}}
		\end{minipage}
		\begin{minipage}[b]{0.12\textwidth} 
			\centering 
			\centerline{\includegraphics[width=0.9\textwidth, height=0.9\textwidth]{./fig/animal10/img_3/label_2_17156.png}} 
			\centerline{\color{blue}{wolf}}
		\end{minipage}} \\ \vspace{-2mm}
		\subfigure[Samples selected from Animal-10N \cite{song2019selfie}. The original training label is {\color{red}{chimpanzee}}.]{
		\begin{minipage}[b]{0.12\textwidth} 
			\centering 
			\centerline{\includegraphics[width=0.9\textwidth, height=0.9\textwidth]{./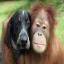}} 
			\centerline{\scriptsize{\color{blue}{cachimpanzeet}}}
		\end{minipage}
		\begin{minipage}[b]{0.12\textwidth} 
			\centering 
			\centerline{\includegraphics[width=0.9\textwidth, height=0.9\textwidth]{./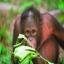}} 
			\centerline{\scriptsize{\color{blue}{cachimpanzeet}}}
		\end{minipage}
		\begin{minipage}[b]{0.12\textwidth} 
			\centering 
			\centerline{\includegraphics[width=0.9\textwidth, height=0.9\textwidth]{./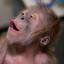}} 
			\centerline{\scriptsize{\color{blue}{cachimpanzeet}}}
		\end{minipage}
		\begin{minipage}[b]{0.12\textwidth} 
			\centering 
			\centerline{\includegraphics[width=0.9\textwidth, height=0.9\textwidth]{./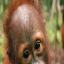}} 
			\centerline{\scriptsize{\color{blue}{cachimpanzeet}}}
		\end{minipage}
		\begin{minipage}[b]{0.12\textwidth} 
			\centering 
			\centerline{\includegraphics[width=0.9\textwidth, height=0.9\textwidth]{./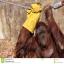}} 
			\centerline{\scriptsize{\color{blue}{cachimpanzeet}}}
		\end{minipage}
		\begin{minipage}[b]{0.12\textwidth} 
			\centering 
			\centerline{\includegraphics[width=0.9\textwidth, height=0.9\textwidth]{./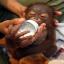}} 
			\centerline{\scriptsize{\color{blue}{cachimpanzeet}}}
		\end{minipage}
		\begin{minipage}[b]{0.12\textwidth} 
			\centering 
			\centerline{\includegraphics[width=0.9\textwidth, height=0.9\textwidth]{./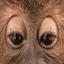}} 
			\centerline{\scriptsize{\color{blue}{cachimpanzeet}}}
		\end{minipage}
		\begin{minipage}[b]{0.12\textwidth} 
			\centering 
			\centerline{\includegraphics[width=0.9\textwidth, height=0.9\textwidth]{./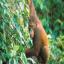}} 
			\centerline{\scriptsize{\color{blue}{cachimpanzeet}}}
		\end{minipage}}\\ \vspace{-2mm}
		\subfigure[Samples selected from Animal-10N \cite{song2019selfie}. The original training label is {\color{red}{cachimpanzeet}}.]{
		\begin{minipage}[b]{0.12\textwidth} 
			\centering 
			\centerline{\includegraphics[width=0.9\textwidth, height=0.9\textwidth]{./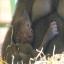}} 
			\centerline{\scriptsize{\color{blue}{chimpanzee}}}
		\end{minipage}
		\begin{minipage}[b]{0.12\textwidth} 
			\centering 
			\centerline{\includegraphics[width=0.9\textwidth, height=0.9\textwidth]{./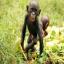}} 
			\centerline{\scriptsize{\color{blue}{chimpanzee}}}
		\end{minipage}
		\begin{minipage}[b]{0.12\textwidth} 
			\centering 
			\centerline{\includegraphics[width=0.9\textwidth, height=0.9\textwidth]{./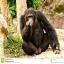}} 
			\centerline{\scriptsize{\color{blue}{chimpanzee}}}
		\end{minipage}
		\begin{minipage}[b]{0.12\textwidth} 
			\centering 
			\centerline{\includegraphics[width=0.9\textwidth, height=0.9\textwidth]{./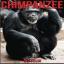}} 
			\centerline{\scriptsize{\color{blue}{chimpanzee}}}
		\end{minipage}
		\begin{minipage}[b]{0.12\textwidth} 
			\centering 
			\centerline{\includegraphics[width=0.9\textwidth, height=0.9\textwidth]{./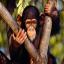}} 
			\centerline{\scriptsize{\color{blue}{chimpanzee}}}
		\end{minipage}
		\begin{minipage}[b]{0.12\textwidth} 
			\centering 
			\centerline{\includegraphics[width=0.9\textwidth, height=0.9\textwidth]{./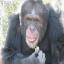}} 
			\centerline{\scriptsize{\color{blue}{chimpanzee}}}
		\end{minipage}
		\begin{minipage}[b]{0.12\textwidth} 
			\centering 
			\centerline{\includegraphics[width=0.9\textwidth, height=0.9\textwidth]{./fig/animal10/img_7/label_6_35391.png}} 
			\centerline{\scriptsize{\color{blue}{chimpanzee}}}
		\end{minipage}
		\begin{minipage}[b]{0.12\textwidth} 
			\centering 
			\centerline{\includegraphics[width=0.9\textwidth, height=0.9\textwidth]{./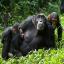}} 
			\centerline{\scriptsize{\color{blue}{chimpanzee}}}
		\end{minipage}} \\ \vspace{-2mm}
		\subfigure[Samples selected from Animal-10N \cite{song2019selfie}. The original training label is {\color{red}{hamster}}.]{
		\begin{minipage}[b]{0.12\textwidth} 
			\centering 
			\centerline{\includegraphics[width=0.9\textwidth, height=0.9\textwidth]{./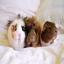}} 
			\centerline{\color{blue}{guinea pig}}
		\end{minipage}
		\begin{minipage}[b]{0.12\textwidth} 
			\centering 
			\centerline{\includegraphics[width=0.9\textwidth, height=0.9\textwidth]{./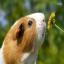}} 
			\centerline{\color{blue}{guinea pig}}
		\end{minipage}
		\begin{minipage}[b]{0.12\textwidth} 
			\centering 
			\centerline{\includegraphics[width=0.9\textwidth, height=0.9\textwidth]{./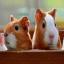}} 
			\centerline{\color{blue}{guinea pig}}
		\end{minipage}
		\begin{minipage}[b]{0.12\textwidth} 
			\centering 
			\centerline{\includegraphics[width=0.9\textwidth, height=0.9\textwidth]{./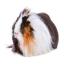}} 
			\centerline{\color{blue}{guinea pig}}
		\end{minipage}
		\begin{minipage}[b]{0.12\textwidth} 
			\centering 
			\centerline{\includegraphics[width=0.9\textwidth, height=0.9\textwidth]{./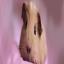}} 
			\centerline{\color{blue}{guinea pig}}
		\end{minipage}
		\begin{minipage}[b]{0.12\textwidth} 
			\centering 
			\centerline{\includegraphics[width=0.9\textwidth, height=0.9\textwidth]{./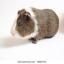}} 
			\centerline{\color{blue}{guinea pig}}
		\end{minipage}
		\begin{minipage}[b]{0.12\textwidth} 
			\centering 
			\centerline{\includegraphics[width=0.9\textwidth, height=0.9\textwidth]{./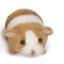}} 
			\centerline{\color{blue}{guinea pig}}
		\end{minipage}
		\begin{minipage}[b]{0.12\textwidth} 
			\centering 
			\centerline{\includegraphics[width=0.9\textwidth, height=0.9\textwidth]{./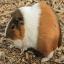}} 
			\centerline{\color{blue}{guinea pig}}
		\end{minipage}
	} \\ \vspace{-2mm}
		\subfigure[Samples selected from Animal-10N \cite{song2019selfie}. The original training label is {\color{red}{guinea pig}}.]{
		\begin{minipage}[b]{0.12\textwidth} 
			\centering 
			\centerline{\includegraphics[width=0.9\textwidth, height=0.9\textwidth]{./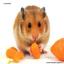}} 
			\centerline{\color{blue}{hamster}}
		\end{minipage}
		\begin{minipage}[b]{0.12\textwidth} 
			\centering 
			\centerline{\includegraphics[width=0.9\textwidth, height=0.9\textwidth]{./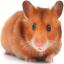}} 
			\centerline{\color{blue}{hamster}}
		\end{minipage}
		\begin{minipage}[b]{0.12\textwidth} 
			\centering 
			\centerline{\includegraphics[width=0.9\textwidth, height=0.9\textwidth]{./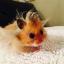}} 
			\centerline{\color{blue}{hamster}}
		\end{minipage}
		\begin{minipage}[b]{0.12\textwidth} 
			\centering 
			\centerline{\includegraphics[width=0.9\textwidth, height=0.9\textwidth]{./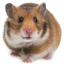}} 
			\centerline{\color{blue}{hamster}}
		\end{minipage}
		\begin{minipage}[b]{0.12\textwidth} 
			\centering 
			\centerline{\includegraphics[width=0.9\textwidth, height=0.9\textwidth]{./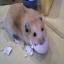}} 
			\centerline{\color{blue}{hamster}}
		\end{minipage}
		\begin{minipage}[b]{0.12\textwidth} 
			\centering 
			\centerline{\includegraphics[width=0.9\textwidth, height=0.9\textwidth]{./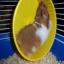}} 
			\centerline{\color{blue}{hamster}}
		\end{minipage}
		\begin{minipage}[b]{0.12\textwidth} 
			\centering 
			\centerline{\includegraphics[width=0.9\textwidth, height=0.9\textwidth]{./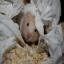}} 
			\centerline{\color{blue}{hamster}}
		\end{minipage}
		\begin{minipage}[b]{0.12\textwidth} 
			\centering 
			\centerline{\includegraphics[width=0.9\textwidth, height=0.9\textwidth]{./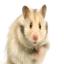}} 
			\centerline{\color{blue}{hamster}}
		\end{minipage}} \\ \vspace{-2mm}
		\caption{Examples of randomly selected samples with noisy labels corrected by our method on Animal-10N dataset \cite{song2019selfie}. The original training labels and generated pseudo-labels by our model are shown in {\color{red}{red}} and {\color{blue}{blue}}, respectively.}	\label{figanifig}
\end{figure*}

\begin{figure*}
	\centering 
	\subfigure[Samples selected from mini-WebVision \cite{li2017webvision}. The original training labels are {\color{red}{electric ray, crampfish, numbfish, torpedo}}.]{
		\begin{minipage}[b]{0.12\textwidth} 
			\centering 
			\centerline{\includegraphics[width=0.9\textwidth, height=0.9\textwidth]{./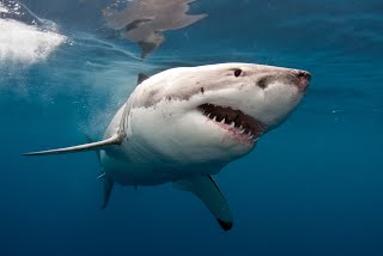}} 
			\centerline{\scriptsize\color{blue}{great white shark}}
		\end{minipage}
		\begin{minipage}[b]{0.12\textwidth} 
			\centering 
			\centerline{\includegraphics[width=0.9\textwidth, height=0.9\textwidth]{./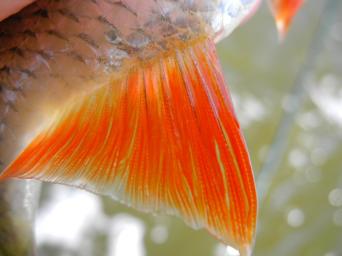}} 
			\centerline{\scriptsize\color{blue}{carassius auratus}}
		\end{minipage}
		\begin{minipage}[b]{0.12\textwidth} 
			\centering 
			\centerline{\includegraphics[width=0.9\textwidth, height=0.9\textwidth]{./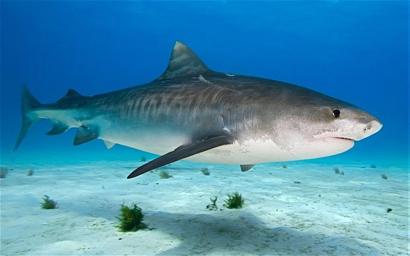}} 
			\centerline{\scriptsize\color{blue}{tiger shark}}
		\end{minipage}
		\begin{minipage}[b]{0.12\textwidth} 
			\centering 
			\centerline{\includegraphics[width=0.9\textwidth, height=0.9\textwidth]{./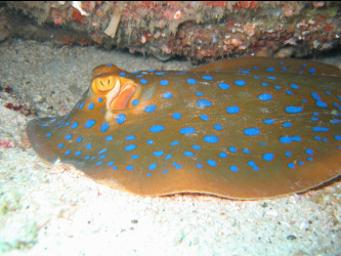}} 
			\centerline{\scriptsize\color{blue}{stingray}}
		\end{minipage}
		\begin{minipage}[b]{0.12\textwidth} 
			\centering 
			\centerline{\includegraphics[width=0.9\textwidth, height=0.9\textwidth]{./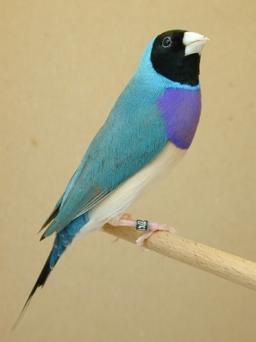}} 
			\centerline{\scriptsize\color{blue}{indigo bunting}}
		\end{minipage}
		\begin{minipage}[b]{0.12\textwidth} 
			\centering 
			\centerline{\includegraphics[width=0.9\textwidth, height=0.9\textwidth]{./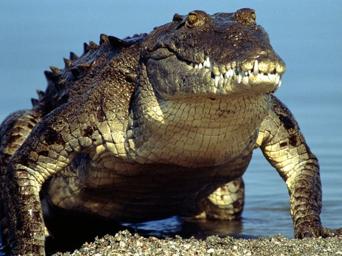}} 
			\centerline{\scriptsize\color{blue}{African crocodile}}
		\end{minipage}
		\begin{minipage}[b]{0.12\textwidth} 
			\centering 
			\centerline{\includegraphics[width=0.9\textwidth, height=0.9\textwidth]{./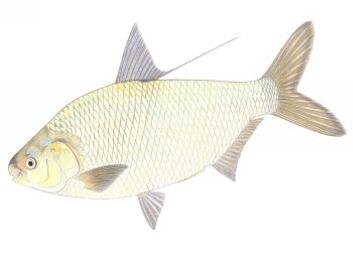}} 
			\centerline{\scriptsize\color{blue}{Tinca tinca}}
		\end{minipage}
		\begin{minipage}[b]{0.12\textwidth} 
			\centering 
			\centerline{\includegraphics[width=0.9\textwidth, height=0.9\textwidth]{./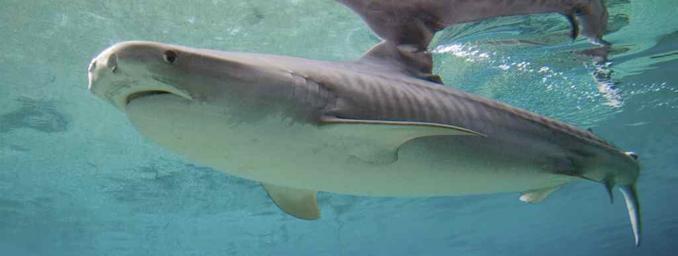}} 
			\centerline{\scriptsize\color{blue}{tiger shark}}
		\end{minipage}} \\ \vspace{-2mm}
		\subfigure[Samples selected from mini-WebVision \cite{li2017webvision}. The original training labels are {\color{red}{house finch, linnet, Carpodacus mexicanus}}.]{
			\begin{minipage}[b]{0.12\textwidth} 
				\centering 
				\centerline{\includegraphics[width=0.9\textwidth, height=0.9\textwidth]{./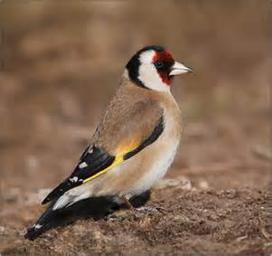}} 
				\centerline{\color{blue}{goldfinch}}
			\end{minipage}
			\begin{minipage}[b]{0.12\textwidth} 
				\centering 
				\centerline{\includegraphics[width=0.9\textwidth, height=0.9\textwidth]{./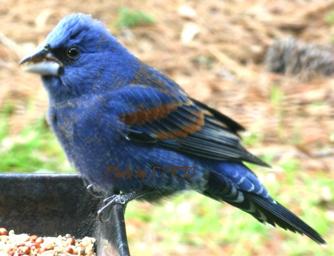}} 
				\centerline{\color{blue}{indigo bunting}}
			\end{minipage}
			\begin{minipage}[b]{0.12\textwidth} 
				\centering 
				\centerline{\includegraphics[width=0.9\textwidth, height=0.9\textwidth]{./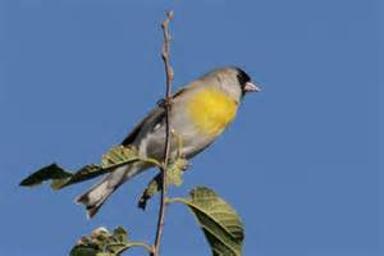}} 
				\centerline{\color{blue}{goldfinch}}
			\end{minipage}
			\begin{minipage}[b]{0.12\textwidth} 
				\centering 
				\centerline{\includegraphics[width=0.9\textwidth, height=0.9\textwidth]{./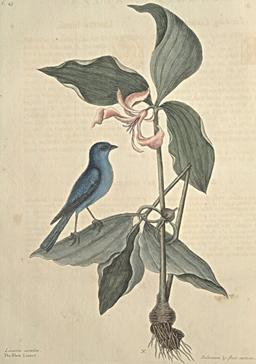}} 
				\centerline{\color{blue}{indigo bunting}}
			\end{minipage}
			\begin{minipage}[b]{0.12\textwidth} 
				\centering 
				\centerline{\includegraphics[width=0.9\textwidth, height=0.9\textwidth]{./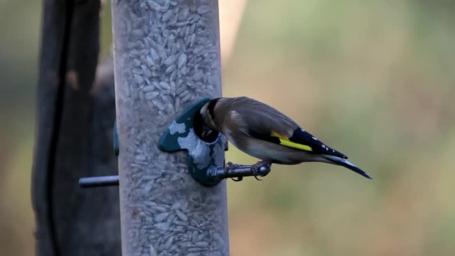}} 
				\centerline{\color{blue}{goldfinch}}
			\end{minipage}
			\begin{minipage}[b]{0.12\textwidth} 
				\centering 
				\centerline{\includegraphics[width=0.9\textwidth, height=0.9\textwidth]{./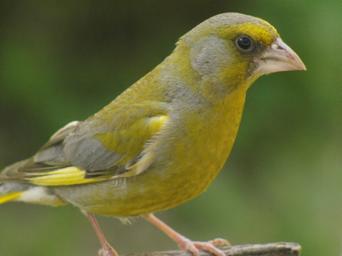}} 
				\centerline{\color{blue}{goldfinch}}
			\end{minipage}
			\begin{minipage}[b]{0.12\textwidth} 
				\centering 
				\centerline{\includegraphics[width=0.9\textwidth, height=0.9\textwidth]{./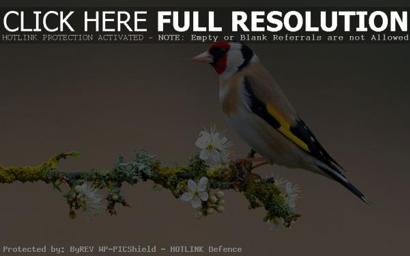}} 
				\centerline{\color{blue}{goldfinch}}
			\end{minipage}
			\begin{minipage}[b]{0.12\textwidth} 
				\centering 
				\centerline{\includegraphics[width=0.9\textwidth, height=0.9\textwidth]{./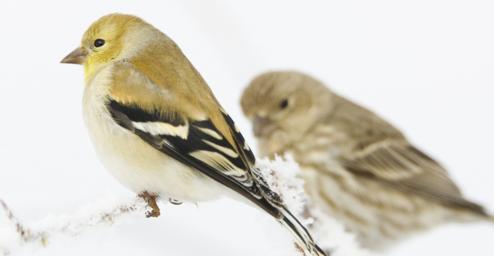}} 
				\centerline{\color{blue}{goldfinch}}
			\end{minipage}} \\ \vspace{-2mm}
			\subfigure[Samples selected from mini-WebVision \cite{li2017webvision}. The original training labels are {\color{red}{robin, American robin, Turdus migratorius}}.]{
				\begin{minipage}[b]{0.12\textwidth} 
					\centering 
					\centerline{\includegraphics[width=0.9\textwidth, height=0.9\textwidth]{./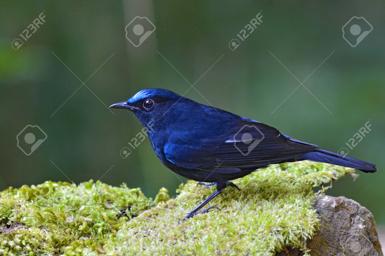}} 
					\centerline{\color{blue}{indigo bunting}}
				\end{minipage}
				\begin{minipage}[b]{0.12\textwidth} 
					\centering 
					\centerline{\includegraphics[width=0.9\textwidth, height=0.9\textwidth]{./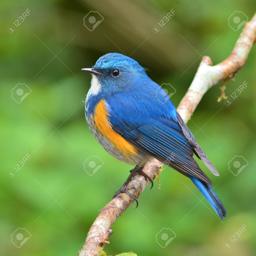}} 
					\centerline{\color{blue}{indigo bunting}}
				\end{minipage}
				\begin{minipage}[b]{0.12\textwidth} 
					\centering 
					\centerline{\includegraphics[width=0.9\textwidth, height=0.9\textwidth]{./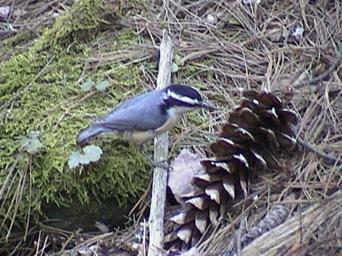}} 
					\centerline{\color{blue}{chickadee}}
				\end{minipage}
				\begin{minipage}[b]{0.12\textwidth} 
					\centering 
					\centerline{\includegraphics[width=0.9\textwidth, height=0.9\textwidth]{./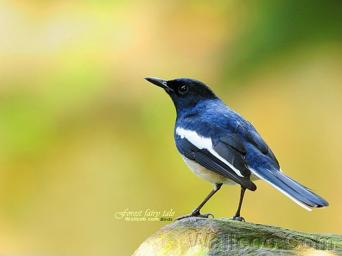}} 
					\centerline{\color{blue}{magpie}}
				\end{minipage}
				\begin{minipage}[b]{0.12\textwidth} 
					\centering 
					\centerline{\includegraphics[width=0.9\textwidth, height=0.9\textwidth]{./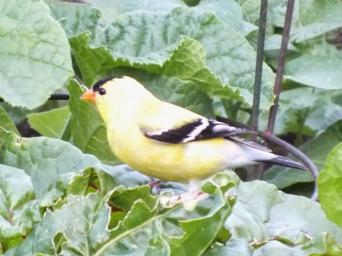}} 
					\centerline{\color{blue}{goldfinch}}
				\end{minipage}
				\begin{minipage}[b]{0.12\textwidth} 
					\centering 
					\centerline{\includegraphics[width=0.9\textwidth, height=0.9\textwidth]{./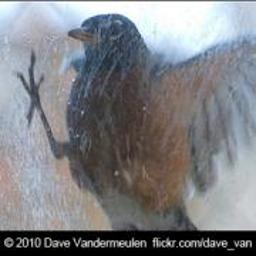}} 
					\centerline{\color{blue}{snowbird}}
				\end{minipage}
				\begin{minipage}[b]{0.12\textwidth} 
					\centering 
					\centerline{\includegraphics[width=0.9\textwidth, height=0.9\textwidth]{./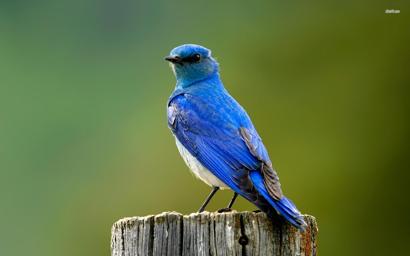}} 
					\centerline{\color{blue}{indigo bunting}}
				\end{minipage}
				\begin{minipage}[b]{0.12\textwidth} 
					\centering 
					\centerline{\includegraphics[width=0.9\textwidth, height=0.9\textwidth]{./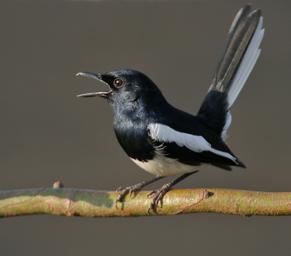}} 
					\centerline{\color{blue}{magpie}}
				\end{minipage}}\\  \vspace{-2mm}
				\subfigure[Samples selected from mini-WebVision \cite{li2017webvision}. The original training labels are {\color{red}{leatherback turtle, leatherback, leathery turtle, Dermochelys coriacea}}.]{
					\begin{minipage}[b]{0.12\textwidth} 
						\centering 
						\centerline{\includegraphics[width=0.9\textwidth, height=0.9\textwidth]{./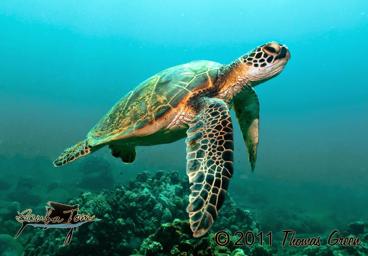}} 
						\centerline{\color{blue}{loggerhead turtle}}
					\end{minipage}
					\begin{minipage}[b]{0.12\textwidth} 
						\centering 
						\centerline{\includegraphics[width=0.9\textwidth, height=0.9\textwidth]{./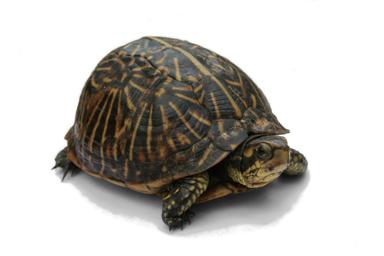}} 
						\centerline{\color{blue}{box turtle}}
					\end{minipage}
					\begin{minipage}[b]{0.12\textwidth} 
						\centering 
						\centerline{\includegraphics[width=0.9\textwidth, height=0.9\textwidth]{./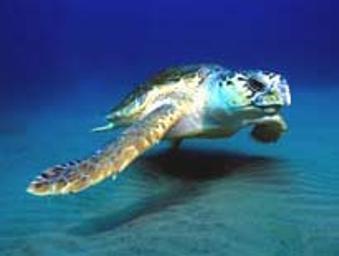}} 
						\centerline{\color{blue}{loggerhead turtle}}
					\end{minipage}
					\begin{minipage}[b]{0.12\textwidth} 
						\centering 
						\centerline{\includegraphics[width=0.9\textwidth, height=0.9\textwidth]{./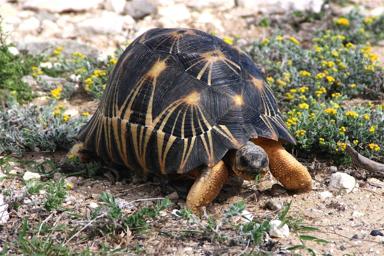}} 
						\centerline{\color{blue}{box turtle}}
					\end{minipage}
					\begin{minipage}[b]{0.12\textwidth} 
						\centering 
						\centerline{\includegraphics[width=0.9\textwidth, height=0.9\textwidth]{./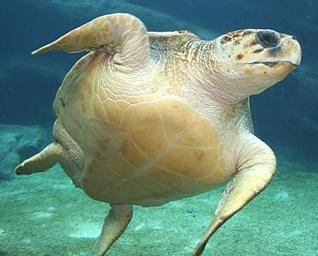}} 
						\centerline{\color{blue}{loggerhead turtle}}
					\end{minipage}
					\begin{minipage}[b]{0.12\textwidth} 
						\centering 
						\centerline{\includegraphics[width=0.9\textwidth, height=0.9\textwidth]{./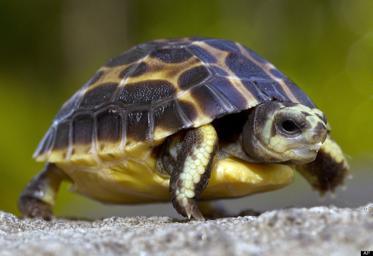}} 
						\centerline{\color{blue}{box turtle}}
					\end{minipage}
					\begin{minipage}[b]{0.12\textwidth} 
						\centering 
						\centerline{\includegraphics[width=0.9\textwidth, height=0.9\textwidth]{./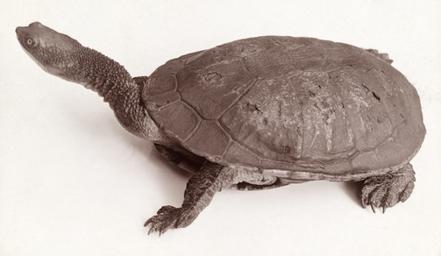}} 
						\centerline{\color{blue}{mud turtle}}
					\end{minipage}
					\begin{minipage}[b]{0.12\textwidth} 
						\centering 
						\centerline{\includegraphics[width=0.9\textwidth, height=0.9\textwidth]{./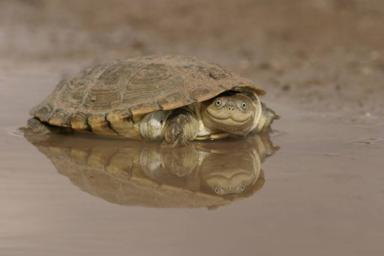}} 
						\centerline{\color{blue}{mud turtle}}
					\end{minipage}} \\ \vspace{-2mm}
					\subfigure[Samples selected from mini-WebVision \cite{li2017webvision}. The original training labels are {\color{red}{common iguana, iguana, Iguana iguana}}.]{
						\begin{minipage}[b]{0.12\textwidth} 
							\centering 
							\centerline{\includegraphics[width=0.9\textwidth, height=0.9\textwidth]{./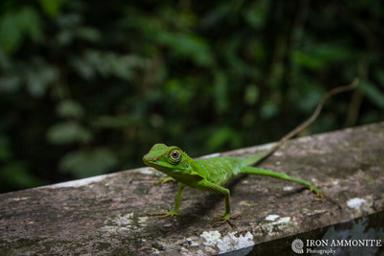}} 
							\centerline{\scriptsize{\color{blue}{American chameleon}}}
						\end{minipage}
						\begin{minipage}[b]{0.12\textwidth} 
							\centering 
							\centerline{\includegraphics[width=0.9\textwidth, height=0.9\textwidth]{./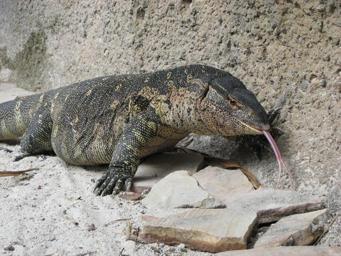}} 
							\centerline{\scriptsize{\color{blue}{Komodo dragon}}}
						\end{minipage}
						\begin{minipage}[b]{0.12\textwidth} 
							\centering 
							\centerline{\includegraphics[width=0.9\textwidth, height=0.9\textwidth]{./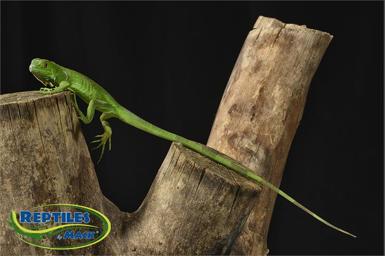}} 
							\centerline{\scriptsize{\color{blue}{Anolis carolinensis}}}
						\end{minipage}
						\begin{minipage}[b]{0.12\textwidth} 
							\centering 
							\centerline{\includegraphics[width=0.9\textwidth, height=0.9\textwidth]{./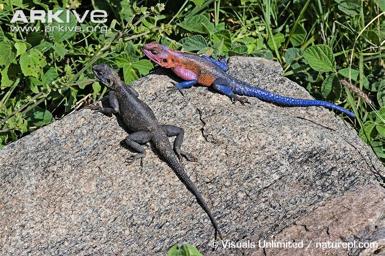}} 
							\centerline{\scriptsize{\color{blue}{agama}}}
						\end{minipage}
						\begin{minipage}[b]{0.12\textwidth} 
							\centering 
							\centerline{\includegraphics[width=0.9\textwidth, height=0.9\textwidth]{./fig/webvision/img_39/label_49247_40.png}} 
							\centerline{\scriptsize{\color{blue}{Anolis carolinensis}}}
						\end{minipage}
						\begin{minipage}[b]{0.12\textwidth} 
							\centering 
							\centerline{\includegraphics[width=0.9\textwidth, height=0.9\textwidth]{./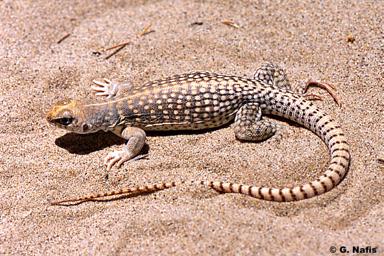}} 
							\centerline{\scriptsize{\color{blue}{whiptail lizard}}}
						\end{minipage}
						\begin{minipage}[b]{0.12\textwidth} 
							\centering 
							\centerline{\includegraphics[width=0.9\textwidth, height=0.9\textwidth]{./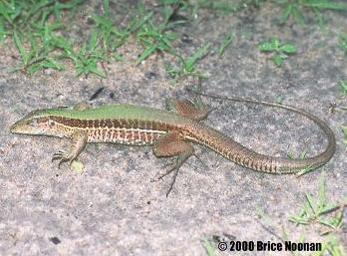}} 
							\centerline{\scriptsize{\color{blue}{whiptail lizard}}}
						\end{minipage}
						\begin{minipage}[b]{0.12\textwidth} 
							\centering 
							\centerline{\includegraphics[width=0.9\textwidth, height=0.9\textwidth]{./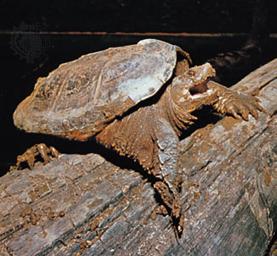}} 
							\centerline{\scriptsize{\color{blue}{mud turtle}}}
						\end{minipage}}\\ \vspace{-2mm}
						\subfigure[Samples selected from mini-WebVision \cite{li2017webvision}. The original training labels are {\color{red}{American chameleon, anole, Anolis carolinensis}}.]{
							\begin{minipage}[b]{0.12\textwidth} 
								\centering 
								\centerline{\includegraphics[width=0.9\textwidth, height=0.9\textwidth]{./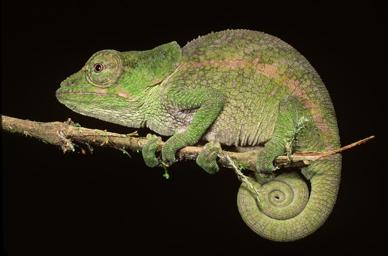}} 
								\centerline{\scriptsize{\color{blue}{African chameleon}}}
							\end{minipage}
							\begin{minipage}[b]{0.12\textwidth} 
								\centering 
								\centerline{\includegraphics[width=0.9\textwidth, height=0.9\textwidth]{./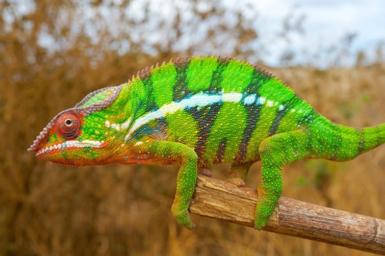}} 
								\centerline{\scriptsize{\color{blue}{African chameleon}}}
							\end{minipage}
							\begin{minipage}[b]{0.12\textwidth} 
								\centering 
								\centerline{\includegraphics[width=0.9\textwidth, height=0.9\textwidth]{./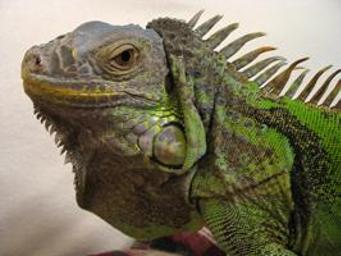}} 
								\centerline{\scriptsize{\color{blue}{common iguana}}}
							\end{minipage}
							\begin{minipage}[b]{0.12\textwidth} 
								\centering 
								\centerline{\includegraphics[width=0.9\textwidth, height=0.9\textwidth]{./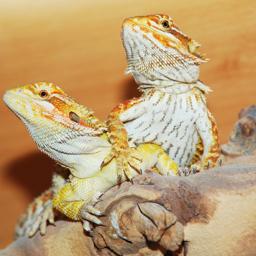}} 
								\centerline{\scriptsize{\color{blue}{agama}}}
							\end{minipage}
							\begin{minipage}[b]{0.12\textwidth} 
								\centering 
								\centerline{\includegraphics[width=0.9\textwidth, height=0.9\textwidth]{./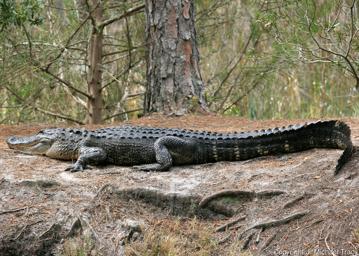}} 
								\centerline{\scriptsize{\color{blue}{African crocodile}}}
							\end{minipage}
							\begin{minipage}[b]{0.12\textwidth} 
								\centering 
								\centerline{\includegraphics[width=0.9\textwidth, height=0.9\textwidth]{./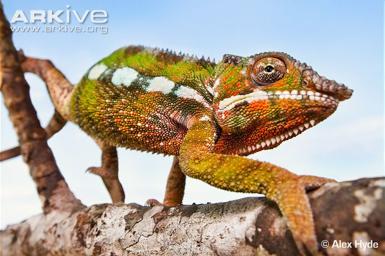}} 
								\centerline{\scriptsize{\color{blue}{African chameleon}}}
							\end{minipage}
							\begin{minipage}[b]{0.12\textwidth} 
								\centering 
								\centerline{\includegraphics[width=0.9\textwidth, height=0.9\textwidth]{./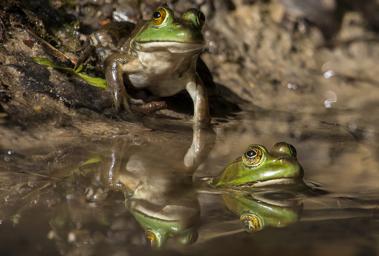}} 
								\centerline{\scriptsize{\color{blue}{bullfrog}}}
							\end{minipage}
							\begin{minipage}[b]{0.12\textwidth} 
								\centering 
								\centerline{\includegraphics[width=0.9\textwidth, height=0.9\textwidth]{./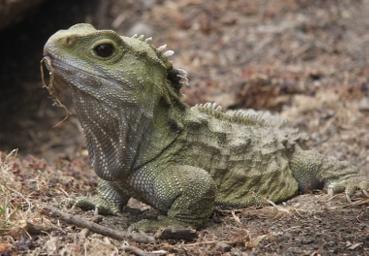}} 
								\centerline{\scriptsize{\color{blue}{common iguana}}}
							\end{minipage}} \\ \vspace{-2mm}
							\subfigure[Samples selected from mini-WebVision \cite{li2017webvision}. The original training labels are {\color{red}{Komodo dragon, Komodo lizard, dragon lizard, giant lizard, Varanus komodoensis}}.]{
								\begin{minipage}[b]{0.12\textwidth} 
									\centering 
									\centerline{\includegraphics[width=0.9\textwidth, height=0.9\textwidth]{./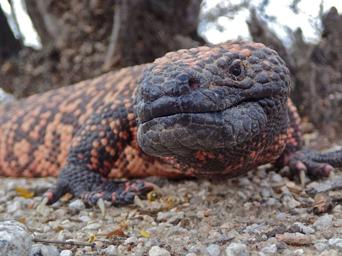}} 
									\centerline{\color{blue}{Gila monster}}
								\end{minipage}
								\begin{minipage}[b]{0.12\textwidth} 
									\centering 
									\centerline{\includegraphics[width=0.9\textwidth, height=0.9\textwidth]{./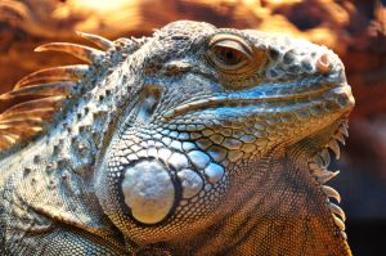}} 
									\centerline{\color{blue}{common iguana}}
								\end{minipage}
								\begin{minipage}[b]{0.12\textwidth} 
									\centering 
									\centerline{\includegraphics[width=0.9\textwidth, height=0.9\textwidth]{./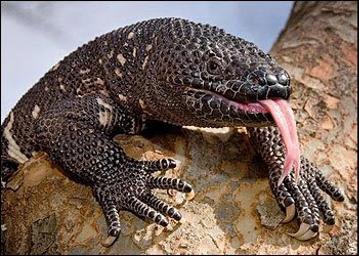}} 
									\centerline{\color{blue}{Gila monster}}
								\end{minipage}
								\begin{minipage}[b]{0.12\textwidth} 
									\centering 
									\centerline{\includegraphics[width=0.9\textwidth, height=0.9\textwidth]{./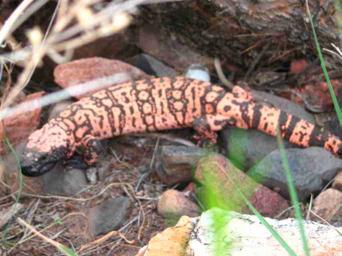}} 
									\centerline{\color{blue}{Gila monster}}
								\end{minipage}
								\begin{minipage}[b]{0.12\textwidth} 
									\centering 
									\centerline{\includegraphics[width=0.9\textwidth, height=0.9\textwidth]{./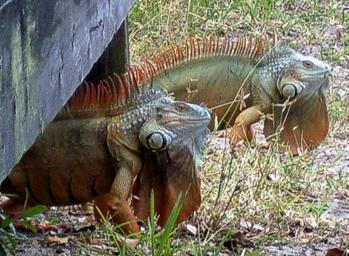}} 
									\centerline{\color{blue}{common iguana}}
								\end{minipage}
								\begin{minipage}[b]{0.12\textwidth} 
									\centering 
									\centerline{\includegraphics[width=0.9\textwidth, height=0.9\textwidth]{./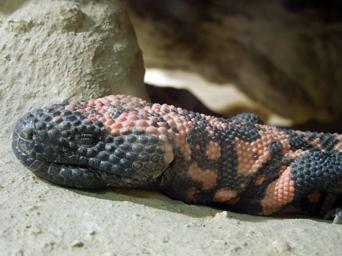}} 
									\centerline{\color{blue}{Gila monster}}
								\end{minipage}
								\begin{minipage}[b]{0.12\textwidth} 
									\centering 
									\centerline{\includegraphics[width=0.9\textwidth, height=0.9\textwidth]{./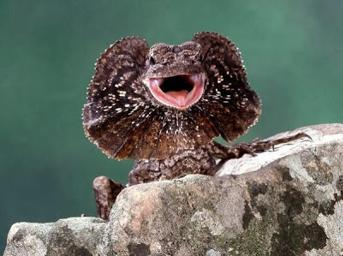}} 
									\centerline{\color{blue}{frilled lizard}}
								\end{minipage}
								\begin{minipage}[b]{0.12\textwidth} 
									\centering 
									\centerline{\includegraphics[width=0.9\textwidth, height=0.9\textwidth]{./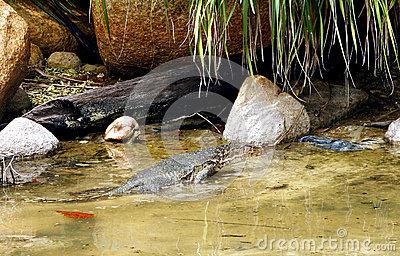}} 
									\centerline{\color{blue}{African crocodile}}
								\end{minipage}
							} \\ \vspace{-2mm}
							\subfigure[Samples selected from mini-WebVision \cite{li2017webvision}. The original training label is {\color{red}{tailed frog}}.]{ \label{figsm3}
								\begin{minipage}[b]{0.12\textwidth} 
									\centering 
									\centerline{\includegraphics[width=0.9\textwidth, height=0.9\textwidth]{./fig/img_32/label_40546_31.png}} 
									\centerline{\color{blue}{tree frog}}
								\end{minipage}
								\begin{minipage}[b]{0.12\textwidth} 
									\centering 
									\centerline{\includegraphics[width=0.9\textwidth, height=0.9\textwidth]{./fig/img_32/label_42460_31.png}} 
									\centerline{\color{blue}{tree frog}}
								\end{minipage}
								\begin{minipage}[b]{0.12\textwidth} 
									\centering 
									\centerline{\includegraphics[width=0.9\textwidth, height=0.9\textwidth]{./fig/img_32/label_40361_30.png}} 
									\centerline{\color{blue}{bullfrog}}
								\end{minipage}
								\begin{minipage}[b]{0.12\textwidth} 
									\centering 
									\centerline{\includegraphics[width=0.9\textwidth, height=0.9\textwidth]{./fig/img_32/label_41191_31.png}} 
									\centerline{\color{blue}{tree frog}}
								\end{minipage}
								\begin{minipage}[b]{0.12\textwidth} 
									\centering 
									\centerline{\includegraphics[width=0.9\textwidth, height=0.9\textwidth]{./fig/img_32/label_40362_30.png}} 
									\centerline{\color{blue}{bullfrog}}
								\end{minipage}
								\begin{minipage}[b]{0.12\textwidth} 
									\centering 
									\centerline{\includegraphics[width=0.9\textwidth, height=0.9\textwidth]{./fig/img_32/label_39820_30.png}} 
									\centerline{\color{blue}{bullfrog}}
								\end{minipage}
								\begin{minipage}[b]{0.12\textwidth} 
									\centering 
									\centerline{\includegraphics[width=0.9\textwidth, height=0.9\textwidth]{./fig/img_32/label_41184_35.png}} 
									\centerline{\color{blue}{mud turtle}}
								\end{minipage}
								\begin{minipage}[b]{0.12\textwidth} 
									\centering 
									\centerline{\includegraphics[width=0.9\textwidth, height=0.9\textwidth]{./fig/img_32/label_40763_29.png}} 
									\centerline{\color{blue}{axolotl}}
								\end{minipage}}  \\ \vspace{-4mm}
								\caption{Examples of randomly selected samples with noisy labels corrected by our method on mini-WebVision \cite{li2017webvision}. The original training labels and generated pseudo-labels by model are shown in {\color{red}{red}} and {\color{blue}{blue}}, respectively. }	\label{figwebfig}
							\end{figure*}

\subsection{Webly Supervised Fine-Grained Recognition}

\textbf{WebFG-496.}\ This dataset consists of three sub-datasets: Web-aircraft, Web-bird, and Web-car. WebFG-496 reuses the category labels of three famous manually labeled fine-grained datasets, FGVC-Aircraft, CUB200-2011, and Stanford Cars, which contain 100 types of airplanes, 200 species of birds, and 196 categories of cars, respectively, by collecting images from the Internet. It contains 53,339 training images with total 496 classes. The testing data take the testing sets in the original FGVC-Aircraft, CUB200-2011, and Stanford Cars.
We used Bilinear-CNN \cite{Lin2015Bilinear} as the classifier network. The network is pre-trained on ImageNet, and then fine-tuned on three sub-datasets of WebFG496. Following \cite{Lin2015Bilinear}, we adopt a two-stage training strategy. We firstly freeze the convolutional layer parameters and only optimize the last fully connected layers with the learning rate and batch size being $10^{-3}$ and 64 for total 200 epoch. Then we optimize the parameters of all layers in the fine-tuned model with learning rate and batch size being set as $10^{-4}$ and 32, respectively, for total 200 epoch. The learning rate of CMW-Net is fixed as $10^{-3}$, and the weight decay of CMW-Net is fixed as $10^{-4}$. We adopt the meta-data-generation strategy, as introduced in Sec. 4.2 of the main test, to randomly select 10 images per class at every epoch from the training set as the meta-data set.

\section{More Experimental Results and Experimental Settings in Section 6}

\textbf{ImageNet-LT.}\
The dataset is constructed as a long-tailed version of the original ImageNet-2012 \cite{deng2009imagenet} by sampling a subset following the Pareto distribution with the power value 6. It totally has 115.8K images from 1000 categories with maximally 1280 images per class and minimally 5 images per class. Following OLTR \cite{liu2019large}, besides the overall top-1 classification accuracy over all classes, we also calculate the accuracy of three disjoint subsets: many-shot classes (each with over training 100 samples), medium-shot classes (each with 20-100 training samples) and few-shot classes (each under 20 training samples). We adopt the two-stage training protocol following \cite{liu2019large}.  We use a Resnet-10 model initialized from scratch (i.e., random initialization) as the classifier model. We train the model with softmax cross-entropy loss by SGD with a momentum 0.9, a weight decay $5\times10^{-4}$, an initial learning rate 0.1 and a batch size of 128 for 30 epochs, and divide learning rate by 10 at 10 epoch.
The transferred CMW-Net is used at the first stage to produce proper sample weights for robust training. And it follows training protocol in \cite{liu2019large} at the second stage.

\textbf{WebVision.}\
WebVision \cite{li2017webvision} contains 2.4 million images crawled from Google and Flickr using 1,000 labels shared with the ImageNet dataset. Its training set is both heteroskedastic label noise and class imbalanced (more detailed statistics can be found in \cite{li2017webvision}), and it is considered as a popular benchmark for robust learning in the presence of heavy label noises.
We trained Inception-ResNet v2 \cite{szegedy2017inception} with softmax cross-entropy loss by SGD with a momentum 0.9, a weight decay $5\times 10^{-5}$, an initial learning rate 0.2 and a batch size of 256. The learning rate is divided by 10 after 30 and 60 epoch (for a total 90 epochs). The transferred CMW-Net is used at every iteration to produce proper sample weights for robust training.

\newpage
\section{More Experimental Results and Experimental Settings in Section 7}

\subsection{Partial-Label Learning}

We adopted two training stages to solve the problem. At the first stage, we train the network using the recent SOTA method, PRODEN \cite{lv2020progressive}, for 100 epochs and then we can get the training data with single noisy labels by one-hot encoding of the model predictions. Now the partial-label learning problem becomes  a conventional learning problem with all samples attached with single noisy labels. It is naturally to use the proposed method to further deal with such a problem. Thus at the second stage, we trained the network with obtained single noisy labeled data using the proposed CMW-Net method. We use a SGD optimizer with a momentum 0.9, a weight decay $5 \times 10^{-4}$, an initial learning rate 0.1, a batch size 128. The learning rate is divided by 10 after 80 and 100 epoch (for a total 120 epochs). We adopt Adam optimizer to learn CMW-Net. The learning rate of CMW-Net is fixed as $10^{-3}$, and the weight decay of CMW-Net is fixed as $10^{-4}$. We repeat the experiments with 3 random trials and report the mean value and standard deviation.
We adopt the meta-data-generation strategy as introduced in Sec. 4.2 of the main text, by randomly selecting 10 images per class at every epoch from the training set as the meta-data set.

Following PRODEN \cite{lv2020progressive}, we manually corrupt these datasets into partially labeled versions by a flipping probability $q$, where $q=P(\tilde{y}=1|y=0)$ gives the probability that a false positive label $\tilde{y}$ is flipped from a negative label $y$. We adopt a binomial flipping strategy: $c-1$ independent experiments are conducted on all training examples, each determining whether a negative label is flipped with probability $q$. Then for the examples that none of the negative labels are flipped, we additionally flip a random negative label to the candidate label set for ensuring all the training examples are partially labeled. We use five widely used benchmark datasets, including MNIST \cite{lecun1998gradient}, Fashion-MNIST
\cite{xiao2017fashion}, Kuzushiji-MNIST \cite{clanuwat2018deep}, CIFAR-10 and CIFAR-100 \cite{krizhevsky2009learning}.
For MNIST, Fashion-MNIST, and Kuzushiji-MNIST datasets, we use 5-layer perceptron (MLP), and for CIFAR-10 and CIFAR-100 dataset, we use ResNet-32 \cite{he2016deep} as the classifier network.

Table \ref{part} reports the mean test accuracies with standard deviation on five benchmark datasets. It can be seen that our method can consistently outperform the baseline PRODEN method under
both less-partial circumstances $q=0.1$ and stronger-partial circumstances $q=0.7$. Observing that PRODEN method behaves under strong-partial circumstances similarly as that under less-partial circumstances, which implies it tends to easily overfit to pseudo-labels estimated by model prediction. Considering that the obtained results are calculated on the basis of PRODEN method as single noisy labels dataset, our method can alleviate such pseudo-label issue and bring further performance improvement for such a partial label learning problem. Through introducing our method as a post-processing learning, we can obtain a more robust model based on the over-confident information. Particularly, our method can improve PRODEN method about 4-8 points on CIFAR-10 and 8-13 points on CIFAR-100 in classification accuracy. Applying our method to more partial-label learning method to obtain more robust results is thus potentially expected, and we leave this research for our future study.

\subsection{Semi-Supervised Learning}

Following Fixmatch \cite{sohn2020fixmatch}, we consider the settings by giving 4/25/400 labeled images for each class on CIFAR-10 and 4/25/100 labeled images for each class on CIFAR-100. We used WRN-28-2/WRN-28-8 for CIFAR-10/CIFAR-100 as the classifier network with an initial learning rate 0.03, a batch size 64 for label data and 448 for unlabeled data.
For ImageNet experiment, we use 10 \% of the training data for each class as labeled and treat the rest as unlabeled examples. We used ResNet-50 as the classifier and the batch size for labeled (unlabeled) images is 64 (320) with initial learning rate 0.03. We adopt RandAugment \cite{cubuk2020randaugment} as the strong augmentation for this experiment.
We adopt Adam optimizer to learn CMW-Net.
The learning rate of CMW-Net is fixed as $10^{-3}$, and the weight decay of CMW-Net is fixed as $10^{-4}$.

Table \ref{semi} shows the classification error rates on CIFAR-10/100 and ImageNet. From the table, one can observe that our method improves Fixmatch method and achieves the best performance under all label conditions on all datasets. Specifically, our method achieves around 2 points improvement on ImageNet as compared with Fixmatch, showing that our method is capable of finely handling such large-scale sample weight learning issue.

It should be noted that we have not used any extra meta-dataset in addition to the labeled images. Thus all comparison experiments on semi-supervised learning (SSL) have been implemented in a sufficiently fair manner for all comparison methods.

Specifically, in our meta-learning method, we directly take the provided labeled images in the implemented SSL task as meta-data, since they are relatively more delicately collected and with high label quality. Besides, we use the pseudo-labeled images automatically annotated in the training process from the unsupervised data as training data since they are with relatively lower label quality and inevitably contain label noises. This means that we have not used any extra labeled data to train CMW-Net, and the employed data source of our method is entirely similar to that used in the comparison FixMatch method.

To intrinsically explain why the proposed method can get such a performance gain as compared with other methods, we want to present the following explanations. We take the SOTA method FixMatch as example. The FixMatch is a self-training SSL method, which generates pseudo-labels of unlabeled images with the model’s predictions and then iteratively train the model with some selected reliable pseudo labeled samples, together with those pre-given labeled ones. In the method iteration, FixMatch uses a fixed confidence threshold to filter out unreliable pseudo labels (i.e., smaller than the pre-set hyper-parameter $\tau$). Albeit achieving good performance in some applications, the method mainly has two limitations. Firstly, it uses an essential hard-thresholding weighting manner by treating all selected pseudo-labeled samples equally (can be seen as imposing $1$-weight on these samples) and play similar role with those pre-given labeled images. The former, however, should evidently less reliable than the latter, and should more rationally be less weighted in a more elaborate soft-weight manner. Secondly, its involved hyper-parameter $\tau$ is pre-specified as a fixed constant. This is obviously not very appropriate, since this important parameter should be adaptably specified against different tasks, and even should be properly varied during iterations in handling one task to dynamically fit the reliability requirement in different training stages (e.g., less high-quality samples should be selected in the beginning but more in the end since the model is trained to be more mature in iteration).

Comparatively, our CMW-Net improves FixMatch by automatically learning a suitable weighting strategy from data substituting the original hard weighting scheme, to make sample weights capable of more sufficiently reflecting noise extents and  adaptable to training data/task. Two aforementioned limitations of FixMatch can thus be alleviated simultaneously. This explains why our method can get evident superior performance than FixMatch.

\begin{table*}
	\caption{Performance comparison of classification accuracy (\%) on partially labeled benchmark datasets.}\label{part} \vspace{0mm}
	\centering
	\setlength{\tabcolsep}{1mm}{
	\begin{tabular}{c|c|c|c|c|c|c}
		\toprule
		Dataset & Methods & Classifier & $q=0.1$ & $q=0.3$  &$q=0.5$ & $q=0.7$  \\ \hline
		\multirow{2}{*}{MNIST} & PRODEN & MLP & 98.59 $\pm$ 0.01 & 98.07 $\pm$ 0.03 & 98.42 $\pm$ 0.03 & 98.09 $\pm$ 0.05 \\
		& PRODEN+ Ours   & MLP & \textbf{98.99 $\pm$ 0.01} & \textbf{98.83 $\pm$ 0.02} & \textbf{98.57 $\pm$ 0.04} & \textbf{98.33 $\pm$ 0.02} \\ \hline\hline
		\multirow{2}{*}{Fashion-MNIST} & PRODEN & MLP & 89.51 $\pm$ 0.07 & 88.79 $\pm$ 0.06 & 88.32 $\pm$ 0.07 & 87.21 $\pm$ 0.13 \\
		& PRODEN+ Ours   & MLP & \textbf{90.47 $\pm$ 0.02} & \textbf{90.07 $\pm$ 0.05} & \textbf{89.38 $\pm$ 0.12} & \textbf{87.84 $\pm$ 0.13} \\\hline\hline
		\multirow{2}{*}{Kuzushiji-MNIST} & PRODEN & MLP & 91.07 $\pm$ 0.07 & 90.24 $\pm$ 0.12 & 88.31 $\pm$ 0.14 & 85.55 $\pm$ 0.58 \\
		& PRODEN+ Ours   &  MLP    & \textbf{93.07 $\pm$ 0.04} & \textbf{91.65 $\pm$ 0.03} & \textbf{88.86 $\pm$ 0.07} & \textbf{86.11 $\pm$ 0.17} \\\hline\hline
		\multirow{2}{*}{CIFAR-10} & PRODEN & ResNet-32 & 82.09 $\pm$ 0.05 & 81.70 $\pm$ 0.58 & 80.72 $\pm$ 1.08 & 76.24 $\pm$ 1.35 \\
		& PRODEN+ Ours   &  ResNet-32    & \textbf{89.77 $\pm$ 0.36} & \textbf{88.01 $\pm$ 0.27} & \textbf{86.04 $\pm$ 0.32} & \textbf{80.57 $\pm$ 1.33} \\\hline\hline
		--& -- & -- & $q=0.03$ & $q=0.05$  &$q=0.07$ & $q=0.10$  \\ \hline
		\multirow{2}{*}{CIFAR-100} & PRODEN & ResNet-32 & 48.06 $\pm$ 0.95 & 47.07 $\pm$ 1.32 & 46.49 $\pm$ 1.73 & 46.30 $\pm$ 1.98 \\
		& PRODEN+ Ours   & ResNet-32 &\textbf{ 61.22 $\pm$ 0.03} & \textbf{60.25 $\pm$ 0.17} & \textbf{59.17 $\pm$ 0.17} & \textbf{54.64 $\pm$ 0.15} \\
		\bottomrule
	\end{tabular}}
\end{table*}

\begin{table*}[t]
	\caption{Performance comparison of our method with SOTA methods trained on CIFAR-10, CIFAR-100 and ImageNet datasets in terms of test error over 3 trials. Results for all baselines are directly copied from \cite{sohn2020fixmatch}. } \label{semi}
	\centering
	\begin{tabular}{l|c|c|c|c|c|c|c}
		\toprule
		&   \multicolumn{3}{c|}{CIFAR-10}   &    \multicolumn{3}{c|}{CIFAR-100} & ImageNet\\
		\hline
		Method & 40 labels & 250 labels & 4000 labels &400 labels & 2500 labels & 10000 labels & 10\% labels\\ \hline
		$\Pi$-Model \cite{rasmus2015semi}& - & 54.26 $\pm$ 3.97 & 14.01 $\pm$ 0.38 & - & 57.25 $\pm$ 0.48 & 37.88 $\pm$ 0.11& - \\
		Pseudo-Labeling \cite{lee2013pseudo} & - & 49.78 $\pm$ 0.43 & 16.09 $\pm$ 0.28 & - & 57.38 $\pm$ 0.46 & 36.21 $\pm$ 0.19& - \\
		Mean Teacher \cite{tarvainen2017mean} & - & 32.32 $\pm$ 2.30 & 9.19 $\pm$ 0.19 & - & 53.91 $\pm$ 0.57 & 35.83 $\pm$ 0.24 & - \\
		MixMatch \cite{berthelot2019mixmatch} & 47.54 $\pm$ 11.50 & 11.05 $\pm$ 0.86 &  6.42 $\pm $0.10 & 67.61 $\pm$ 1.32 & 39.94 $\pm$ 0.37 & 28.31 $\pm$ 0.33 & - \\
		UDA \cite{xie2020unsupervised} & 29.05 $\pm$ 5.93 & 8.82 $\pm$ 1.08 & 4.88 $\pm$ 0.18 & 59.28 $\pm$ 0.88 & 33.13 $\pm$ 0.22 & 24.50 $\pm$ 0.25 & - \\
		FixMatch \cite{sohn2020fixmatch}& 13.81 $\pm$ 3.37 & 5.07 $\pm$ 0.65 & 4.26 $\pm$ 0.05 & 48.85 $\pm$ 1.75 & 28.29 $\pm$ 0.11 & 22.60 $\pm$ 0.12 & 32.9 (top1), 13.3 (top5) \\
		\hline
		FixMatch + CMW-Net & \textbf{9.6 $\pm$ 0.62} & \textbf{4.73 $\pm$ 0.15} & \textbf{4.25 $\pm$ 0.03}& \textbf{47.7 $\pm$ 1.14} & \textbf{27.43 $\pm$ 0.12} & \textbf{22.55 $\pm$ 0.09} & \textbf{30.8} (top1), \textbf{11.3} (top5) \\
		\bottomrule
	\end{tabular}
\end{table*}

\begin{table*}[t]
	\caption{Selective classification error (\%) on CIFAR-10, CIFAR-100 datasets for various coverage rates (\%) for SelectiveNet (left in each panel) and SelectiveNet+CMW-Net (right in each panel). The better result in each case is highlighted in bold.}\label{FG}
	\centering \tiny
	\setlength{\tabcolsep}{1mm}{
		\begin{tabular}{c|c|c|c|c|c|c|c|c|c|c|c|c|c }			\toprule
			\multirow{2}{*}{Dataset} & \multirow{2}{*}{Coverage} & \multicolumn{2}{c|}{200} & \multicolumn{2}{c|}{100} & \multicolumn{2}{c|}{50} & \multicolumn{2}{c|}{20} & \multicolumn{2}{c|}{10} & \multicolumn{2}{c}{0} \\ \cline{3-14}
			&      & SelectiveNet & Ours & SelectiveNet & Ours & SelectiveNet & Ours & SelectiveNet & Ours & SelectiveNet & Ours & SelectiveNet & Ours \\
			\hline
			\multirow{6}{*}{CIFAR-10} & 100 & 55.50 $\pm$ 0.44 & \textbf{55.44 $\pm$ 0.08} & 34.94 $\pm$ 0.94 & \textbf{33.31 $\pm$ 0.23} & 25.23 $\pm$ 0.48 & \textbf{22.49 $\pm$ 0.23} & 18.16 $\pm$ 0.02 & \textbf{15.15 $\pm$ 0.43} & 14.62 $\pm$ 0.57 & \textbf{12.23 $\pm$ 0.12} & 6.79 $\pm$ 0.03 & \textbf{6.02 $\pm$ 0.07} \\
			& 95 & 52.63 $\pm$ 1.48 & \textbf{52.10 $\pm$ 0.08} & 32.60 $\pm$ 0.98 & \textbf{30.95 $\pm$ 0.20} & 22.72 $\pm$ 0.49 & \textbf{21.50 $\pm$ 0.34} & 15.57 $\pm$ 0.04 & \textbf{13.21 $\pm$ 0.41} & 12.07 $\pm$ 0.53 & \textbf{9.94 $\pm$ 0.06} & 4.16 $\pm$ 0.09 & \textbf{4.00 $\pm$ 0.11} \\
			& 90 & 50.83 $\pm$ 1.34 & \textbf{50.63 $\pm$ 0.07} & 30.62 $\pm$ 0.11 & \textbf{29.49 $\pm$ 0.10} & 20.65 $\pm$ 0.49 &\textbf{19.65 $\pm$ 1.00} & 13.37 $\pm$ 0.13 & \textbf{11.41 $\pm$ 0.40} & 10.04 $\pm$ 0.41 & \textbf{8.01 $\pm$ 0.16} & 2.43 $\pm$ 0.08 & \textbf{2.29 $\pm$ 0.15} \\
			& 85 & 48.95 $\pm$ 0.99 & \textbf{47.91 $\pm$ 0.12} & 28.85 $\pm$ 0.33 & \textbf{28.21 $\pm$ 0.13} & 18.84 $\pm$ 0.43 & \textbf{17.89 $\pm$ 0.84} & 11.61 $\pm$ 0.03 & \textbf{9.62 $\pm$ 0.35} & 8.32 $\pm$ 0.21 & \textbf{6.34 $\pm$ 0.15} & 1.43 $\pm$ 0.08 & \textbf{1.17 $\pm$ 0.02} \\
			& 80 &  46.99 $\pm$ 0.76 & \textbf{45.11 $\pm$ 0.11} & 27.27 $\pm$ 0.46 & \textbf{26.32 $\pm$ 0.30} & 17.33 $\pm$ 0.32 & \textbf{16.31 $\pm$ 0.65}& 10.07 $\pm$ 0.01 & \textbf{8.03 $\pm$ 0.30} & 6.91 $\pm$ 0.12 & \textbf{4.80 $\pm$ 0.15} & 0.86 $\pm$ 0.06 &\textbf{0.80 $\pm$ 0.01} \\
			& 75 & 45.09 $\pm$ 0.62 & \textbf{42.19 $\pm$ 0.01} & 25.85 $\pm$ 0.65 & \textbf{24.33 $\pm$ 0.26} & 16.02 $\pm$ 0.30 & \textbf{14.64 $\pm$ 0.35} & 8.88 $\pm$ 0.12 & \textbf{6.52 $\pm$ 0.30} & 5.69 $\pm$ 0.11 & \textbf{3.59 $\pm$ 0.09} & 0.48 $\pm$ 0.02 & \textbf{0.55 $\pm$ 0.03} \\
			& 70 &  43.10 $\pm$ 0.47 & \textbf{39.21 $\pm$ 0.30} & 24.36 $\pm$ 0.76 & \textbf{22.36 $\pm$ 0.19} & 14.79 $\pm$ 0.29 & \textbf{13.13 $\pm$ 0.23} & 7.81 $\pm$ 0.22 & \textbf{5.29 $\pm$ 0.27} & 4.75 $\pm$ 0.16 & \textbf{2.57 $\pm$ 0.11} & 0.32 $\pm$ 0.01 & \textbf{0.35 $\pm$ 0.04} \\  \hline\hline
			\multirow{6}{*}{CIFAR-100} & 100 & 68.74 $\pm$ 0.42 & \textbf{65.62 $\pm$ 0.25} & 65.85 $\pm$ 0.15 & \textbf{60.77 $\pm$ 0.09} & 61.21 $\pm$ 0.19 & \textbf{55.70 $\pm$ 0.22} & 55.04 $\pm$ 0.39 & \textbf{46.68 $\pm$ 0.14} & 49.12 $\pm$ 0.38 & \textbf{40.46 $\pm$ 0.31} & 27.72 $\pm$ 0.35 & \textbf{25.75 $\pm$ 0.23} \\
			& 95 & 67.41 $\pm$ 0.43 & \textbf{64.13 $\pm$ 0.33 }& 64.41 $\pm$ 0.16 & \textbf{59.08 $\pm$ 0.06} & 59.55 $\pm$ 0.22 & \textbf{53.81 $\pm$ 0.20} & 53.07 $\pm$ 0.39 & \textbf{44.46 $\pm$ 0.13} & 46.99 $\pm$ 0.34 & \textbf{37.97 $\pm$ 0.33} & 24.99 $\pm$ 0.38 & \textbf{23.03 $\pm$ 0.24} \\
			& 90 &  66.09 $\pm$ 0.48 & \textbf{62.48 $\pm$ 0.28} & 63.01 $\pm$ 0.17 & \textbf{57.36 $\pm$ 0.04} & 57.81 $\pm$ 0.20 & \textbf{51.84 $\pm$ 0.20} & 51.12 $\pm$ 0.39 & \textbf{42.24 $\pm$ 0.18} & 44.89 $\pm$ 0.33 & \textbf{35.59 $\pm$ 0.27} & 22.59 $\pm$ 0.31 & \textbf{20.43 $\pm$ 0.18} \\
			& 85 & 64.65 $\pm$ 0.54 & \textbf{60.79 $\pm$ 0.22} & 61.51 $\pm$ 0.16 & \textbf{55.53 $\pm $0.04} & 56.21 $\pm$ 0.27 & \textbf{49.76 $\pm$ 0.21} & 49.24 $\pm$ 0.32 & \textbf{40.05 $\pm$ 0.13} & 42.81 $\pm$ 0.20 & \textbf{33.24 $\pm$ 0.27} & 20.31 $\pm$ 0.33 & \textbf{17.98 $\pm$ 0.21} \\
			& 80 & 63.09 $\pm$ 0.54 & \textbf{59.00 $\pm$ 0.26} & 59,85 $\pm$ 0.11 & \textbf{53.57 $\pm$ 0.09} & 54.25 $\pm$ 0.20 & \textbf{47.62 $\pm$ 0.15} & 47.15 $\pm$ 0.34 & \textbf{37.54 $\pm$ 0.26} & 40.67 $\pm$ 0.23 & \textbf{30.95 $\pm$ 0.21} & 18.17 $\pm$ 0.28 & \textbf{15.45 $\pm$ 0.01} \\
			& 75 & 61.50 $\pm$ 0.57 & \textbf{57.10 $\pm$ 0.13} & 58.11 $\pm$ 0.06 & \textbf{51.42 $\pm$ 0.18} & 52.19 $\pm$ 0.20 & \textbf{45.24 $\pm$ 0.19} & 45.03 $\pm$ 0.38 & \textbf{35.14 $\pm$ 0.28} & 38.65 $\pm$ 0.36 & \textbf{28.41 $\pm$ 0.11} & 16.32 $\pm$ 0.42 & \textbf{12.97 $\pm$ 0.03} \\
			& 70 & 59.72 $\pm$ 0.66 & \textbf{54.93 $\pm$ 0.11} & 56.21 $\pm$ 0.06 & \textbf{49.10 $\pm$ 0.16} & 50.04 $\pm$ 0.32 & \textbf{42.56 $\pm$ 0.29} & 42.77 $\pm$ 0.34 & \textbf{32.55 $\pm$ 0.33} & 36.42 $\pm$ 0.40 & \textbf{25.89 $\pm$ 0.20} & 14.63 $\pm$ 0.59 & \textbf{10.69 $\pm$ 0.05} \\
			\bottomrule
		\end{tabular}}
\end{table*}

\subsection{Selective Classification}
\subsubsection{Problem Formulation}		
We consider the selective classification problem in DNNs (supervised learning with a rejection option), which allows the learned classifier to abstain whenever they are not sufficiently confident in their prediction, so as to finely detect and control statistical uncertainties of training cases \cite{geifman2019selectivenet}. Specifically, let $P(X,Y)$ be the underlying joint distribution over $\mathcal{X}\times \mathcal{Y}$, where $\mathcal{X}, \mathcal{Y}$ denote the sample and label spaces, respectively, and $f: \mathcal{X} \rightarrow \mathcal{Y}$ be the prediction function (DNNs here). The expected risk is:
\begin{align*}
R(f) = \mathbb{E}_{P(X,Y)} [\ell(f(x),y)],
\end{align*}
where $\ell: Y\times Y\rightarrow \mathbb{R}^+$ is the loss function.
Given a dataset $D^{tr}=\{(x_i,y_i)\}_{i=1}^N$ where all $(x_i,y_i)$s are i.i.d. drawn from $X\times Y$, the empirical risk is then specified as
\begin{align*}
\hat{R}_{D^{tr}}(f) = \frac{1}{N} \sum_{i=1}^N\ell(f(x_i),y_i).
\end{align*}
The selective classifier is then defined as a pair of functions $(f,g)$, where $g: \mathcal{X} \rightarrow \mathbb{R}$ is a selection function that reveals the underlying uncertainty of inputs. Specifically, given input $x$, $(f,g)$ outputs:
\begin{align*}
(f,g)(x) = \begin{cases}
f(x),    & \text{if}\ g(x) \geq \tau\\
\text{Abstain}  &  \text{otherwise}
\end{cases},
\end{align*}
i.e., the model abstains from making a prediction when selection function $g(x)$ falls bellow a predetermined threshold $\tau$. We call $g(x)$ the uncertainty score of $x$, and different methods tend to use different $g$. The coverage is defined as the probability mass of the non-rejected region in $\mathcal{X}$, expressed as:
\begin{align*}
\phi(g) = \mathbb{E}_{P(X)} [g(x)],
\end{align*}
and its empirical coverage is
\begin{align*}
\hat{\phi}_{D^{tr}}(g) = \frac{1}{m} \sum_{i=1}^N g(x_i).
\end{align*}
The selective risk of $(f,g)$ is defined as
\begin{align*}
R(f,g) = \frac{\mathbb{E}_{P(X,Y)} [\ell(f(x),y)g(x)]}{\phi(g)},
\end{align*}
and empirical version is
\begin{align*}
\hat{R}_{D^{tr}}(f,g) = \frac{\frac{1}{N}\sum_{i=1}^N\ell(f(x_i),y_i)g(x_i)}{\hat{\phi}_{D^{tr}}(g)}.
\end{align*}

\begin{figure}[t]
	\centering
	\subfigcapskip=-1mm
	\includegraphics[width=0.4\textwidth]{./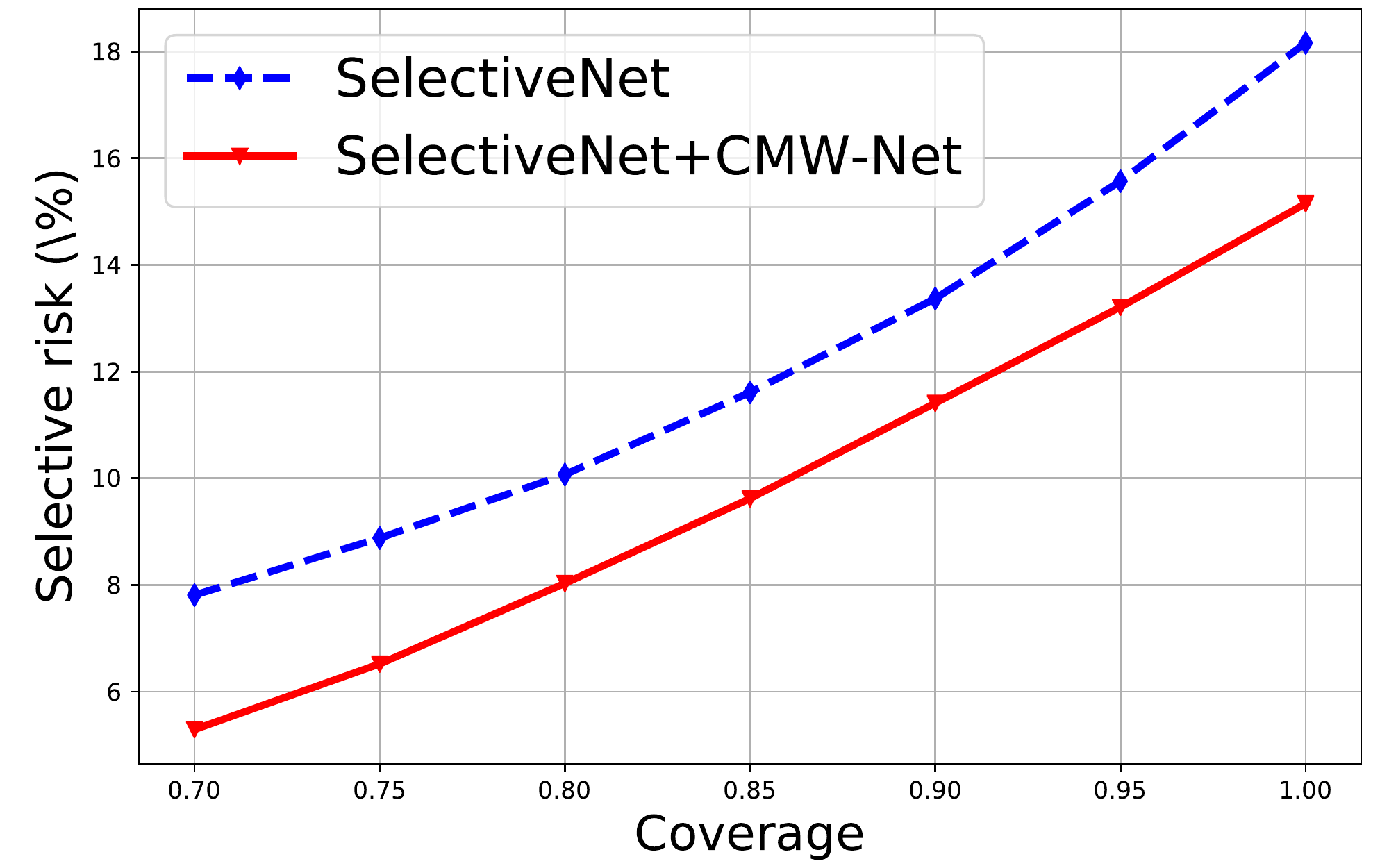} \ \ \
	\includegraphics[width=0.4\textwidth]{./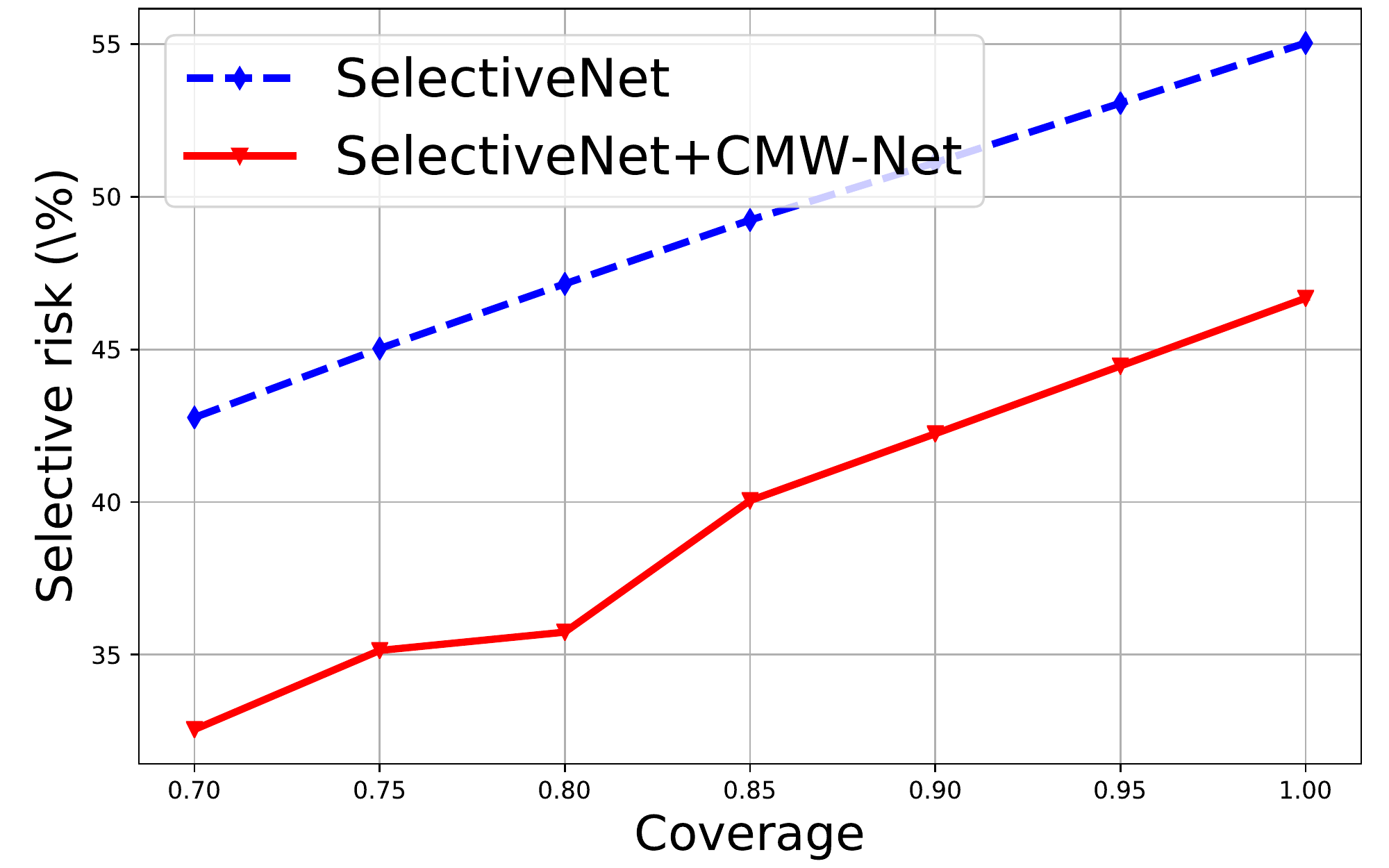}
	\caption{Risk coverage curves of SelectiveNet w/o CMW-Net strategy on (left) CIFAR-10 and (right) CIFAR100 under imbalance factor 20.}\label{fig4net}
\end{figure}

The SelectiveNet \cite{geifman2019selectivenet} tries to optimize the objective
\begin{align*}
\mathcal{L} =  \alpha \mathcal{L}_{D^{tr}}(f,g)  +(1-\alpha) \hat{R}_{D^{tr}}(h),
\end{align*}
where
\begin{align*}
\mathcal{L}_{D^{tr}}(f,g) = \hat{R}_{D^{tr}}(f,g)+ \lambda \max(0, c-\hat{\phi}_{D^{tr}}(g))^2,
\end{align*}
$c$ is the given coverage, and $\alpha,\lambda$ control the relative importance of each term. As stated in \cite{geifman2019selectivenet}, the auxiliary cross-entropy loss  $\hat{R}_{D^{tr}}(h)$ exposes the main body block to all training samples throughout the training process to avoid SelectiveNet overfitting to the wrong subset of the training set.

It can be seen that the rationality of the auxiliary cross-entropy loss $\hat{R}_{D^{tr}}(h)$ still inclines to be negatively affected by the data biased issues, like commonly existed class imbalance and noisy label cases in practical datasets. It is thus natural to employ the proposed CMW-Net on the term for making the learned SelectiveNet with better robustness to training samples.

\subsubsection{CMW-Net Amelioration and Experiments}
We can then readily ameliorate the selective classification by embedding CMW-Net weighting schemes into its optimization problem. Specifically, provided a meta dataset as
$D^{meta}=\{x_i^{meta},y_i^{meta}\}_{i=1}^M$, the objective of the problem is then reformulated as the following bi-level problem:
\begin{align*}
&{\Theta}^* = \mathop{\arg\min}_{{{\Theta} }}  \ \mathcal{L}_{D^{meta}}(f^*_{\Theta},g^*_{\Theta})\\
& \{f^*_{\Theta},g^*_{\Theta}\} = \mathop{\arg\min}_{f,g}\  \alpha \mathcal{L}_{D^{tr}}(f,g)  +(1-\alpha) \sum_{i=1}^N \mathcal{V}(\ell_i,N_i;\Theta)\ell(h(x_i),y_i).
\end{align*}
Through properly assigning sample weights to the loss terms for all training samples, it is expected to better eliminate the negative influence brought by complicated data biases.

We use long-tailed versions of CIFAR-10 and CIFAR-100 datasets under different imbalance factors for performance evaluation. The generation strategy is similar to that introduced in Sec. 4.1 of the main text. The baseline method is the recent SOTA method for this task: SelectiveNet \cite{geifman2019selectivenet}.  We use the VGG-16 network \cite{simonyan2015very} with batch normalization \cite{ioffe2015batch} and dropout \cite{srivastava2014dropout} as the classifier network in experiments. The network is optimized using SGD with initial learning rate of 0.1, momentum of 0.9, weight decay of $5\times 10^{-4}$, batch size of 128, and total training epoch of 300. The learning rate is decayed by 0.5 in every 25 epochs. As the meta-data-generation strategy as introduced in Sec. 4.2 of the main text, we randomly select 10 images per class at every epoch from the training set as the meta-data set.

The obtained experimental results are summarized in Table \ref{FG} and Fig. \ref{fig4net}. Specifically, the figure compares the risk-coverage curves of SelectiveNet equipped with and without CMW-Net for weighting its sample loss. It is easy to see the performance gain brought to the method by CMW-Net. From the figure, one can more comprehensively observe that selective classification errors of SelectiveNet consistently grow as we increase the degree of class imbalance. Comparatively, under the assistance of CMW-Net, the errors can be consistently reduced in all cases. These results validate the usefulness of CMW-Net for this specific learning task under biased data.

\newpage
\bibliographystyle{IEEEtran}
\bibliography{IEEEabrv,bayes_ref.bib}

\begin{thebibliography}{100}
\providecommand{\url}[1]{#1}
\csname url@samestyle\endcsname
\providecommand{\newblock}{\relax}
\providecommand{\bibinfo}[2]{#2}
\providecommand{\BIBentrySTDinterwordspacing}{\spaceskip=0pt\relax}
\providecommand{\BIBentryALTinterwordstretchfactor}{4}
\providecommand{\BIBentryALTinterwordspacing}{\spaceskip=\fontdimen2\font plus
\BIBentryALTinterwordstretchfactor\fontdimen3\font minus
  \fontdimen4\font\relax}
\providecommand{\BIBforeignlanguage}[2]{{%
\expandafter\ifx\csname l@#1\endcsname\relax
\typeout{** WARNING: IEEEtran.bst: No hyphenation pattern has been}%
\typeout{** loaded for the language `#1'. Using the pattern for}%
\typeout{** the default language instead.}%
\else
\language=\csname l@#1\endcsname
\fi
#2}}
\providecommand{\BIBdecl}{\relax}
\BIBdecl

\bibitem{he2016deep}
K.~He, X.~Zhang, S.~Ren, and J.~Sun, ``Deep residual learning for image
  recognition,'' in \emph{CVPR}, 2016.

\bibitem{devlin2019bert}
J.~Devlin, M.-W. Chang, K.~Lee, and K.~Toutanova, ``Bert: Pre-training of deep
  bidirectional transformers for language understanding,'' in \emph{ACL}, 2019.

\bibitem{abdel2014convolutional}
O.~Abdel-Hamid, A.-r. Mohamed, H.~Jiang, L.~Deng, G.~Penn, and D.~Yu,
  ``Convolutional neural networks for speech recognition,'' \emph{IEEE/ACM
  Transactions on audio, speech, and language processing}, vol.~22, no.~10, pp.
  1533--1545, 2014.

\bibitem{cui2019class}
Y.~Cui, M.~Jia, T.-Y. Lin, Y.~Song, and S.~Belongie, ``Class-balanced loss
  based on effective number of samples,'' in \emph{CVPR}, 2019.

\bibitem{liu2019large}
Z.~Liu, Z.~Miao, X.~Zhan, J.~Wang, B.~Gong, and S.~X. Yu, ``Large-scale
  long-tailed recognition in an open world,'' in \emph{CVPR}, 2019.

\bibitem{zhang2018generalized}
Z.~Zhang and M.~R. Sabuncu, ``Generalized cross entropy loss for training deep
  neural networks with noisy labels,'' in \emph{NeurIPS}, 2018.

\bibitem{arazo2019unsupervised}
E.~Arazo, D.~Ortego, P.~Albert, N.~O’Connor, and K.~McGuinness,
  ``Unsupervised label noise modeling and loss correction,'' in \emph{ICML},
  2019.

\bibitem{li2019dividemix}
J.~Li, R.~Socher, and S.~C. Hoi, ``Dividemix: Learning with noisy labels as
  semi-supervised learning,'' in \emph{ICLR}, 2019.

\bibitem{shu2019meta}
J.~Shu, Q.~Xie, L.~Yi, Q.~Zhao, S.~Zhou, Z.~Xu, and D.~Meng, ``Meta-weight-net:
  Learning an explicit mapping for sample weighting,'' in \emph{NeurIPS}, 2019.

\bibitem{bi2014learning}
W.~Bi, L.~Wang, J.~T. Kwok, and Z.~Tu, ``Learning to predict from crowdsourced
  data.'' in \emph{UAI}, 2014.

\bibitem{recht2019imagenet}
B.~Recht, R.~Roelofs, L.~Schmidt, and V.~Shankar, ``Do imagenet classifiers
  generalize to imagenet?'' in \emph{ICML}, 2019.

\bibitem{northcutt2021pervasive}
C.~G. Northcutt, A.~Athalye, and J.~Mueller, ``Pervasive label errors in test
  sets destabilize machine learning benchmarks,'' \emph{arXiv preprint
  arXiv:2103.14749}, 2021.

\bibitem{yun2021re}
S.~Yun, S.~J. Oh, B.~Heo, D.~Han, J.~Choe, and S.~Chun, ``Re-labeling imagenet:
  from single to multi-labels, from global to localized labels,'' in
  \emph{CVPR}, 2021.

\bibitem{li2017webvision}
W.~Li, L.~Wang, W.~Li, E.~Agustsson, and L.~Van~Gool, ``Webvision database:
  Visual learning and understanding from web data,'' \emph{arXiv preprint
  arXiv:1708.02862}, 2017.

\bibitem{shu2018small}
J.~Shu, Z.~Xu, and D.~Meng, ``Small sample learning in big data era,''
  \emph{arXiv preprint arXiv:1808.04572}, 2018.

\bibitem{guo2018curriculumnet}
S.~Guo, W.~Huang, H.~Zhang, C.~Zhuang, D.~Dong, M.~R. Scott, and D.~Huang,
  ``Curriculumnet: Weakly supervised learning from large-scale web images,'' in
  \emph{ECCV}, 2018.

\bibitem{arpit2017closer}
D.~Arpit, S.~Jastrz{\k{e}}bski, N.~Ballas, D.~Krueger, E.~Bengio, M.~S. Kanwal,
  T.~Maharaj, A.~Fischer, A.~Courville, Y.~Bengio \emph{et~al.}, ``A closer
  look at memorization in deep networks,'' in \emph{ICML}, 2017.

\bibitem{zhang2016understanding}
C.~Zhang, S.~Bengio, M.~Hardt, B.~Recht, and O.~Vinyals, ``Understanding deep
  learning requires rethinking generalization,'' in \emph{ICLR}, 2017.

\bibitem{zhang2021deep}
Y.~Zhang, B.~Kang, B.~Hooi, S.~Yan, and J.~Feng, ``Deep long-tailed learning: A
  survey,'' \emph{arXiv preprint arXiv:2110.04596}, 2021.

\bibitem{frenay2013classification}
B.~Fr{\'e}nay and M.~Verleysen, ``Classification in the presence of label
  noise: a survey,'' \emph{IEEE TNNLS}, 2013.

\bibitem{algan2021image}
G.~Algan and I.~Ulusoy, ``Image classification with deep learning in the
  presence of noisy labels: A survey,'' \emph{Knowledge-Based Systems}, vol.
  215, p. 106771, 2021.

\bibitem{karimi2020deep}
D.~Karimi, H.~Dou, S.~K. Warfield, and A.~Gholipour, ``Deep learning with noisy
  labels: Exploring techniques and remedies in medical image analysis,''
  \emph{Medical Image Analysis}, vol.~65, 2020.

\bibitem{song2020learning}
H.~Song, M.~Kim, D.~Park, Y.~Shin, and J.-G. Lee, ``Learning from noisy labels
  with deep neural networks: A survey,'' \emph{arXiv preprint
  arXiv:2007.08199}, 2020.

\bibitem{han2020survey}
B.~Han, Q.~Yao, T.~Liu, G.~Niu, I.~W. Tsang, J.~T. Kwok, and M.~Sugiyama, ``A
  survey of label-noise representation learning: Past, present and future,''
  \emph{arXiv preprint arXiv:2011.04406}, 2020.

\bibitem{kahn1953methods}
H.~Kahn and A.~W. Marshall, ``Methods of reducing sample size in monte carlo
  computations,'' \emph{Journal of the Operations Research Society of America},
  vol.~1, no.~5, pp. 263--278, 1953.

\bibitem{ren2018learning}
M.~Ren, W.~Zeng, B.~Yang, and R.~Urtasun, ``Learning to reweight examples for
  robust deep learning,'' in \emph{ICML}, 2018.

\bibitem{freund1997decision}
Y.~Freund and R.~E. Schapire, ``A decision-theoretic generalization of on-line
  learning and an application to boosting,'' \emph{Journal of computer and
  system sciences}, vol.~55, no.~1, pp. 119--139, 1997.

\bibitem{friedman2000additive}
J.~Friedman, T.~Hastie, and R.~Tibshirani, ``Additive logistic regression: a
  statistical view of boosting (with discussion and a rejoinder by the
  authors),'' \emph{The annals of statistics}, vol.~28, 2000.

\bibitem{malisiewicz2011ensemble}
T.~Malisiewicz, A.~Gupta, and A.~A. Efros, ``Ensemble of exemplar-svms for
  object detection and beyond,'' in \emph{ICCV}, 2011.

\bibitem{lin2018focal}
T.-Y. Lin, P.~Goyal, R.~Girshick, K.~He, and P.~Doll{\'a}r, ``Focal loss for
  dense object detection,'' \emph{TPAMI}, 2018.

\bibitem{kumar2010self}
M.~Kumar, B.~Packer, and D.~Koller, ``Self-paced learning for latent variable
  models,'' in \emph{Advances in neural information processing systems}, 2010.

\bibitem{fernando2003reweight}
D.~l.~T. Fernando and J.~B. Mkchael, ``A framework for robust subspace
  learning,'' \emph{IJCV}, 2003.

\bibitem{jiang2014easy}
L.~Jiang, D.~Meng, T.~Mitamura, and A.~G. Hauptmann, ``Easy samples first:
  Self-paced reranking for zero-example multimedia search,'' in \emph{ACM MM},
  2014.

\bibitem{jiang2014self}
L.~Jiang, D.~Meng, S.-I. Yu, Z.~Lan, S.~Shan, and A.~Hauptmann, ``Self-paced
  learning with diversity,'' in \emph{NeurIPS}, 2014.

\bibitem{wang2017robust}
Y.~Wang, A.~Kucukelbir, and D.~M. Blei, ``Robust probabilistic modeling with
  bayesian data reweighting,'' in \emph{ICML}, 2017.

\bibitem{csaji2001approximation}
B.~C. Cs{\'a}ji, ``Approximation with artificial neural networks,''
  \emph{Faculty of Sciences, Etvs Lornd University, Hungary}, 2001.

\bibitem{jiang2018mentornet}
L.~Jiang, Z.~Zhou, T.~Leung, L.-J. Li, and L.~Fei-Fei, ``Mentornet: Learning
  data-driven curriculum for very deep neural networks on corrupted labels,''
  in \emph{ICML}, 2018.

\bibitem{collier2021correlated}
M.~Collier, B.~Mustafa, E.~Kokiopoulou, R.~Jenatton, and J.~Berent,
  ``Correlated input-dependent label noise in large-scale image
  classification,'' in \emph{Proceedings of the IEEE/CVF Conference on Computer
  Vision and Pattern Recognition}, 2021, pp. 1551--1560.

\bibitem{hospedales2020meta}
T.~Hospedales, A.~Antoniou, P.~Micaelli, and A.~Storkey, ``Meta-learning in
  neural networks: A survey,'' \emph{TPAMI}, 2021.

\bibitem{baxter2000model}
J.~Baxter, ``A model of inductive bias learning,'' \emph{Journal of artificial
  intelligence research}, vol.~12, pp. 149--198, 2000.

\bibitem{finn2017model}
C.~Finn, P.~Abbeel, and S.~Levine, ``Model-agnostic meta-learning for fast
  adaptation of deep networks,'' in \emph{ICML}, 2017.

\bibitem{denevi2020advantage}
G.~Denevi, M.~Pontil, and C.~Ciliberto, ``The advantage of conditional
  meta-learning for biased regularization and fine tuning,'' in \emph{NeurIPS},
  2020.

\bibitem{shu2021learning}
J.~Shu, D.~Meng, and Z.~Xu, ``Learning an explicit hyperparameter prediction
  policy conditioned on tasks,'' \emph{arXiv preprint arXiv:2107.02378}, 2021.

\bibitem{xiao2015learning}
T.~Xiao, T.~Xia, Y.~Yang, C.~Huang, and X.~Wang, ``Learning from massive noisy
  labeled data for image classification,'' in \emph{CVPR}, 2015.

\bibitem{sun2021webly}
X.-S. W. Y. Z. F. S. J. W. J. Z. H. T.~S. Zeren~Sun, Yazhou~Yao, ``Webly
  supervised fine-grained recognition: Benchmark datasets and an approach,'' in
  \emph{CVPR}, 2021.

\bibitem{jin2002learning}
R.~Jin and Z.~Ghahramani, ``Learning with multiple labels,'' in \emph{NeurIPS},
  2002.

\bibitem{sohn2020fixmatch}
K.~Sohn, D.~Berthelot, N.~Carlini, Z.~Zhang, H.~Zhang, C.~A. Raffel, E.~D.
  Cubuk, A.~Kurakin, and C.-L. Li, ``Fixmatch: Simplifying semi-supervised
  learning with consistency and confidence,'' in \emph{NeurIPS}, 2020.

\bibitem{geifman2017selective}
Y.~Geifman and R.~El-Yaniv, ``Selective classification for deep neural
  networks,'' in \emph{NeurIPS}, 2017.

\bibitem{chawla2002smote}
N.~V. Chawla, K.~W. Bowyer, L.~O. Hall, and W.~P. Kegelmeyer, ``Smote:
  synthetic minority over-sampling technique,'' \emph{Journal of artificial
  intelligence research}, vol.~16, pp. 321--357, 2002.

\bibitem{dong2017class}
Q.~Dong, S.~Gong, and X.~Zhu, ``Class rectification hard mining for imbalanced
  deep learning,'' in \emph{ICCV}, 2017.

\bibitem{zadrozny2004learning}
B.~Zadrozny, ``Learning and evaluating classifiers under sample selection
  bias,'' in \emph{ICML}, 2004.

\bibitem{johnson2019survey}
J.~M. Johnson and T.~M. Khoshgoftaar, ``Survey on deep learning with class
  imbalance,'' \emph{Journal of Big Data}, 2019.

\bibitem{shu2020meta}
J.~Shu, Q.~Zhao, Z.~Xu, and D.~Meng, ``Meta transition adaptation for robust
  deep learning with noisy labels,'' \emph{arXiv preprint arXiv:2006.05697},
  2020.

\bibitem{dehghani2017fidelity}
M.~Dehghani, A.~Mehrjou, S.~Gouws, J.~Kamps, and B.~Sch{\"o}lkopf,
  ``Fidelity-weighted learning,'' in \emph{ICLR}, 2018.

\bibitem{fan2018learning}
Y.~Fan, F.~Tian, T.~Qin, X.-Y. Li, and T.-Y. Liu, ``Learning to teach,'' in
  \emph{ICLR}, 2018.

\bibitem{wang2017learning}
Y.-X. Wang, D.~Ramanan, and M.~Hebert, ``Learning to model the tail,'' in
  \emph{NeurIPS}, 2017.

\bibitem{cui2018large}
Y.~Cui, Y.~Song, C.~Sun, A.~Howard, and S.~Belongie, ``Large scale fine-grained
  categorization and domain-specific transfer learning,'' in \emph{CVPR}, 2018.

\bibitem{wang2020meta}
R.~Wang, K.~Hu, Y.~Zhu, J.~Shu, Q.~Zhao, and D.~Meng, ``Meta feature modulator
  for long-tailed recognition,'' \emph{arXiv preprint arXiv:2008.03428}, 2020.

\bibitem{huang2016learning}
C.~Huang, Y.~Li, C.~Change~Loy, and X.~Tang, ``Learning deep representation for
  imbalanced classification,'' in \emph{CVPR}, 2016.

\bibitem{zhang2017range}
X.~Zhang, Z.~Fang, Y.~Wen, Z.~Li, and Y.~Qiao, ``Range loss for deep face
  recognition with long-tailed training data,'' in \emph{ICCV}, 2017.

\bibitem{jamal2020rethinking}
M.~A. Jamal, M.~Brown, M.-H. Yang, L.~Wang, and B.~Gong, ``Rethinking
  class-balanced methods for long-tailed visual recognition from a domain
  adaptation perspective,'' in \emph{CVPR}, 2020.

\bibitem{huang2020self}
L.~Huang, C.~Zhang, and H.~Zhang, ``Self-adaptive training: beyond empirical
  risk minimization,'' in \emph{NeurIPS}, 2020.

\bibitem{zheng2020error}
S.~Zheng, P.~Wu, A.~Goswami, M.~Goswami, D.~Metaxas, and C.~Chen,
  ``Error-bounded correction of noisy labels,'' in \emph{ICML}, 2020.

\bibitem{wu2021learning}
Y.~Wu, J.~Shu, Q.~Xie, Q.~Zhao, and D.~Meng, ``Learning to purify noisy labels
  via meta soft label corrector,'' in \emph{AAAI}, 2021.

\bibitem{ghosh2017robust}
A.~Ghosh, H.~Kumar, and P.~Sastry, ``Robust loss functions under label noise
  for deep neural networks,'' in \emph{AAAI}, 2017.

\bibitem{wang2019symmetric}
Y.~Wang, X.~Ma, Z.~Chen, Y.~Luo, J.~Yi, and J.~Bailey, ``Symmetric cross
  entropy for robust learning with noisy labels,'' in \emph{ICCV}, 2019.

\bibitem{amid2019robust}
E.~Amid, M.~K. Warmuth, R.~Anil, and T.~Koren, ``Robust bi-tempered logistic
  loss based on bregman divergences,'' in \emph{NeurIPS}, 2019.

\bibitem{ma2020normalized}
X.~Ma, H.~Huang, Y.~Wang, S.~Romano, S.~Erfani, and J.~Bailey, ``Normalized
  loss functions for deep learning with noisy labels,'' in \emph{ICML}, 2020.

\bibitem{shu2020learning}
J.~Shu, Q.~Zhao, K.~Chen, Z.~Xu, and D.~Meng, ``Learning adaptive loss for
  robust learning with noisy labels,'' \emph{arXiv preprint arXiv:2002.06482},
  2020.

\bibitem{patrini2017making}
G.~Patrini, A.~Rozza, A.~K. Menon, R.~Nock, and L.~Qu, ``Making deep neural
  networks robust to label noise: A loss correction approach,'' in \emph{CVPR},
  2017.

\bibitem{hendrycks2018using}
D.~Hendrycks, M.~Mazeika, D.~Wilson, and K.~Gimpel, ``Using trusted data to
  train deep networks on labels corrupted by severe noise,'' in \emph{NeurIPS},
  2018.

\bibitem{lukasik2020does}
M.~Lukasik, S.~Bhojanapalli, A.~Menon, and S.~Kumar, ``Does label smoothing
  mitigate label noise?'' in \emph{ICML}, 2020.

\bibitem{bishop2006pattern}
C.~Bishop, ``Pattern recognition and machine learning,'' \emph{Pattern
  Recognition and Machine Learning}, 2006.

\bibitem{kingma2015adam}
D.~P. Kingma and J.~Ba, ``Adam: A method for stochastic optimization,'' in
  \emph{ICLR}, 2015.

\bibitem{paszke2019pytorch}
A.~Paszke, S.~Gross, F.~Massa, A.~Lerer, J.~Bradbury, G.~Chanan, T.~Killeen,
  Z.~Lin, N.~Gimelshein, L.~Antiga \emph{et~al.}, ``Pytorch: An imperative
  style, high-performance deep learning library,'' in \emph{NeurIPS}, 2019.

\bibitem{nichol2018reptile}
A.~Nichol and J.~Schulman, ``Reptile: a scalable metalearning algorithm,''
  \emph{arXiv preprint arXiv:1803.02999}, 2018.

\bibitem{reed2014training}
S.~Reed, H.~Lee, D.~Anguelov, C.~Szegedy, D.~Erhan, and A.~Rabinovich,
  ``Training deep neural networks on noisy labels with bootstrapping,'' in
  \emph{ICLR workshop}, 2015.

\bibitem{song2019selfie}
H.~Song, M.~Kim, and J.-G. Lee, ``Selfie: Refurbishing unclean samples for
  robust deep learning,'' in \emph{ICML}, 2019, pp. 5907--5915.

\bibitem{liu2020early}
S.~Liu, J.~Niles-Weed, N.~Razavian, and C.~Fernandez-Granda, ``Early-learning
  regularization prevents memorization of noisy labels,'' in \emph{NeurIPS},
  2020.

\bibitem{tarvainen2017mean}
A.~Tarvainen and H.~Valpola, ``Mean teachers are better role models:
  Weight-averaged consistency targets improve semi-supervised deep learning
  results,'' in \emph{NeurIPS}, 2017.

\bibitem{laine2016temporal}
S.~Laine and T.~Aila, ``Temporal ensembling for semi-supervised learning,'' in
  \emph{ICLR}, 2017.

\bibitem{cao2019learning}
K.~Cao, C.~Wei, A.~Gaidon, N.~Arechiga, and T.~Ma, ``Learning imbalanced
  datasets with label-distribution-aware margin loss,'' in \emph{NeurIPS},
  2019.

\bibitem{zhang2022self}
Y.~Zhang, B.~Hooi, L.~Hong, and J.~Feng, ``Self-supervised aggregation of
  diverse experts for test-agnostic long-tailed recognition,'' in
  \emph{NeurIPS}, 2022.

\bibitem{krizhevsky2009learning}
A.~Krizhevsky, ``Learning multiple layers of features from tiny images,'' Tech.
  Rep., 2009.

\bibitem{zhang2020decoupling}
H.~Zhang and Q.~Yao, ``Decoupling representation and classifier for noisy label
  learning,'' \emph{arXiv preprint arXiv:2011.08145}, 2020.

\bibitem{nishi2021augmentation}
K.~Nishi, Y.~Ding, A.~Rich, and T.~Hollerer, ``Augmentation strategies for
  learning with noisy labels,'' in \emph{Proceedings of the IEEE/CVF Conference
  on Computer Vision and Pattern Recognition}, 2021, pp. 8022--8031.

\bibitem{zheltonozhskii2022contrast}
E.~Zheltonozhskii, C.~Baskin, A.~Mendelson, A.~M. Bronstein, and O.~Litany,
  ``Contrast to divide: Self-supervised pre-training for learning with noisy
  labels,'' in \emph{Proceedings of the IEEE/CVF Winter Conference on
  Applications of Computer Vision}, 2022, pp. 1657--1667.

\bibitem{He2022twostep}
J.~He, R.~s. Chen, Y.~Ma, and J.~Li, ``Noisy label learning based on
  self-supervised pre-training and data augmentation strategies,'' in
  \emph{International Conference on Computer Information Science and
  Application Technology}, 2022.

\bibitem{zhang2020learning}
Y.~Zhang, S.~Zheng, P.~Wu, M.~Goswami, and C.~Chen, ``Learning with
  feature-dependent label noise: A progressive approach,'' in \emph{ICLR},
  2021.

\bibitem{bengio2009curriculum}
Y.~Bengio, J.~Louradour, R.~Collobert, and J.~Weston, ``Curriculum learning,''
  in \emph{Proceedings of the 26th annual international conference on machine
  learning}, 2009, pp. 41--48.

\bibitem{jun2020meta}
S.~Jun, M.~Deyu, and X.~Zongben, ``Meta self-paced learning,'' \emph{Scientia
  Sinica Informationis}, vol.~50, no.~6, pp. 781--793, 2020.

\bibitem{zhang2018mixup}
H.~Zhang, M.~Cisse, Y.~N. Dauphin, and D.~Lopez-Paz, ``mixup: Beyond empirical
  risk minimization,'' in \emph{ICLR}, 2018.

\bibitem{chang2017active}
H.-S. Chang, E.~Learned-Miller, and A.~McCallum, ``Active bias: Training more
  accurate neural networks by emphasizing high variance samples,'' in
  \emph{NeurIPS}, 2017.

\bibitem{han2018co}
B.~Han, Q.~Yao, X.~Yu, G.~Niu, M.~Xu, W.~Hu, I.~Tsang, and M.~Sugiyama,
  ``Co-teaching: robust training deep neural networks with extremely noisy
  labels,'' in \emph{NeurIPS}, 2018.

\bibitem{chen2021beyond}
P.~Chen, J.~Ye, G.~Chen, J.~Zhao, and P.-A. Heng, ``Beyond class-conditional
  assumption: A primary attempt to combat instance-dependent label noise,'' in
  \emph{AAAI}, 2021.

\bibitem{chen2019understanding}
P.~Chen, B.~B. Liao, G.~Chen, and S.~Zhang, ``Understanding and utilizing deep
  neural networks trained with noisy labels,'' in \emph{ICLR}, 2019.

\bibitem{malach2017decoupling}
E.~Malach and S.~Shalev-Shwartz, ``Decoupling" when to update" from" how to
  update",'' in \emph{NeurIPS}, 2017.

\bibitem{deng2009imagenet}
J.~Deng, W.~Dong, R.~Socher, L.-J. Li, K.~Li, and L.~Fei-Fei, ``Imagenet: A
  large-scale hierarchical image database,'' in \emph{CVPR}, 2009.

\bibitem{oh2016deep}
H.~Oh~Song, Y.~Xiang, S.~Jegelka, and S.~Savarese, ``Deep metric learning via
  lifted structured feature embedding,'' in \emph{CVPR}, 2016.

\bibitem{jiang2020beyond}
L.~Jiang, D.~Huang, M.~Liu, and W.~Yang, ``Beyond synthetic noise: Deep
  learning on controlled noisy labels,'' in \emph{ICML}, 2020.

\bibitem{cao2020heteroskedastic}
K.~Cao, Y.~Chen, J.~Lu, N.~Arechiga, A.~Gaidon, and T.~Ma, ``Heteroskedastic
  and imbalanced deep learning with adaptive regularization,'' in \emph{ICLR},
  2021.

\bibitem{rajeswar2022multi}
S.~Rajeswar, P.~Rodriguez, S.~Singhal, D.~Vazquez, and A.~Courville,
  ``Multi-label iterated learning for image classification with label
  ambiguity,'' in \emph{Proceedings of the IEEE/CVF Conference on Computer
  Vision and Pattern Recognition}, 2022, pp. 4783--4793.

\bibitem{lv2020progressive}
J.~Lv, M.~Xu, L.~Feng, G.~Niu, X.~Geng, and M.~Sugiyama, ``Progressive
  identification of true labels for partial-label learning,'' in \emph{ICML},
  2020.

\bibitem{cai2021exponential}
Z.~Cai, A.~Ravichandran, S.~Maji, C.~Fowlkes, Z.~Tu, and S.~Soatto,
  ``Exponential moving average normalization for self-supervised and
  semi-supervised learning,'' in \emph{CVPR}, 2021.

\bibitem{yang2021survey}
X.~Yang, Z.~Song, I.~King, and Z.~Xu, ``A survey on deep semi-supervised
  learning,'' \emph{arXiv preprint arXiv:2103.00550}, 2021.

\bibitem{xie2020unsupervised}
Q.~Xie, Z.~Dai, E.~Hovy, T.~Luong, and Q.~Le, ``Unsupervised data augmentation
  for consistency training,'' in \emph{Advances in Neural Information
  Processing Systems}, 2020.

\bibitem{miyato2018virtual}
T.~Miyato, S.-i. Maeda, M.~Koyama, and S.~Ishii, ``Virtual adversarial
  training: a regularization method for supervised and semi-supervised
  learning,'' \emph{TPAMI}, 2018.

\bibitem{lee2013pseudo}
D.-H. Lee \emph{et~al.}, ``Pseudo-label: The simple and efficient
  semi-supervised learning method for deep neural networks,'' in \emph{ICML
  Workshop : Challenges in Representation Learning}, 2013.

\bibitem{berthelot2019mixmatch}
D.~Berthelot, N.~Carlini, I.~Goodfellow, N.~Papernot, A.~Oliver, and C.~A.
  Raffel, ``Mixmatch: A holistic approach to semi-supervised learning,'' in
  \emph{NeurIPS}, 2019.

\bibitem{liu2015classification}
T.~Liu and D.~Tao, ``Classification with noisy labels by importance
  reweighting,'' \emph{IEEE TPAMI}, 2015.

\bibitem{He2017Identity}
S.~R. He, Xiangyu~Zhang and J.~Sun, ``Identity mappings in deep residual
  networks,'' in \emph{ECCV}, 2016.

\bibitem{Loshchilov2017SGDR}
F.~H. Loshchilov, ``Sgdr: Stochastic gradient descent with warm restarts,'' in
  \emph{ICLR}, 2017.

\bibitem{wang2018iterative}
Y.~Wang, W.~Liu, X.~Ma, J.~Bailey, H.~Zha, L.~Song, and S.-T. Xia, ``Iterative
  learning with open-set noisy labels,'' in \emph{CVPR}, 2018.

\bibitem{Zagoruyko2016Wide}
S.~Zagoruyko and N.~Komodakis, ``Wide residual networks,'' in \emph{BMCV},
  2016.

\bibitem{simonyan2015very}
A.~Z. Simonyan, ``Very deep convolutional networks for large-scale image
  recognition,'' in \emph{ICLR}, 2015.

\bibitem{Lin2015Bilinear}
S.~M. Lin, Aruni~RoyChowdhury, ``Bilinear cnn models for fine-grained visual
  recognition,'' in \emph{ICCV}, 2015.

\bibitem{szegedy2017inception}
V.~V. Szegedy, Sergey~Ioffe and A.~A. Alemi, ``Inception-v4, inception-resnet
  and the impact of residual connections on learning,'' in \emph{AAAI}, 2017.

\bibitem{lecun1998gradient}
Y.~LeCun, L.~Bottou, Y.~Bengio, and P.~Haffner, ``Gradient-based learning
  applied to document recognition,'' \emph{Proceedings of the IEEE}, vol.~86,
  no.~11, pp. 2278--2324, 1998.

\bibitem{xiao2017fashion}
H.~Xiao, K.~Rasul, and R.~Vollgraf, ``Fashion-mnist: a novel image dataset for
  benchmarking machine learning algorithms,'' \emph{arXiv preprint
  arXiv:1708.07747}, 2017.

\bibitem{clanuwat2018deep}
T.~Clanuwat, M.~Bober-Irizar, A.~Kitamoto, A.~Lamb, K.~Yamamoto, and D.~Ha,
  ``Deep learning for classical japanese literature,'' \emph{arXiv preprint
  arXiv:1812.01718}, 2018.

\bibitem{cubuk2020randaugment}
E.~D. Cubuk, B.~Zoph, J.~Shlens, and Q.~V. Le, ``Randaugment: Practical
  automated data augmentation with a reduced search space,'' in \emph{CVPR
  Workshops}, 2020.

\bibitem{rasmus2015semi}
A.~Rasmus, M.~Berglund, M.~Honkala, H.~Valpola, and T.~Raiko, ``Semi-supervised
  learning with ladder networks,'' \emph{NeurIPS}, 2015.

\bibitem{geifman2019selectivenet}
Y.~Geifman and R.~El-Yaniv, ``Selectivenet: A deep neural network with an
  integrated reject option,'' in \emph{ICML}, 2019.

\bibitem{ioffe2015batch}
S.~Ioffe and C.~Szegedy, ``Batch normalization: Accelerating deep network
  training by reducing internal covariate shift,'' in \emph{ICML}, 2015.

\bibitem{srivastava2014dropout}
N.~Srivastava, G.~Hinton, A.~Krizhevsky, I.~Sutskever, and R.~Salakhutdinov,
  ``Dropout: a simple way to prevent neural networks from overfitting,''
  \emph{The journal of machine learning research}, vol.~15, no.~1, pp.
  1929--1958, 2014.

\end{thebibliography}

\end{document}